\documentclass[twoside,11pt]{article}
\usepackage{journal}
\usepackage{natbib}
\usepackage{epsfig,ifthen}
\usepackage{latexsym}
\usepackage{amsfonts}
\usepackage{eepic}
\usepackage{amsopn}
\usepackage{amsmath,amssymb,rotating,graphicx,rotating,float}
\usepackage{accents}
\usepackage{stmaryrd}
\usepackage[backref,pageanchor=true,plainpages=false, 
pdfpagelabels,bookmarks,bookmarksnumbered,breaklinks,
pdfborder={0 0 0},  
]{hyperref}
\usepackage{cite}
\usepackage[mathscr]{eucal}
\usepackage{url} 

\usepackage{ulem}
\normalem

\usepackage{xcolor} 
\usepackage{todonotes}

\usepackage{mathtools}
\usepackage[font=small,margin=0.25in,labelfont={sc},labelsep={colon}]{caption}
\usepackage{tikz}
\usepackage{pgfplots}
\pgfplotsset{compat=newest}
\usepackage{graphicx}
\usepackage{xspace}
\usepackage[shortlabels]{enumitem}

\renewcommand{\cite}{\citep*}

\usepackage{bm} 

\setlength{\marginparwidth}{24mm}
\colorlet{sgreen}{black!45!green}
\newcommand{\stilltodo}[1]{}















\newsavebox{\savepar}
\newenvironment{bigboxit}{\begin{center}\begin{lrbox}{\savepar}
\begin{minipage}[h]{5.5in}
\normalfont
\begin{flushleft}}
{\end{flushleft}\end{minipage}\end{lrbox}\fbox{\usebox{\savepar}}
\end{center}}

\newcommand{\argmin}{\mathop{\rm argmin}}
\renewcommand{\limsup}{\mathop{\rm limsup}}

\newcommand{\bpi}{\boldsymbol{\pi}}
\newcommand{\Sym}{\mathrm{Sym}}
\newcommand{\bybranch}{\mathbf{y}^{\star}}
\newcommand{\ybranch}{y^{\star}}
\newcommand{\SOA}{\mathrm{SOA}}
\newcommand{\VCL}{\mathrm{VCL}}
\newcommand{\VCLSOA}{\mathrm{SOA}^{\VCL}}
\newcommand{\algSOA}{\alg_{\SOA}}
\newcommand{\algVCLSOA}{\alg_{\VCL}}
\newcommand{\TERM}{\mathrm{TERM}} 

\newcommand{\HasOPT}{\mathrm{HasOPT}}
\newcommand{\OPTY}{\mathrm{OPTY}}
\newcommand{\OPTI}{\mathrm{OPTI}}
\newcommand{\GoodI}{\mathrm{GoodI}}
\newcommand{\PrbGoodB}{\mathrm{PrbGoodB}}
\newcommand{\Gap}{\mathrm{Gap}}
\newcommand{\PrbBadB}{\mathrm{PrbBadB}}
\newcommand{\FarFromOPTI}{\mathrm{FarFromOPTI}}
\newcommand{\F}{\mathcal{F}}
\newcommand{\G}{\mathbb{G}}
\newcommand{\Px}{P_{X}}
\newcommand{\vc}{\mathsf{VC}}
\newcommand{\VC}{\mathsf{VC}}
\newcommand{\Alg}{\mathbb{A}}
\newcommand{\alg}{\Alg}

\newcommand{\target}{h^{\star}}
\newcommand{\er}{\mathrm{er}}

\newcommand{\E}{\mathbf{E}}

\renewcommand{\P}{\mathbf{P}}

\newcommand{\bemph}[1]{\textbf{#1}}


\newcommand{\X}{\mathcal{X}}
\newcommand{\cH}{\mathcal{H}}
\renewcommand{\H}{\cH}
\DeclareSymbolFont{bbold}{U}{bbold}{m}{n}
\DeclareSymbolFontAlphabet{\mathbbold}{bbold}
\newcommand{\ind}{\mathbbold{1}}
\newcommand{\reals}{\mathbb{R}}
\newcommand{\PXY}{P}
\newcommand{\nats}{\mathbb{N}}

\newcommand{\ber}{\overline{\er}}
\newcommand{\Czero}{c_{\scriptscriptstyle 0}}
\newcommand{\Cone}{c_{\scriptscriptstyle 1}}
\newcommand{\Ctwo}{c_{\scriptscriptstyle 2}}
\newcommand{\Cthree}{c_{\scriptscriptstyle 3}}
\newcommand{\Cfour}{c_{\scriptscriptstyle 4}}
\newcommand{\Cfive}{c_{\scriptscriptstyle 5}}
\newcommand{\Csix}{c_{\scriptscriptstyle 6}}
\newcommand{\Cseven}{c_{\scriptscriptstyle 7}}
\newcommand{\HKCzero}{C}

\renewenvironment{proof}[1][]{\par\noindent{\bf Proof #1\ }}{\hfill\BlackBox\\[2mm]}

\jmlrheading{}{}{}{}{}{}

\ShortHeadings{}{}
\firstpageno{1}

\begin{document}

\title{A Theory of Universal Agnostic Learning}


\author{%
\name Steve Hanneke \email steve.hanneke@gmail.com\\
\addr Purdue University\\
\name Shay Moran \email smoran@technion.ac.il\\
\addr Technion and Google Research
}

\editor{}

\maketitle

\begin{abstract}
We provide a complete theory of optimal universal rates for binary classification in the agnostic setting.
This extends the realizable-case theory of 
Bousquet, Hanneke, Moran, van Handel, and Yehudayoff (2021)
by removing the realizability assumption on the distribution.
We identify a fundamental tetrachotomy of optimal rates: 
for every concept class, the optimal universal rate of convergence of the excess error rate 
is one of $e^{-n}$, $e^{-o(n)}$, $o(n^{-1/2})$, or arbitrarily slow.
We further identify simple combinatorial structures which determine which of these categories any given concept class falls into.
\end{abstract}

\begin{keywords}
Statistical Learning Theory, PAC Learning, Agnostic Learning, Sample Complexity, Universal Learning Rates, Learning Curves, Asymptotic Rates
\end{keywords}

\section{Introduction}
\label{sec:intro}


The past fifty years of research in statistical learning theory has produced a rich and beautifully general theory 
of learning, building on the foundational works of 
\citet*{vapnik:71,vapnik:74} establishing 
necessary and sufficient conditions for distribution-free learnability of any given concept class.
In particular, one of the most fundamental 
and seminal results in this literature is the 
identification of \emph{minimax optimal rates}
of convergence of excess risk for binary classification 
relative to any given function class.
Specifically, for any set $\H$ (called a \emph{concept class}) 
of functions $\X \to \{0,1\}$,
a learning algorithm is tasked with producing a 
\emph{predictor} $\hat{h}_n : \X \to \{0,1\}$ 
learned from $n$ i.i.d.\ samples from an arbitrary 
unknown distribution $\PXY$ on $\X \times \{0,1\}$.
The objective is to obtain a small 
\emph{error rate} $\er_{\PXY}(\hat{h}_n) = \PXY( (x,y) : h(x) \neq y )$
relative to the best achievable error rate 
$\inf_{h \in \H} \er_{\PXY}(h)$ by the class $\H$
(known as \emph{agnostic learning}, following terminology introduced by \citealp*{kearns:94a}).
A seminal result of \citet*{vapnik:74} establishes 
that it is possible to design such a learning algorithm 
achieving a convergence guarantee 
holding \emph{uniformly} in $\PXY$
--- i.e., 
$\sup_{\PXY} \E[ \er_{\PXY}(\hat{h}_n) ] - \inf_{h \in \H} \er_{\PXY}(h) \to 0$ ---
if and only if $\VC(\H) < \infty$,
where $\VC(\H)$ is a combinatorial dimension of $\H$
known as the \emph{VC dimension} \citep*{vapnik:71}.
Moreover, results of \citet*{vapnik:74,talagrand:94} 
provide a further quantitative guarantee: if $\VC(\H) < \infty$, 
the \emph{optimal} rate of convergence
of $\sup_{\PXY} \E[ \er_{\PXY}(\hat{h}_n) ] - \inf_{h \in \H} \er_{\PXY}(h)$
is of order $n^{-1/2}$.
This fact identifies a fundamental \emph{dichotomy} 
of optimal uniform rates: 
the optimal rate is $n^{-1/2}$ if $\VC(\H) < \infty$, 
and otherwise it is $\Omega(1)$. 

While this result forms a cornerstone of statistical learning theory, 
it does not tell the whole story of what makes a learning 
algorithm successful.
In reality, any given learning scenario represents just a \emph{single} distribution $\PXY$, and therefore 
the \emph{uniform} rate of convergence of 
$\sup_{\PXY} \E[ \er_{\PXY}(\hat{h}_n) ] - \inf_{h \in \H} \er_{\PXY}(h)$
may represent an overly \emph{pessimistic} 
bound on the actual rate of convergence of 
$\E[ \er_{\PXY}(\hat{h}_n) ] - \inf_{h \in \H} \er_{\PXY}(h)$
under $\PXY$.
We may therefore be interested in obtaining a \emph{refinement}
of the theory of uniform rates, 
which better describes the rates of convergence of 
$\E[ \er_{\PXY}(\hat{h}_n) ] - \inf_{h \in \H} \er_{\PXY}(h)$
in the sample size $n$, holding for every $\PXY$, 
though without the requirement of uniformity in $\PXY$.

In the present work, we develop such a theory.
Following prior work of \citet*{bousquet:21} (which studied a more-restrictive setting, discussed below), 
we are interested in understanding which rates $R(n) \to 0$
are achievable \emph{universally} in $\PXY$: that is, 
there is a learning algorithm which guarantees that, 
for \emph{every} distribution $\PXY$, 
$\E[ \er_{\PXY}(\hat{h}_n) ] - \inf_{h \in \H} \er_{\PXY}(h)$
converges at an asymptotic rate $R(n)$, 
where the asymptotic nature of the guarantee allows for
$\PXY$-dependent constant factors (and sometimes other relevant $\PXY$ dependences).
For instance, in sharp contrast to the $n^{-1/2}$ optimal \emph{uniform} rate, 
we will find that some concept classes 
admit rates which are \emph{exponentially fast}, i.e., 
$\forall \PXY$, $\E[ \er_{\PXY}(\hat{h}_n) ] - \inf_{h \in \H} \er_{\PXY}(h) \leq e^{-c_{\PXY} n}$, 
for a $\PXY$-dependent constant $\PXY$.
This theory of universal rates notably retains desirable 
properties of the classical uniform analysis. 
It admits a notion of an \emph{optimal} rate
for any given class $\H$, which can therefore 
serve to motivate algorithm design (i.e., the theory can 
distinguish between learning algorithms guaranteeing 
optimal rates vs suboptimal rates).
As we will show, it also admits simple combinatorial 
structures which completely characterize the optimal 
universal rates achievable for any given concept class $\H$.

In contrast to the fundamental \emph{dichotomy} of 
optimal uniform rates established by the classical theory,
our main contribution in this work is to identify a 
fundamental \emph{tetrachotomy} of 
optimal \emph{universal} rates.
We show that every concept class $\H$ exhibits an optimal 
rate that is one of $e^{-n}$, $e^{-o(n)}$, $o(n^{-1/2})$, or arbitrarily slow (precise definitions are given below).
We further provide combinatorial structures which 
completely characterize which concept classes $\H$ 
exhibit which of these as its optimal universal rate.
Specifically, we show that the optimal rate is $e^{-n}$ 
iff $|\H|< \infty$, 
while the optimal rate is $e^{-o(n)}$ iff 
$|\H|=\infty$ and $\H$ does not shatter an infinite 
\emph{Littlestone tree} (Definition~\ref{defn:littlestone-tree}), 
and the optimal rate is $o(n^{-1/2})$ iff 
$\H$ shatters an infinite Littlestone tree 
but does not shatter an infinite \emph{VCL tree}
(Definition~\ref{defn:vcl-tree}),
and otherwise $\H$ requires arbitrarily slow rates.

\paragraph{Background:}
The study of universal rates in learning theory has a long 
history, with numerous works establishing universal rate
upper and lower bounds in specialized settings, 
under various names, such as \emph{individual rates}, \emph{strong minimax rates}, \emph{true sample complexity}, etc.
\citep*[e.g.,][]{bendavid:95,castelli:96,schuurmans:97,antos:98,hanneke:10a,hanneke:12a,hanneke:13,kpotufe:13},
and of course, notions of universal asymptotic optimality have a 
rich history in theoretical statistics, 
particularly for certain families of parametric tests and estimators
\citep*[e.g.,][]{fisher1924distribution,wilks:38,lecam:52}.

The first \emph{general} theory of optimal universal rates 
for binary classification were presented in the 
work of \citet*{bousquet:21}.
The main difference from the present work 
is that \citet*{bousquet:21} study a
restricted setting, known as the \emph{realizable case}, 
which restricts to distributions $\PXY$ satisfying $\inf_{h \in \H} \er_{\PXY}(h) = 0$: 
i.e., they assume there are near-perfect functions in $\H$.
The present work represents an extension 
of their theory to the agnostic setting.
Under the realizability assumption, they identify a 
fundamental \emph{trichotomy} of optimal universal rates, 
where every concept class $\H$ exhibits an optimal rate
that is either $e^{-n}$, $\frac{1}{n}$, or arbitrarily slow.
They further identify simple combinatorial structures which 
completely determine which concept classes $\H$ fall into 
which of these categories. 
Concept classes $\H$ which do not shatter an infinite Littlestone tree (Definition~\ref{defn:littlestone-tree})
exhibit optimal rate $e^{-n}$, 
while classes which shatter an infinite Littlestone tree 
but do not shatter an infinite VCL tree (Definition~\ref{defn:vcl-tree}) 
exhibit optimal rate $\frac{1}{n}$, 
and otherwise $\H$ requires arbitrarily slow rates.
They provide numerous examples of concept classes 
illustrating these rates, and distinguishing 
optimal universal rates from optimal uniform rates 
and other notions from the literature, 
such as \emph{non-uniform} rates \citep*{benedek:94} 
and the \emph{universal Glivenko-Cantelli} property \citep*{dudley:91,van-handel:13}.

While the results of \citet*{bousquet:21} 
have since been extended to numerous 
variations on the original binary classification setting 
\citep*{kalavasis:22,hanneke:22b,hanneke:23b,bousquet:23,attias:24,hanneke:24b,hanneke:25b}
and analysis of specific algorithms \citep*{hanneke:24c,hanneke:25a},
the extension of the general theory of optimal rates beyond the realizable case has remained open.
Here we completely resolve this issue, providing a complete
extension of the theory of optimal rates to the agnostic setting
which makes no restrictions on the distribution $\PXY$.

\paragraph{Technical contributions and challenges:}
The key technical tool introduced in the work of \citet*{bousquet:21} 
is an expression of measurable winning strategies 
for certain Gale-Stewart games: 
sequential games defined based on the class $\H$ 
which are guaranteed to have a winning strategy 
for a particular player when $\H$ does not shatter an 
appropriate tree structure (see Definitions~\ref{defn:littlestone-tree}, \ref{defn:vcl-tree}, and Lemmas~\ref{lem:ordinal-SOA}, \ref{lem:ordinal-VCL-SOA}).
They use these winning strategies, in combination with 
careful statistical techniques for aggregation and parameter tuning, and learning methods for prediction with 
certain data-dependent partial concept classes induced by 
the winning strategies, to define the optimal learning methods.
In our present work, we again make use of these winning 
strategies as the core of our learning methods.
However, the techniques used to produce the 
desired learning rates are significantly more-involved, 
due to the fact that the data sequence may be non-realizable.
For many of these aspects, we make use of appropriate concentration inequalities 
to replace the simpler statistical aggregation and parameter tuning
techniques of \citet*{bousquet:21},
together with a data-relabeling technique 
(similar to a technique of \citealp*{hopkins:22}).
However, we face a particularly challenging point, 
specific to the agnostic setting, in the portion of the 
method making use of data-dependent partial concept classes induced by the winning strategies mentioned above.
This requires a novel analysis of universal learning 
rates for partial concept classes of finite VC dimension,
critical to obtaining the claimed $o(n^{-1/2})$ rates.
This analysis differs significantly from the analogous 
problem for total concept classes,
and this contribution should itself be of independent interest
(Section~\ref{sec:subsection-partial-concepts-super-root-upper-bound}).


\section{Summary of the Main Results}
\label{sec:agnostic-main}

We adopt the same basic definitions from the original work of \citet*{bousquet:21} on universal learning in the realizable case.  Let $\X$ be a non-empty set (called the \emph{instance space}).
A \emph{concept} is a function $h : \X \to \{0,1\}$, 
and a \emph{concept class} $\H$ is a non-empty set of concepts. 
We adopt the same (standard) measure-theoretic requirements for $\X$ and $\H$ 
from the original work of
\citet*{bousquet:21} (Definition 3.3 therein).\footnote{Specifically, we require the \emph{image admissible Suslin} property \citep*[][Section 5.3]{Dud14}: namely, we suppose $\X$ is a Polish space, and there exists a Polish space $\Theta$ and a Borel-measurable map $h : \Theta \times \X \to \{0,1\}$ such that $\H = \{ h(\theta,\cdot) : \theta \in \Theta \}$.  This is a standard 
condition relied on by much of the literature on empirical process theory, and holds for most concept classes of practical interest. 
 Henceforth, any concept class $\H$ in this article will be assumed to satisfy this condition; for brevity we omit this qualification from the statements of all results.\label{footnote:measurable-class}} 
 \stilltodo{TODO: Say something about all of the results also hold for 
partial concept classes $\H$, 
though for simplicity we will not discuss this fact 
further below, and will merely present the 
definitions and proofs for total concepts.
Verifying that everything extends to partial concept classes 
is an easy exercise.}%
A \emph{learning algorithm} is a sequence of universally measurable functions $f_n : (\X \times \{0,1\})^n \times \X \to \{0,1\}$, $n \in \nats$.
In the context of learning, 
we will consider a distribution $\PXY$ on $\X \times \{0,1\}$, 
and independent $\PXY$-distributed samples 
$(X_1,Y_1),\ldots,(X_n,Y_n)$, 
and when $\PXY$ is clear from the context we denote by
$\hat{h}_n$ the corresponding \emph{trained classifier},
namely,  
$\hat{h}_n(x) := f_n(X_1,Y_1,\ldots,X_n,Y_n,x)$.
Moreover, to simplify notation, 
we will simply refer to $\hat{h}_n$ itself as a \emph{learning algorithm}, 
leaving implicit the dependence of $\hat{h}_n$ 
on $(X_1,Y_1),\ldots,(X_n,Y_n)$ via the $f_n$ function.
We also adopt the colloquialism of referring to $\hat{h}_n$ 
as the classifier \emph{returned} by the learning algorithm, 
and we describe learning algorithms $f_n$ below as procedures which construct such a classifier $\hat{h}_n$.
For any measurable $h : \X \to \{0,1\}$, 
define the \emph{error rate} (or \emph{risk}) 
$\er_{\PXY}(h) := \PXY( (x,y) : h(x) \neq y )$.

The work of \citet*{bousquet:21} focused on the 
case of \emph{realizable} distributions $\PXY$: 
namely, those satisfying $\inf_{h \in \H} \er_{\PXY}(h) = 0$, 
studying the rate of convergence of the expected error rate
$\E\!\left[ \er_{\PXY}\!\left(\hat{h}_n\right) \right]$
holding universally over all realizable distributions.
In contrast, in the context of \emph{agnostic learning}, 
our interest is in characterizing the optimal rates of 
convergence of the expected \emph{excess} error rate 
$\E\!\left[ \er_{\PXY}\!\left(\hat{h}_n\right) \right] - \inf_{h \in \H} \er_{\PXY}(h)$
holding universally over \emph{all} distribution $\PXY$ (without realizability restrictions).
We extend the definition of achievable rates from 
the work of \citet*{bousquet:21}, both to apply 
in the context of agnostic learning, 
and to allow for 
asymptotic behaviors not captured by that work
(namely, rates that are \emph{arbitrarily close} to a given rate).
Importantly, in the following definitions, 
note that the order of quantifiers allows for the constants $C,c$ to be $\PXY$-dependent (which is what distinguishes universal rates from uniform rates).

\begin{itemize}[$\bullet$] 
\item $\H$ is \bemph{agnostically learnable at rate $\bm R$} if there is a learning 
algorithm $\hat{h}_n$ such that for every distribution~$\PXY$, 
there exist $C,c > 0$ for which 
$\E[\er_{\PXY}(\hat{h}_n)] - \inf_{h \in \H} \er_{\PXY}(h) \leq C R(c n)$ for all $n$.
\item $\H$ is \bemph{not agnostically learnable faster than $\bm R$} if there exist $C,c > 0$ such that,
for every learning algorithm $\hat{h}_n$, there exists a 
distribution $\PXY$ for which 
$\E[\er_{\PXY}(\hat{h}_n)] - \inf_{h \in \H} \er_{\PXY}(h) \geq C R(c n)$ for infinitely many~$n$.
\item $\H$ is \bemph{agnostically learnable with optimal rate $\bm R$} if 
$\H$ is agnostically learnable at rate $R$ and 
$\H$ is not agnostically learnable faster than $R$.
\item $\H$ requires \bemph{arbitrarily slow rates} for agnostic learning if there is a learning algorithm $\hat{h}_n$ such that, for every distribution $\PXY$, $\limsup_{n \to \infty} \E[\er_{\PXY}(\hat{h}_n)] - \inf_{h \in \H} \er_{\PXY}(h) \leq 0$, 
but for every $R(n) \to 0$, $\H$ is not agnostically learnable faster than $R$.
\end{itemize}

In addition to the definition of rates above, we will need another type of rate definition, capturing \bemph{super-root}, and \bemph{near-exponential}, as follows.

\begin{itemize}[$\bullet$]
\item (\bemph{super-root rate})~$\H$ is \bemph{agnostically learnable with optimal rate exactly}
$\boldsymbol{o(n^{-1/2})}$
if there is a learning algorithm $\hat{h}_n$ such that, for every distribution $\PXY$,
$\E[ \er_{\PXY}(\hat{h}_n) ] - \inf_{h \in \H} \er_{\PXY}(h) = o(n^{-1/2})$, 
but for every $R(n) = o(n^{-1/2})$, 
$\H$ is not agnostically learnable at rate $R$.
\item 
(\bemph{near-exponential rate})~$\H$ is \bemph{agnostically learnable with optimal rate exactly} 
$\boldsymbol{e^{-o(n)}}$
if for every $\psi(n) = o(n)$, 
$\H$ is agnostically learnable at rate $R(n) = e^{-\psi(n)}$,
and for every learning algorithm $\hat{h}_n$, 
$\exists \psi(n) = o(n)$
and a distribution $\PXY$
such that 
$\E[ \er_{\PXY}(\hat{h}_n) ] - \inf_{h \in \H} \er_{\PXY}(h) \geq e^{-\psi(n)}$ 
for infinitely many $n$. 
\end{itemize}

The main result of this work is a fundamental \emph{trichotomy} of possible optimal rates, applicable to any \emph{infinite} concept class (finite classes will be addressed below, by a vastly simpler analysis).

\begin{theorem}
\label{thm:agnostic-trichotomy}
For every infinite concept class $\H$,
exactly one of the following holds.
\begin{itemize}[$\bullet$]
\item $\H$ is agnostically learnable with optimal rate exactly $e^{-o(n)}$. 
\item $\H$ is agnostically learnable with optimal rate exactly $o(n^{-1/2})$.
\item $\H$ requires arbitrarily slow rates for agnostic learning.
\end{itemize}
\end{theorem}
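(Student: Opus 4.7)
The plan is to reduce the trichotomy to a combinatorial classification of infinite concept classes according to which of the two tree structures they shatter, in parallel with the realizable-case trichotomy of \citet*{bousquet:21}. Concretely, I would show: (i) if $\H$ shatters an infinite VCL tree then $\H$ requires arbitrarily slow rates; (ii) if $\H$ shatters an infinite Littlestone tree but not an infinite VCL tree then $\H$ is agnostically learnable with optimal rate exactly $o(n^{-1/2})$; and (iii) if $\H$ is infinite and does not shatter an infinite Littlestone tree then $\H$ is agnostically learnable with optimal rate exactly $e^{-o(n)}$. Since every infinite $\H$ falls into exactly one of these three combinatorial buckets, mutual exclusivity and the trichotomy are immediate.

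For the upper bounds, the engine is the Gale-Stewart winning strategy for the appropriate \SOA-type game, which is guaranteed by Lemmas~\ref{lem:ordinal-SOA} and \ref{lem:ordinal-VCL-SOA} when the corresponding tree is not shattered. In the near-exponential regime, the plan is to drive the Littlestone \SOA\ strategy with the observed labels, but since the data is noisy I would pair it with a data-relabeling trick in the spirit of \citet*{hopkins:22} to reduce to a realizable game played against a near-optimal target $h^{\star}$; then I would aggregate predictors built from many sub-samples and tune parameters via Bernstein-type concentration to push the excess risk below $e^{-\psi(n)}$ for any prescribed $\psi(n)=o(n)$, with constants $c, C$ permitted to depend on $\PXY$. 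In the super-root regime, the \VCL\ \SOA\ strategy is used instead to extract from the sample a \emph{data-dependent partial concept class} of bounded VC dimension that contains a near-optimal predictor on all but a small-$\PXY$-measure set; an agnostic partial-class learner then produces the final classifier, and the crux is to show this learner can be tuned to beat the uniform $n^{-1/2}$ benchmark by an arbitrarily slowly vanishing factor distribution-by-distribution.

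For the lower bounds, I would transfer the realizable constructions of \citet*{bousquet:21} to the agnostic setting, in most cases by injecting a small amount of label noise. If $\H$ shatters an infinite VCL tree, their arbitrarily-slow construction transfers essentially verbatim, because realizable instances are a special case of agnostic ones. If $\H$ shatters an infinite Littlestone tree, then by descending an adversarial path in the tree one embeds, for infinitely many $n$, a finite sub-problem carrying the classical agnostic minimax lower bound of $\Omega(n^{-1/2})$, ruling out rate $o(n^{-1/2})$. Finally, any infinite $\H$ contains infinitely many pointwise-distinct hypotheses, and a standard padding/reduction argument then forces any putative rate $e^{-cn}$ with fixed $c>0$ to fail for some $\PXY$, ruling out rate $e^{-n}$ and pinning the first category at exactly $e^{-o(n)}$.

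The main obstacle will be the super-root upper bound. The classical $n^{-1/2}$ generalization bound for finite-VC classes (total or partial) is tight in the uniform sense, so a genuinely new argument is needed to show that \emph{universally} in $\PXY$ one can obtain $o(n^{-1/2})$ excess risk inside the data-dependent partial class produced by the \VCL\ winning strategy. This requires the novel theory of universal agnostic rates for partial concept classes announced in Section~\ref{sec:subsection-partial-concepts-super-root-upper-bound}, which exploits distribution-dependent fine structure of the optimal predictor within the partial class rather than worst-case concentration. Stitching the three upper-bound constructions together with $\PXY$-dependent parameter tuning, so as to yield the asymptotic notion of rate used throughout the paper, is the final piece of bookkeeping.
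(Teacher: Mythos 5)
Your overall plan is the same as the paper's: reduce the trichotomy to the Littlestone/VCL combinatorial classification (which is exactly Theorem~\ref{thm:agnostic-char}), then prove matching upper and lower bounds in each regime, using the Gale-Stewart winning strategies of Lemmas~\ref{lem:ordinal-SOA} and \ref{lem:ordinal-VCL-SOA} together with a relabeling trick and a universally Bayes-consistent fallback for the upper bounds, and adversarial constructions for the lower bounds. The super-root upper bound via a novel universal-rate analysis for partial concept classes is indeed the crux, as you identify.

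The one place where your sketch has a genuine gap is the near-exponential lower bound. You claim that ``any infinite $\H$ contains infinitely many pointwise-distinct hypotheses, and a standard padding/reduction argument then forces any putative rate $e^{-cn}$ to fail.'' This is not enough: having infinitely many distinct hypotheses gives you no control over \emph{how} they differ, and the lower-bound construction needs a much more structured object. What the paper actually uses (Lemma~\ref{lem:infinite-classes-have-infinite-eluder}, due to \citealp*{hanneke:24a}) is that every infinite class has an \emph{infinite eluder sequence}: points $(x_i,y_i)$ and witnesses $h_i\in\H$ with $h_i(x_{<i})=y_{<i}$ yet $h_i(x_i)\neq y_i$. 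This chaining property is exactly what lets you build a nested family of distributions $P_0,P_1,P_2,\ldots$ supported on $\{x_j\}$ such that $P_i$ is identical to $P_0$ on the prefix $\{x_1,\ldots,x_{i-1}\}$, differs on $\{x_j:j\geq i\}$, and is hence confusable with $P_0$ with probability roughly $\beta^{\hat N_{n,i}}$. Without the eluder structure you cannot arrange that $P_i$ is Bayes-realized by a hypothesis in $\H$ while still being arbitrarily hard to distinguish from $P_0$. So ``infinitely many distinct hypotheses plus padding'' needs to be replaced by an invocation of the eluder-sequence theorem (itself a nontrivial result proved by an inductive construction maintaining an infinite version space).

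Two smaller calibration notes. First, for the Littlestone-tree lower bound, quoting the ``classical agnostic minimax lower bound of $\Omega(n^{-1/2})$'' is imprecise: the statement you need is that \emph{every} $R(n)=o(n^{-1/2})$ fails, and the paper achieves this by choosing the per-depth marginal masses $p_k$ and noise levels $\frac{\phi(n_k)}{p_k}$ so that the coin-testing lower bound (Lemma~\ref{lem:coin-testing-lower-bound}) kicks in along the shattered branch at an infinite sequence of scales $n_k$ tailored to $\phi$; this requires a genuinely $\PXY$-dependent construction, not a single fixed $\Omega(n^{-1/2})$ instance. Second, in the near-exponential upper bound the paper tunes parameters via Hoeffding plus a union bound over the $2^b$ relabelings, not Bernstein; Bernstein-style (uniform, localized) inequalities enter only in the super-root analysis of Section~\ref{sec:subsection-partial-concepts-super-root-upper-bound}. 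Neither of these is a gap in your plan, just places where the bookkeeping is less routine than your sketch suggests.
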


The fact that both of the rate guarantees above are \emph{near} rates is an interesting distinction, both from the realizable case (where the corresponding rates are $e^{-n}$ and $\frac{1}{n}$ exactly) and from uniform analysis (where there is a dichotomy between $n^{-1/2}$ rate and non-learnable classes).

Interestingly, finite concept classes do not fit into this trichotomy.
For completeness, we also prove the following (rather obvious) observation: finite classes always have optimal rate $e^{-n}$ (proven in Section~\ref{sec:finite} via Theorems~\ref{thm:finite-upper-agnostic} and \ref{thm:finite-lower-agnostic}).
Technically, this means we have in fact established a \emph{tetrachotomy} of possible optimal rates 
(excluding the trivial case $|\H|=1$, which clearly has optimal rate $R(n)=0$).

\begin{theorem}
\label{thm:finite-agnostic-exponential}
For any \emph{finite} concept class $\H$ with $|\H| \geq 2$,
$\H$ is agnostically learnable with optimal rate $e^{-n}$.
\end{theorem}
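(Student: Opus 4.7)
My plan is to split the statement into the upper bound ($\H$ is agnostically learnable at rate $e^{-n}$) and the matching lower bound ($\H$ is not agnostically learnable faster than $e^{-n}$). Both directions exploit the finiteness of $\H$, but in very different ways: for the upper bound, finiteness provides a strictly positive margin between the optimal and any suboptimal error rate under \emph{every} distribution; for the lower bound, the assumption $|\H|\geq 2$ already delivers two concepts that disagree at some point, around which I will build a hard two-point Bernoulli family.

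For the upper bound I would take $\hat h_n$ to be any empirical risk minimizer over $\H$ (measurability is trivial on a finite class). Fix an arbitrary $\PXY$, put $\H^\star:=\argmin_{h\in\H}\er_{\PXY}(h)$ and
\[
\gamma_{\PXY}\;:=\;\min_{h\in\H\setminus\H^\star}\bigl(\er_{\PXY}(h)-\inf_{h'\in\H}\er_{\PXY}(h')\bigr),
\]
which is strictly positive since $\H$ is finite. Hoeffding's inequality together with a union bound over $\H$ shows that, with probability at least $1-2|\H|\,e^{-n\gamma_{\PXY}^2/2}$, every empirical error is within $\gamma_{\PXY}/2$ of its true counterpart; on that event every ERM necessarily lies in $\H^\star$ and contributes zero excess error. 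Since excess error is always at most $1$, this yields $\E[\er_{\PXY}(\hat h_n)]-\inf_h\er_{\PXY}(h)\leq 2|\H|\,e^{-n\gamma_{\PXY}^2/2}$, which is of the required form $C_{\PXY}\,e^{-c_{\PXY}n}$.

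For the lower bound I would pick $h_1\neq h_2$ in $\H$ and a point $x_0\in\X$ with $h_1(x_0)\neq h_2(x_0)$, fix a constant $\epsilon\in(0,1/2)$ (take $\epsilon=1/4$), and consider the two distributions $P_\pm$ on $\X\times\{0,1\}$ concentrated on $\{x_0\}\times\{0,1\}$ with $\Pr_{P_+}(Y=h_1(x_0))=\tfrac12+\epsilon$ and $\Pr_{P_-}(Y=h_2(x_0))=\tfrac12+\epsilon$. Under $P_+$ the excess error of $\hat h_n$ is exactly $L_+(\hat h_n):=2\epsilon\cdot\ind[\hat h_n(x_0)\neq h_1(x_0)]$, symmetrically for $P_-$, and crucially $L_+(\hat h_n)+L_-(\hat h_n)\equiv 2\epsilon$. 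The two-point Le Cam method followed by the Bretagnolle--Huber inequality then gives
\[
\max\bigl(\E_{P_+^n}[L_+(\hat h_n)],\,\E_{P_-^n}[L_-(\hat h_n)]\bigr)\;\geq\;\epsilon\bigl(1-\mathrm{TV}(P_+^n,P_-^n)\bigr)\;\geq\;\tfrac{\epsilon}{2}\,e^{-n\,\mathrm{KL}(P_+\|P_-)},
\]
which at $\epsilon=1/4$ is a constant multiple of $e^{-\kappa n}$ for $\kappa:=\mathrm{KL}(\mathrm{Ber}(3/4)\,\|\,\mathrm{Ber}(1/4))$. A pigeonhole argument on which of $P_\pm$ realizes the max for infinitely many $n$ exhibits a single distribution on which every learner has excess error $\geq\tfrac{1}{8}e^{-\kappa n}$ for infinitely many $n$, exactly matching the definition of ``not agnostically learnable faster than $e^{-n}$''.

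There is no serious technical obstacle here: the statement is really a warm-up, and neither direction invokes any of the tree machinery developed elsewhere in the paper. The only point I would want to emphasize in the writeup is the contrast with the minimax analysis. In a uniform lower bound one tunes $\epsilon\sim n^{-1/2}$ and recovers the classical $n^{-1/2}$ dichotomy, whereas here, because the constants $C,c$ are permitted to depend on $\PXY$, one is free to \emph{fix} $\epsilon$ as a constant and collect exponential decay from the $n$-fold product KL; mirroring this, the ERM upper bound extracts its exponential decay from the $\PXY$-dependent margin $\gamma_{\PXY}$, which is only guaranteed to be positive because $\H$ is finite.
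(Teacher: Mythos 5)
Your proposal is correct and follows essentially the same two-part structure as the paper: ERM with a $\PXY$-dependent margin and a Hoeffding-plus-union-bound for the upper bound, and a pair of distributions concentrated on a single point $x_0$ (differing only in the conditional label bias) combined with a testing lower bound and a pigeonhole argument for the lower bound. The only cosmetic differences are the bias parameter ($\epsilon=1/4$ versus the paper's $1/6$) and your direct use of Le Cam's two-point method with Bretagnolle--Huber in place of the paper's invocation of a packaged coin-testing lemma from \citet*{anthony:99}, which encapsulates the same reasoning.
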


For infinite concept classes, 
we also completely characterize which classes fall into which categories.  We note that these criteria are precisely the same as those establishing the trichotomy of possible rates for the realizable case \citep*{bousquet:21}.
Formal definitions of the relevant structures --- Littlestone trees and VCL trees --- are given in the next section.

\begin{theorem}
\label{thm:agnostic-char}
For every infinite concept class $\H$, 
the following hold:
\begin{itemize}[$\bullet$]
\item If $\H$ does not shatter an infinite Littlestone tree, 
then $\H$ is agnostically learnable with optimal rate exactly $e^{-o(n)}$. 
\item If $\H$ shatters an infinite Littlestone tree but does not shatter an infinite 
VCL tree, then $\H$ is agnostically learnable with optimal rate exactly $o(n^{-1/2})$.
\item If $\H$ shatters an infinite VCL tree, then $\H$ requires arbitrarily slow rates for agnostic learning.
\end{itemize}
\end{theorem}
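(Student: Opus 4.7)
The plan is to prove each of the three cases separately; each requires an upper bound (construction of an agnostic learner achieving the stated rate) and a matching lower bound (ruling out faster rates). Several components can be reused from the realizable-case trichotomy of \citet*{bousquet:21}: the combinatorial structures (Littlestone and VCL trees), the Gale--Stewart winning strategies for the SOA and VCL-SOA games (Lemmas~\ref{lem:ordinal-SOA} and~\ref{lem:ordinal-VCL-SOA}), and some lower-bound diagonalizations after verifying they still go through against agnostic algorithms. The genuinely new work lies in the agnostic upper bounds built on top of these winning strategies.

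For the first case (no infinite Littlestone tree, near-exponential rate), I would start from the realizable algorithm driven by the SOA winning strategy, which achieves $\E[\er_\PXY(\hat{h}_n)] \leq C_\PXY e^{-c_\PXY n}$ on realizable $\PXY$. To extend to agnostic data I would apply a relabeling reduction in the spirit of \citet*{hopkins:22}: enumerate a not-too-large family of candidate relabelings of the sample that are consistent with some $h \in \H$, run the realizable algorithm on each relabeled sample, and select among the resulting predictors using a held-out validation portion together with Bernstein-type concentration. Because the realizable rate is exponential, a subexponentially large candidate family can be union-bounded over without destroying the rate, yielding a bound of the form $C_\PXY e^{-\psi(c_\PXY n)}$ for any prescribed $\psi(n) = o(n)$ (with the algorithm tuned to $\psi$), which is exactly what the definition of optimal rate $e^{-o(n)}$ demands. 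A careful statistical-aggregation step, replacing the simpler parameter tuning of \citet*{bousquet:21}, is needed here to control the constants.

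For the second case (infinite Littlestone tree but no infinite VCL tree, super-root rate), the realizable algorithm of \citet*{bousquet:21} uses the VCL-SOA winning strategy to reduce learning to a data-dependent partial concept class of finite VC dimension, which is then learned at rate $1/n$. The same reduction applies agnostically, but the partial class must now be \emph{agnostically} learned. The key new tool is a universal agnostic learning rate of $o(n^{-1/2})$ for any partial concept class of finite VC dimension, the content of Section~\ref{sec:subsection-partial-concepts-super-root-upper-bound}, going strictly below the classical $n^{-1/2}$ minimax rate. I would plug this partial-class learner into the VCL-SOA reduction, layer it with the relabeling and aggregation framework used in the first case, and tune parameters so the reduction overhead vanishes at the same super-root rate. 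For the matching lower bound, I would exploit the infinite Littlestone tree to build a family of distributions with tunable Bayes noise so that, for any prescribed $R(n) = o(n^{-1/2})$, some distribution in the family forces any algorithm's excess error to exceed $R(n)$ on infinitely many $n$, refining the realizable $1/n$ lower bound via a noise-inflation argument.

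The third case is the simplest: the upper bound (existence of a universally consistent learner, even with arbitrarily slow rate) follows from classical agnostic universal-consistency arguments applied to $\H$ under finite-VC restrictions, and the lower bound follows by diagonalization against an infinite VCL tree, essentially identical to the construction of \citet*{bousquet:21} once one notes that non-realizable distributions only make lower bounds stronger. The main obstacle I expect is the second case's upper bound, specifically the super-root rate for partial concept classes: standard ERM attains only $n^{-1/2}$ because of noise-induced fluctuations around the optimum, so one must fuse the fast rate available in the realizable sub-case with a careful agnostic decomposition of the excess risk---likely via localized empirical processes and a structural property specific to partial classes of finite VC dimension. A secondary obstacle, shared with the realizable case, is the measurability of the Gale--Stewart winning strategies under the agnostic sampling distribution, which can be handled within the image-admissible-Suslin framework and universal-measurability conventions of \citet*{bousquet:21}.
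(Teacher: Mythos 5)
Your plan for the upper bounds and the Case~2 lower bound is on the right track and broadly mirrors the paper's strategy: divide into batches, apply the SOA/VCL-SOA winning strategy to all $2^b$ relabelings of each batch (so that one relabeling is guaranteed to agree with a Bayes-optimal closure point $\target$ of $\H$), select via held-out data with a carefully quantified concentration inequality, aggregate by majority vote, and separately fall back to a universally Bayes-consistent learner when $\PXY$ is not Bayes-realizable. Your identification of the $o(n^{-1/2})$ rate for finite-VC partial concept classes as the central new technical ingredient for Case~2, attacked via localization and a transductive viewpoint, is also correct. For Case~2's lower bound, the ``noise-inflation along a Littlestone tree branch'' idea is precisely what the paper does (Theorem~\ref{thm:agnostic-super-root-lower-bound}, putting conditional noise level $\tfrac12+\tfrac12\phi(n_k)/p_k$ at depth $k$).

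The genuine gap is the near-exponential \emph{lower} bound in the first item of Theorem~\ref{thm:agnostic-char}. You write that the realizable-case lower-bound diagonalizations can be reused ``after verifying they still go through against agnostic algorithms,'' but those diagonalizations only give a lower bound of the form $e^{-cn}$ --- i.e.\ ``not learnable faster than $e^{-n}$.'' That is strictly weaker than what the definition of \emph{optimal rate exactly $e^{-o(n)}$} requires: for \emph{every} algorithm there must exist a $\PXY$-dependent $\psi(n) = o(n)$ and a distribution forcing excess risk $\geq e^{-\psi(n)}$ infinitely often, which rules out \emph{any} true exponential rate $e^{-cn}$ for a single algorithm. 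Establishing this is what justifies that the optimal rate degrades from $e^{-n}$ (realizable) to $e^{-o(n)}$ (agnostic) once $\H$ is infinite, and it is not a corollary of the realizable lower bound. The paper proves it in Theorem~\ref{thm:near-exponential-lower-agnostic} by a construction that exploits only the \emph{infinitude} of $\H$ (not an infinite Littlestone tree, which by hypothesis does not exist): by Lemma~\ref{lem:infinite-classes-have-infinite-eluder}, $\H$ has an infinite eluder sequence $(x_i,y_i)$ witnessed by $h_i\in\H$; one then builds countably many distributions $P_0,P_1,P_2,\ldots$ supported on $\{x_j\}$ with $P(X=x_j)=2^{-j}$ and Massart noise level $\beta$, designed so that $(X_{\leq n},Y_{\leq n})$ under $P_0$ agrees with $P_i$ with probability $\geq \beta^{O(p_i n)} = e^{-o(n)}$ for a diverging $i=i_n$. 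The learner cannot distinguish $P_0$ from $P_{i_n}$ on that event, forcing an excess risk of order $p_{i_n}\beta^{O(p_{i_n}n)} = e^{-o(n)}$ under $P_0$. Without an argument of this kind your proposal does not prove the first bullet; it only shows the upper half.

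A secondary concern: you propose enumerating relabelings ``consistent with some $h\in\H$.'' The relabeling you actually need to include is $\target(X_{\leq b})$ for the Bayes-optimal closure point $\target$, which is not in $\H$ (only in its $L_1(\Px)$ closure). One can argue that $\target(X_{\leq b})$ is $\H$-realizable almost surely, but this takes a small additional argument; the paper avoids the issue entirely by considering all $2^b$ relabelings and absorbing the $2^b$ factor into the exponent of the concentration bound, which works because the relevant batch size $b^*$ is a $\PXY$-dependent constant.
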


Two comparisons are in order.  
First, we should compare to the \emph{realizable case} 
analysis of \citet*{bousquet:21}.
In the realizable case, classes with no infinite Littlestone 
tree have optimal rate $e^{-n}$ (compare to $e^{-o(n)}$ in Thereom~\ref{thm:agnostic-char}), 
while classes with infinite Littlestone tree and no infinite VCL tree have optimal rate $\frac{1}{n}$ (compare to $o(n^{-1/2})$ in Theorem~\ref{thm:agnostic-char}).
Second, we should compare to the \emph{uniform} rates
of agnostic learning.  Results of \citet*{vapnik:74} and \citet*{talagrand:94} reveal the optimal uniform rates 
are $n^{-1/2}$ for classes of finite \emph{VC dimension}, 
and do not converge to $0$ for classes of infinite VC dimension.
\citet*{bousquet:21} provide a number of examples of 
concept classes with infinite VC dimension, but for which 
there is no infinite Littlestone tree, or having an infinite Littlestone tree but no infinite VCL tree. 
We repeat some such examples in Section~\ref{sec:notation}
for completeness.

The proof of Theorem~\ref{thm:agnostic-char} is presented in several parts below.
The last item in Theorem~\ref{thm:agnostic-char} 
follows from known results.
Specifically, the fact that every class $\H$ 
is learnable with (at most) arbitrarily slow rates 
follows from the existence of 
universally Bayes-consistent learning algorithms 
(see Lemma~\ref{lem:universally-bayes-consistent} below).
Moreover, the fact that classes with an infinite VCL tree 
require arbitrarily slow rates follows immediately 
from the fact that this is true even in the realizable case, 
as shown by \citet*{bousquet:21}.
Thus, our proofs below will focus on establishing the 
other items in Theorem~\ref{thm:agnostic-char}.
These claims will be established in the sections below by 
Theorems~\ref{thm:near-exponential-lower-agnostic} and \ref{thm:agnostic-near-exponential-upper-bound} 
(establishing the first item, concerning $e^{-o(n)}$ optimal rates)
and 
Theorems~\ref{thm:agnostic-super-root-lower-bound} and \ref{thm:agnostic-super-sqrt-upper-bound}
(establishing the second item, concerning $o(n^{-1/2})$ optimal rates).


\stilltodo{TODO: add a paragraph discussing technical challenges and innovations, now in more detail.}

\section{Additional Definitions and Notation}
\label{sec:notation}

For a distribution $\PXY$ on $\X \times \{0,1\}$, 
we denote by $\Px$ the marginal distribution of $\PXY$ on $\X$.
For a sequence $x_1,x_2,\ldots$, 
we denote by $x_{\leq n} = \{x_1,\ldots,x_n\}$ 
and $x_{< n} = \{x_1,\ldots,x_{n-1}\}$.
For another sequence $y_1,y_2,\ldots$, 
we may write 
$(x_{\leq n},y_{\leq n}) = \{(x_1,y_1),\ldots,(x_n,y_n)\}$ 
and $(x_{< n},y_{< n}) = \{(x_1,y_1),\ldots,(x_{n-1},y_{n-1})\}$.
For any set $A \subseteq \X$, denote by $\hat{P}_{x_{\leq n}}(A) = \frac{1}{n} \sum_{i=1}^{n} \ind[ x_i \in A ]$ the empirical measure of $A$.
Also, for functions $h : \X \to \{0,1\}$, 
we define $h(x_{\leq n}) = (h(x_1),\ldots,h(x_n))$.
In some cases we express a sequence of sequences 
(e.g., Definition~\ref{defn:vcl-tree}) 
$x_{\mathbf{u}} \in \X^{k}$ for an index $\mathbf{u}$, 
in which case we index the elements of $x_{\mathbf{u}}$
by superscripts, i.e., $x_{\mathbf{u}} = \{x_{\mathbf{u}}^1,\ldots,x_{\mathbf{u}}^{k}\}$.
For $x \in \reals$, 
define $\log(x) = \ln(\max\{x,e\})$.
Also, for $x = 0$, define $x \log(1/x) = 0$
and $\left( \frac{c}{x} \right)^x = 1$ for any $c > 0$.

Many of the lemmas and proofs in this article 
concern conditional probabilities and expectations.
In all such contexts, to account for such quantities 
being defined only up to measure-zero differences, 
any claims concerning these 
(e.g., inequalities of the type $\P(E|X) \leq \delta$)
are formally interpreted as stating that 
there exists a version of the conditional probability 
(or expectation) for which the claim holds.

For any $n \in \nats$ and $S = \{(x_1,y_1),\ldots,(x_n,y_n)\} \in (\X \times \{0,1\})^n$, for any $h : \X \to \{0,1\}$,
define the \emph{empirical risk} 
\begin{equation*}
  \hat{\er}_S(h) = \frac{1}{n} \sum_{i=1}^{n} \ind[ h(x_i) \neq y_i ].
\end{equation*}
Any such sequence $S$ is said to be 
\emph{realizable} with respect to $\H$ if 
$\min_{h \in \H} \hat{\er}_{S}(h) = 0$.
Moreover, an infinite sequence $\{(x_t,y_t)\}_{t \in \nats}$ 
is said to be realizable with respect to $\H$ if
every finite \emph{prefix} $(x_{\leq n},y_{\leq n})$, $n \in \nats$, is realizable with respect to $\H$.
A distribution $P$ on $\X \times \{0,1\}$ is 
said to be realizable with respect to $\H$ if 
$\inf_{h \in \H} \er_{\PXY}(h) = 0$. 

The following definitions from \citet*{bousquet:21} 
supply the combinatorial 
structures critical to characterizing the optimal 
rates in Theorem~\ref{thm:agnostic-char}. 
The first can also be viewed as an infinite variant 
of a structure well known to the literature on adversarial 
online learning, introduced in the seminal work of 
\citet*{littlestone:88}.

\begin{definition}[\citealp*{bousquet:21}]
\label{defn:littlestone-tree}
A \bemph{Littlestone tree} shattered by $\H$ is a perfect binary tree of depth $d\leq\infty$ whose internal nodes are labeled by elements of $\X$, and whose two edges connecting a node to its children are labelled $0$ and~$1$, such that every finite path emanating from the root is consistent with a concept $h\in\H$.
More precisely, a Littlestone tree of depth $d$ shattered by $\H$ is a collection 
\begin{equation*} 
\{x_{\mathbf{u}}:0\leq k<d, 
\mathbf{u}\in\{0,1\}^k\}\subseteq\mathcal{X}
\end{equation*}
such that for every $\mathbf{y}\in\{0,1\}^d$ and $n<d$, 
there exists
$h\in\H$ such that $h(x_{\mathbf{y}_{\le k}})=y_{k+1}$ for  
all $0\le k\le n$.
We say $\H$ shatters an \bemph{infinite Littlestone tree} if there is 
a Littlestone tree of depth $d=\infty$ shattered by $\H$.
\end{definition}

The second structure can be viewed as a combination 
of the notion of a Littlestone tree with the notion of \emph{shattering} from the definition of VC dimension \citep*{vapnik:71}
appearing in each node (and is therefore referred to as a VC-Littlestone tree, or VCL tree for short).

\begin{definition}[\citealp*{bousquet:21}]
\label{defn:vcl-tree}
A \bemph{VCL tree} shattered by $\H$ of depth $d\le\infty$ is a collection
\begin{equation*}
	\{x_{\mathbf{u}}\in\mathcal{X}^{k+1}:0\le k<d, 
	\mathbf{u}\in\{0,1\}^1\times\{0,1\}^2\times\cdots\times\{0,1\}^k\}
\end{equation*}
such that for every $n<d$ and
$\mathbf{y}\in\{0,1\}^1\times\cdots\times\{0,1\}^{n+1}$, there exists a 
concept $h\in\H$ such that $h(x_{\mathbf{y}_{\le k}}^i)=y_{k+1}^i$
for all $0\le i\le k$ and $0\le k\le n$, where we denote
\begin{equation*}
	\mathbf{y}_{\le k}=(y_1^0,(y_2^0,y_2^1),\ldots,(y_k^0,\ldots,y_k^{k-1})),
	\qquad
	x_{\mathbf{y}_{\le k}}=(x_{\mathbf{y}_{\le k}}^0,\ldots,
	x_{\mathbf{y}_{\le k}}^k).
\end{equation*}
We say that $\H$ shatters an \bemph{infinite VCL tree} if there is
a VCL tree of depth $d=\infty$ shattered by $\H$.
\end{definition}

We refer the reader to the original work 
of \citet*{bousquet:21}
for numerous examples of concept classes illustrating these definitions, and comparing to previously-studied combinatorial dimensions such as VC dimension.
A simple example which does not shatter an infinite Littlestone tree is \emph{threshold} classifiers $\{ \ind_{[a,\infty)} : a \in \X \}$ on $\X = \nats$, 
whereas threshold classifiers on $\X = \reals$ \emph{does} shatter an infinite Littlestone tree (but not an infinite VCL tree).
We can also relate these structures to the familiar \emph{VC dimension} \citep*{vapnik:71}
by noting that any concept class of finite VC dimension cannot shatter an infinite VCL tree.
On the other hand, any class which VC-shatters some \emph{infinite} set $\X' \subset \X$ necessarily shatters an infinite VCL tree.
However, there is an intermediate category, namely classes which have infinite VC dimension (witnessed by arbitrarily large, but not infinite, shattered sets), 
yet which do not shatter an infinite VCL tree.
Indeed, consider the class $\H$ of all (coordinate-wise) \emph{non-decreasing} 
binary functions $\nats^d \to \{0,1\}$ 
on $\X = \nats^d$.
While this class $\H$ has infinite VC dimension, 
\citet*{bousquet:21} show it does not even shatter an infinite \emph{Littlestone} tree.



\section{Finite Classes}
\label{sec:finite}

As a warm-up, we present a very simple proof of Theorem~\ref{thm:finite-agnostic-exponential}, establishing that finite classes with $|\H| \geq 2$ have optimal rate $e^{-n}$.
This will be established by the following two theorems, providing the upper and lower bounds, respectively.
The first was also presented in the work of \citet*{hanneke:25a}, repeated here for completeness.

\begin{theorem}
\label{thm:finite-upper-agnostic}
Any finite concept class $\H$ is agnostically learnable at rate $e^{-n}$.
Moreover, this is achieved by \emph{empirical risk minimization}: 
that is, any learning algorithm $f_n$ which, for any $S \in (\X \times \{0,1\})^n$, 
has the function $x \mapsto f_n(S,x)$ 
equal to an element $h_{S}$ of $\H$ satisfying $\hat{\er}_{S}(h_{S}) = \min_{h \in \H} \hat{\er}_{S}(h)$.
\end{theorem}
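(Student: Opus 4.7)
The plan is to reduce the claim to a standard Chernoff/Hoeffding bound by exploiting the fact that, in a finite class, there is a strictly positive ``gap'' between the optimal risk and the next best achievable risk. The gap will make sure that, in order for ERM to return a suboptimal concept, the empirical risk of some suboptimal concept must deviate from its mean by at least a fixed (distribution-dependent) amount.

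Fix a distribution $\PXY$ and set $\er^{\star} = \inf_{h \in \H} \er_{\PXY}(h)$. Since $\H$ is finite, the infimum is achieved by some $h^{\star} \in \H$; and if every $h \in \H$ satisfies $\er_{\PXY}(h) = \er^{\star}$, the excess risk of $\hat{h}_n$ is identically zero and the claim is trivial. Otherwise, define the $\PXY$-dependent quantity
\begin{equation*}
\gamma \;=\; \min\bigl\{\er_{\PXY}(h) - \er^{\star} : h \in \H,\ \er_{\PXY}(h) > \er^{\star}\bigr\} \;>\; 0,
\end{equation*}
which is positive because the minimum is over a nonempty finite set. For any $h \in \H$ with $\er_{\PXY}(h) > \er^{\star}$, the i.i.d.\ random variables $Z_i(h) = \ind[h(X_i) \neq Y_i] - \ind[h^{\star}(X_i) \neq Y_i]$ take values in $[-1,1]$ and have mean $\er_{\PXY}(h) - \er^{\star} \geq \gamma$.

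Next I would apply Hoeffding's inequality to each such $h$: the event $\hat{\er}_{S}(h) \leq \hat{\er}_{S}(h^{\star})$ is precisely $\tfrac{1}{n}\sum_{i=1}^n Z_i(h) \leq 0$, which has probability at most $\exp(-n\gamma^2/2)$. Taking a union bound over the at most $|\H|$ suboptimal concepts, the probability that the ERM output $\hat{h}_n$ is suboptimal (i.e., has $\er_{\PXY}(\hat{h}_n) > \er^{\star}$) is at most $|\H| \, e^{-n\gamma^2/2}$; here I use that any tie-breaking rule for ERM will, in particular, not select a concept strictly worse in empirical risk than $h^{\star}$, so the ERM selecting such an $h$ requires the displayed empirical-risk inequality.

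Finally, since excess risk is always bounded by $1$,
\begin{equation*}
\E\bigl[\er_{\PXY}(\hat{h}_n)\bigr] - \er^{\star} \;\leq\; \P\bigl(\er_{\PXY}(\hat{h}_n) > \er^{\star}\bigr) \;\leq\; |\H| \, e^{-n\gamma^2/2},
\end{equation*}
which matches the definition of learnability at rate $e^{-n}$ with $C = |\H|$ and $c = \gamma^2/2$ (both $\PXY$-dependent, as allowed). There is no real obstacle here; the only care needed is the trivial case where $\H$ contains only optimal concepts (handled separately) and the observation that $\gamma > 0$ uses finiteness essentially. The argument is in fact a special case of the classical finite-class analysis, and I would remark that $\gamma$ can be interpreted as a ``Tsybakov-noise''-style margin that is automatically present for finite $\H$.
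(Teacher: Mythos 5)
Your proof is correct and follows essentially the same route as the paper's: finiteness of $\H$ yields a positive distribution-dependent gap between the optimum and the next-best risk, and then Hoeffding plus a union bound over the finitely many suboptimal concepts bounds the probability that ERM is fooled. The one small refinement you make is to apply Hoeffding directly to the loss-difference variables $Z_i(h) = \ind[h(X_i)\neq Y_i] - \ind[h^{\star}(X_i)\neq Y_i]$, whereas the paper instead observes that ERM returning a bad $h$ forces one of the two empirical risks $\hat{\er}_n(h)$, $\hat{\er}_n(h^\star)$ to deviate from its mean by $\gamma/2$, and union-bounds over both; your version saves a factor of two in the constant, but this is immaterial for the theorem.
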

\begin{proof}
Let $\PXY$ be any distribution,
let $(X_1,Y_1),\ldots,(X_n,Y_n)$ be i.i.d.\ $\PXY$ random variables,
and let us abbreviate 
$\hat{\er}_{n}(\cdot) = \hat{\er}_{(X_{\leq n},Y_{\leq n})}(\cdot)$.
Let $f_n$ be a learning algorithm such that $\hat{h}_n \in \H$ 
and $\hat{\er}_n(\hat{h}_n) = \min_{h \in \H} \hat{\er}_n(h)$: 
that is, $\hat{h}_n$ is an empirical risk minimizer.
Since $\H$ is a finite set, 
the value $\inf_{h \in \H} \er_{\PXY}(h)$ 
is achieved; 
let $\target$ be any element of $\H$ with $\er_{\PXY}(\target) = \min_{h \in \H} \er_{\PXY}(h)$.

Define the set
\begin{equation*}
\H_{\mathrm{bad}} = \left\{ h \in \H : \er_{\PXY}(h) > \er_{\PXY}(\target) \right\}.
\end{equation*}
If $\H_{\mathrm{bad}} = \emptyset$, 
then since $\hat{h}_n \in \H$ we trivially have 
$\er_{\PXY}(\hat{h}_n) - \er_{\PXY}(\target) = 0$ for all $n$.
To handle the remaining case,
suppose $\H_{\mathrm{bad}} \neq \emptyset$, 
and define
\begin{equation*}
\epsilon = \min_{h \in \H_{\mathrm{bad}}} \er_{\PXY}(h) - \er_{\PXY}(\target).
\end{equation*}
Since $\er_{\PXY}(\hat{h}_n) - \er_{\PXY}(\target) = 0$ when $\hat{h}_n \notin \H_{\mathrm{bad}}$, 
and since $\er_{\PXY}(\hat{h}_n) \leq 1$ always, 
we have
\begin{align*}
\E\!\left[ \er_{\PXY}(\hat{h}_n) \right] - \er_{\PXY}(\target)
& = \E\!\left[ \left( \er_{\PXY}(\hat{h}_n) - \er_{\PXY}(\target) \right) \ind[ \hat{h}_n \in \H_{\mathrm{bad}} ] \right]
\\ & \leq \P\!\left( \hat{h}_n \in \H_{\mathrm{bad}} \right).
\end{align*}
Since $\hat{h}_n$ is an empirical risk minimizer, 
we always have that 
$\hat{\er}_n(\hat{h}_n) \leq \hat{\er}_n(\target)$.
Therefore, if $\hat{h}_n \in \H_{\mathrm{bad}}$, 
it must be the case that 
either 
$\left| \hat{\er}_n(\target) - \er_{\PXY}(\target) \right| \geq \frac{\epsilon}{2}$
or 
$\left| \hat{\er}_n(\hat{h}_n) - \er_{\PXY}(\hat{h}_n) \right| \geq \frac{\epsilon}{2}$.
Thus, 
\begin{align*}
\P\!\left( \hat{h}_n \in \H_{\mathrm{bad}} \right)
& \leq \P\!\left( \max_{h \in \H} \left| \hat{\er}_n(h) - \er_{\PXY}(h) \right| \geq \frac{\epsilon}{2} \right) 
\\ & \leq \sum_{h \in \H} \P\!\left( \left| \hat{\er}_n(h) - \er_{\PXY}(h) \right| \geq \frac{\epsilon}{2} \right)
\leq 2 |\H| e^{-(\epsilon^2/2)n},
\end{align*}
where the final two inequalities follow from the union bound and Hoeffding's inequality, respectively.
\end{proof}

\begin{theorem}
\label{thm:finite-lower-agnostic}
Any 
concept class $\H$ with $|\H| \!\geq\! 2$ is not agnostically learnable faster than $e^{-n}$.
\end{theorem}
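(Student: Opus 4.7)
The plan is to reduce to a standard two-point lower bound in the style of Le Cam. Since $|\H| \geq 2$, pick any two distinct concepts $h_0, h_1 \in \H$. The set $\{x \in \X : h_0(x) \neq h_1(x)\}$ is measurable and non-empty (by the image-admissible Suslin assumption on $\H$), so I may select some $x^{*}$ in it; WLOG $h_0(x^{*}) = 0$ and $h_1(x^{*}) = 1$. I then construct two distributions $P_0, P_1$ on $\X \times \{0,1\}$, both with marginal $\delta_{x^{*}}$ on $\X$, and with $Y \mid X = x^{*}$ Bernoulli with parameter $1/3$ under $P_0$ and $2/3$ under $P_1$. Then $\inf_{h \in \H} \er_{P_0}(h) = \inf_{h \in \H} \er_{P_1}(h) = 1/3$, attained by $h_0$ and $h_1$ respectively.

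Next I compute excess errors directly. Any classifier $\hat{h}_n$ is evaluated only at $x^{*}$, and its excess error under $P_0$ equals $\tfrac{1}{3}\cdot\ind[\hat{h}_n(x^{*}) = 1]$ and under $P_1$ equals $\tfrac{1}{3}\cdot\ind[\hat{h}_n(x^{*}) = 0]$. Writing $p_b = \P_{P_b}(\hat{h}_n(x^{*}) = 0)$, the sum of expected excess errors is
\begin{equation*}
\E_{P_0}[\text{excess}] + \E_{P_1}[\text{excess}]
= \tfrac{1}{3}\bigl[(1 - p_0) + p_1\bigr].
\end{equation*}
The bracketed quantity is exactly the sum of Type I and Type II errors of the binary test $T := \ind[\hat{h}_n(x^{*}) = 1]$ between $P_0^{\otimes n}$ and $P_1^{\otimes n}$, so Neyman--Pearson yields $(1 - p_0) + p_1 \geq 1 - \mathrm{TV}(P_0^{\otimes n}, P_1^{\otimes n})$.

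To turn this into an exponential lower bound, I would invoke the Bretagnolle--Huber inequality, which gives $\mathrm{TV}(P_0^{\otimes n}, P_1^{\otimes n}) \leq \sqrt{1 - e^{-n\,D_{\mathrm{KL}}(P_0 \|\, P_1)}}$, and use $1 - \sqrt{1 - x} \geq x/2$ to obtain $1 - \mathrm{TV}(P_0^{\otimes n}, P_1^{\otimes n}) \geq \tfrac{1}{2} e^{-n\,D_{\mathrm{KL}}(P_0 \|\, P_1)}$. Since the KL divergence between $\mathrm{Ber}(1/3)$ and $\mathrm{Ber}(2/3)$ is a fixed finite constant $D$, this yields $\E_{P_0}[\text{excess}] + \E_{P_1}[\text{excess}] \geq \tfrac{1}{6} e^{-Dn}$, so for every $n$ at least one of $\E_{P_0}[\text{excess}]$ or $\E_{P_1}[\text{excess}]$ is at least $\tfrac{1}{12} e^{-Dn}$.

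Finally, to extract a \emph{single} distribution realizing the bound for infinitely many $n$, as required by the definition of "not agnostically learnable faster than $e^{-n}$," I apply pigeonhole: for each $n$ let $P^{(n)} \in \{P_0, P_1\}$ be the distribution meeting the bound; one of $P_0, P_1$ is chosen infinitely often, and that is the witness. With constants $C = 1/12$ and $c = D$, this gives the required conclusion. The only mild subtleties are the measurability of $x^{*}$ and the care needed in setting up the Neyman--Pearson bound when $\hat{h}_n$ is an arbitrary (possibly randomized) measurable function of the sample, but both are standard; the main conceptual point is simply that constant-noise Bernoulli distinguishing problems already force an exponential constant-rate floor, so even the trivial two-function subclass of $\H$ suffices.
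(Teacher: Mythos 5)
Your proposal is correct and follows essentially the same route as the paper: pick two concepts disagreeing at a point $x^{*}$, put a point-mass marginal there with Bernoulli$(1/3)$ versus Bernoulli$(2/3)$ label noise, reduce to a two-point Bernoulli-distinguishing lower bound, and pigeonhole over $n$ to extract a single witnessing distribution. The only difference is the specific distinguishing inequality plugged in at the core: the paper invokes the Anthony--Bartlett coin-testing lemma (Lemma~\ref{lem:coin-testing-lower-bound}) together with Markov's inequality and a law-of-total-expectation step, whereas you use the Le~Cam/Neyman--Pearson reduction to total variation plus the Bretagnolle--Huber bound; these are interchangeable and yield the same $e^{-cn}$ conclusion, with your $c = D_{\mathrm{KL}}(\mathrm{Ber}(1/3)\,\|\,\mathrm{Ber}(2/3)) = \tfrac{1}{3}\ln 2$ being a perfectly acceptable $\PXY$-independent constant under the paper's definition of ``not learnable faster than $e^{-n}$.''
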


We remark that this theorem is only needed because our 
condition ``$|\H| \geq 2$'' is slightly weaker than the 
condition 
assumed by \citet*{bousquet:21} 
to establish a lower bound $e^{-n}$ for the realizable case 
(explicitly, the classes $\H$ for which this result establishes an $e^{-n}$ lower bound for agnostic learning, and which do not admit an $e^{-n}$ lower bound in the realizable case, are precisely those classes of the form $\H = \{h,1-h\}$).
The proof of Theorem~\ref{thm:finite-lower-agnostic} will rely 
on the following basic lower bound on the sample complexity 
of determining whether a coin is slightly biased toward heads or tails.
See Lemma 5.1 of \citet*{anthony:99} (from which this lemma immediately follows).

\begin{lemma}
\label{lem:coin-testing-lower-bound}
Let $\gamma \in \left(0,\frac{1}{5}\right)$, $\delta \in \left(0,\frac{1}{8e}\right]$,
and $n \in \nats \cup \{0\}$ satisfy
\begin{equation*}
n < \frac{1}{8 \gamma^2} \ln\!\left(\frac{1}{8\delta}\right).
\end{equation*}
Let $p_0 = \frac{1}{2} - \gamma$ and $p_1 = \frac{1}{2} + \gamma$.
Let $t^* \sim \mathrm{Bernoulli}(\frac{1}{2})$ 
and, conditioned on $t^*$, 
let $B_1,\ldots,B_n$ be conditionally i.i.d.\ 
$\mathrm{Bernoulli}(p_{t^*})$.
Fix any function $\hat{t}_n : \{0,1\}^n \to \{0,1\}$
(possibly randomized, independent of $t^*$).
Then
\begin{equation*}
\P\!\left( \hat{t}_n(B_1,\ldots,B_n) \neq t^* \right) > \delta.
\end{equation*}
\end{lemma}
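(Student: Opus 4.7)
The plan is to treat this as the classical two-point testing lower bound and invoke Le Cam's method. Let $Q_0$ and $Q_1$ denote the conditional product distributions of $(B_1,\ldots,B_n)$ given $t^{\star}=0$ and $t^{\star}=1$, respectively. For any (possibly randomized) test $\hat{t}_n$ independent of $t^{\star}$, the Bayes error at the uniform prior satisfies
\begin{equation*}
\P\!\left(\hat{t}_n(B_1,\ldots,B_n)\neq t^{\star}\right)\;\geq\;\tfrac{1}{2}\bigl(1-\mathrm{TV}(Q_0,Q_1)\bigr),
\end{equation*}
a standard inequality obtained by observing that any deterministic test is dominated by the likelihood-ratio test, and randomization against an independent source cannot improve it (averaging over the randomization preserves the bound).

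Next, I would bound $\mathrm{TV}(Q_0,Q_1)$ via the Bretagnolle--Huber inequality, which gives
\begin{equation*}
1-\mathrm{TV}(Q_0,Q_1)\;\geq\;\tfrac{1}{2}\exp\!\bigl(-D_{\mathrm{KL}}(Q_0\,\|\,Q_1)\bigr).
\end{equation*}
By tensorization, $D_{\mathrm{KL}}(Q_0\,\|\,Q_1)=n\cdot D_{\mathrm{KL}}(p_0\,\|\,p_1)$, and a direct computation for the symmetric Bernoulli pair yields
\begin{equation*}
D_{\mathrm{KL}}(p_0\,\|\,p_1)=2\gamma\log\!\frac{1/2+\gamma}{1/2-\gamma}\;\leq\;8\gamma^{2}
\end{equation*}
for $\gamma\in(0,1/5)$; this uses $\log(1+u)\leq u$ on the expansion $\log\frac{1/2+\gamma}{1/2-\gamma}=\log\!\bigl(1+\frac{2\gamma}{1/2-\gamma}\bigr)$ together with $1/2-\gamma\geq 3/10$. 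Combining the two displays gives
\begin{equation*}
\P\!\left(\hat{t}_n\neq t^{\star}\right)\;\geq\;\tfrac{1}{4}e^{-8n\gamma^{2}}.
\end{equation*}

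Finally, substituting the hypothesis $n<\frac{1}{8\gamma^{2}}\ln(1/(8\delta))$ produces $e^{-8n\gamma^{2}}>8\delta$, so the right-hand side exceeds $2\delta>\delta$, which is the claim. The only nontrivial step is the KL-bound calculation, and even that is routine; the real content is the Bretagnolle--Huber step, which is what gives the clean exponential dependence on $n\gamma^{2}$ needed to match the $\frac{1}{8\gamma^{2}}\ln(1/(8\delta))$ sample threshold. Since the footnote credits Anthony and Bartlett (1999, Lemma~5.1), the proof can alternatively be written as a one-line deduction from that lemma by relabeling $\{p_0,p_1\}$ as their $\{1/2-\gamma,1/2+\gamma\}$ and matching constants; I would choose whichever presentation best fits the surrounding exposition.
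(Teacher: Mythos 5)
Your overall scheme (Le Cam's two-point bound plus a divergence-based lower bound on $1-\mathrm{TV}$) is sound, but the KL bound you claim is \emph{false}, and the error is not cosmetic: it breaks the final implication.

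\textbf{The gap.} You assert $D_{\mathrm{KL}}(p_0\,\|\,p_1)\le 8\gamma^2$. In fact $D_{\mathrm{KL}}(p_0\,\|\,p_1)=2\gamma\ln\frac{1/2+\gamma}{1/2-\gamma}=8\gamma^2+\frac{32}{3}\gamma^4+\cdots>8\gamma^2$ for every $\gamma>0$ (Taylor-expand $\ln\frac{1+2\gamma}{1-2\gamma}=4\gamma+\frac{16}{3}\gamma^3+\cdots$; all terms are positive). Even the chain of inequalities you describe does not give $8\gamma^2$: $\ln(1+u)\le u$ and $1/2-\gamma\ge 3/10$ yield $D_{\mathrm{KL}}\le 2\gamma\cdot\frac{2\gamma}{3/10}=\frac{40}{3}\gamma^2\approx 13.3\gamma^2$. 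This matters because the hypothesis $n<\frac{1}{8\gamma^2}\ln\frac{1}{8\delta}$ gives only $nD_{\mathrm{KL}}<r\ln\frac{1}{8\delta}$ with $r=D_{\mathrm{KL}}/(8\gamma^2)>1$, whence your Bretagnolle--Huber step produces $\frac{1}{4}e^{-nD_{\mathrm{KL}}}>\frac{1}{4}(8\delta)^{r}$. For any fixed $r>1$, this drops below $\delta$ once $\delta$ is sufficiently small (e.g.\ with $\gamma=0.19$ one has $r\approx 1.05$, and at $\delta=10^{-20}$, $n=152$ the bound $\frac{1}{4}e^{-nD_{\mathrm{KL}}}\approx 2\times 10^{-21}<\delta$). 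So the proof does not cover the full range $\delta\le 1/(8e)$; the constants $\tfrac{1}{8\gamma^2}$ and $8\delta$ in the statement are not compatible with Le Cam + Bretagnolle--Huber + the crude Pinsker-type KL estimate, because the $\tfrac{1}{4}$ of Bretagnolle--Huber and the strict slack $D_{\mathrm{KL}}>8\gamma^2$ compound.

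\textbf{What would work.} Replace the KL/Bretagnolle--Huber step by the Hellinger affinity, which tensorizes \emph{exactly}. Writing $\rho(P,Q)=1-H^2(P,Q)$, one has $1-\mathrm{TV}\ge\frac{1}{2}\rho^2$ and $\rho(Q_0,Q_1)=\rho(p_0,p_1)^n$ with $\rho(p_0,p_1)=2\sqrt{1/4-\gamma^2}=\sqrt{1-4\gamma^2}$. Hence $\P(\hat{t}_n\neq t^\star)\ge\frac{1}{4}(1-4\gamma^2)^n$. Using $\ln\frac{1}{1-x}\le\frac{x}{1-x}$ with $x=4\gamma^2<4/25$ gives $\ln\frac{1}{1-4\gamma^2}\le\frac{100}{21}\gamma^2$, so the hypothesis yields $(1-4\gamma^2)^n>(8\delta)^{25/42}$, and $\frac{1}{4}(8\delta)^{25/42}>\delta$ follows for all $\delta\le 1/(8e)$ by a direct check. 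This route gives a clean self-contained proof with the stated constants. (For what it is worth, the paper does not attempt a self-contained argument at all: it simply cites Anthony and Bartlett's Lemma 5.1, which is proved by Slud's inequality rather than by divergence bounds.)
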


We are now ready for the proof of Theorem~\ref{thm:finite-lower-agnostic}.

\begin{proof}[of Theorem~\ref{thm:finite-lower-agnostic}]
Let $h_0,h_1$ be any two distinct elements of $\H$, 
and fix any $x \in \X$ with $h_0(x) \neq h_1(x)$
(and without loss of generality, suppose $h_0(x) = 0 = 1-h_1(x)$).
Define two distributions, $P_0$ and $P_1$, 
such that $P_0(\{(x,0)\}) = P_1(\{(x,1)\}) = \frac{2}{3}$ 
and $P_0(\{(x,1)\}) = P_1(\{(x,0)\}) = \frac{1}{3}$: 
that is, $P_0$ and $P_1$ both have marginal distribution 
on $\X$ with support exactly $\{x\}$ (i.e., a single point-mass),
and they differ in the conditional distribution of $Y$ given $X$, 
in that each $t,t' \in \{0,1\}$ have 
\begin{equation*} 
P_t(Y=t'|X=x) = 
\begin{cases} 
\frac{2}{3}, & \text{ if } t'=t\\
\frac{1}{3}, & \text{ if } t' \neq t
\end{cases}.
\end{equation*}
In particular, note that for either $t \in \{0,1\}$, 
we have that 
$\inf_{h \in \H} \er_{P_t}(h) = \er_{P_t}(h_t) = \frac{1}{3}$,
and any $h : \X \to \{0,1\}$ 
with $h(x) \neq t = h_t(x)$ 
has $\er_{P_t}(h) - \inf_{h' \in \H} \er_{P_t}(h') 
= \er_{P_t}(h) - \er_{P_t}(h_t)
= \frac{2}{3} - \frac{1}{3} = \frac{1}{3}$.

Fix any learning algorithm $f_n$.
Define a sequence $X_i = x$, for $i \in \nats$,
let $Y_1^{(0)},Y_2^{(0)},\ldots$ be independent $\mathrm{Bernoulli}(\frac{1}{3})$ random variables,
and 
$Y_1^{(1)},Y_2^{(1)},\ldots$ be independent $\mathrm{Bernoulli}(\frac{2}{3})$ random variables.
Let $t^* \sim \mathrm{Uniform}(\{0,1\})$ 
be independent of all $Y_i^{(t)}$, $i \in \nats$, $t \in \{0,1\}$.
Note that for $t \in \{0,1\}$, 
$(X_1,Y_1^{(t)}),(X_2,Y_2^{(t)}),\ldots$
are independent $P_t$-distributed random variables,
and since $t^*$ is independent of these, 
for any $n \in \nats$ we have that 
$(Y_1^{(t^*)},\ldots,Y_n^{(t^*)})$
is conditionally $P_{t^*}^n$-distributed
given $t^*$.

We now apply Lemma~\ref{lem:coin-testing-lower-bound} 
with $\gamma = \frac{1}{6}$.
Note that defining 
$(B_1,\ldots,B_n) := (Y_1^{(t^*)},\ldots,Y_n^{(t^*)})$
satisfies the definition of the $B_i$ variables 
in Lemma~\ref{lem:coin-testing-lower-bound}.
Also note that the $X_1,\ldots,X_n$ variables 
(being all equal $x$) are invariant to $t^*$.
In particular, 
defining $\hat{t}_n(B_1,\ldots,B_n) := f_n(X_{\leq n},Y_{\leq n}^{(t^*)},x)$ in the context of Lemma~\ref{lem:coin-testing-lower-bound}, 
the facts about $P_{t^*}$ argued above yield that 
\begin{align*}
\hat{t}_n(B_1,\ldots,B_n) \!\neq\! t^*
\Leftrightarrow f_n\!\left(X_{\leq n},Y^{(t^*)}_{\leq n}\!,x\right) \!\neq\! h_{t^*}(x)
\Leftrightarrow \er_{P_{t^*}}\!\!\left( f_n\!\left(X_{\leq n},Y^{(t^*)}_{\leq n}\right) \right) \!-\! \er_{P_{t^*}}\!(h_{t^*}) = \frac{1}{3}.
\end{align*}

Note that any $n \in \nats$ with $n \geq 5$ 
satisfies $e^{-n} < \frac{1}{8e}$
and $\frac{1}{8} e^{n} > e^{n / 2}$,
and moreover this latter inequality implies 
\begin{equation*}
\frac{1}{8 \gamma^2} \ln\!\left( \frac{1}{8 e^{-n}} \right)
> \frac{36 n}{16}
> n.
\end{equation*} 
Thus, applying Lemma~\ref{lem:coin-testing-lower-bound}
with $\delta = e^{- n}$ implies
\begin{equation*}
\P\!\left( \er_{P_{t^*}}\!\left(f_n\!\left(X_{\leq n},Y_{\leq n}^{(t^*)}\right)\right) - \er_{P_{t^*}}(h_{t^*}) = \frac{1}{3} \right)
= \P\!\left( \hat{t}_n(B_1,\ldots,B_n) \neq t^* \right)
> e^{-n}.
\end{equation*}
By Markov's inequality, 
\begin{align*}
& \E\!\left[ \er_{P_{t^*}}\!\left(f_n\!\left(X_{\leq n},Y_{\leq n}^{(t^*)}\right)\right) - \er_{P_{t^*}}(h_{t^*}) \right] 
\\ & \geq \frac{1}{3} \cdot \P\!\left( \er_{P_{t^*}}\!\left(f_n\!\left(X_{\leq n},Y_{\leq n}^{(t^*)}\right)\right) - \er_{P_{t^*}}(h_{t^*}) = \frac{1}{3} \right) > \frac{1}{3} \cdot e^{-n}.
\end{align*}
Finally, by the law of total expectation, 
the above implies that, for any $n \geq 5$, 
there exists $t_n \in \{0,1\}$ such that 
\begin{equation*}
\E\!\left[ \er_{P_{t_n}}\!\left(f_n\!\left(X_{\leq n},Y_{\leq n}^{(t_n)}\right)\right) - \er_{P_{t_n}}(h_{t_n}) \right] 
> \frac{1}{3} \cdot e^{-n}.
\end{equation*}
In particular, by the Pigeonhole principle,
there exists $t \in \{0,1\}$ 
with $t_n = t$ for infinitely many $n$.
For this $t$, 
letting $N \subset \nats$ denote the set of all 
$n \geq 5$ with $t_n = t$, 
we conclude that when applying the learning algorithm $f_n$ under the distribution $P_t$, 
every $n \in N$ satisfies 
\begin{equation*}
\E\!\left[ \er_{P_{t}}\!\left(f_n\!\left(X_{\leq n},Y_{\leq n}^{(t)}\right)\right) - \er_{P_{t}}(h_{t}) \right] 
> \frac{1}{3} \cdot e^{-n}.
\end{equation*}
Since $|N| = \infty$, this completes the proof.
\end{proof}

Theorem~\ref{thm:finite-agnostic-exponential} now follows immediately from Theorems~\ref{thm:finite-upper-agnostic} and \ref{thm:finite-lower-agnostic}.

\section{Supporting Lemmas}
\label{sec:lemmas}

Turning now to the more-challenging case of infinite classes, 
we begin by introducing a few supporting lemmas.
The first is a key result from the original work of 
\citet*{bousquet:21}, which extends Littlestone's 
\emph{Standard Optimal Algorithm} (SOA) \citep*{littlestone:88}
to classes which do not shatter an infinite Littlestone 
tree, establishing the existence of an \emph{online} 
learning algorithm which guarantees a \emph{finite} 
(not necessarily bounded) number of mistakes 
on any sequence $(x_t,y_t)$ realizable with respect to $\H$.
The result stems from an extension of the definition 
of the \emph{Littlestone dimension} \citep*{littlestone:88}
to allow for \emph{infinite ordinal} values of the 
Littlestone dimension, leading to measurable 
solutions to a corresponding \emph{Gale-Stewart game} 
(see \citealp*{bousquet:21} for the detailed definitions and a formal proof of this lemma).

\begin{lemma}[\citealp*{bousquet:21}]
\label{lem:ordinal-SOA}
For any concept class $\H$ which does not 
shatter an infinite Littlestone tree, 
there exists a sequence $\SOA_t : (\X \times \{0,1\})^{t-1} \times \X \to \{0,1\}$
of universally measurable functions such that, 
there \emph{does not exist} an infinite sequence 
$\{(x_t,y_t)\}_{t \in \nats}$ in $\X \times \{0,1\}$
realizable with respect to $\H$ such that 
\begin{equation*}
\forall t \in \nats, \SOA_t(x_{< t},y_{< t},x_t) \neq y_t.
\end{equation*}
\end{lemma}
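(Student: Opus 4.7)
The plan is to extend the classical Littlestone SOA via an ordinal-valued Littlestone dimension, and then to upgrade the resulting strategy to a \emph{universally measurable} one via a Gale-Stewart measurable-selection argument.

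First I would define $\LD(\H)$ by transfinite recursion: $\LD(\H) \geq 0$ iff $\H \neq \emptyset$, and for an ordinal $\alpha > 0$, $\LD(\H) \geq \alpha$ iff for every $\beta < \alpha$ there exists $x \in \X$ such that both $\LD(\{h \in \H : h(x) = 0\}) \geq \beta$ and $\LD(\{h \in \H : h(x) = 1\}) \geq \beta$. Set $\LD(\H) = \infty$ if no ordinal bounds it. A routine transfinite induction identifies the condition $\LD(\H) = \infty$ with the shattering of an infinite Littlestone tree, so the hypothesis of the lemma gives that $\LD(\H) = \alpha_0$ for some ordinal $\alpha_0$.

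Next I would describe the strategy. Set $V_t := \{h \in \H : h(x_i) = y_i \text{ for all } i < t\}$ and define $\SOA_t(x_{<t}, y_{<t}, x_t) := \hat{y}$, where $\hat{y} \in \{0,1\}$ is chosen so that
\[
\LD(\{h \in V_t : h(x_t) = 1 - \hat{y}\}) < \LD(V_t).
\]
The key combinatorial claim is that whenever $V_t$ is realizable with the data and $\LD(V_t) \geq 1$, at least one label $\hat{y}$ satisfies this: otherwise both sides would have $\LD \geq \LD(V_t)$, witnessing $\LD(V_t) \geq \LD(V_t) + 1$, a contradiction. Now suppose for contradiction that some infinite realizable sequence has $\SOA_t$ erring at every round. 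Each erring round gives $V_{t+1} = \{h \in V_t : h(x_t) = 1-\hat{y}\}$, so $\LD(V_1) > \LD(V_2) > \cdots$ would be a strictly decreasing sequence of ordinals, impossible by well-foundedness. This establishes the existential content of the lemma.

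The main obstacle, and the only genuinely technical point, is the measurability of the sequence $\SOA_t$. The quantity $\LD(V_t)$ is defined by a highly non-constructive transfinite recursion, and a priori the $\operatorname{argmin}$ over $\{0,1\}$ of $\LD$ after restriction by $(x_t, y)$ need not be measurable in the transcript. Following the framework of Bousquet et al., I would recast the problem as a Gale--Stewart game in which the adversary plays $x_t$, the learner plays $\hat{y}_t$, the adversary plays $y_t$ subject to keeping $V_{t+1} \neq \emptyset$, and the learner wins a play if only finitely many mistakes occur. The ordinal-dimension argument above shows the learner has a winning strategy. Under the image-admissible Suslin hypothesis on $(\X, \H)$ (Footnote~\ref{footnote:measurable-class}), the restriction operation is measurable in $(x,y)$, so one can run the recursion from round to round and invoke a measurable selection theorem at each step to pick $\SOA_t$ as a universally measurable function of $(x_{<t}, y_{<t}, x_t)$. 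The bookkeeping for this transfinite measurable construction is the technical heart of the lemma and is the part I expect to be the most delicate; the underlying combinatorics is a transparent lift of Littlestone's original SOA.
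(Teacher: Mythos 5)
The paper does not prove Lemma~\ref{lem:ordinal-SOA} itself; it cites it from \citet*{bousquet:21} and summarizes the argument in one sentence (ordinal-valued Littlestone dimension plus measurable winning strategies for a Gale--Stewart game). Your sketch reconstructs exactly that argument: transfinite Littlestone rank, label choice by rank reduction, well-foundedness of ordinals to bound mistakes, and a Gale--Stewart determinacy / measurable-selection step under the image-admissible Suslin hypothesis for measurability. So your approach is essentially the same as the cited source's, and it is correct at the level of detail given.

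One small point worth flagging if you were to flesh this out: the learner's payoff set ``only finitely many mistakes'' is $\Sigma^0_2$, not open or closed, so Gale--Stewart determinacy does not apply to it directly. The standard fix, and the one used in \citet*{bousquet:21}, is to set up the auxiliary game so that the adversary's winning set (force an error at every round while keeping the transcript realizable) is closed; closed determinacy then gives the learner a winning strategy whenever the adversary lacks one, and measurable-strategy theorems for Gale--Stewart games apply at that level. Your ordinal-rank argument is precisely what rules out an adversary winning strategy in that reformulated game.
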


In the context of learning from i.i.d.\ samples,
\citet*{bousquet:21} apply the $\SOA_t$ predictor as a core component 
in a learning algorithm achieving exponential rates $e^{-n}$ 
in the realizable case.
Specifically, the utility of the $\SOA_t$ predictor 
in that context is based on its property (guaranteed by the above lemma) of being an \emph{eventually perfect} predictor.
In particular, by applying the $\SOA_t$ predictor as a 
\emph{conservative online learning algorithm} with a sequence of i.i.d.\ samples from a realizable distribution, 
there exists some finite $b^*$ 
such that, with a given constant probability, the predictor 
achieves error rate \emph{zero} within the first $b^*$ examples.
Formally, this yields the following result, 
established by \citet*{bousquet:21}.

\begin{lemma}[\citealp*{bousquet:21}]
\label{lem:SOA-zero-error-rate-bstar}
Let\\ $\H$ be any concept class which does not shatter an infinite Littlestone tree.
There exists a universally measurable function 
$\algSOA : \bigcup_{b \in \nats \cup \{0\}} (\X \times \{0,1\})^b \times \X \to \{0,1\}$ such that, 
for any distribution $P$ on $\X \times \{0,1\}$ 
which is realizable with respect to $\H$, 
for any $\gamma \in (0,1)$, 
there exists $b^*_{\gamma} \in \nats$ such that, 
$\forall b \geq b^*_{\gamma}$, 
for $S_b \sim P^b$, 
\begin{equation*}
\P\Big( \er_P(\algSOA(S_b)) = 0 \Big) > \gamma.
\end{equation*}
\end{lemma}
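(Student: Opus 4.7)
The plan is to build $\algSOA$ by running the online predictor $\SOA_t$ from Lemma~\ref{lem:ordinal-SOA} in \emph{conservative} mode on the i.i.d.\ sample $S_b$. Concretely, process the points $(X_1,Y_1),\ldots,(X_b,Y_b)$ in order, maintain a mistake history $H$ (initially empty), and at step $t$ predict $\hat{Y}_t = \SOA_{|H|+1}(H, X_t)$; append $(X_t,Y_t)$ to $H$ \emph{only} when $\hat{Y}_t \neq Y_t$. Define $\hat{h}^{(b)}(\cdot) = \SOA_{|H|+1}(H,\cdot)$ to be the resulting classifier after $b$ rounds and let $M_b$ denote the number of mistakes made. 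Universal measurability of $\algSOA$ is inherited from that of the $\SOA_t$'s by a routine induction on $b$.

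The structural ingredient is that $M_\infty := \lim_{b\to\infty} M_b < \infty$ almost surely under any $\H$-realizable $P$. Indeed, the subsequence of examples on which mistakes occur is almost surely realizable with respect to $\H$ (a subsequence of a realizable sequence is realizable), and by construction $\SOA_t$ errs on every element of it; an infinite such subsequence would directly contradict Lemma~\ref{lem:ordinal-SOA}. Thus the \emph{last mistake time} $B$ (set to $0$ if $M_\infty = 0$) is almost surely finite, so for every $\gamma \in (0,1)$ there is some $b^*_\gamma \in \nats$ with $\P(B \leq b^*_\gamma) > \gamma$; moreover, $\hat{h}^{(b)}$ is constant in $b$ once $b \geq B$.

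The main step --- and, I expect, the main obstacle --- is to promote ``no further mistakes occur after time $B$'' into the much stronger statement ``$\er_P(\hat{h}^{(B)}) = 0$''. Fix $m \geq 0$ and condition on $\mathcal{F}_m = \sigma(X_{\leq m}, Y_{\leq m})$. Since $\hat{h}^{(m)}$ is $\mathcal{F}_m$-measurable while $(X_t, Y_t)_{t > m}$ are conditionally i.i.d.\ $P$,
\begin{equation*}
\P\!\left( \hat{h}^{(m)}(X_t) = Y_t \ \forall t > m \,\middle|\, \mathcal{F}_m \right) = \lim_{N \to \infty} \bigl(1 - \er_P(\hat{h}^{(m)})\bigr)^{N} = \ind\!\left[\er_P(\hat{h}^{(m)}) = 0\right].
\end{equation*}
The event $\{B \leq m\}$ is contained in the event whose conditional probability appears on the left, so it forces $\er_P(\hat{h}^{(m)}) = 0$ on $\{B \leq m\}$. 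Combined with the previous paragraph, for any $b \geq b^*_\gamma$ one has $\hat{h}^{(b)} = \hat{h}^{(B)}$ and $\er_P(\hat{h}^{(B)}) = 0$ on $\{B \leq b^*_\gamma\}$, yielding $\P\bigl(\er_P(\algSOA(S_b)) = 0\bigr) \geq \P(B \leq b^*_\gamma) > \gamma$, as required. The delicate point to get right is that $\{B \leq m\}$ is not $\mathcal{F}_m$-measurable --- it is a property of the entire trajectory --- but since it nevertheless \emph{implies} the tail event on the left, the conditional calculation applies unchanged.
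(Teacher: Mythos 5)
Your proof is correct and follows essentially the same route as the sketch preceding the lemma in this paper and the original argument in Bousquet et al.\ (2021): run $\SOA_t$ conservatively, apply Lemma~\ref{lem:ordinal-SOA} to conclude the mistake count is almost surely finite, and observe that a conditionally i.i.d.\ future trajectory on which the frozen hypothesis never errs forces $\er_P$ of that hypothesis to be zero. (You rely on, and should state explicitly, that an i.i.d.\ sequence drawn from a realizable $P$ is almost surely a realizable sequence with respect to $\H$; this holds since for each fixed $n$, taking $h_k \in \H$ with $\er_P(h_k) \leq 2^{-k}$ shows the prefix $(X_{\leq n},Y_{\leq n})$ is realized by some $h_k$ with probability at least $1 - n 2^{-k}$ for every $k$, hence with probability one.)
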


\citet*{bousquet:21} also extend these same types of results 
to classes which do not shatter an infinite \emph{VCL tree}.
This forms a core component in their algorithm achieving rate $\frac{1}{n}$ in the realizable case,
and is likewise a core component in our algorithm achieving 
rate $o(n^{-1/2})$ in the agnostic case.
In this case, the predictor $\SOA_t$ is replaced by 
a \emph{pattern avoidance} function 
$\VCLSOA_t$, which predicts label patterns which 
\emph{do not} match the true labels on 
given $k$-tuples of $x$'s.
In the learning algorithm below 
(used to establish Theorem~\ref{thm:agnostic-super-sqrt-upper-bound} below),
the role of such pattern avoidance functions is to perform 
a kind of sequential data-dependent \emph{model selection},
constructing a \emph{partial concept class}
for which favorable learning rates are achievable.
The existence of this pattern avoidance function is again based on 
reasoning about ordinal values of a VCL-type extension 
of the Littlestone dimension, leading to measurable 
winning strategies for Gale-Stewart games corresponding 
to VCL game trees (see \citealp*{bousquet:21} for the 
detailed definitions and a formal proof of this lemma).

\begin{lemma}[\citealp*{bousquet:21}]
\label{lem:ordinal-VCL-SOA}
For any concept class $\H$ which does not 
shatter an infinite VCL tree, 
there exists a sequence $\VCLSOA_t : \times_{k=1}^{t-1}(\X \times \{0,1\})^{k} \times \X^t \to \{0,1\}^t$
of universally measurable functions such that, 
there \emph{does not exist} an infinite sequence $\{S_t\}_{t \in \nats}$
of finite sequences 
$S_t = \{(x^t_1,y^t_1),\ldots,(x^t_t,y^t_t)\}$ $\in (\X \times \{0,1\})^t$
such that the sequence $S_1 \cup S_2 \cup \cdots$ is 
realizable with respect to $\H$ and 
\begin{equation*}
\forall t \in \nats, \VCLSOA_t(S_{< t},(x^t_1,\ldots,x^t_t)) = (y^t_1,\ldots,y^t_t).
\end{equation*}
\end{lemma}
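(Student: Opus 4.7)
The plan is to model this as a Gale–Stewart game and extract a measurable winning strategy for the Learner via an ordinal rank. Set up the game so that at round $t$ the Adversary first announces a tuple $(x_1^t,\ldots,x_t^t)\in\X^t$, the Learner then outputs a pattern $(\hat y_1^t,\ldots,\hat y_t^t)\in\{0,1\}^t$, and finally the Adversary reveals true labels $(y_1^t,\ldots,y_t^t)$ under the global constraint that the cumulative sequence $S_1\cup\cdots\cup S_t$ must remain realizable with respect to $\H$. The Learner \emph{wins} the play as soon as $(\hat y_1^t,\ldots,\hat y_t^t)\neq (y_1^t,\ldots,y_t^t)$ for some finite $t$. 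The Adversary \emph{wins} the infinite play where the Learner's prediction matches the true pattern at every round — this is the winning set for the Adversary and is closed in the natural product topology, so the game is a Gale–Stewart game (determined).

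Next I would rule out a winning strategy for the Adversary. If the Adversary had one, I would construct an infinite VCL tree shattered by $\H$ by induction on depth: at level $k$, the tuple $x_{\mathbf u}\in\X^{k+1}$ used to label node $\mathbf u$ is the Adversary's move at round $k+1$ in the play consistent with the sequence of labels $\mathbf u$; the requirement that the Adversary be able to respond to \emph{every} Learner pattern $\mathbf y_{k+1}\in\{0,1\}^{k+1}$ while preserving realizability is exactly the VCL shattering condition at that node. Iterating this across all branches yields a VCL tree of depth $\infty$, contradicting the hypothesis. Therefore, by Gale–Stewart, the Learner has a winning strategy.

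To upgrade to a \emph{measurable} $\VCLSOA_t$, I would not use the abstract Gale–Stewart strategy directly but instead define a transfinite VCL-Littlestone rank $\VCLD(\H,S)$ on pairs (class, history): $\VCLD(\H,S)=0$ when the class is empty on the history, and $\VCLD(\H,S)\leq\alpha$ iff for every Adversary tuple $\mathbf x\in\X^t$ there exists a pattern $\mathbf y$ such that for every realizable labeling $\mathbf y'$ of $\mathbf x$ extending $S$, $\VCLD(\H,S\cup\{(\mathbf x,\mathbf y')\})<\alpha$. The non-existence of an infinite VCL tree forces $\VCLD(\H,\emptyset)$ to be a well-defined countable ordinal, and the canonical strategy $\VCLSOA_t$ returns, on input $(S_{<t},\mathbf x)$, a pattern $\mathbf y$ witnessing the rank decrease. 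Because the rank strictly decreases at each round whenever the Learner's prediction matches the labels, and an ordinal cannot descend infinitely, no infinite play can be consistent with $\VCLSOA_t$.

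The main obstacle is showing universal measurability of this rank-minimizing selector. The rank-sublevel sets $\{\VCLD\le\alpha\}$ are defined by alternating quantifiers over Polish spaces and hence are in general only analytic/coanalytic at each ordinal stage; to obtain a universally measurable $\VCLSOA_t$ one needs a measurable selection theorem (e.g., Jankov–von Neumann) applied at each ordinal stage, together with the image-admissible Suslin hypothesis on $\H$ (footnote~\ref{footnote:measurable-class}) to keep every induced subclass image-admissible Suslin. This is precisely the technical content that \citet*{bousquet:21} develop carefully, and I would borrow their framework directly rather than re-derive the measurable selection machinery from scratch.
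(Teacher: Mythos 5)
The paper does not prove this lemma itself; it cites it verbatim from \citet*{bousquet:21} and directs the reader there. Your proof proposal reconstructs exactly the approach of that cited work: encode the pattern-avoidance problem as an infinite two-player game, observe the Adversary's winning set is closed so Gale--Stewart determinacy applies, show an Adversary winning strategy would yield an infinite shattered VCL tree (contradiction), and then replace the abstract winning strategy for the Learner by a canonical rank-minimizing selector to obtain universal measurability via Jankov--von Neumann-type selection and the image-admissible Suslin hypothesis. So at the level of ideas this matches the reference.

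One concrete slip you should fix before this could count as a self-contained argument: your transfinite rank definition is not well-formed as written. You declare $\VCLD(\H,S)\leq\alpha$ iff for every $\mathbf{x}\in\X^t$ \emph{there exists a pattern} $\mathbf{y}$ such that for every realizable labeling $\mathbf{y}'$ of $\mathbf{x}$ extending $S$, $\VCLD(\H,S\cup\{(\mathbf{x},\mathbf{y}')\})<\alpha$ --- but the existentially quantified $\mathbf{y}$ never appears in the body, so the ``for every $\mathbf{y}'$'' clause degenerates and the Learner's move plays no role. In the game as you've set it up, the only Adversary response that does not immediately hand the Learner the win is $\mathbf{y}'=\hat{\mathbf{y}}$ (if that remains realizable); any other realizable $\mathbf{y}'$ ends the game in the Learner's favor. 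The rank decrease therefore needs to be tied to the Learner's chosen pattern: one wants $\VCLD(\H,S)\leq\alpha$ iff for every $\mathbf{x}$ there exists $\hat{\mathbf{y}}$ such that either $S\cup\{(\mathbf{x},\hat{\mathbf{y}})\}$ is not realizable (the Learner escapes immediately) or $\VCLD(\H,S\cup\{(\mathbf{x},\hat{\mathbf{y}})\})<\alpha$. As currently phrased, ``for every realizable labeling $\mathbf{y}'$'' also silently imposes a rank decrease on branches the Adversary would never take, which is both unnecessary and potentially wrong (such a stronger invariant need not be preserved). Once the rank ties the decrease to the Learner's own move as above, the rest of your outline --- well-foundedness from no infinite VCL tree, infinite descent, measurable selection at each ordinal stage --- goes through as in \citet*{bousquet:21}.
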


Based on this $\VCLSOA_t$ pattern avoidance function, 
\citet*{bousquet:21} prove a variant of
Lemma~\ref{lem:SOA-zero-error-rate-bstar} for the case 
of classes which do not shatter an infinite VCL tree,
stated formally as follows. 

\begin{lemma}[\citealp*{bousquet:21}]
\label{lem:VCL-SOA-zero-error-rate-bstar}
Let $\H$ be any concept class which does not shatter an infinite VCL tree.
For every $b \in \nats$, 
there exists a universally measurable function
$k^b_{\VCL} : (\X \times \{0,1\})^b \to \{1,\ldots,b\}$,
and for every $k \in \{1,\ldots,b\}$ there exists 
a universally measurable function 
$\algVCLSOA^{b,k} : (\X \times \{0,1\})^b \times \X^k \to \{0,1\}^k$
such that, 
for any distribution $P$ on $\X \times \{0,1\}$ which is realizable with respect to $\H$, 
$\forall \gamma \in (0,1)$, 
there exists $b^*_{\gamma} \in \nats$ such that, 
$\forall b \geq b^*_{\gamma}$, for $S_b \sim P^b$,
for $\mathbf{k} = k^{b}_{\VCL}(S_b)$,
\begin{equation*}
\P\!\left( P^{\mathbf{k}}\!\left( (x_1,y_1),\ldots,(x_{\mathbf{k}},y_{\mathbf{k}}) : \algVCLSOA^{b,\mathbf{k}}(S_b,x_1,\ldots,x_{\mathbf{k}}) = (y_1,\ldots,y_{\mathbf{k}}) \right) = 0 \right) > \gamma.
\end{equation*}
\end{lemma}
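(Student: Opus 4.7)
The plan is to mimic the Littlestone-case proof of Lemma~\ref{lem:SOA-zero-error-rate-bstar}, using the block-valued pattern-avoider $\VCLSOA_t$ from Lemma~\ref{lem:ordinal-VCL-SOA} in place of $\SOA_t$, run conservatively against $S_b$ in an advance-on-match style. Concretely: initialize $t=1$ and history $s=()$; while samples in $S_b$ remain, peel off the next $t$ samples as a block $B=((X_1,Y_1),\ldots,(X_t,Y_t))$, compute $\hat p = \VCLSOA_t(s,X_1,\ldots,X_t)$, and either \emph{advance} (set $s\gets s\cup\{B\}$, $t\gets t+1$) if $\hat p=(Y_1,\ldots,Y_t)$, or else \emph{discard} $B$ and retry at the same level with the next block. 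When the sample budget is exhausted, output $\mathbf k=t$ and $\algVCLSOA^{b,\mathbf k}(S_b,\cdot)=\VCLSOA_{\mathbf k}(s,\cdot)$, recording $\algVCLSOA^{b,k}(S_b,\cdot)=\VCLSOA_k(s_k,\cdot)$ for every level $k$ the trajectory visits (and defaulting the rest arbitrarily). Universal measurability of $k^b_\VCL$ and of each $\algVCLSOA^{b,k}$ follows directly from measurability of the $\VCLSOA_t$ sequence.

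Analysis proceeds through the ``matching probability''
$q_t(s):=P^t\bigl(\{(x_1,y_1,\ldots,x_t,y_t):\VCLSOA_t(s,x_1,\ldots,x_t)=(y_1,\ldots,y_t)\}\bigr)$;
call a state $(t,s)$ \emph{terminal} iff $q_t(s)=0$, in which case $\algVCLSOA^{b,t}(S_b,\cdot)=\VCLSOA_t(s,\cdot)$ already witnesses the inner ``$=0$'' required by the lemma. In an idealized unbounded-sample run, each non-terminal state is exited after a geometric (hence a.s.\ finite) number of fresh blocks, while an infinite chain of advances would produce an infinite realizable sequence of blocks $S_1,S_2,\ldots$ with $\VCLSOA_t(S_{<t},\cdot)$ equal to the true labels at every step, directly contradicting Lemma~\ref{lem:ordinal-VCL-SOA}. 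Consequently the idealized algorithm almost surely enters some terminal $(\mathbf k^\star,s^\star)$ after a random but a.s.\ finite number $\tau^\star$ of block attempts and then stays there (from a terminal state no further advance is possible a.s.). Given $\gamma\in(0,1)$, pick $N_\gamma$ with $\P(\tau^\star\le N_\gamma)>\gamma$; on $\{\tau^\star\le N_\gamma\}$ the terminal level satisfies $\mathbf k^\star\le N_\gamma$, so each of the $\le N_\gamma$ blocks consumes $\le N_\gamma$ samples and the trajectory through termination fits in $N_\gamma^2$ samples. Setting $b^*_\gamma=N_\gamma^2$, the finite-sample algorithm coincides with the idealized one on this event, and its output satisfies $q_{\mathbf k}(s_{\mathbf k})=0$ with probability exceeding $\gamma$, which is exactly the inequality claimed by the lemma.

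The delicate step is the idealization argument: Lemma~\ref{lem:ordinal-VCL-SOA} rules out a \emph{deterministic} infinite object, yet we require the almost-sure finiteness of $\tau^\star$ along the \emph{random} trajectory produced by playing the game against i.i.d.\ data. The standard move, as in the realizable-case analysis of \citet*{bousquet:21}, is that ``the run advances infinitely often'' is a measurable event and any realization in it is itself a forbidden infinite sequence, forcing the event to have probability zero; combined with per-state geometric exit times, this yields the a.s.\ finiteness of $\tau^\star$ and, after choosing $N_\gamma$ as above, the quantitative bound in the statement.
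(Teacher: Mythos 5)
The paper does not prove Lemma~\ref{lem:VCL-SOA-zero-error-rate-bstar}; it imports it directly from \citet*{bousquet:21} (the lemma header carries the citation, and the surrounding text explicitly refers the reader there for ``the detailed definitions and a formal proof of this lemma''). So there is no in-paper proof to compare against. Your proposal does reconstruct essentially the argument used in the cited reference: run the pattern-avoidance strategy conservatively against i.i.d.\ data in blocks, advance only on a match, use Lemma~\ref{lem:ordinal-VCL-SOA} plus almost-sure realizability of i.i.d.\ prefixes to rule out infinitely many advances, use geometric exit times to rule out stalling at a non-terminal state, conclude almost-sure entry into a terminal $(\mathbf k,s)$ with $q_{\mathbf k}(s)=0$, and then pick $b^*_\gamma$ large enough by tightness of the stopping time. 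This mirrors the Littlestone-case mechanism the paper sketches just above Lemma~\ref{lem:SOA-zero-error-rate-bstar}, and is the intended argument. The output interface you describe (emit $\mathbf k = k^b_\VCL(S_b)$ and the forbidden-pattern function $\VCLSOA_{\mathbf k}(s,\cdot)$, default the other levels arbitrarily) matches what the paper needs in Theorem~\ref{thm:agnostic-super-sqrt-upper-bound}.

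Two small points to tighten. First, the sample-budget accounting is slightly off: if $\tau^\star\le N_\gamma$ block attempts occur and the level starts at $1$, the terminal level is at most $\tau^\star+1\le N_\gamma+1$, so the total sample count is bounded by roughly $N_\gamma(N_\gamma+1)$ rather than $N_\gamma^2$; this only changes the definition of $b^*_\gamma$ by a harmless constant but should be stated correctly. Second, the ``a.s.\ exits each non-terminal state'' step needs to be phrased over the \emph{random} state reached, not a fixed $(t,s)$: the clean way is to observe that the trajectory, conditionally on its past, faces i.i.d.\ fresh blocks, so for each fixed $t$ the event of remaining forever at some non-terminal state of level $t$ has conditional probability $0$; a countable union over $t$ then gives that almost surely the process either advances or sits at a terminal state, and the advance-infinitely-often event already has probability $0$ by Lemma~\ref{lem:ordinal-VCL-SOA} (using that the accepted blocks are a.s.\ a realizable subsequence of the i.i.d.\ stream). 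You gesture at this, but the wording ``any realization in it is itself a forbidden infinite sequence'' quietly uses the almost-sure realizability of all finite prefixes of an i.i.d.\ sample from a realizable $P$ — worth making explicit since Lemma~\ref{lem:ordinal-VCL-SOA} only forbids \emph{realizable} infinite sequences.
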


\begin{remark}[A Remark on Measurability]
\label{rem:measurability}
The learning algorithms we construct below to establish the upper bounds 
on achievable rates in Theorem~\ref{thm:agnostic-char} (namely, in the proofs of Theorems~\ref{thm:agnostic-near-exponential-upper-bound} and \ref{thm:agnostic-super-sqrt-upper-bound}) 
apply the universally measurable functions 
$\algSOA$, $k^b_{\VCL}$, and $\algVCLSOA^{b,k}$
from Lemmas~\ref{lem:SOA-zero-error-rate-bstar} and \ref{lem:VCL-SOA-zero-error-rate-bstar}
in a simple manner, to a finite number of 
subsets of the data sequence (plus the test point).
As such, it is a straightforward (though rather tedious) 
exercise to verify that these learning algorithms 
can be described as a sequence of 
universally measurable functions $f_n : (\X \times \{0,1\})^n \times \X \to \{0,1\}$ 
(i.e., a universally measurable function mapping 
a training set $S_n$ of size $n$ plus a test point $x$
to a prediction $f_n(S_n,x) \in \{0,1\}$).
For brevity of presentation, we largely omit the detailed 
discussion of this fact from the proofs of these theorems.
\end{remark}

In addition to the above functions $\algSOA$, $\algVCLSOA$, 
another key component of our learning algorithms achieving 
the claimed $e^{-o(n)}$ and $o(n^{-1/2})$ rates is 
the use of a \emph{universally Bayes-consistent} subroutine.
In the context of the proofs, such an algorithm is used to 
address the case where the distribution $\PXY$ is not 
\emph{Bayes-realizable}: that is, $\inf_{h \in \H} \er_{\PXY}(h) > \inf_{h} \er_{\PXY}(h)$, where $h$ ranges over all measurable functions $\X \to \{0,1\}$ in the latter infimum.
Such cases allow for an excess error rate 
$\er_{\PXY}(\hat{h}_n) - \inf_{h \in \H} \er_{\PXY}(h)$ 
which may even be \emph{negative}, 
and as such may be considered as rather \emph{easy} cases
in the context of universal rates for agnostic learning.
To address these distributions, 
we employ a secondary 
learning algorithm $\hat{h}_n$ as a subroutine, 
which guarantees, for all distributions $\PXY$,
that $\E\!\left[ \er_{\PXY}(\hat{h}_n) \right] \to \inf_{h} \er_{\PXY}(h)$, 
a property known as universal Bayes-consistency \citep*[see e.g.,][]{devroye:96}.
Such learning algorithms are known to exist 
\citep*{hanneke:21a,hanneke:21b}
under the conditions on $\X$ considered in the present work
(from footnote~\ref{footnote:measurable-class},
which in particular requires that $\X$ be a separable 
metric space),
as stated formally in the following lemma.
\stilltodo{TODO: Technically, those works consider the Borel $\sigma$-algebra.  We might add a footnote addressing their universal measurability and validity of the consistency claim to probability measures on the extended $\sigma$-algebra including the zero-measure sets.}

\begin{lemma}[\citealp*{hanneke:21b}] 
\label{lem:universally-bayes-consistent}
There exists a learning algorithm $\hat{h}_n$ such that, 
for all distributions $\PXY$ on $\X \times \{0,1\}$, 
\begin{equation*}
\lim_{n \to \infty} \E\!\left[ \er_{\PXY}\!\left(\hat{h}_n\right) \right] = \inf_{h} \er_{\PXY}(h).
\end{equation*}
\end{lemma}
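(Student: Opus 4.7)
My plan is to appeal to the construction of a universally Bayes-consistent learning rule on Polish spaces due to \citet*{hanneke:21b}. Since footnote~\ref{footnote:measurable-class} already stipulates that $\X$ is a Polish space, the existence of such an algorithm on $\X$ immediately yields the lemma. So the task reduces to sketching the idea behind the construction and verifying that its output can be taken to be a universally measurable function of the training sample and the test point.

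The algorithm is a prototype-based rule adapted to the separable metric structure. First, I would fix any compatible complete separable metric $\rho$ on $\X$ and a countable dense subset $\{z_1, z_2, \ldots\}$. Given training data $(X_1, Y_1), \ldots, (X_n, Y_n)$, the learner partitions $\X$ into measurable cells indexed by prototypes from this dense set (for instance, by Voronoi-type assignment to the first $m_n$ prototypes, with $m_n \to \infty$ at a suitable rate tied to $n$), and predicts at a test point $x$ by majority vote among training labels whose $X_i$ lies in the same cell as $x$, with a default prediction on empty cells. The classical plug-in reduction
\begin{equation*}
\er_{\PXY}(\hat{h}_n) - \inf_h \er_{\PXY}(h) \leq 2\, \E\bigl[\,|\hat{\eta}_n(X) - \eta(X)|\,\bigr],
\end{equation*}
where $\eta(x) = \PXY(Y=1 \mid X=x)$ and $\hat{\eta}_n$ is the cellwise empirical label average, then reduces the problem to establishing $L^1(\Px)$-consistency of the estimates $\hat{\eta}_n$ under every Borel probability measure on $\X$.

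The main obstacle, and the reason this is not a routine adaptation of Stone's theorem, is that the usual tool for obtaining pointwise-almost-everywhere convergence of such local averages — namely Lebesgue differentiation via a Besicovitch-type covering lemma — is not available in a general separable metric space. Hanneke's construction sidesteps this issue by using a hierarchical, prototype-indexed partition whose consistency can be established by a direct martingale/combinatorial argument on the dense set, without invoking any covering property. For the purposes of the present paper it suffices to use this rule as a black box; universal measurability of $\hat{h}_n$ follows because the rule is built from finitely many Borel-measurable operations (nearest-prototype lookups, label counts, and a thresholded majority vote) applied to the training sample and the test point, and the convergence $\E[\er_{\PXY}(\hat{h}_n)] \to \inf_h \er_{\PXY}(h)$ holds for every Borel distribution $\PXY$, which is all we need in the image-admissible Suslin setting assumed throughout.
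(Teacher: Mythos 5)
Your proposal takes essentially the same approach as the paper: both simply invoke \citet*{hanneke:21b} as a black box, since the paper cites Lemma~\ref{lem:universally-bayes-consistent} rather than proving it, noting only that the Polish-space hypothesis from footnote~\ref{footnote:measurable-class} makes the cited result applicable. Your high-level sketch of the prototype-based construction and the remark on the unavailability of Besicovitch differentiation in general separable metric spaces are a reasonable gloss on the cited work, but for the purposes of this paper they are supplementary.
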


In addition to supplying a key component of our learning algorithms below, 
Lemma~\ref{lem:universally-bayes-consistent} 
also substantiates the 
claim in Theorems~\ref{thm:agnostic-trichotomy} and \ref{thm:agnostic-char} 
that every concept class $\H$ at least admits 
arbitrarily slow rates: that is, 
existence of a learning algorithm 
with $\limsup_{n \to \infty} \E[\er_{\PXY}(\hat{h}_n)] - \inf_{h \in \H} \er_{\PXY}(h) \leq 0$
for all distributions $\PXY$.

Next, we present several lemmas which will be useful 
in the proofs of the main results below.
These may be of independent interest, 
as they concern general properties of concept classes.
The lemmas concern concept classes $\H$ 
satisfying the \emph{universal Glivenko-Cantelli} (UGC) property, defined as follows \citep*{vapnik:71,dudley:91,van-handel:13}.

\begin{definition}
\label{defn:UGC}
A concept class $\H$ is said to satisfy the 
universal Glivenko-Cantelli (UGC) property if, 
for every distribution $\Px$ on $\X$, 
for $X_1,X_2,\ldots$ independent $\Px$-distributed random variables, 
\begin{equation}
\label{eqn:P-GC-convergence}
\lim_{n \to \infty} \E\!\left[ \sup_{h \in \H} \left| \hat{P}_{X_{\leq n}}( x : h(x) = 1 ) - \Px( x : h(x) = 1 ) \right| \right] = 0.
\end{equation}
\end{definition}

The first lemma is based on an argument of \citet*{bousquet:21}, 
and establishes that the UGC property holds for 
any concept class $\H$ which does not shatter an infinite VCL tree.

\begin{lemma}
\label{lem:no-infinite-VCL-implies-UGC}
Any concept class $\H$ which does not shatter an infinite VCL tree satisfies the UGC property (Definition~\ref{defn:UGC}).
\end{lemma}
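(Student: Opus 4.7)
The plan is to argue by contrapositive: assume $\H$ does not satisfy UGC, and construct an infinite VCL tree shattered by $\H$. Failure of (\ref{eqn:P-GC-convergence}) yields a distribution $P_X$ on $\X$ and $\epsilon > 0$ such that $\E[\sup_{h \in \H} |\hat{P}_{X_{\leq n}}(h) - P_X(h)|] > \epsilon$ along some subsequence of $n$. Combining Markov's inequality with the Vapnik--Chervonenkis symmetrization lemma turns this into a lower bound on the growth function: for infinitely many $n$, $\E[S_\H(X_{\leq 2n})] \geq 2^{cn}$ for some $c > 0$ depending only on $\epsilon$. Sauer--Shelah applied in reverse then implies that for those $n$, with probability at least some $c' > 0$, the projection $\H|_{\{X_1,\ldots,X_{2n}\}}$ VC-shatters a subset of size at least $c' n$.

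Next I would build, for every $d \in \nats$, a VCL tree of depth $d$ shattered by $\H$, by induction on $d$. At a node $\mathbf{u}$ of depth $k$, maintain a sub-family $\H_{\mathbf{u}} \subseteq \H$ (those concepts consistent with the labels fixed on $x_{\mathbf{v}}$ for all ancestors $\mathbf{v}$ of $\mathbf{u}$), together with a conditional marginal $P_{\mathbf{u}}$ on $\X$ under which $\H_{\mathbf{u}}$ still witnesses failure of UGC with some $\epsilon_k > 0$. Applying the reduction of the first paragraph to $(\H_{\mathbf{u}},P_{\mathbf{u}})$, a sufficiently large i.i.d.\ sample from $P_{\mathbf{u}}$ produces with positive probability a $(k+1)$-tuple $x_{\mathbf{u}}$ VC-shattered by $\H_{\mathbf{u}}$; this becomes the label of node $\mathbf{u}$. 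Splitting $\H_{\mathbf{u}}$ into the $2^{k+1}$ sub-families $\{\H_{\mathbf{u},\mathbf{y}}\}_{\mathbf{y} \in \{0,1\}^{k+1}}$ according to the labelings of $x_{\mathbf{u}}$, a pigeonhole argument over the $2^{k+1}$ labelings guarantees that at least one child-family $\H_{\mathbf{u},\mathbf{y}}$ still fails UGC with some $\epsilon_{k+1} > 0$ against an appropriate further conditional distribution $P_{\mathbf{u},\mathbf{y}}$, so the recursion propagates to depth $k+1$. To pass from VCL trees of every finite depth to an infinite VCL tree, invoke a compactness (K\"onig's lemma) argument on the tree of finite VCL trees shattered by $\H$, using the image admissible Suslin hypothesis on $\H$ (Footnote~\ref{footnote:measurable-class}) to ensure all selections can be made in a universally measurable way.

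The main obstacle is the quantitative bookkeeping in the inductive step: one must control how the non-UGC witness $\epsilon_k$ for $\H_{\mathbf{u}}$ transmits to an $\epsilon_{k+1}$ for at least one of the $2^{k+1}$ children $\H_{\mathbf{u},\mathbf{y}}$ under a conditional distribution $P_{\mathbf{u},\mathbf{y}}$ --- since we are allowed to renormalize by conditioning on the event that a sample is consistent with a particular labeling of $x_{\mathbf{u}}$, this reduces to showing that at least one such event has a non-negligible UGC-violation conditional on it. The delicate point is that both $\epsilon_{k+1}$ and the sample sizes needed to witness shattering may deteriorate with $k$, but they need not vanish, since each step only requires shattering with positive probability on a conditional distribution; a diagonal choice of parameters suffices. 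The second obstacle, measurability throughout the recursion, piggybacks on the image admissible Suslin structure and on arguments of the same flavor as those supporting Lemma~\ref{lem:ordinal-VCL-SOA}.
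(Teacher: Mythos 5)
Your plan, while starting from the right direction (prove the contrapositive, constructing an infinite VCL tree from a UGC-failing class), has a genuine structural gap at the inductive step, and a second gap at the compactness step.

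\textbf{The pigeonhole step builds a path, not a tree.}  A VCL tree shattered by $\H$ requires the tuple $x_{\mathbf{u}} \in \X^{k+1}$ to be defined at \emph{every} node $\mathbf{u}$ of depth $k < d$, and the shattering condition must hold along \emph{every} root-to-leaf path.  In particular, at each $\mathbf{u}$ of depth $k$, all $2^{k+1}$ children $\mathbf{u}\mathbf{y}$, $\mathbf{y}\in\{0,1\}^{k+1}$, must carry a tuple $x_{\mathbf{u}\mathbf{y}} \in \X^{k+2}$ that is VC-shattered by the respective subfamily $\H_{\mathbf{u}\mathbf{y}}=\{h\in\H_{\mathbf{u}} : h(x_{\mathbf{u}})=\mathbf{y}\}$.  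Your pigeonhole argument only guarantees that \emph{one} child $\H_{\mathbf{u},\mathbf{y}}$ continues to fail UGC, and you then recurse only down that one branch.  But the other $2^{k+1}-1$ children's subfamilies might satisfy UGC and may have VC dimension $<k+2$, in which case there is no way to choose a shattered tuple there, and already a depth-$2$ VCL tree cannot be completed.  What you have built is a single ``UGC-failing path'' through a hypothetical tree, not a VCL tree.

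\textbf{The K\"onig/compactness step also fails.}  Even granting VCL trees of every finite depth $d$, one cannot pass to an infinite VCL tree by K\"onig's lemma: the tree of finite-depth VCL trees (ordered by extension) is not finitely branching, since a depth-$d$ tree can be extended to depth $d+1$ by attaching tuples $x_{\mathbf{u}} \in \X^{d+1}$, of which there may be uncountably many.  Indeed, as the paper discusses, ``shatters VCL trees of all finite depths'' is a weaker condition than ``shatters an infinite VCL tree'' in the ordinal-rank hierarchy of \citet*{bousquet:21}, and the gap between them is precisely why those authors need the Gale--Stewart and ordinal-dimension machinery rather than a compactness argument.

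The paper's proof of this lemma is dramatically shorter, and sidesteps both issues by routing through the notion of an infinite \emph{finitely-shattered} set $\X' \subseteq \X$ (every finite subset of $\X'$ is VC-shattered).  From such an $\X'$ it is trivial to exhibit an infinite VCL tree: populate each node with fresh distinct points from $\X'$, and the finitely-shattered property hands you, for free, the shattering of every path at every depth.  The remaining implication --- that a class with no infinite finitely-shattered set is UGC --- is exactly the characterization of universal Glivenko--Cantelli classes due to \citet*{dudley:91}, which the paper simply cites.  Your attempted construction is, in effect, trying to re-derive Dudley's theorem from scratch; the recursion you propose does not succeed in doing so.  The fix is to add the intermediate object (the infinite finitely-shattered set) and invoke Dudley rather than attempting a bare-hands branching construction.
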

\begin{proof}
Since $\H$ does not shatter an infinite VCL tree, it follows that there does not exist any infinite subset $\X' = \{x_1,x_2,\ldots\} \subseteq \X$ that is 
  finitely shattered by $\H$ 
  (i.e., such that every $t,i_1,\ldots,i_t \in \nats$
  have $\forall y_1,\ldots,y_t \in \{0,1\}$, 
  $\{(x_{i_1},y_1),\ldots,(x_{i_t},y_t)\}$ is realizable with respect to $\H$), 
  noting that if such an $\X'$ were to exist then we could construct an infinite VCL tree shattered by $\H$ by defining 
  the nodes
  $x_{\mathbf{u}} \in (\X')^{k+1}$, $0 \leq k < \infty$, $\mathbf{u} \in \{0,1\}^1 \times \cdots \times \{0,1\}^k$, 
  so that every $x_{\mathbf{u}}^i$ is distinct over all $\mathbf{u}$ and $i$.
  It was shown by \citet*{dudley:91} that, if no such finitely-shattered infinite subset $\X'$ exists, 
  then $\H$ satisfies the UGC property.
\end{proof}


For a distribution $\Px$ on $\X$, 
we say a concept class $\H$ is \emph{totally bounded} in
the $L_1(\Px)$ pseudo-metric if, 
for every $\epsilon > 0$, 
there exists a \emph{finite} set $\tilde{\H}_{\epsilon} \subseteq \H$
that is an \emph{$\epsilon$-cover} of $\H$ in $L_1(\Px)$:
namely, 
\begin{equation} 
\label{eqn:epsilon-cover-definition}
\sup_{h \in \H} \min_{\tilde{h} \in \tilde{\H}_{\epsilon}} \Px( x : \tilde{h}(x) \neq h(x) ) \leq \epsilon.
\end{equation}
The following lemma establishes that classes $\H$ satisfying the UGC property are always totally bounded in $L_1(\Px)$ (the result is rather immediate from known characterizations of UGC, but we include a simple proof for completeness).

\begin{lemma}
\label{lem:UGC-totally-bounded}
For any concept class $\H$ satisfying the UGC property (Definition~\ref{defn:UGC}),
for any probability measure $\Px$ on $\X$,
$\H$ is totally bounded in $L_1(\Px)$. 
\end{lemma}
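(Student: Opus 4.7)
The strategy is to use the UGC property to produce a finite $\epsilon$-cover of $\H$ in $L_1(\Px)$ by projecting $\H$ onto a suitably chosen finite sample and selecting one representative per equivalence class. The first move is to lift UGC from $\H$ to the \emph{symmetric difference class} $\mathcal{D} := \{x \mapsto \ind[h(x) \neq h'(x)] : h,h' \in \H\}$. I would establish UGC for $\mathcal{D}$ via a symmetrization argument: UGC is equivalent (up to constants) to Rademacher complexity tending to zero, and the Rademacher complexity of $\mathcal{D}$ is controlled by that of $\H$ via Dudley's entropy integral, using the fact that the $L_2(\hat{P}_n)$ covering numbers satisfy $N(\epsilon, \mathcal{D}, L_2(\hat{P}_n)) \leq N(\epsilon/2, \H, L_2(\hat{P}_n))^2$ (cover pairs by pairs of covers of $\H$). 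The image admissible Suslin condition on $\H$ is preserved in passing to $\mathcal{D}$, so the relevant suprema remain measurable.

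Granted UGC for $\mathcal{D}$, the rest is a standard covering argument. Fix $\epsilon > 0$; by UGC for $\mathcal{D}$, there exists $n \in \nats$ such that with positive probability over $X_1,\ldots,X_n \sim \Px$ i.i.d.,
\begin{equation*}
\sup_{h,h' \in \H} \bigl| \hat{P}_{X_{\leq n}}(\{x : h(x) \neq h'(x)\}) - \Px(\{x : h(x) \neq h'(x)\}) \bigr| < \epsilon/2.
\end{equation*}
Fix any such sample realization $x_1,\ldots,x_n$. The projection $h \mapsto (h(x_1),\ldots,h(x_n)) \in \{0,1\}^n$ has finite range, so choose one representative in $\H$ per attained pattern to form a finite set $\tilde{\H}_{\epsilon} \subseteq \H$. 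For any $h \in \H$, its representative $\tilde{h} \in \tilde{\H}_{\epsilon}$ agrees with $h$ on the sample, so $\hat{P}_{x_{\leq n}}(\{x : h(x) \neq \tilde{h}(x)\}) = 0$, and therefore $\Px(\{x : h(x) \neq \tilde{h}(x)\}) < \epsilon/2 < \epsilon$ by the sample property. Hence $\tilde{\H}_{\epsilon}$ is the desired finite $\epsilon$-cover of $\H$ in $L_1(\Px)$, as required by (\ref{eqn:epsilon-cover-definition}).

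The main obstacle is the first step, namely transferring UGC from $\H$ to $\mathcal{D}$. Once this is in hand the cover extraction is routine; without it, the approach fails, since two functions agreeing on an arbitrary finite sample can have arbitrarily large $L_1(\Px)$ distance, so UGC on $\H$ alone does not prevent the sample-based equivalence classes from being $L_1(\Px)$-coarse. A more pedagogical alternative is to invoke directly the classical Dudley characterization of $P$-Glivenko--Cantelli classes in terms of $L_1(P)$ metric entropy, from which total boundedness is immediate; the symmetrization route above is self-contained and uses only UGC of $\H$ as a black box.
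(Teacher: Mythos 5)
Your overall architecture matches the paper's: transfer the UGC property from $\H$ to the pairwise difference class $\mathcal{D} = \H_{\Delta}$, then pick a sample on which empirical and true symmetric-difference measures are uniformly close and take one representative per realized pattern. Your second step (cover extraction) is correct as written and is essentially identical to the paper's.

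Where you diverge is in the transfer of UGC to $\mathcal{D}$, and this is where your argument is thin. The chaining/covering-number route you sketch only gives an upper bound on $\mathcal{R}_n(\mathcal{D})$ in terms of the Dudley integral of $\H$, but $\E[\mathcal{R}_n(\H)] \to 0$ does \emph{not} directly bound that integral — you need a converse (Sudakov-type) inequality to go from Rademacher complexity of $\H$ back to covering numbers. That conversion incurs a logarithmic loss: one can get $\mathcal{R}_n(\mathcal{D}) \lesssim \mathcal{R}_n(\H)\log(1/\mathcal{R}_n(\H))$, which still tends to zero, but establishing this and then pushing it through the expectation (via boundedness of $x\mapsto x\log(1/x)$ on $[0,1]$ and dominated convergence) is more work than "controlled via Dudley's entropy integral" suggests. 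It is not a fatal gap — the route does close — but you should not present it as a one-liner.

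The paper sidesteps all of this by using the Vapnik--Chervonenkis characterization of the Glivenko--Cantelli property for $\{0,1\}$-valued classes: $\H$ is $\Px$-GC iff $\lim_{n\to\infty}\E\!\left[\frac{1}{n}\log_2|\H(X_{\leq n})|\right] = 0$. Since every pattern in $\mathcal{D}(X_{\leq n})$ is a pointwise XOR of two patterns in $\H(X_{\leq n})$, one gets $|\mathcal{D}(X_{\leq n})| \leq |\H(X_{\leq n})|^2$, so $\frac{1}{n}\log_2|\mathcal{D}(X_{\leq n})| \leq \frac{2}{n}\log_2|\H(X_{\leq n})|$ and the transfer is immediate, with no chaining, no Sudakov, and no log-factor bookkeeping. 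If you want a self-contained proof, this growth-function route is both shorter and sharper than the Rademacher route; your alternative suggestion (invoking Dudley's metric entropy characterization of GC classes directly) is also valid and is closer to what the paper actually cites for the companion Lemma~\ref{lem:no-infinite-VCL-implies-UGC}.
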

\begin{proof}
Fix any distribution $\Px$ on $\X$,
and let $X_1,X_2,\ldots$ be independent $\Px$-distributed random variables.
We first argue that, for any concept class $\H$, 
the Glivenko-Cantelli 
property stated in \eqref{eqn:P-GC-convergence}
implies the same condition for the class
$\H_{\Delta} = \{ x \mapsto \ind[ h(x) \neq h'(x) ] : h,h' \in \H \}$ of pairwise differences.
For any concept class $\H$, 
denote by 
$\H(X_{\leq n}) := \{ (h(X_1),\ldots,h(X_n)) : h \in \H \}$ the set of $\H$-realizable classifications of $X_{\leq n}$.
First, we recall a classic result of \citet*{vapnik:71},
which establishes that a concept class $\H$ satisfies 
\eqref{eqn:P-GC-convergence}
if and only if 
$\lim_{n \to \infty} \E\!\left[ \frac{1}{n}\log_2(|\H(X_{\leq n})|) \right] = 0$.
Second, since each element of 
$\H_{\Delta}(X_{\leq n})$ is specified as a 
pointwise difference of two elements of $\H(X_{\leq n})$, 
we can conclude that 
$|\H_{\Delta}(X_{\leq n})| \leq |\H(X_{\leq n})|^2$.
Therefore, 
if a concept class $\H$ satisfies \eqref{eqn:P-GC-convergence},
then 
\begin{equation*}
0 \leq \limsup_{n \to \infty} \E\!\left[ \frac{1}{n}\log_2(|\H_{\Delta}(X_{\leq n})|) \right]
\leq \limsup_{n \to \infty} \E\!\left[ \frac{2}{n}\log_2(|\H(X_{\leq n})|) \right] = 0,
\end{equation*}
so that \eqref{eqn:P-GC-convergence} also holds 
when we replace $\H$ with $\H_{\Delta}$.
It follows that, for any concept class $\H$ 
satisfying the UGC property (Definition~\ref{defn:UGC}), 
the class $\H_{\Delta}$ of pairwise differences also 
satisfies the UGC property.

  From this, we can argue that any such $\H$ must be totally bounded in $L_1(\Px)$, as follows. 
  For any $\epsilon > 0$, the UGC property for $\H_{\Delta}$ implies $\exists m \in \nats$ such that
  \begin{equation*}
    \E\!\left[ \sup_{h,h'\in \H} \left| \Px( x : h(x) \neq h'(x) ) - \hat{P}_{X_{\leq m}}( x : h(x) \neq h'(x) ) \right| \right] \leq \epsilon.
  \end{equation*}
  In particular, this implies that there exists $S \in \X^m$ such that
  \begin{equation*}
    \sup_{h,h'\in \H} \left| \Px( x : h(x) \neq h'(x) ) - \hat{P}_{S}( x : h(x) \neq h'(x) ) \right| \leq \epsilon,
  \end{equation*}
  which, as a special case, implies that any $h,h' \in \H$ with 
  the same classification $h(S) = h'(S)$ of $S$ 
  have $\Px( x : h(x) \neq h'(x) ) \leq \epsilon$.
  Let $\tilde{\H}_{\epsilon} \subseteq \H$ be a subset 
  containing exactly one classifier to witness each classification of $S$ realizable by $\H$:
  that is, $\tilde{\H}_{\epsilon}(S) = \H(S)$ and $|\tilde{\H}_{\epsilon}| = |\H(S)|$.
  It follows that $\tilde{\H}_{\epsilon}$ is an $\epsilon$-cover of $\H$ under the $L_1(\Px)$ pseudo-metric:
  that is, it satisfies \eqref{eqn:epsilon-cover-definition}.
  Since $|\tilde{\H}_{\epsilon}| = |\H(S)| \leq 2^m < \infty$, 
  and this argument holds for any $\epsilon > 0$, 
  we conclude that $\H$ is totally bounded in $L_1(\Px)$.
\end{proof}

Next we present a lemma showing that, for totally bounded classes, 
there exist functions \emph{achieving} the infimum error rate,
within the \emph{closure} of the class under $L_1(\Px)$.
This generalizes a similar result of \citet*{hanneke:12a} for VC classes.

\begin{lemma}
\label{lem:limit-function}
For any concept class $\H$ and any probability measure $\PXY$ on $\X \times \{0,1\}$, 
if $\H$ is totally bounded in $L_1(\Px)$, 
then here exists a measurable function $\target : \X \to \{0,1\}$ (not necessarily in $\H$)
such that
\begin{align*}
  & \inf_{h' \in \H} \Px( x : h'(x) \neq \target(x) ) = 0\\
  & \text{and }~~ \er_{\PXY}(\target) = \inf_{h' \in \H} \er_{\PXY}(h').
\end{align*}
In particular, it follows from Lemma~\ref{lem:UGC-totally-bounded} that this holds for any concept class $\H$ satisfying the universal Glivenko-Cantelli (UGC) property,
and, by Lemma~\ref{lem:no-infinite-VCL-implies-UGC}, 
this further implies that it holds for any $\H$ which does not shatter an infinite VCL tree.
\end{lemma}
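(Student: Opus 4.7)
The plan is to realize $\target$ as the almost-everywhere pointwise limit of a carefully chosen subsequence of a minimizing sequence in $\H$, using total boundedness in $L_1(\Px)$ to produce Cauchy behavior and then Borel--Cantelli to upgrade $L_1$ convergence to pointwise convergence.

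First I would let $(h_n)_{n\in\nats}\subseteq\H$ be any minimizing sequence, i.e.\ $\er_{\PXY}(h_n)\to\inf_{h'\in\H}\er_{\PXY}(h')$. Since $\H$ is totally bounded in $L_1(\Px)$, for each $k\in\nats$ there is a finite $2^{-k}$-cover $\tilde{\H}_{2^{-k}}$ of $\H$. A standard diagonal--pigeonhole argument then extracts a subsequence $(h_{n_k})_{k\in\nats}$ that is Cauchy in $L_1(\Px)$: for each $k$ some single element of $\tilde{\H}_{2^{-k}}$ is within $L_1(\Px)$-distance $2^{-k}$ of infinitely many tail $h_n$'s, and by diagonalizing over $k$ we can keep thinning the subsequence so that all sufficiently late terms lie in a common $L_1(\Px)$-ball of radius $2^{-k}$, for every $k$.

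Next I would further thin to a sub-subsequence $(h_{m_j})_{j\in\nats}$ so that $\sum_{j}\Px(x:h_{m_j}(x)\ne h_{m_{j+1}}(x))<\infty$. Setting $A_j=\{x:h_{m_j}(x)\ne h_{m_{j+1}}(x)\}$, Borel--Cantelli gives $\Px(\limsup_j A_j)=0$, so on a $\Px$-conull measurable set $E$ the sequence $h_{m_j}(x)$ is eventually constant in $j$ and hence convergent. Define
\[
\target(x)=\begin{cases}\lim_{j\to\infty} h_{m_j}(x),&x\in E,\\ 0,&x\notin E.\end{cases}
\]
This $\target$ is measurable because $E=\{x:\limsup_j h_{m_j}(x)=\liminf_j h_{m_j}(x)\}$ is measurable and the pointwise limit of measurable $\{0,1\}$-valued functions on $E$ is measurable. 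By dominated convergence, $\Px(x:h_{m_j}(x)\ne\target(x))\to 0$, which immediately yields $\inf_{h'\in\H}\Px(x:h'(x)\ne\target(x))=0$.

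Finally, for the risk identity I would use the elementary bound $|\er_{\PXY}(h)-\er_{\PXY}(\target)|\le\Px(x:h(x)\ne\target(x))$, which holds because the symmetric difference of the two error events is contained in $\{(x,y):h(x)\ne\target(x)\}$, a set whose $\PXY$-mass equals $\Px(x:h(x)\ne\target(x))$. Applying this to $h=h_{m_j}$ and letting $j\to\infty$ gives $\er_{\PXY}(h_{m_j})\to\er_{\PXY}(\target)$; combined with the minimizing property $\er_{\PXY}(h_{m_j})\to\inf_{h'\in\H}\er_{\PXY}(h')$, this yields the desired equality. The only mildly delicate step is the measurable choice of a pointwise-convergent subsequence, but once total boundedness has produced an $L_1(\Px)$-Cauchy sequence, the Borel--Cantelli passage to a.s.\ convergence is entirely standard; the consequences for $\H$ satisfying UGC, and for $\H$ not shattering an infinite VCL tree, then follow by chaining Lemmas~\ref{lem:UGC-totally-bounded} and \ref{lem:no-infinite-VCL-implies-UGC}.
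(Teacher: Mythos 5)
Your proof is correct, and its skeleton matches the paper's: fix a minimizing sequence in $\H$, use total boundedness to extract an $L_1(\Px)$-Cauchy subsequence, produce a limit $\target$, and finish with the $1$-Lipschitz bound $|\er_{\PXY}(h)-\er_{\PXY}(\target)|\le\Px(x:h(x)\ne\target(x))$. Where you diverge is in how the limit is produced. The paper works in the closure $\mathrm{cl}(\H)\subseteq \mathbf{L_1}(\Px)$, argues that $\mathrm{cl}(\H)$ is totally bounded and (as a closed subset of a complete space) complete, hence compact, and then cites convergence of the Cauchy subsequence to an element of $\mathrm{cl}(\H)$. You instead inline the standard proof of $L_1$-completeness: thin to a sub-subsequence with summable consecutive $L_1$-distances, invoke Borel--Cantelli to get $\Px$-a.e.\ eventual constancy, and define $\target$ as the pointwise limit on the conull set. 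This is a more hands-on route, and it buys you two small things for free: $\target$ is manifestly $\{0,1\}$-valued and manifestly measurable, so you never need to reason separately about closedness of $\mathrm{cl}(\H)$ or about choosing a $\{0,1\}$-valued representative of the $L_1$-limit --- points the paper's proof leaves somewhat implicit. The paper's route is a bit shorter on the page because it defers to quoted facts (completeness of $\mathbf{L_1}(\Px)$, closed subsets of complete metric spaces are complete), but the two arguments are otherwise the same, and yours is arguably the cleaner exposition.
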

\begin{proof}
  We prove this following an argument of \citet*{hanneke:12a}, which originally established this property for VC classes;
  noting that the argument there only relies on the total boundedness property, we can extend the result to any class that is totally bounded in $L_1(P_X)$.  For completeness, we include the complete argument here.

Fix any distribution $\PXY$ on $\X \times \{0,1\}$
and any concept class $\H$ that is totally bounded in $L_1(\Px)$.
Let $\mathrm{cl}(\H)$ denote 
the \emph{closure} of $\H$ in the $L_1(\Px)$ pseudo-metric:
namely, 
the set of all measurable $h : \X \to \{0,1\}$ satisfying 
$\inf_{h' \in \H} \Px( x : h(x) \neq h'(x) ) = 0$.
Note that any $\epsilon$-cover of $\H$ under $L_1(\Px)$
  is also an $\epsilon$-cover of $\mathrm{cl}(\H)$, so that $\mathrm{cl}(\H)$ is also totally bounded.
  Moreover, the entire space $\mathbf{L_1}(\Px)$ of integrable functions is \emph{complete} (Theorem 5.2.1 of \citealp*{dudley:02}),
  and since $\mathrm{cl}(\H)$ is a \emph{closed} subset of $\mathbf{L_1}(\Px)$ in the $L_1(\Px)$ pseudo-metric,
  $\mathrm{cl}(\H)$ is also complete \citep*{munkres:00}.
  Thus, $\mathrm{cl}(\H)$ is complete and totally bounded, hence compact.
  This gives us that, for any sequence $\{ h_t \}_{t \in \nats}$ in $\H$
  with $\er_{\PXY}(h_t) \to \inf_{h \in \H} \er_{\PXY}(h)$ (which must exist by definition of the infimum),
  there exists a subsequence $t_1 < t_2 < \cdots$ such that $\{ h_{t_s} \}_{s \in \nats}$ is a Cauchy sequence in $L_1(\Px)$,
  which, by compactness, must admit a limit point in $\mathrm{cl}(\H)$ under $L_1(\Px)$:
  that is, $\exists \target \in \mathrm{cl}(\H)$ with $\lim_{s \to \infty} \Px( x : h_{t_s}(x) \neq \target(x) ) = 0$.
  By the triangle inequality, we have for any $s \in \nats$, 
  \begin{equation*}
    \er_{\PXY}(\target) \geq \er_{\PXY}(h_{t_s}) - \Px( x : h_{t_s}(x) \neq \target(x) )
  \end{equation*}
  and taking the limit as $s \to \infty$ reveals $\er_{\PXY}(\target) \geq \inf_{h \in \H} \er_{\PXY}(h)$.
  Likewise, by the triangle inequality, 
  \begin{align*}
    \er_{\PXY}(\target) 
    & \leq \inf_{h \in \H} \er_{\PXY}(h) + \Px( x : h(x) \neq \target(x) ) 
    \\ & \leq \lim_{s \to \infty} \er_{\PXY}(h_{t_s}) + \Px( x : h_{t_s}(x) \neq \target(x) ) = \inf_{h \in \H} \er_{\PXY}(h).
  \end{align*}
  This completes the proof.
\end{proof}

\section{Near-Exponential Rate Lower Bound}
\label{sec:near-exponential-lower-bound}

This section begins our analysis leading to 
Theorem~\ref{thm:agnostic-char}, 
starting with the near-exponential lower bound.

\begin{theorem}
\label{thm:near-exponential-lower-agnostic}
For any infinite concept class $\H$, 
for any learning algorithm $\hat{h}_n$, 
there exists $\psi(n) = o(n)$ and a distribution $\PXY$ such that 
$\E\!\left[ \er_{\PXY}(\hat{h}_n) \right] - \inf_{h \in \H} \er_{\PXY}(h) \geq e^{-\psi(n)}$ for infinitely many $n$.
\end{theorem}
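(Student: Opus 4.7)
The plan is to combine the infinite cardinality of $\H$ with the agnostic two-point coin-testing lower bound (Lemma~\ref{lem:coin-testing-lower-bound}) via a multi-scale construction. First, because $\H$ is infinite, I extract by induction a sequence of distinct points $x_1, x_2, \ldots \in \X$, bits $b_1, b_2, \ldots \in \{0,1\}$, and nested infinite subclasses $\H_k := \{h \in \H_{k-1} : h(x_k) = b_k\}$ (with $\H_0 := \H$): at each step $\H_{k-1}$ is infinite and hence contains distinct concepts disagreeing at some fresh point $x_k \in \X \setminus \{x_1,\ldots,x_{k-1}\}$, and pigeonhole guarantees that at least one choice of $b_k$ keeps the split subclass $\H_k$ infinite.

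Next, consider the distribution $P$ on $\X \times \{0,1\}$ with marginal $\Px(x_k) = \lambda_k$ (take $\lambda_k = 2^{-k}$, though any rapidly decaying sequence works) and conditional label $P(Y = b_k \mid X = x_k) = 2/3$. Since $\H_K$ is infinite and contains concepts matching $(b_1,\ldots,b_K)$ on the first $K$ coordinates, the upper bound $\inf_\H \er_P(h) \leq \tfrac{1}{3} + \tfrac{1}{3}\sum_{k > K}\lambda_k$ combines with the Bayes lower bound $1/3$ to give $\inf_\H \er_P = 1/3$. Consequently, the algorithm's excess on $P$ equals $\tfrac{1}{3}\sum_k \lambda_k \P(\hat h_n(x_k) \neq b_k)$, and the task is reduced to lower-bounding the per-coordinate mismatch probabilities.

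The key estimate uses the ``flipped'' companion distribution $P^{(k)}$ obtained from $P$ by negating $b_k \mapsto 1-b_k$. Because $P$ and $P^{(k)}$ induce identical laws on samples at every $x_j$ with $j \neq k$, any algorithm's decision at $x_k$ is effectively a coin-testing problem based on $N_k \sim \mathrm{Bin}(n, \lambda_k)$ samples at $x_k$. Lemma~\ref{lem:coin-testing-lower-bound} (with $\gamma = 1/6$) gives, conditional on $N_k = m$, that the error probability at $x_k$ averaged over the pair $\{P, P^{(k)}\}$ is at least $\Omega(e^{-cm})$, whence $\Omega(e^{-c' n\lambda_k})$ after integrating against the Binomial law. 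Summing against $\lambda_k$, the bound $\sum_k \lambda_k e^{-c' n\lambda_k}$ is $\Omega(1/n)$: the summand $\lambda e^{-c'n\lambda}$ is maximized at $\lambda \sim 1/(c'n)$ with value $\sim 1/(en)$, and this maximum is attained by $k \sim \log_2 n$ under $\lambda_k = 2^{-k}$.

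The principal obstacle is converting this pair-averaged lower bound into a single-distribution statement that holds at infinitely many $n$. My plan is to randomize $T$ over a measure $\nu$ placing substantial mass on $T^\star := (b_k)_k$ and small mass on each single-bit flip $T^{\star,(k)}$, check that $\inf_\H \er_{P_T}$ remains within $O(\lambda_k)$ of $1/3$ on the support of $\nu$ (using that $\inf_\H \er_{P_{T^{\star,(k)}}} \leq 1/3 + \lambda_k/3$ via the same $\H_K$-approximation), and then extract a single $T$ that retains excess $\Omega(1/n)$ at infinitely many $n$ through either a second-moment Kochen--Stone-type argument or a diagonal adversary construction that sets the bits of $T$ along a growing subsequence of scales based on the algorithm's default predictions in the absence of samples at each $x_k$. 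Because $1/n = e^{-\log n}$ and $\log n = o(n)$, setting $\psi(n) := \log n + O(1)$ delivers the desired bound $e^{-\psi(n)} \leq \mathrm{excess}$ at infinitely many $n$, completing the lower bound.
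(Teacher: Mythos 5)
Your nested-subclass construction is essentially equivalent to the paper's use of an infinite eluder sequence (Lemma~\ref{lem:infinite-classes-have-infinite-eluder}), and the starting point --- a single base distribution on a fast-decaying multi-scale marginal with Massart-type noise --- is the right shape. However, there are two related gaps, the second of which is decisive.

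First, a sanity check already rules out your final claim. You conclude an $\Omega(1/n)$ lower bound and set $\psi(n) = \log n + O(1)$. But Theorem~\ref{thm:agnostic-near-exponential-upper-bound} shows that for any infinite $\H$ with no infinite Littlestone tree, for \emph{every} $\psi(n) = o(n)$ --- in particular $\psi(n) = n^{0.9}$ --- there is an algorithm with $\E[\er_{\PXY}(\hat h_n)] - \inf_{h\in\H}\er_{\PXY}(h) \leq C_{\PXY} e^{-c_{\PXY} n^{0.9}}$ for every $\PXY$, which is eventually strictly less than $c/n$ for every constant $c>0$. So a uniform $\Omega(1/n)$ lower bound cannot hold for every algorithm; the lower bound in Theorem~\ref{thm:near-exponential-lower-agnostic} is necessarily of the looser form $e^{-\psi(n)}$ with $\psi(n)$ depending on the algorithm (it is $\omega(\log n)$ in general). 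Your estimate $\sum_k \lambda_k e^{-c' n\lambda_k} = \Omega(1/n)$ is fine as arithmetic; the error is upstream, in treating each $\lambda_k\, \P_P(\hat h_n(x_k)\neq b_k)$ as if it were individually bounded below.

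The upstream error is that the single-bit-flipped companion $P^{(k)}$ need not be Bayes-realizable with respect to $\H$. Your nested construction guarantees concepts in $\H$ agreeing with $(b_1,\ldots,b_K)$ for every $K$, and it gives \emph{some} $h'\in\H_{k-1}$ with $h'(x_k)=1-b_k$; but nothing forces the existence of a concept (or $L_1(\Px)$-limit of concepts) agreeing with $(b_1,\ldots,b_{k-1},\, 1-b_k,\, b_{k+1},\ldots)$. When no such concept exists, one checks $\inf_{h\in\H}\er_{P^{(k)}}(h) = \tfrac13 + \tfrac13\lambda_k$, realized by the \emph{same} limit object that is Bayes-optimal for $P$. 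Consequently, the trivial algorithm that always outputs $b_j$ at $x_j$ (ignoring the data) incurs zero excess under $P$ \emph{and} zero (in fact non-positive) excess under every $P^{(k)}$. The coin-testing lower bound (Lemma~\ref{lem:coin-testing-lower-bound}) only controls the pair-averaged probability of predicting the wrong coin, which is a lower bound on error probability, not on \emph{excess} risk; and under $P^{(k)}$ the excess penalty for predicting $b_k$ can vanish. ``$\inf_\H \er_{P_T}$ within $O(\lambda_k)$ of $1/3$'' is insufficient --- what matters is whether the algorithm is being charged for the wrong decision, and that requires the Bayes classifier of $P^{(k)}$ to be attainable by $\H$. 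The paper avoids this by a structurally different perturbation: rather than flipping a single coordinate, it makes \emph{all} coordinates $j\geq i$ deterministic according to $h_i\in\H$ (which, by the eluder property, agrees with $h_0$ at $j<i$ and disagrees at $x_i$). Then the Bayes classifier for $P_i$ is $h_i\in\H$ itself, so $P_i$ is Bayes-realizable and error probability does lower-bound excess risk; and distinguishing $P_0$ from $P_i$ requires observing a ``noisy'' label at some $x_j$ with $j\geq i$, which has probability at most $1-\beta^{\hat N_{n,i}}$, driving the $e^{-o(n)}$ rather than $1/n$ scale. Your proposal would need to be reworked along these lines --- replacing single-bit flips by Bayes-realizable suffix perturbations drawn from the class itself --- for the argument to go through.
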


We will in fact show that this lower bound holds even for 
distributions $\PXY$ satisfying the \emph{Massart noise} 
condition \citep*{massart2007concentration,massart:06}.
To prove this theorem, we require a result establishing existence of certain structures within any infinite class, 
based on the recent work of \citet*{hanneke:24a}.
Specifically, consider the following definition, 
related to a quantity known as the \emph{eluder dimension} 
introduced by \citet*{russo:13,foster:21} in 
the literature on reinforcement learning and 
adversarial analysis of contextual bandits.

\begin{definition}
\label{defn:eluder}
A sequence $\{(x_i,y_i)\}_{i \in \nats}$ 
in $\X \times \{0,1\}$
is an infinite eluder sequence for a concept class $\H$
if there exist $h_1,h_2,\ldots$ in $\H$ 
such that, 
$\forall i \in \nats$,
$h_i(x_{< i}) = y_{< i}$
and $h_i(x_i) \neq y_i$.
We say $\H$ has an infinite eluder sequence if such a
sequence $(x_i,y_i)$ exists.
\end{definition}

Essentially, an infinite eluder sequence is a 
countable sequence of labeled examples for which, 
for every finite prefix, 
there is an $h \in \H$ correct on that 
prefix but incorrect on the next example.
Importantly for our purposes, 
recent work of \citet*{hanneke:24a} shows that 
any infinite concept class $\H$ has an infinite 
eluder sequence, as stated by the following lemma.

\begin{lemma}[\citealp*{hanneke:24a}]
\label{lem:infinite-classes-have-infinite-eluder}
Any infinite concept class $\H$ has an infinite eluder sequence.
\end{lemma}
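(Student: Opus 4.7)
The plan is to construct the eluder sequence by induction, maintaining as an invariant that the ``version space''
\[
\H_{i} := \{h \in \H : h(x_j) = y_j \text{ for all } j \leq i\}
\]
remains infinite after each step. The base case $\H_0 := \H$ is infinite by assumption on $\H$.

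For the inductive step, suppose $(x_1,y_1),\ldots,(x_{i-1},y_{i-1})$ have been chosen and that $\H_{i-1}$ is infinite. Since $|\H_{i-1}| = \infty$, there exist two distinct elements $h, h' \in \H_{i-1}$, and by distinctness there is some $x \in \X$ with $h(x) \neq h'(x)$. Set $x_i := x$, and partition $\H_{i-1} = \H_{i-1}^{0} \sqcup \H_{i-1}^{1}$, where $\H_{i-1}^{b} := \{g \in \H_{i-1} : g(x_i) = b\}$. Both parts are nonempty (one contains $h$, the other $h'$), and since $\H_{i-1}$ is infinite, at least one of the two parts is itself infinite. Let $y_i$ be the label of that infinite part, so that $\H_i = \H_{i-1}^{y_i}$ is infinite, preserving the invariant. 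Taking $h_i$ to be any element of the opposite (nonempty) part $\H_{i-1}^{1 - y_i}$ then yields a concept in $\H$ satisfying $h_i(x_j) = y_j$ for all $j < i$ (since $h_i \in \H_{i-1}$) and $h_i(x_i) = 1 - y_i \neq y_i$, as required by Definition~\ref{defn:eluder}.

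Iterating this construction indefinitely produces an infinite sequence $\{(x_i, y_i)\}_{i \in \nats}$ together with witnesses $\{h_i\}_{i \in \nats}$, which together form an infinite eluder sequence for $\H$. The argument is essentially a pigeonhole on a shrinking version space, so there is no substantial obstacle beyond tracking the invariant; the only nontrivial observation is that any nontrivial two-way partition of an infinite set leaves at least one part infinite, which is exactly what allows the recursion to continue forever and allows us to always find a witness $h_i$ on the ``other side'' of the split. Notably, the construction makes no use of any structural property of $\H$ beyond infiniteness, which is consistent with the fact that the lemma places no combinatorial restriction on $\H$.
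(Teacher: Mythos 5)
Your proof is correct and follows essentially the same inductive construction the paper sketches: maintain an infinite version space, pick a disagreement point, use pigeonhole to choose the label that keeps the version space infinite, and take the eluder witness from the other side of the split. Nothing to add.
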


The reasoning behind this lemma is a rather straightforward inductive construction of an eluder sequence: 
for each $i = 1,2,\ldots$, 
supposing we inductively maintain that the set 
$\H_{(x_{< i},y_{< i})} := \{ h \in \H : h(x_{< i}) = y_{< i} \}$ 
of prefix-consistent functions is \emph{infinite}, 
we can extend the eluder sequence
by choosing any $x_i \in \X$ for which 
the functions $h \in \H_{(x_{< i},y_{< i})}$ 
do not all agree on $h(x_i)$ (to satisfy the eluder property), 
and by the pigeonhole principle there is a choice of $y_i \in \{0,1\}$
for which $\H_{(x_{\leq i},y_{\leq i})}$ 
is still infinite (to maintain the inductive invariant).

We are now ready for the proof of Theorem~\ref{thm:near-exponential-lower-agnostic}, 
establishing that no algorithm can achieve better than a near-exponential rate for any infinite concept class.

\stilltodo{TODO: In the journal submission, extract the sketch for the 25 pages and move the proof to an appendix.  The proof already contains such a sketch, after the construction, so we can just use that.}

\begin{proof}[of Theorem~\ref{thm:near-exponential-lower-agnostic}]
By Lemma~\ref{lem:infinite-classes-have-infinite-eluder},
$\H$ has an infinite eluder sequence 
$(x_1,y_1),(x_2,y_2),\ldots$ in $\X \times \{0,1\}$.
Let $h_1,h_2,\ldots$ be the sequence in $\H$ witnessing this
(as in Definition~\ref{defn:eluder}):
that is, $\forall i \in \nats$, $h_i(x_{< i}) = y_{< i}$ and $h_i(x_i) \neq y_i$.
Noting that these $x_i$'s must all be distinct, 
let $h_0 : \X \to \{0,1\}$ be any measurable function
(not necessarily in $\H$)
with $h_0(x_i) = y_i$ for all $i \in \nats$.
Fix any $\beta \in (0,1/2)$,
and for every $j \in \nats$ define $p_j = 2^{-j}$.
Define a family of distributions $P_i$, $i \in \nats \cup \{0\}$, as follows.
Let $P_0$ denote a probability measure on $\X \times \{0,1\}$ such that,
for $(X,Y) \sim P_0$,
each $j \in \nats$ has $\P(X = x_j) = p_j$
and $\P(Y=y_j|X=x_j) = 1-\beta$.
For any $i \in \nats$,
define $P_i$ as the probability measure on $\X \times \{0,1\}$ such that,
for $(X,Y) \sim P_i$,
each $j \in \nats$ has $\P(X = x_j) = p_j$,
and
\begin{equation*}
\P(Y=h_i(X)|X=x_j) =
  \begin{cases}
    1-\beta & \text{ if } j < i\\
    1 & \text{ if } j \geq i
  \end{cases}.
\end{equation*}

Since $\sum_j p_j = 1$, this completely specifies the probability measures $P_i$, $i \in \nats \cup \{0\}$.
Define a sequence of independent $P_0$-distributed random variables: 
$(X_1,Y_1^{(0)}),(X_2,Y_2^{(0)}),\ldots$,
and for each $i,t \in \nats$, 
define 
\begin{equation*} 
Y_t^{(i)} = 
\begin{cases}
Y_t^{(0)}, & \text{ if } X_t \in \{x_1,\ldots,x_{i-1}\}\\
h_i(X_t), & \text{ otherwise }
\end{cases}.
\end{equation*}
Note that, for each $i \in \nats$, 
$(X_1,Y_1^{(i)}),(X_2,Y_2^{(i)}),\ldots$ 
is a sequence of independent $P_i$-distributed random variables.
%
%
Let $f_n : (\X \times \{0,1\})^n \times \X \to \{0,1\}$ ($n \in \nats$) be any learning algorithm,
and denote by $\hat{h}_n^{(i)} = f_n(X_1,Y^{(i)}_1,\ldots,X_n,Y^{(i)}_n)$ for each $n,i \in \nats \cup \{0\}$: 
that is $\hat{h}_n^{(i)}$ is the classifier returned by 
the algorithm trained under $P_i$.

\paragraph{Sketch:} Before presenting the formal proof, we first briefly outline the argument.
For 
$n,i \geq 1$, 
denote by $E_{n,i}$ the event that 
$(X_{\leq n},Y_{\leq n}^{(0)}) = (X_{\leq n},Y_{\leq n}^{(i)})$.
In particular, on $E_{n,i}$, 
$\hat{h}_n^{(0)} = \hat{h}_n^{(i)}$.
Moreover, 
since $h_0(x_i) = y_i \neq h_i(x_i)$,
the event $E_{n,i}$ forces the learner to \emph{guess}
the appropriate classification of $x_i$ 
\emph{without any distinguishing information}
to tell whether the true distribution is $P_0$ or $P_i$.
We also note that $E_{n,i}$ is equivalent to the event that every 
$t \leq n$ with $X_t \notin \{x_1,\ldots,x_{i-1}\}$
has $Y^{(0)}_t = h_i(X_t)$;
in particular, since this only concerns the variables $Y_t^{(0)}$ which are \emph{replaced} in the definition of $Y_t^{(i)}$, 
this implies the event $E_{n,i}$ is
\emph{conditionally independent} of $(X_{\leq n},Y_{\leq n}^{(i)})$
given $X_{\leq n}$.
Therefore, 
$\P\!\left( \hat{h}_n^{(0)}(x_i) \neq h_0(x_i) \middle| X_{\leq n} \right)
\geq \P\!\left( \hat{h}_n^{(i)}(x_i) = h_i(x_i) \land E_{n,i} \middle| X_{\leq n} \right)
= \P\!\left( \hat{h}_n^{(i)}(x_i) = h_i(x_i) \middle| X_{\leq n} \right) \P\!\left( E_{n,i} \middle| X_{\leq n} \right)$.
Denoting by $\hat{N}_{n,i}$ the number of
$t \leq n$ with 
$X_t \notin \{ x_1,\ldots, x_{i-1}\}$, 
we have 
$\P\!\left( E_{n,i} \middle| X_{\leq n} \right) \geq \beta^{\hat{N}_{n,i}}$.
A Chernoff bound 
reveals that with probability at least $1-e^{- 2 p_i n / 3}$, 
$\hat{N}_{n,i} \leq 4 p_i n$.
Altogether, 
\begin{align*} 
\E\!\left[ \er_{P_0}\!\left(\hat{h}_n^{(0)}\right)\right] - \er_{P_0}(h_0)
& \geq (1-2\beta) p_i \P\!\left( \hat{h}_n^{(0)}(x_i) \neq h_0(x_i) \right) 
\\ & \geq (1-2\beta) p_i \left( 1 - \P\!\left( \hat{h}_n^{(i)}(x_i) \neq h_i(x_i) \right) - e^{-2 p_i n / 3} \right) \beta^{4 p_i n}.
\end{align*}
In particular, since w.l.o.g.\ we have 
$\P\!\left( \hat{h}^{(i)}_n(x_i) \neq h_i(x_i) \right) \to 0$
(otherwise the claimed lower bound is trivially satisfied for $P_i$),
this implies there exists a sequence $i_n \to \infty$
for which
$\E\!\left[ \er_{P_0}\!\left(\hat{h}_n^{(0)}\right)\right] - \er_{P_0}(h_0) \geq (1-o(1)) (1-2\beta) p_{i_n} \beta^{4 p_{i_n} n} = e^{-o(n)}$.

\paragraph{Formal Proof:}
We now proceed with the formal proof, following the above outline.
For each $n,i \in \nats$, denote by
\begin{equation*}
  \epsilon^{(i)}_n = \E\!\left[ \er_{P_i}(\hat{h}^{(i)}_n) \right] - \inf_{h \in \H} \er_{P_i}(h).
\end{equation*}
There are two cases.
First, if $\exists i \in \nats$ such that
\begin{equation}
  \label{eqn:i-convergence-failure}
\limsup_{n \to \infty} \epsilon^{(i)}_n \neq 0,
\end{equation}
then the claimed lower bound holds for this algorithm trivially, by taking distribution $\PXY = P_i$
for the $i$ satisfying \eqref{eqn:i-convergence-failure}:
that is, in this case, there is a distribution 
for which the excess risk does not merely converge slowly, 
but in fact does not even converge to $0$.
Formally, for any such learning algorithm $\hat{h}_n$, 
taking $\PXY = P_i$ and 
$\psi(n) = \log\!\left( 2 / \limsup_{n \to \infty} \epsilon^{(i)}_n \right)$ (which is bounded, hence $o(n)$)
establishes the claimed lower bound 
by definition of the $\limsup$.

To address the remaining case,
suppose every $i \in \nats$ satisfies
\begin{equation}
  \label{eqn:i-convergence}
\limsup_{n \to \infty} \epsilon^{(i)}_n = 0.
\end{equation}
In particular, note that $\er_{P_i}(h)$ is minimized 
(over all measurable $h$) by $h = h_i$,
and any $h$ with $h(x_i) \neq h_i(x_i)$ satisfies
\begin{equation*}
\er_{P_i}(h) - \er_{P_i}(h_i) \geq p_i.
\end{equation*}
Thus, by Markov's inequality, $\forall n,i \in \nats$, 
\begin{equation}
\label{eqn:i-convergence-xi-bound}
\P\!\left( \hat{h}^{(i)}_n(x_i) \neq h_i(x_i) \right) 
\leq \P\!\left( \er_{P_i}\!\left(\hat{h}^{(i)}_n\right) - \er_{P_i}(h_i) \geq p_i \right)
\leq \frac{1}{p_i} \epsilon^{(i)}_n.
\end{equation}

On the other hand, noting that 
$\inf_{h \in \H} \er_{P_0}(h) = \er_{P_0}(h_0)$,
and any $h$ with $h(x_i) = h_i(x_i)$ 
must have $h(x_i) \neq h_0(x_i)$ 
(by definition of $h_i$), 
any such $h$ satisfies 
\begin{equation*}
\er_{P_0}(h) - \inf_{h' \in \H} \er_{P_0}(h')
= \er_{P_0}(h) - \er_{P_0}(h_0)
\geq (1-2\beta) p_i.
\end{equation*}
Thus, $\forall n,i \in \nats$, 
\begin{equation}
\label{eqn:near-exp-lb-P0-excess-risk-lower-bound}
\E\!\left[ \er_{P_0}\!\left( \hat{h}^{(0)}_n \right) \right] - \inf_{h \in \H} \er_{P_0}(h)
\geq (1-2\beta) p_i \P\!\left( \hat{h}^{(0)}_n(x_i) = h_i(x_i) \right).
\end{equation}

Next we establish a relation between the 
bounds \eqref{eqn:i-convergence-xi-bound} and \eqref{eqn:near-exp-lb-P0-excess-risk-lower-bound}
by arguing that $\hat{h}_n^{(i)} = \hat{h}_n^{(0)}$ 
on an event of probability $e^{-o(n)}$.
For any $n, i \in \nats$, denote by
\begin{equation*}
\hat{N}_{n,i} = |\{ t \leq n : X_t \in \{x_j : j \geq i\} \} |
\end{equation*}
and let $t_{i,1},t_{i,2},\ldots$ denote the subsequence of 
$t \in \{1,2,\ldots\}$ satisfying $X_t \in \{x_j : j \geq i\}$
(which may be infinite or finite, though the former occurs almost surely).
Let $E_{n,i}$ denote the event that
\begin{equation*}
\forall k \in \{1,\ldots,\hat{N}_{n,i}\}, Y^{(0)}_{t_{i,k}} = h_i(X_{t_{i,k}}).
\end{equation*}
Note that on $E_{n,i}$, we have $(X_{\leq n},Y_{\leq n}^{(0)}) = (X_{\leq n},Y_{\leq n}^{(i)})$ 
and hence $\hat{h}_n^{(0)} = \hat{h}_n^{(i)}$.
%
Recalling the definition of $P_0$, 
we note that by conditional independence of the $Y_t^{(0)}$ labels
given $X_{\leq n}$, 
\begin{equation}
  \label{eqn:Pi-vs-P0-event-conditional-prob}
  \P\!\left( E_{n,i} \middle| X_{\leq n} \right)
  \geq \beta^{\hat{N}_{n,i}}.
\end{equation}

Since $\hat{h}_n^{(0)} = \hat{h}_n^{(i)}$
on the event $E_{n,i}$, 
\begin{align}
\P\!\left( \hat{h}^{(0)}_n(x_i) = h_i(x_i) \right)
& \geq \P\!\left( \left\{ \hat{h}^{(i)}_n(x_i) = h_i(x_i) \right\} \cap E_{n,i} \right)
\notag \\ & = \E\!\left[ \P\!\left( \left\{ \hat{h}^{(i)}_n(x_i) = h_i(x_i) \right\} \cap E_{n,i} \middle| X_{\leq n} \right) \right]. \label{eqn:near-exp-lb-h0-xi-lb}
\end{align}
Recalling the definition of $Y^{(i)}_{t}$, 
we note that the labels $Y^{(0)}_{t_{i,k}}$, $k \in \{1,\ldots,\hat{N}_{n,i}\}$,
are conditionally independent of $(X_{\leq n},Y_{\leq n}^{(i)})$
given $X_{\leq n}$.
Since whether or not the event $E_{n,i}$ holds 
is determined entirely by $X_{\leq n}$ 
and these $Y^{(0)}_{t_{i,k}}$ variables, 
we have that 
$\hat{h}_n^{(i)}(x_i)$ is conditionally independent of $E_{n,i}$
given $X_{\leq n}$.
Thus, the expression in \eqref{eqn:near-exp-lb-h0-xi-lb} is equal to
\begin{align*}
& \E\!\left[ \P\!\left( \hat{h}^{(i)}_n(x_i) = h_i(x_i) \middle| X_{\leq n} \right) \P\!\left( E_{n,i} \middle| X_{\leq n} \right) \right],
\end{align*}
which, by \eqref{eqn:Pi-vs-P0-event-conditional-prob}, 
is no smaller than
\begin{align}
& \E\!\left[ \P\!\left( \hat{h}^{(i)}_n(x_i) = h_i(x_i) \middle| X_{\leq n} \right) \beta^{\hat{N}_{n,i}} \right]
\notag \\ & \geq \E\!\left[ \P\!\left( \hat{h}^{(i)}_n(x_i) = h_i(x_i) \middle| X_{\leq n} \right) \ind\!\left[ \hat{N}_{n,i} \leq 4 p_i n \right] \right] \beta^{4 p_i n}
\notag \\ & \geq \E\!\left[ 1 - \P\!\left( \hat{h}^{(i)}_n(x_i) \neq h_i(x_i) \middle| X_{\leq n} \right) - \ind\!\left[ \hat{N}_{n,i} > 4 p_i n \right] \right] \beta^{4 p_i n}
\notag \\ & = \left( 1 - \P\!\left( \hat{h}^{(i)}_n(x_i) \neq h_i(x_i) \right) - \P\!\left( \hat{N}_{n,i} > 4 p_i n \right) \right) \beta^{4 p_i n}. \label{eqn:near-exp-lb-almost-done}
\end{align}
Since each $X_t$ has 
$\P( X_t \in \{ x_j : j \geq i \} ) = \sum_{j \geq i} p_j = 2 p_i$, 
and $X_1,X_2,\ldots$ are independent,  
a Chernoff bound implies that
$\P\!\left( \hat{N}_{n,i} > 4 p_i n \right) \leq e^{-2 p_i n / 3}$.
Together with \eqref{eqn:i-convergence-xi-bound}, 
we have that \eqref{eqn:near-exp-lb-almost-done} is no smaller than
\begin{equation*}
\left( 1 - \frac{1}{p_i} \epsilon^{(i)}_n - e^{-2 p_i n / 3} \right) \beta^{4 p_i n}.
\end{equation*}
Together with \eqref{eqn:near-exp-lb-P0-excess-risk-lower-bound}, 
we have thus established that, for every $n,i \in \nats$, 
\begin{equation}
\label{eqn:near-exp-lb-single-i-lb}
\E\!\left[ \er_{P_0}\!\left( \hat{h}^{(0)}_n \right) \right] - \inf_{h \in \H} \er_{P_0}(h)
\geq (1-2\beta) p_i \left( 1 - \frac{1}{p_i} \epsilon^{(i)}_n - e^{-2 p_i n / 3} \right) \beta^{4 p_i n}.
\end{equation}

Note that, for any $i \in \nats$, 
since $\epsilon^{(i)}_n \to 0$ and $e^{-2 p_i n / 3} \to 0$ 
as $n \to \infty$, 
there exists $n_i \in \nats$ such that every $n \geq n_i$ 
satisfies 
\begin{equation*}
\frac{1}{p_i} \epsilon^{(i)}_n + e^{-2 p_i n / 3} \leq \frac{1}{2}.
\end{equation*}
For any $n \geq n_1$, 
define 
\begin{equation*} 
i_n = \max\{ i \in \nats : n \geq \max\{ n_i, i^2 \} \}.
\end{equation*}
Due to the $n \geq i^2$ constraint, this is a well-defined finite value, 
and moreover, since each $n_i$ is finite, 
we have $i_n \to \infty$ as $n \to \infty$.
Since \eqref{eqn:near-exp-lb-single-i-lb} holds for every $n,i \in \nats$, 
we have for any $n \geq n_1$
\begin{equation}
\label{eqn:near-exp-lb-i_n-lb}
\E\!\left[ \er_{P_0}\!\left( \hat{h}^{(0)}_n \right) \right] - \inf_{h \in \H} \er_{P_0}(h)
\geq \frac{1-2\beta}{2} p_{i_n} \beta^{4 p_{i_n} n}.
\end{equation}
Defining $\psi(n) = \ln\!\left(\frac{2}{1-2\beta}\right) + \ln\!\left(\frac{1}{p_{i_n}}\right) + 4 p_{i_n} n \ln\!\left(\frac{1}{\beta}\right)$, 
the right hand side of \eqref{eqn:near-exp-lb-i_n-lb} 
is equal $e^{-\psi(n)}$.
Finally, since the definition of $i_n$ implies $i_n \leq \sqrt{n}$, 
we have $\ln\!\left(\frac{1}{p_{i_n}}\right) = i_n \ln(2) = o(n)$
and since $i_n \to \infty$ implies $p_{i_n} \to 0$, 
we have $4 p_{i_n} n \ln\!\left(\frac{1}{\beta}\right) = o(n)$ as well.
Since the remaining term in $\psi(n)$ is an additive constant, 
we conclude that $\psi(n) = o(n)$.
This completes the proof.
\end{proof}

\section{Near-Exponential Rate Upper Bound}
\label{sec:near-exponential}

Continuing toward establishing Theorem~\ref{thm:agnostic-char}, 
we next establish the near-exponential rate upper bound 
for concept classes which do not shatter an infinite Littlestone tree.

\begin{theorem}
\label{thm:agnostic-near-exponential-upper-bound}
If $\H$ does not shatter an infinite Littlestone tree,
then 
for every $\psi(n) = o(n)$, 
$\H$ is agnostically learnable at rate $e^{-\psi(n)}$.
\end{theorem}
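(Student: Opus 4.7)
Given target rate $\psi(n)=o(n)$, I would construct a $\psi$-dependent algorithm that combines the universally Bayes-consistent learner $B_n$ of Lemma~\ref{lem:universally-bayes-consistent}, the winning-strategy predictor $\algSOA$ of Lemma~\ref{lem:SOA-zero-error-rate-bstar}, and a data-relabeling step (in the spirit of \citet*{hopkins:22}), with outputs compared via a held-out validation step using Hoeffding-type concentration. The preliminary reductions: any infinite VCL tree contains an infinite Littlestone tree (by descending along a single coordinate), so the hypothesis of the theorem implies $\H$ does not shatter an infinite VCL tree; thus $\H$ satisfies UGC by Lemma~\ref{lem:no-infinite-VCL-implies-UGC}, and by Lemma~\ref{lem:limit-function} for every $\PXY$ there is $\target$ in the $L_1(\Px)$-closure of $\H$ attaining $\inf_{h\in\H}\er_{\PXY}$. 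I split distributions into those not Bayes-realizable --- where $B_n$ alone suffices since its expected risk eventually falls strictly below $\inf_{h\in\H}\er_{\PXY}$, so excess risk is $\le 0$ for all large $n$ and the bound $C_{\PXY} e^{-\psi(c_{\PXY}n)}$ is trivial after inflating $C_{\PXY}$ to dominate small-$n$ behavior --- and Bayes-realizable (the main case).

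\textbf{Algorithm for the Bayes-realizable case.} Partition the $n$ samples into three portions of size $\Theta(n)$: $S_{\mathrm{est}}$, $S_{\mathrm{SOA}}$, $S_{\mathrm{val}}$. On $S_{\mathrm{est}}$, obtain a candidate $\hat h\in\H$ with $\Px(\hat h\ne\target)\to 0$ by combining $B_n$ (which produces $\hat g$ close to $\target$ in $L_1(\Px)$ under Bayes-realizability) with a UGC-based empirical cover of $\H$ that snaps $\hat g$ into $\H$. Fixing $\gamma\in(0,1)$, partition $S_{\mathrm{SOA}}$ into $K:=\lceil\psi(n)/\gamma\rceil$ independent sub-blocks of size $\Theta(n/\psi(n))$, relabel each block's labels as $\hat h(X_i)$, and run $\algSOA$ on each relabeled block to obtain candidates $\tilde h_1,\ldots,\tilde h_K$. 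Each relabeled block is i.i.d.\ from the realizable distribution $P_{\hat h}=\mathrm{Law}(X,\hat h(X))$ conditional on $\hat h$, so Lemma~\ref{lem:SOA-zero-error-rate-bstar} guarantees --- once the block size exceeds the $\PXY$-dependent threshold $b^*_\gamma(P_{\hat h})$ (automatic as $n/\psi(n)\to\infty$) --- that each trial independently produces $\tilde h_i=\hat h$ almost surely under $\Px$ with probability at least $\gamma$. By independence, the probability that no trial succeeds is at most $(1-\gamma)^K\le e^{-\gamma K}\le e^{-\psi(n)}$. Finally, output the element of $\{\hat g,\tilde h_1,\ldots,\tilde h_K\}$ minimizing the empirical risk on $S_{\mathrm{val}}$.

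\textbf{Main obstacle.} On the SOA-success event the validation step selects a candidate of true risk within $O(\sqrt{\log K/n})$ of $\er_{\PXY}(\hat h)$, while on the failure event (probability $\le e^{-\psi(n)}$) the excess risk is trivially bounded by $1$. The sharpest technical difficulty is that UGC and Bayes-consistency guarantee only $\er_{\PXY}(\hat h)-\inf_{h\in\H}\er_{\PXY}\to 0$ qualitatively --- so this gap could in principle decay more slowly than $e^{-\psi(n)}$ and dominate the final bound. Resolving this --- the heart of the proof --- requires a $\psi$-adaptive schedule for the empirical-cover radius together with concentration inequalities (replacing the simpler aggregation technique of \citet*{bousquet:21}) to ensure the constructed $\hat h$ has risk-gap matched to the exponential SOA-amplification rate, with residual $\PXY$-specific quantities (including the random threshold $b^*_\gamma(P_{\hat h})$ depending on the draw of $\hat h$) absorbed into the $\PXY$-dependent constants $C_{\PXY},c_{\PXY}$ permitted in the definition of rate. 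Verifying universal measurability (per Remark~\ref{rem:measurability}) is routine since all steps apply finitely many universally measurable operations.
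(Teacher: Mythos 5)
Your proposal takes a genuinely different route from the paper's, and it has a fundamental gap that the fix you outline does not address. The central problem is that even on your SOA-success event, the best candidate your algorithm can output is something $\Px$-a.s.\ equal to the estimated classifier $\hat h$, so the final excess risk is bounded below by $\er_{\PXY}(\hat h) - \inf_{h\in\H}\er_{\PXY}(h)$. That quantity is an estimation error produced from $\Theta(n)$ samples and is limited to polynomial-in-$n$ decay; no ``$\psi$-adaptive schedule for the empirical-cover radius'' can make it decay at rate $e^{-\psi(n)}$. Your $e^{-\psi(n)}$ bound controls only the probability that every SOA trial fails, not the residual approximation error that persists even when a trial succeeds. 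Two subsidiary issues compound this: (i) universal Bayes-consistency gives $\er_{\PXY}(\hat g)\to\inf_h\er_{\PXY}(h)$ but does \emph{not} imply $\Px(\hat g\ne\target)\to 0$ when $\Px$ places mass on $\{x : \P(Y=1|X=x)=1/2\}$, so the ``snap into $\H$'' step producing an $\hat h$ with $\Px(\hat h\ne\target)\to 0$ is unjustified; and (ii) the threshold $b^*_\gamma(P_{\hat h})$ depends on the $n$-dependent random realization of $\hat h$, so it is not a $\PXY$-dependent constant, and the claim that the block size $n/\psi(n)$ eventually dominates it is not supported.

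The paper eliminates the residual approximation error by never estimating $\target$ at all. For each batch $B^b_i$ of size $b$ and each of the $2^b$ hypothetical label vectors $\mathbf{y}\in\{0,1\}^b$, it applies $\algSOA$ to the relabeled batch $B^b_i(\mathbf{y})$. The $\target$-relabeling $\mathbf{y}^* = \target(X_{\leq b})$ is automatically among these $2^b$ candidates, and $B^b_i(\mathbf{y}^*)\sim P_{\target}^b$ for the \emph{fixed} realizable distribution $P_{\target}$, so the relevant threshold $b^*$ is a genuine $\PXY$-dependent constant. Because the candidate set $\{h^{\mathbf{y}}_{b,i}:\mathbf{y}\in\{0,1\}^b\}$ is finite, there is a $\PXY$-dependent gap $\gamma_b>0$ separating the optimal population risk $\er_{\PXY}(\target)$ from any strictly sub-optimal candidate; a Hoeffding-type argument on held-out data then, with high probability, selects an $h_{b,i}$ whose risk equals $\er_{\PXY}(\target)$ \emph{exactly}, not approximately. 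A majority vote over these yields zero excess risk on the good events, so the $e^{-\psi(n)}$ bound comes entirely from the failure probability of those events---which is precisely what your construction, with its irreducible estimation error in $\hat h$, cannot replicate.
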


The formal proof of Theorem~\ref{thm:agnostic-near-exponential-upper-bound}
is presented in Section~\ref{sec:near-exponential-rate-upper-bound-proof} below.
Before getting into the details, 
we first briefly outline the strategy underlying the algorithm 
$\hat{h}_n$ achieving this $e^{-\psi(n)}$ rate, 
for any given $\psi(n) = o(n)$.
We divide the approach into two subroutines, 
intending to address separately the cases where 
$\PXY$ is or is not \emph{Bayes-realizable}, 
recalling that we say $\PXY$ is Bayes-realizable with respect to $\H$ if $\inf_{h \in \H} \er_{\PXY}(h) = \inf_{h} \er_{\PXY}(h)$ (where the latter infimum ranges over all measurable $h : \X \to \{0,1\}$).
The first subroutine, 
designed to address the case where $\PXY$ is not 
Bayes-realizable w.r.t.\ $\H$, 
simply 
trains a universally Bayes-consistent learning algorithm $\hat{h}^0_n$ (Lemma~\ref{lem:universally-bayes-consistent}).
A second subroutine $\hat{h}^1_n$, 
outlined below, is designed to address the case 
where $\PXY$ is Bayes-realizable w.r.t.\ $\H$.
We then select between $\hat{h}^0_n$ and $\hat{h}^1_n$,
via a hypothesis test based on held-out data,
to obtain the final predictor $\hat{h}_n$.
Thus, the main technical novelty in the approach is in the design of the $\hat{h}^1_n$ subroutine.

To understand the main ideas behind the $\hat{h}^1_n$ algorithm, 
it will be instructive to first recall the original realizable-case technique of \citet*{bousquet:21} establishing $e^{-n}$ rates, 
before discussing the modifications needed to extend the technique to the agnostic setting.

\paragraph{The Realizable Case:}
In the case of a distribution $\PXY$ realizable with respect to $\H$ (where $\H$ does not shatter an infinite Littlestone tree), 
the algorithm of \citet*{bousquet:21}
makes use of the procedure $\algSOA$
from Lemma~\ref{lem:SOA-zero-error-rate-bstar}
(which itself is based on a measurable winning strategy for an associated Gale-Stewart game).
Recall that Lemma~\ref{lem:SOA-zero-error-rate-bstar}
guarantees that $\exists b^* \in \nats$ such that,
for any $b \geq b^*$, for an i.i.d.\ data set $B^b \sim \PXY^b$, 
with probability at least $\frac{99}{100}$, 
the classifier defined by $\algSOA(B^b)$ has error rate \emph{zero}.
To convert this $\frac{99}{100}$ guarantee into an 
$e^{-n}$ rate, 
they use a fraction $m = n/2$ of the training examples $S^0_n$, 
break them into $m/b$ disjoint \emph{batches} $B^b_i$ ($i \in \{1,\ldots,m/b\}$), 
which induce predictors $h_{b,i} = \algSOA(B^b_i)$.
By a Chernoff bound, with probability $1-e^{-\Omega(m/b)}$, 
at least $\frac{9}{10}$ of these have $\er_{\PXY}(h_{b,i})=0$,
and thus the \emph{majority vote} predictor 
$x \mapsto \mathrm{Majority}(h_{b,1}(x),\ldots,h_{b,m/b}(x))$
would also have error rate zero.

All of this holds for any $b \geq b^*$.
However, since $b^*$ is $\PXY$-dependent, 
their method also needs to select an appropriate $b$
without knowledge of $b^*$.
For this, they construct the above predictors $h_{b,i}$
for \emph{every} $b \leq m$, and use the remaining 
$m = n/2$ training examples $S^1_n$ to \emph{select} which $b$ to use: 
namely, $\hat{b}$ is the minimum $b$ for which 
at least $\frac{9}{10}$ of the $h_{b,i}$ predictors 
have no mistakes on $S^1_n$.
From the above Chernoff bound, we know $b^*$ satisfies 
this, so $\hat{b} \leq b^*$.
Moreover, for any $b < b^*$ having 
$\P( \er_{\PXY}(h_{b,i}) > 0 ) > \frac{3}{10}$, 
there is some $\epsilon_b > 0$ such that 
$\P( \er_{\PXY}(h_{b,i}) > \epsilon_b ) > \frac{2.5}{10}$,
so that (by a Chernoff bound) with probability at least $1 - e^{-\Omega(m/b)}$
at least $\frac{2}{10}$ of these $h_{b,i}$ have 
$\er_{\PXY}(h_{b,i}) > \epsilon_b$, 
and hence (by a union bound) 
with probability at least $1 - \frac{m}{5b} e^{- \epsilon_b m}$
each of these will make at least one mistake on $S^1_n$.
Altogether, with probability $1 - \frac{m}{5b} e^{-\epsilon_b m} - e^{-\Omega(m/b)} = 1 - e^{-\Omega(n)}$, 
we conclude $\hat{b} \neq b$ for any such $b$.
By a union bound over all such $b < b^*$, 
we conclude that with probability at least $1-e^{-\Omega(n)}$, 
$\hat{b}$ is a value $b$ for which 
$\P( \er_{\PXY}(h_{b,i}) = 0 ) \geq \frac{7}{10}$, 
and the Chernoff argument regarding the majority 
vote yields the desired guarantee: 
i.e., for $\hat{h} = \mathrm{Majority}(h_{\hat{b},1},\ldots,h_{\hat{b},m/\hat{b}})$,
with probability at least $1 - e^{-\Omega(n)}$, 
$\er_{\PXY}(\hat{h}) = 0$, 
implying $\E\!\left[ \er_{\PXY}(\hat{h}) \right] = e^{-\Omega(n)}$.

\paragraph{The Agnostic Case:}
To extend the above technique to the agnostic case, 
as mentioned above, as the main ingredient,
the algorithm $\hat{h}^1_n$ only needs to address the case where $\PXY$ is 
Bayes-realizable with respect to $\H$: 
that is, $\inf_{h \in \H} \er_{\PXY}(h) = \inf_{h} \er_{\PXY}(h)$.
The two main challenges in extending the technique of 
\citet*{bousquet:21} to distributions $\PXY$ 
which are merely Bayes-realizable (rather than realizable)
are that the batches $B^b_i$ are no longer realizable, 
and when selecting an appropriate $b$, 
even the good predictors $h_{b,i}$ will still make 
\emph{some} mistakes on the held-out data set $S^1_n$.

The principle we adopt in approaching the first of these issues is to \emph{simulate} having batches $B^b_i$
sampled from an appropriate \emph{realizable} distribution.
Specifically, due to Lemmas~\ref{lem:no-infinite-VCL-implies-UGC}, \ref{lem:UGC-totally-bounded}, and \ref{lem:limit-function}, 
in the Bayes-realizable case there exists a measurable function 
$\target : \X \to \{0,1\}$ 
with $\er_{\PXY}(\target) = \inf_{h} \er_{\PXY}(h)$
and $\inf_{h \in \H} \Px( x : h(x) \neq \target(x) ) = 0$.
If we then denote by $P_{\target}$ the 
distribution of $(X,\target(X))$ (for $X \sim \Px$), 
note that $P_{\target}$ is \emph{realizable} with respect to $\H$,
and any $h'$ with $\er_{P_{\target}}(h') = 0$ has 
$\er_{\PXY}(h') = \er_{\PXY}(\target) = \inf_{h} \er_{\PXY}(h)$.
We are therefore interested in applying $\algSOA$ to 
batches $B^b_i$ sampled i.i.d.\ from this realizable distribution $P_{\target}$.

Since $\target$ is not known to the learner, 
we will in fact apply $\algSOA$ to an entire \emph{collection}
of data sets, one of which is indeed $P_{\target}$-distributed.
Namely, for each $b \leq m$ and $i \leq m/b$, 
we construct $2^b$ data sets: 
for each $\mathbf{y} = (y_1,\ldots,y_b) \in \{0,1\}^b$, define $B^b_i(\mathbf{y})$ as $B^b_i$ but with labels 
$Y$ \emph{replaced} by the hypothetical label sequence $y_t$.
We then define $h^{\mathbf{y}}_{b,i} = \algSOA(B^b_i(\mathbf{y}))$.
Since one of these $\mathbf{y}$ sequences \emph{agrees} 
with the $\target(X)$ values of all $X$ in $B^b_i$, 
for $b \geq b^*$ we can guarantee that 
with probability at least $\frac{99}{100}$
at least one $\mathbf{y}$ has 
$\er_{P_{\target}}(h^{\mathbf{y}}_{b,i}) = 0$, 
and hence $\er_{\PXY}(h^{\mathbf{y}}_{b,i}) = \inf_{h} \er_{\PXY}(h)$.
In particular, by a Chernoff bound, 
this occurs for at least $\frac{9}{10}$ of the batches,
with probability $1 - e^{-\Omega(m/b)}$.

Moreover, for each such $b,i$, we argue it is possible 
to use the held-out $m$ examples $S^1_n$
to \emph{select} an $h^{\mathbf{y}}_{b,i}$ for which $\er_{\PXY}(h^{\mathbf{y}}_{b,i}) = \inf_{h} \er_{\PXY}(h)$.
For this, we note that, for any $b$, 
there exists $\gamma_{b} > 0$ 
such that, with probability at least $\frac{99}{100}$, 
every $\mathbf{y}$ with 
$\er_{\PXY}(h^{\mathbf{y}}_{b,i}) > \inf_{h} \er_{\PXY}(h)$
in fact has 
$\er_{\PXY}(h^{\mathbf{y}}_{b,i}) > \gamma_b + \inf_{h} \er_{\PXY}(h)$: i.e., its suboptimality has at least some \emph{gap}.
By a Chernoff bound, this holds for at least $\frac{9}{10}$ of the batches with probability 
$1 - e^{-\Omega(m/b)}$.
In particular, combining these two parts, 
for any $b \geq b^*$, 
a standard Hoeffding and union bound argument implies that, for sufficiently large $n$, 
simply taking $h_{b,i} = \argmin_{h^{\mathbf{y}}_{b,i}} \hat{\er}_{S^1_n}(h^{\mathbf{y}}_{b,i})$
suffices to guarantee $\er_{\PXY}(h_{b,i}) = \inf_{h} \er_{\PXY}(h)$.
In particular, when this holds, returning 
$\hat{h}^1_n = \mathrm{Majority}(h_{b,1},\ldots,h_{b,m/b})$
also satisfies $\er_{\PXY}(\hat{h}^1_n) = \inf_{h} \er_{\PXY}(h)$.
Thus, the main remaining issue is selecting an 
appropriate $b$ without direct knowledge of 
the $\PXY$-dependent value $b^*$.
%

In general (both for the above argument about $h_{b,i}$, and for selecting $b$)
we make use of a concentration inequality for the error rates of the $h^{\mathbf{y}}_{b,i}$ classifiers (equation \ref{eqn:exponential-pf-concentration-hbi} in the proof).
Specifically, by Hoeffding's inequality and the union bound, 
with probability at least $1 - e^{-\psi(n)}$, 
for every $b \leq m$, $i \leq m/b$, and $\mathbf{y} \in \{0,1\}^b$, 
\begin{equation}
\label{eqn:sketch-exponential-concentration-hbi}
\left| \hat{\er}_{S^1_n}\!\left(h^{\mathbf{y}}_{b,i}\right) - \er_{\PXY}\!\left(h^{\mathbf{y}}_{b,i}\right) \right| \leq c \sqrt{\frac{\psi(n) + b + \ln(n)}{n}}
\end{equation}
for a universal constant $c$.
In particular, for $b = b^*$ and $n$ sufficiently large to make the right hand side $< \gamma_b/2$, 
this inequality supplies the stated 
guarantee 
$\er_{\PXY}(h_{b,i}) = \inf_{h} \er_{\PXY}(h)$
for the $\frac{9}{10}$ of batches satisfying 
both the 
$\gamma_b$ gap condition 
and the existence of a good $\mathbf{y}$, 
as described above.

We also use \eqref{eqn:sketch-exponential-concentration-hbi} to \emph{select} the value $b$.
Specifically, note that for $b = b^*$, 
on the above events, 
at least $\frac{9}{10}$ of the functions $h_{b,i}$
(being optimal functions) have the property that, 
for every $b' > b$ and $i' \leq m/b'$, 
$\hat{\er}_{S^1_n}(h_{b,i}) \leq \hat{\er}_{S^1_n}(h_{b',i'}) + 2 c \sqrt{\frac{\psi(n) + b' + \ln(n)}{n}}$.
We define $\hat{b}$ as the smallest $b$
for which this holds.
In particular, this implies $\hat{b} \leq b^*$.
Moreover, since any $b < b^*$ has 
$b^*$ as one of these $b'$ values we compare to, 
since at least $\frac{9}{10}$ of the $b$ batches 
satisfy the $\gamma_b$ gap condition discussed above, 
for $n$ sufficiently large (to make the concentration inequality smaller than $\gamma_{b}/4$) 
these comparisons 
will not select $\hat{b} = b$ for any $b < b^*$ 
for which the majority of $h_{b,i}$ functions 
have $\er_{\PXY}(h_{b,i}) > \inf_{h} \er_{\PXY}(h)$.
Altogether, we conclude that 
for $\hat{h}^1_n = \mathrm{Majority}(h_{\hat{b},1},\ldots,h_{\hat{b},m/\hat{b}})$, 
with probability at least $1 - e^{-\psi(n)} - e^{-\Omega(m/b^*)}$, 
$\er_{\PXY}(\hat{h}^1_n) = \inf_{h} \er_{\PXY}(h)$, 
which therefore implies the claimed rate $e^{-\psi(n)}$.

We present the formal details next, in Section~\ref{sec:near-exponential-rate-upper-bound-proof}.

\subsection{Proof of Theorem~\ref{thm:agnostic-near-exponential-upper-bound}: $e^{-o(n)}$ Rates}
\label{sec:near-exponential-rate-upper-bound-proof}

We now present the formal details of the proof of Theorem~\ref{thm:agnostic-near-exponential-upper-bound}.

\begin{proof}[of Theorem~\ref{thm:agnostic-near-exponential-upper-bound}] 
  Fix any $\psi : \nats \to \nats$ satisfying $\psi(n) = o(n)$.
  We will establish that $\H$ is agnostically learnable at rate $e^{-\psi(n)}$.
  Without loss of generality suppose $\psi(n) \to \infty$
  (establishing the claim for such functions also implies it 
  for functions $\psi(n) = o(n)$ with $\psi(n) \not\to \infty$
  by applying the claim to $\psi'(n) = \psi(n)\lor \sqrt{n}$).  
  We begin by describing the learning algorithm $f_n$.
  Let $\PXY$ be any distribution on $\X \times \{0,1\}$, let $n \in \nats$,
  and let $S_n = \{(X_1,Y_1),\ldots,(X_n,Y_n)\} \sim \PXY^n$ be the i.i.d.\ training set input to the learning algorithm.
  For simplicity, suppose $n$ is an integer multiple of $3$; otherwise, replace $n$ with $3 \lfloor n/3 \rfloor$ in the description and analysis below (supposing $n \geq 3$).
  We describe the construction of the function $\hat{h}_n = f_n(S_n)$
  produced by $f_n$ when given training set $S_n$.
  We first segment the data $S_n$ into chunks of size $n/3$: for $k \in \{0,1,2\}$, let
  \begin{equation*}
    S_n^k = \{(X_{1 + k n/3},Y_{1 + k n/3}),\ldots,(X_{(k+1)n/3},Y_{(k+1)n/3})\}.
  \end{equation*}

  The algorithm has two main components, to handle two types of distributions $\PXY$.
  Formally, recall that we say $\PXY$ is \emph{Bayes-realizable} 
  with respect to $\H$ 
  if $\inf_{h \in \H} \er_{\PXY}(h) = \inf_{h} \er_{\PXY}(h)$, 
  where $h$ on the right hand side ranges over all measurable 
  functions $\X \to \{0,1\}$.  
  The first component in the learning algorithm is designed to handle 
  the case that $\PXY$ is \emph{not} Bayes-realizable with respect to $\H$.
  Specifically, let $\hat{h}_n^0$ be the classifier returned by a universally Bayes-consistent learning algorithm with training set $S_n^0$ (as in Lemma~\ref{lem:universally-bayes-consistent}).
  Thus, we have that 
  $\E[\er_{\PXY}(\hat{h}_n^0)] \to \inf_{h} \er_{\PXY}(h)$ for all choices of the distribution $\PXY$.
  The intention behind this is that, in the arguments below,
  it will be convenient to focus on the case of $\PXY$ that is 
  Bayes-realizable with respect to $\H$.
  We will construct a predictor $\hat{h}_n^1$ below,
  which is guaranteed to have the required rate when $\PXY$ is Bayes-realizable with respect to $\H$.
  When this is not the case, the universally consistent 
  learner $\hat{h}_n^0$ will have excess risk 
  (relative to the infimum error among $h \in \H$) which is 
  in fact converging to a \emph{negative} value, 
  so that the non-Bayes-realizable case is handled 
  by $\hat{h}_n^0$.
  With both $\hat{h}_n^0$ and $\hat{h}_n^1$ defined,
  we can then merely select between these two functions 
  in the end (using additional independent samples) 
  to always guarantee the required rate for the excess risk
  (which may become negative in the case $\PXY$ is not Bayes-realizable with respect to $\H$).

  Next we describe the second component, defining the aforementioned predictor $\hat{h}_n^1$.
  For each $b \in \{1,\ldots,n/3\}$, let us further segment the sequence $S_n^0$ into \emph{batches} $B^b_i$ of size $b$:
  let $I_b = \{1,\ldots,\lfloor n/(3b) \rfloor\}$,
  and for each $i \in I_b$, define 
  \begin{equation*}
    B^b_i = \left\{(X_{1+(i-1)b},Y_{1+(i-1)b}),\ldots,(X_{i b},Y_{i b})\right\}. 
  \end{equation*}
  For each $i \in I_b$,
  and each $\mathbf{y} = (y_1,\ldots,y_b) \in \{0,1\}^b$, 
  define 
  \begin{equation*} 
  B^b_i(\mathbf{y}) = \{(X_{1+(i-1)b},y_1),\ldots,(X_{i b},y_b)\},
  \end{equation*}
  that is, $B^b_i$, but with the $Y$ labels replaced by $y$ labels.
  Let $\algSOA$ be the function defined in Lemma~\ref{lem:SOA-zero-error-rate-bstar} for the class $\H$. 
  For each $b \in \{1,\ldots,n/3\}$, $i \in I_b$, and $\mathbf{y} \in \{0,1\}^b$,  
  we define a function $h^{\mathbf{y}}_{b,i} : \X \to \{0,1\}$ 
  as follows: for every $x \in \X$,
  \begin{equation*} 
  h^{\mathbf{y}}_{b,i}(x) = \algSOA(B^b_i(\mathbf{y}), x).
  \end{equation*}

  We will next use the data subset $S_n^1$ to select one function 
  $h_{b,i}$ for each batch $B^b_i$, 
  from among these $h^{\mathbf{y}}_{b,i}$ functions.
  Specifically, for each $b \in \{1,\ldots,n/3\}$ 
  and $i \in I_b$,
  define 
  \begin{equation*}
   h_{b,i}  = \argmin_{h^{\mathbf{y}}_{b,i} : \mathbf{y} \in \{0,1\}^b} \hat{\er}_{S_n^1}(h^{\mathbf{y}}_{b,i}),
  \end{equation*}
  breaking ties in any way that preserves measurability 
  of the function mapping $B^b_i$ and any $x$
  to $h_{b,i}(x) \in \{0,1\}$
  (e.g., breaking ties lexicographically by $\mathbf{y}$ suffices).

  We will also use $S_n^1$ to select an appropriate batch size $\hat{b}_n$ as follows.
  For each $b \in \{1,\ldots,n/3\}$, define 
  $\GoodI(b)$ as the set of all $i \in I_b$ such that,
  $\forall b' \in \{1,\ldots,n/3\}$ with $b' > b$, $\forall i' \in I_{b'}$, 
  \begin{equation*}
    \hat{\er}_{S_n^1}(h_{b,i}) \leq \hat{\er}_{S_n^1}(h_{b',i'}) + 2 \sqrt{3\frac{\psi(n) + b' + \ln(n)}{n}}.
  \end{equation*}
  Define 
  \begin{equation*} 
  \hat{b}_n = \min\!\left\{ b \in \left\{1,\ldots,\frac{n}{3}\right\} : |\GoodI(b)| \geq \frac{9}{10} |I_b| \right\},
  \end{equation*}
  if the set on the right hand side is non-empty, 
  and otherwise (for completeness of the definition of $\hat{h}_n$) simply define $\hat{h}_n = \hat{h}_n^0$ and define $f_n(S_n) = \hat{h}_n$ (i.e., the algorithm returns this classifier $\hat{h}_n$), which completes the definition of the learning algorithm in this case. 

  Supposing $\hat{b}_n$ is defined, 
  the interpretation of the above choice of $\hat{b}_n$ is that, 
  for $b = \hat{b}_n$, at least $\frac{9}{10}$ of the classifiers $h_{b,i}$ are not verifiably worse
  than some other classifier $h_{b',i'}$ with $b' > b$,
  where this verification comes via a concentration inequality
  holding (uniformly over all such comparisons) with probability at least $1 - e^{-\psi(n)}$ (as we will argue below).

  Based on this choice of $\hat{b}_n$,
  for any $x \in \X$, we define
  \begin{equation*}
    \hat{h}_n^1(x) = \ind\!\left[ \sum_{i \in I_{\hat{b}_n}} h_{\hat{b}_n,i}(x) \geq \frac{1}{2} | I_{\hat{b}_n} | \right],
  \end{equation*}
  the majority vote of the $h_{\hat{b}_n,i}(x)$ predictions.

  To complete the definition of the algorithm, we define $\hat{h}_n$ by selecting one of $\hat{h}_n^0$ or $\hat{h}_n^1$ using the remaining data $S_n^2$, as follows.
  If
  \begin{equation*}
    \hat{\er}_{S_n^2}(\hat{h}_n^1) \leq \hat{\er}_{S_n^2}(\hat{h}_n^0) + \sqrt{\frac{6\psi(n)}{n}},
  \end{equation*}
  define $\hat{h}_n = \hat{h}_n^1$,
  and otherwise define $\hat{h}_n = \hat{h}_n^0$.
  In either case, define $f_n(S_n)$ as this $\hat{h}_n$ 
  (i.e., the algorithm returns the classifier $\hat{h}_n$).
  The algorithm is summarized in Figure~\ref{fig:exp-alg}.

  \begin{figure}
  \begin{bigboxit}
    1. Let $\hat{h}_n^0$ be returned by a universally Bayes-consistent learner trained on $S_n^0$
    \\2. Let $\hat{b}_n = \min\!\left\{ b \leq n/3 : |\GoodI(b)| \geq \frac{9}{10} \left\lfloor \frac{n}{3b} \right\rfloor \right\}$ {\small (if $\hat{b}_n$ does not exist, return $\hat{h}_n = \hat{h}_n^0$)}
    \\3. Let $\hat{h}_n^1 = \mathrm{Majority}\!\left(h_{\hat{b}_n,i} : i \leq n/(3\hat{b}_n)\right)$
    \\5. If $\hat{\er}_{S_n^2}(\hat{h}_n^1) \leq \hat{\er}_{S_n^2}(\hat{h}_n^0) + \sqrt{\frac{6\psi(n)}{n}}$, return $\hat{h}_n = \hat{h}_n^1$, else return $\hat{h}_n = \hat{h}_n^0$
  \end{bigboxit}
  \caption{Algorithm achieving $e^{-\psi(n)}$ rate for classes $\H$ with no infinite Littlestone tree.}
  \label{fig:exp-alg}
  \end{figure}

  With the definition of the classifier $\hat{h}_n$ now complete,
  we proceed to prove that it satisfies the required rate $e^{-\psi(n)}$.
  First, we remark that the claim of achieving rate $e^{-\psi(n)}$ 
  is \emph{asymptotic} in nature: that is, it suffices to argue that, 
  for some $n_0 \in \nats$ (possibly $\PXY$-dependent), 
  there exists a constant $C > 0$ (also possibly $\PXY$-dependent) such that, 
  for every $n \in \nats$ with $n \geq n_0$,
  $\E\!\left[ \er_{\PXY}(\hat{h}_n) \right] - \inf_{h \in \H} \er_{\PXY}(h) \leq C e^{- \psi(n)}$.
  This is due to the fact that, 
  since $\psi(n)$ is finite for every $n$,
  and $\E\!\left[ \er_{\PXY}(\hat{h}_n) \right] - \inf_{h \in \H} \er_{\PXY}(h) \leq 1$,
  we can simply define another (possibly $\PXY$-dependent) constant value $C' = \min\!\left\{ C e^{-\psi(n)} : n \in \{1,\ldots,n_0-1\} \right\} > 0$, 
  and it would then follow that \emph{every} $n \in \nats$ satisfies
  $\E\!\left[ \er_{\PXY}(\hat{h}_n) \right] - \inf_{h \in \H} \er_{\PXY}(h) \leq \frac{C}{\min\{1,C'\}} e^{- \psi(n)}$ (with this bound being no smaller than $1$ for $n < n_0$, hence holding vacuously for these small $n$ values).
  For this reason, in parts of the proof, we will 
  explicitly focus on the case of $n$ taken \emph{sufficiently large} 
  to satisfy certain constraints introduced in the analysis below.
  We now proceed with the proof.
  
  Let $E_1$ denote the event that, either $\hat{h}_n^1$ is undefined
  (due to the early termination event in the algorithm, when the criterion for $\hat{b}_n$ cannot be satisfied)
  or else 
  \begin{equation}
  \label{eqn:E1-Sn2-Hoeffding}
  \forall j \in \{0,1\}, \left| \hat{\er}_{S_n^2}(\hat{h}_n^j) - \er_{\PXY}(\hat{h}_n^j) \right| \leq \sqrt{\frac{3\psi(n)}{2n}}.
  \end{equation}
  Since $|S_n^2| = \frac{n}{3}$, 
  and $S_n^2$ is independent of $(S_n^0,S_n^1)$ 
  (from which $\hat{h}_n^0$ and $\hat{h}_n^1$ are derived), 
  Hoeffding's inequality 
  (applied under the conditional distribution given $S_n^0$ and $S_n^1$)
  and a union bound, with the law of total probability, imply that $E_1$ holds with probability at least $1 - 4 e^{-\psi(n)}$.

  We first consider the case where $\PXY$ is \emph{not} Bayes-realizable with respect to $\H$: that is, 
  $\inf_{h \in \H} \er_{\PXY}(h) > \inf_{h} \er_{\PXY}(h)$
  (where $h$ on the right hand side ranges over all measurable functions $\X \to \{0,1\}$).
  In this case, let $\epsilon > 0$ be such that $\inf_{h \in \H} \er_{\PXY}(h) > \inf_{h} \er_{\PXY}(h) + \epsilon$.
  As mentioned above, it suffices to establish a bound $Ce^{-\psi(n)}$ 
  for all \emph{sufficiently large} $n$.
  We may therefore focus on the case that $n$ is sufficiently large that 
  $\sqrt{\frac{3\psi(n)}{2n}} \leq \frac{\epsilon}{16}$ holds;
  since $\psi(n) = o(n)$, this inequality is satisfied for all sufficiently large $n$.
  On the event $E_1$ (and the event that $\hat{h}_n^1$ is defined), if additionally we have
  $\er_{\PXY}(\hat{h}_n^0) < \inf_{h \in \H} \er_{\PXY}(h) - \frac{\epsilon}{2}$
  and $\er_{\PXY}(\hat{h}_n^1) > \inf_{h \in \H} \er_{\PXY}(h) - \frac{\epsilon}{8}$,
  then
  \begin{align*}
    \hat{\er}_{S_n^2}(\hat{h}_n^1)
    & \geq \er_{\PXY}(\hat{h}_n^1) - \frac{\epsilon}{16}
    > \inf_{h \in \H} \er_{\PXY}(h) - \frac{3\epsilon}{16} 
    \\ & > \er_{\PXY}(\hat{h}_n^0) + \frac{5\epsilon}{16}
    \geq \hat{\er}_{S_n^2}(\hat{h}_n^0) + \frac{\epsilon}{4}
    \geq \hat{\er}_{S_n^2}(\hat{h}_n^0) + \sqrt{\frac{24\psi(n)}{n}},
  \end{align*}
  so that the algorithm chooses $\hat{h}_n = \hat{h}_n^0$.
  On the other hand, if
  $\er_{\PXY}(\hat{h}_n^0) < \inf_{h \in \H} \er_{\PXY}(h) - \frac{\epsilon}{2}$
  and $\er_{\PXY}(\hat{h}_n^1) \leq \inf_{h \in \H} \er_{\PXY}(h) - \frac{\epsilon}{8}$,
  then regardless of whether $\hat{h}_n = \hat{h}_n^0$ or $\hat{h}_n = \hat{h}_n^1$,
  we have $\er_{\PXY}(\hat{h}_n) \leq \inf_{h \in \H} \er_{\PXY}(h) - \frac{\epsilon}{8}$.
  Moreover, if $\hat{h}_n^1$ is not defined, then the algorithm chooses $\hat{h}_n = \hat{h}_n^0$.
  Thus, in any case, on the event $E_1$, if $\er_{\PXY}(\hat{h}_n^0) < \inf_{h \in \H} \er_{\PXY}(h) - \frac{\epsilon}{2}$,
  we always have $\er_{\PXY}(\hat{h}_n) \leq \inf_{h \in \H} \er_{\PXY}(h) - \frac{\epsilon}{8}$.
  In particular, since we always have $\er_{\PXY}(\hat{h}_n) \leq 1$, this implies 
  \begin{align*}
    \E\!\left[ \er_{\PXY}(\hat{h}_n) \right]
    & \leq \inf_{h \in \H} \er_{\PXY}(h) - \frac{\epsilon}{8} + \P\!\left( \er_{\PXY}(\hat{h}_n^0) \geq \inf_{h \in \H} \er_{\PXY}(h) - \frac{\epsilon}{2} \right) + \left( 1 - \P(E_1) \right)
    \\ & \leq \inf_{h \in \H} \er_{\PXY}(h) - \frac{\epsilon}{8} + \P\!\left( \er_{\PXY}(\hat{h}_n^0) > \inf_{h} \er_{\PXY}(h) + \frac{\epsilon}{2} \right) + 4 e^{-\psi(n)}
    \\ & \leq \inf_{h \in \H} \er_{\PXY}(h) - \frac{\epsilon}{8} + \frac{2}{\epsilon} \E\!\left[ \er_{\PXY}(\hat{h}_n^0) - \inf_{h} \er_{\PXY}(h) \right] + 4 e^{-\psi(n)},
  \end{align*}
  where the second inequality follows from the definition of $\epsilon$, 
  satisfying $\inf_{h \in \H} \er_{\PXY}(h) > \inf_{h} \er_{\PXY}(h) + \epsilon$, 
  and the bound on $\P(E_1)$ established above, 
  and the last line follows from Markov's inequality, 
  noting 
  $\er_{\PXY}(\hat{h}^0_n) - \inf_{h} \er_{\PXY}(h)$ is non-negative.
  Since $\lim_{n \to \infty} 4 e^{-\psi(n)} = 0$ and the Bayes consistency property of $\hat{h}_n^0$ guarantees
  $\lim_{n \to \infty} \E\!\left[ \er_{\PXY}(\hat{h}_n^0) - \inf_{h} \er_{\PXY}(h) \right] = 0$,
  we have that for all sufficiently large $n$ the right hand side of the last line is at most $\inf_{h \in \H} \er_{\PXY}(h) - \frac{\epsilon}{16}$.
  Thus, for all sufficiently large $n$,
  \begin{equation*}
    \E\!\left[ \er_{\PXY}(\hat{h}_n) \right] - \inf_{h \in \H} \er_{\PXY}(h) < 0 \leq e^{-\psi(n)}.
  \end{equation*}

  To complete the proof, we now turn to the remaining case, 
  in which $\PXY$ is Bayes-realizable with respect to $\H$: 
  that is, $\inf_{h \in \H} \er_{\PXY}(h) = \inf_{h} \er_{\PXY}(h)$.
  We will assume this to be the case for the remainder of the proof.
  Since $\H$ does not shatter an infinite Littlestone tree, 
  by Lemma~\ref{lem:no-infinite-VCL-implies-UGC}
  it satisfies the universal Glivenko-Cantelli (UGC) property (Definition~\ref{defn:UGC}),
 which further implies $\H$ is totally bounded in $L_1(\Px)$
 by Lemma~\ref{lem:UGC-totally-bounded}, 
 and therefore Lemma~\ref{lem:limit-function} implies there exists a measurable function $\target : \X \to \{0,1\}$ with
  $\er_{\PXY}(\target) = \inf_{h \in \H} \er_{\PXY}(h) = \inf_{h} \er_{\PXY}(h)$ and $\inf_{h \in \H} \Px( x : h(x) \neq \target(x) ) = 0$,
  where $\Px$ denotes the marginal of $\PXY$ on $\X$.
  Moreover, 
  letting $P_{\target}$ denote the distribution of $(X,\target(X))$, for $X \sim \Px$,
  note that the property $\inf_{h \in \H} \Px( x : h(x) \neq \target(x) ) = 0$ further implies
  that $P_{\target}$ is realizable with respect to $\H$.

  To argue the existence of $\hat{b}_n$, 
  and the quality of the functions $h_{\hat{b}_n,i}$ in the majority vote $\hat{h}_n^1$, 
  we will need the following
  argument about concentration of empirical error rates of all $h^{\mathbf{y}}_{b,i}$ functions.
  Let $E_2$ denote the event that, for every $b \in \{1,\ldots,n/3\}$, $i \in I_b$, and $\mathbf{y} \in \{0,1\}^b$,
  we have 
  \begin{equation}
    \label{eqn:exponential-pf-concentration-hbi}
    \left| \hat{\er}_{S_n^1}\!\left(h^{\mathbf{y}}_{b,i}\right) - \er_{\PXY}\!\left(h^{\mathbf{y}}_{b,i}\right) \right| \leq \sqrt{3\frac{\psi(n)+b+\ln(n)}{n}}.
   \end{equation}
  For each $b \in \{1,\ldots,n/3\}$, $i \in I_b$, and $\mathbf{y} \in \{0,1\}^b$,
  observing that
  \begin{equation*}
    \sqrt{\frac{\psi(n)+\ln(b(b+1) 2^{b+1} \lfloor n/(3b) \rfloor)}{2n/3}} \leq \sqrt{3\frac{\psi(n)+b+\ln(n)}{n}},
  \end{equation*}
  applying Hoeffding's inequality under the conditional 
  distribution given $S_n^0$ 
  implies that \eqref{eqn:exponential-pf-concentration-hbi} holds with conditional probability at least
  $1 - \frac{1}{b(b+1)2^{b}|I_b|}e^{-\psi(n)}$.
  Therefore, 
  by the union bound and the law of total probability, 
  \eqref{eqn:exponential-pf-concentration-hbi}
  holds simultaneously for all $b \in \{1,\ldots,n/3\}$, $i \in I_b$, and $\mathbf{y} \in \{0,1\}^b$
  with probability at least
  \begin{equation*}
    1 - \sum_{b,i,\mathbf{y}} \frac{1}{b (b+1) 2^b |I_b|} e^{-\psi(n)} 
    = 1 - \sum_{b} \frac{1}{b (b+1)} e^{-\psi(n)} \geq 1 - e^{-\psi(n)}.  
  \end{equation*}
  That is, $E_2$ holds with probability at least $1 - e^{-\psi(n)}$.

  Let $b^* = b^*_{99/100} \in \nats$ 
  be defined as in Lemma~\ref{lem:SOA-zero-error-rate-bstar} (with $\gamma = 99/100$) 
  for the class $\H$ and the distribution $P_{\target}$
  (recalling that $P_{\target}$ is realizable with respect to $\H$).
  Since, as mentioned above, it suffices to establish a bound 
  $C e^{-\psi(n)}$ for all \emph{sufficiently large} $n$, 
  for the remainder of the proof we suppose $n \geq 3 b^*$.
  For each $i \in I_{b^*}$,
  let $\mathbf{y}^*_{b^*,i} = (\target(X_{1+(i-1)b^*}),\ldots,\target(X_{i b^*}))$.
  Noting that the elements of 
  $B^{b^*}_i(\mathbf{y}^*_{b^*,i})$ 
  are i.i.d.\ $P_{\target}$-distributed, 
  Lemma~\ref{lem:SOA-zero-error-rate-bstar} implies that 
  $\forall i \in I_{b^*}$, 
  \begin{equation}
  \label{eqn:bstar-property}
  \P\!\left( \er_{P_{\target}}\!\left( h^{\mathbf{y}^*_{b^*,i}}_{b^*,i} \right) = 0 \right) > \frac{99}{100}.
  \end{equation}

  For each $b \in \{1,\ldots,n/3\}$ and $i \in I_b$,
  let
  \begin{equation*}
    \OPTY(b,i) = \left\{ \mathbf{y} \in \{0,1\}^b : \er_{\PXY}\!\left(h^{\mathbf{y}}_{b,i}\right) = \er_{\PXY}(\target) \right\}
  \end{equation*}
  and let 
  \begin{equation*} 
  \HasOPT(b) = \{ i \in \{1,\ldots,n/3\} : \OPTY(b,i) \neq \emptyset \}.
  \end{equation*}
  In particular, note that if 
  $\er_{P_{\target}}\!\left( h^{\mathbf{y}^*_{b^*,i}}_{b^*,i} \right) = 0$, then by the triangle inequality, 
  \begin{align}
      \er_{\PXY}(\target) \leq \er_{\PXY}\!\left( h^{\mathbf{y}^*_{b^*,i}}_{b^*,i} \right)
      & \leq \er_{\PXY}\!\left( \target \right) + \Px\!\left( x : h^{\mathbf{y}^*_{b^*,i}}_{b^*,i}(x) \neq \target(x) \right) 
      \notag \\ & = \er_{\PXY}(\target) + \er_{P_{\target}}\!\left( h^{\mathbf{y}^*_{b^*,i}}_{b^*,i} \right)
      = \er_{\PXY}(\target), \label{eqn:ystar-bstar-in-OPTy}
  \end{align}
  so that 
  $\mathbf{y}^*_{b^*,i} \in \OPTY(b^*,i)$, 
  and hence $i \in \HasOPT(b^*)$.  
  Together, by \eqref{eqn:bstar-property} and \eqref{eqn:ystar-bstar-in-OPTy}, 
  we have that $\P\!\left( 1 \in \HasOPT(b^*) \right) > \frac{99}{100}$.

  For each $b \in \{1,\ldots,n/3\}$, since $\{0,1\}^{b}$ has finite size, 
  there exists $\gamma_{b} > 0$ such that 
  \begin{equation*}
    \P\!\left( \OPTY(b,1) \!=\! \{0,1\}^{b} \text{ or } \min\!\left\{ \er_{\PXY}\!\left( h^{\mathbf{y}}_{b,1} \right) \!: \mathbf{y} \!\in\! \{0,1\}^{b} \!\setminus\! \OPTY(b,1) \right\} \!>\! \er_{\PXY}(\target) \!+\! \gamma_{b} \right)  > \frac{99}{100}.
  \end{equation*}
  In other words, with probability greater than $\frac{99}{100}$, 
  either \emph{every} $\mathbf{y} \in \{0,1\}^b$ has 
  optimal error $\er_{\PXY}\!\left(h^{\mathbf{y}}_{b,1}\right) = \inf_{h} \er_{\PXY}(h)$, 
  or else the smallest \emph{non-optimal} error rate among $h^{\mathbf{y}}_{b,1}$, $\mathbf{y} \in \{0,1\}^{b}$, is greater than $\inf_{h} \er_{\PXY}(h) + \gamma_{b}$.
  Let $\gamma^* = \min_{b \in \{1,\ldots,b^*\}} \gamma_b$, and note that $\gamma^* > 0$ since $\{1,\ldots,b^*\}$ is a finite set.
  For each $b \in \{1,\ldots,b^*\}$, define 
  \begin{align*}
    \Gap(b) = & \left\{ i \in I_b : \OPTY(b,i) = \{0,1\}^{b} \text{ or } \right.
    \\ & {\hskip 18mm}\left. \min\!\left\{ \er_{\PXY}\!\left( h^{\mathbf{y}}_{b,i} \right) : \mathbf{y} \in \{0,1\}^{b} \setminus \OPTY(b,i) \right\} > \er_{\PXY}(\target) + \gamma^* \right\}.
  \end{align*}
  Also define 
  \begin{equation*}
    \PrbGoodB = \left\{ b \in \{1,\ldots,b^*\} : \P\!\left( 1 \in \HasOPT(b) \right) \geq \frac{7}{10} \right\}.
  \end{equation*}
  In particular, since $\P\!\left( 1 \in \HasOPT(b^*) \right) > \frac{99}{100}$ (established above), 
  we have $b^* \in \PrbGoodB$.

  Let $E_3$ denote the event that $\forall b \in \PrbGoodB$, we have 
  \begin{equation*}
    |\HasOPT(b) \cap \Gap(b)| \geq \left( \P\!\left( 1 \in \HasOPT(b) \right) - \frac{1}{50} \right) |I_b|.
  \end{equation*}
  For any $b \in \PrbGoodB$,
  recall that the samples $B^b_i$, $i \in I_b$, 
  are identically distributed.
  Thus, 
  for any $i \in I_b$,
  by the union bound (and recalling that $\gamma^* \leq \gamma_b$), 
  \begin{equation*} 
  \P\!\left( i \in \HasOPT(b) \cap \Gap(b) \right) 
  > \P\!\left( 1 \in \HasOPT(b) \right) - \frac{1}{100}. 
  \end{equation*}
  Since $B^{b}_1,B^{b}_2,\ldots,B^{b}_{|I_b|}$ are independent,
  Hoeffding's inequality, and the union bound (over $b$ values in $\PrbGoodB$), 
  imply that $E_3$ has probability at least
  $1 - b^* e^{-C_1 n}$, 
  where $C_1 = \frac{1}{100^2} \frac{1}{3 b^*}$.

  As discussed above, since it suffices to establish a bound
  $C e^{-\psi(n)}$ for all \emph{sufficiently large} $n$,  
  for the remainder of the proof we suppose $n$ is sufficiently 
  large to satisfy the inequality 
  $\sqrt{3\frac{\psi(n)+b^*+\ln(n)}{n}} < \frac{\gamma^*}{2}$, which is indeed satisfied for all sufficiently large $n$ since $\psi(n) = o(n)$.
  Note that, on the event $E_2$, 
  for every $b \in \PrbGoodB$ and $i \in \HasOPT(b) \cap \Gap(b)$, 
  for every $\mathbf{y} \in \OPTY(b,i)$
  and every $\mathbf{y}' \in \{0,1\}^{b} \setminus \OPTY(b,i)$ 
  (if any such $\mathbf{y}'$ exists), we have 
  \begin{equation*}
    \hat{\er}_{S_n^1}(h^{\mathbf{y}}_{b,i})
    < \er_{\PXY}(\target) + \frac{\gamma^*}{2}
    < \er_{\PXY}(h^{\mathbf{y}'}_{b,i}) - \frac{\gamma^*}{2}
    < \hat{\er}_{S_n^1}(h^{\mathbf{y}'}_{b,i}),
  \end{equation*}
  where the first inequality is due to $\mathbf{y} \in \OPTY(b,i)$ and \eqref{eqn:exponential-pf-concentration-hbi}, 
  the second inequality is due to $i \in \Gap(b)$,
  and the third inequality is again due to \eqref{eqn:exponential-pf-concentration-hbi}.
  In particular, this implies $h_{b,i} \neq h^{\mathbf{y}'}_{b,i}$.
  Since $i \in \HasOPT(b)$ guarantees $\OPTY(b,i)$ is non-empty,
  we conclude that on $E_2$, for every $b \in \PrbGoodB$ and $i \in \HasOPT(b) \cap \Gap(b)$,
  the classifier $h_{b,i}$ equals some $h^{\mathbf{y}}_{b,i}$ for some $\mathbf{y} \in \OPTY(b,i)$, and hence 
  $\er_{\PXY}(h_{b,i}) = \er_{\PXY}(\target)$.
  Equivalently, defining 
  \begin{equation*}
    \OPTI(b) = \left\{ i \in I_b : \er_{\PXY}(h_{b,i}) = \er_{\PXY}(\target) \right\}, 
  \end{equation*}
  we have shown that on the event $E_2$, every $b \in \PrbGoodB$ has 
  \begin{equation*} 
  \OPTI(b) \supseteq \HasOPT(b) \cap \Gap(b).
  \end{equation*}
  Thus, by the definition of the event $E_3$, 
  we may also conclude that, 
  on the event $E_2 \cap E_3$, every $b \in \PrbGoodB$ has
  \begin{equation}
  \label{eqn:OPTI-b-size-lower-bound}
    |\OPTI(b)| \geq \left( \P\!\left( 1 \in \HasOPT(b) \right) - \frac{1}{50} \right) |I_b|.
  \end{equation}
  In particular, since $b^* \in \PrbGoodB$ 
  and $\P\!\left( 1 \in \HasOPT(b^*) \right) > \frac{99}{100}$
  (both established above), 
  on the event $E_2 \cap E_3$ 
  we have
  \begin{equation}
  \label{eqn:OPTI-bstar-size-lower-bound}
    |\OPTI(b^*)|
    > \frac{97}{100} |I_{b^*}|
    > \frac{9}{10} |I_{b^*}|.
  \end{equation}
  
  Additionally, note that on the event $E_2$, for any $i \in \OPTI(b^*)$, 
  for any $b' \in \{1,\ldots,n/3\}$ with $b' > b^*$, for any $i' \in I_{b'}$
  and any $\mathbf{y}' \in \{0,1\}^{b'}$, it holds that 
  \begin{align*}
    & \hat{\er}_{S_n^1}(h_{b^*,i})
    \leq \er_{\PXY}(\target) + \sqrt{3\frac{\psi(n)+b^*+\ln(n)}{n}}
    \\ & \leq \er_{\PXY}(h^{\mathbf{y}'}_{b',i'}) + \sqrt{3\frac{\psi(n)+b^*+\ln(n)}{n}}
    \leq \hat{\er}_{S_n^1}(h^{\mathbf{y}'}_{b',i'}) + 2\sqrt{3\frac{\psi(n)+b'+\ln(n)}{n}}.
  \end{align*}
  In particular, since $\exists \mathbf{y}' \in \{0,1\}^{b'}$ such that $h_{b',i'} = h^{\mathbf{y}'}_{b',i'}$,
 it follows that
  \begin{equation*}
    \hat{\er}_{S_n^1}(h_{b^*,i}) \leq \hat{\er}_{S_n^1}(h_{b',i'}) + 2\sqrt{3\frac{\psi(n) + b' + \ln(n)}{n}}.
  \end{equation*}
  Thus, recalling the definition of $\GoodI(b)$ from the learning algorithm, 
  we conclude that on the event $E_2$, 
  we have $\GoodI(b^*) \supseteq \OPTI(b^*)$.
  Together with \eqref{eqn:OPTI-bstar-size-lower-bound}, 
  we have that on the event $E_2 \cap E_3$, 
  $\hat{b}_n$ is defined (and hence so is $\hat{h}_n^1$)
  and satisfies $\hat{b}_n \leq b^*$.

  We must also argue that any ``bad'' $b < b^*$ will not be selected as $\hat{b}_n$.
  Specifically, we will establish that $\hat{b}_n \in \PrbGoodB$ with high probability.
  This is trivially satisfied if $\PrbGoodB = \{1,\ldots,b^*\}$,
  in which case we may skip the following argument and proceed from \eqref{eqn:hatb-in-PrbGoodB} below (defining $E_4$ as a vacuous event of probability one in that case); 
  to address the remaining case, let us suppose 
  $\PrbGoodB \neq \{1,\ldots,b^*\}$.
  For any $b \in \{1,\ldots,b^*\} \setminus \PrbGoodB$ 
  and $i \in I_b$, define 
  \begin{equation*}
    \varepsilon(b,i) = \min_{\mathbf{y} \in \{0,1\}^b} \er_{\PXY}(h^{\mathbf{y}}_{b,i}) - \er_{\PXY}(\target).
  \end{equation*}
  Since $b \in \{1,\ldots,b^*\} \setminus \PrbGoodB$, 
  by definition of $\PrbGoodB$ we have  
  $\P(1 \notin \HasOPT(b)) > \frac{3}{10}$.
  Also note that, since $\{0,1\}^b$ is finite, 
  if $1 \notin \HasOPT(b)$ then $\varepsilon(b,1) > 0$:
  that is, 
  $\P( \varepsilon(b,1) > 0 | 1 \notin \HasOPT(b) ) = 1$.
  This further implies that 
  there exists $\varepsilon^*(b) > 0$  
  such that 
  $\P( \varepsilon(b,1) > \varepsilon^*(b) | 1 \notin \HasOPT(b) ) > \frac{2}{3}$.
  Let 
  \begin{equation*} 
  \varepsilon^* = \min\{\varepsilon^*(b) : b \in \{1,\ldots,b^*\} \setminus \PrbGoodB\},
  \end{equation*}  
  and note that $\varepsilon^* > 0$ 
  since $\{1,\ldots,b^*\} \setminus \PrbGoodB$ is finite.
  Thus, 
  for any $b \in \{1,\ldots,b^*\} \setminus \PrbGoodB$, 
  since we also have $\P(1 \notin \HasOPT(b)) > \frac{3}{10}$, 
  altogether we have 
  \begin{equation*}
    \P\!\left( \varepsilon(b,1) > \varepsilon^* \right) 
    = \P\!\left( \varepsilon(b,1) > \varepsilon^* \middle| 1 \notin \HasOPT(b) \right) \P\!\left( 1 \notin \HasOPT(b) \right)
    > \frac{2}{3} \cdot \frac{3}{10}
    = \frac{1}{5}.
  \end{equation*}
  Moreover, since $B_i^b$, $i \in I_b$,
  are identically distributed, 
  for any $b \in \{1,\ldots,b^*\} \setminus \PrbGoodB$,
  every $i \in I_b$ 
  also satisfies $\P\!\left( \varepsilon(b,i) > \varepsilon^* \right) > \frac{1}{5}$.

  For each $b \in \{1,\ldots,b^*\} \setminus \PrbGoodB$, define
  \begin{equation*}
    \FarFromOPTI(b) = \left\{ i \in I_b : \varepsilon(b,i) > \varepsilon^* \right\}.
  \end{equation*}
  Let $E_4$ denote the event that every $b \in \{1,\ldots,b^*\} \setminus \PrbGoodB$ has 
  \begin{equation*}
    |\FarFromOPTI(b)| \geq \frac{3}{20} |I_b|.
  \end{equation*}
  Recalling that, for each $b \in \{1,\ldots,b^*\} \setminus \PrbGoodB$, 
  the $B^b_1,\ldots,B^b_{|I_b|}$ samples are independent,
  Hoeffding's inequality implies $|\FarFromOPTI(b)| \geq \frac{3}{20} |I_b|$ 
  with probability at least $1 - e^{-C_2 n}$
  where $C_2 = \frac{1}{400} \frac{1}{3 b^*}$.
  Applying this to every $b \in \{1,\ldots,b^*\} \setminus \PrbGoodB$, 
  the union bound implies 
  the event $E_4$ has probability at least $1 - (b^*-1) e^{-C_2 n}$.

  To complete this part of the argument, we show that the definition of $\hat{b}_n$ prevents $\hat{b}_n \in \{1,\ldots,b^*\} \setminus \PrbGoodB$.
  Again, as discussed above, since it suffices to establish a bound $C e^{-\psi(n)}$ for all \emph{sufficiently large} $n$, 
  for the remainder of the proof we suppose $n$ is sufficiently large to satisfy
  $\sqrt{3\frac{\psi(n)+b^*+\ln(n)}{n}} < \frac{\varepsilon^*}{4}$, which indeed holds for all sufficiently large $n$ since $\psi(n) = o(n)$.
  On the event $E_2 \cap E_3$, 
  we know from \eqref{eqn:OPTI-bstar-size-lower-bound} that 
  the set $\OPTI(b^*)$ is nonempty.
  Moreover, also on the event $E_2 \cap E_3$, 
  for any $i' \in \OPTI(b^*)$, 
  any $b \in \{1,\ldots,b^*\} \setminus \PrbGoodB$, 
  and any $i \in \FarFromOPTI(b)$ and $\mathbf{y} \in \{0,1\}^{b}$, 
  \begin{align*}
    & \hat{\er}_{S_n^1}\!\left(h^{\mathbf{y}}_{b,i}\right)
    > \er_{\PXY}\!\left(h^{\mathbf{y}}_{b,i}\right) - \frac{\varepsilon^*}{4} 
    \\ & > \er_{\PXY}(\target) + \frac{3\varepsilon^*}{4}
    = \er_{\PXY}(h_{b^*,i'}) + \frac{3\varepsilon^*}{4}
    \\ & > \hat{\er}_{S_n^1}(h_{b^*,i'}) + \frac{\varepsilon^*}{2}
    > \hat{\er}_{S_n^1}(h_{b^*,i'}) + 2\sqrt{3\frac{\psi(n)+b^*+\ln(n)}{n}},
  \end{align*}
  where the first inequality is due to \eqref{eqn:exponential-pf-concentration-hbi},
  the second inequality is due to $i \in \FarFromOPTI(b)$, 
  the equality following this is due to $i' \in \OPTI(b^*)$, 
  and the inequality following this is again due to 
  \eqref{eqn:exponential-pf-concentration-hbi}.
  In particular, this implies
  \begin{equation*}
    \hat{\er}_{S_n^1}\!\left(h_{b,i}\right) > \hat{\er}_{S_n^1}(h_{b^*,i'}) + 2\sqrt{3\frac{\psi(n)+b^*+\ln(n)}{n}},
  \end{equation*}
  so that $\GoodI(b) \subseteq I_b \setminus \FarFromOPTI(b)$.
  Thus, on the event $E_2 \cap E_3 \cap E_4$, 
  every $b \in \{1,\ldots,b^*\} \setminus \PrbGoodB$ 
  has 
  $|\GoodI(b)| \leq \frac{17}{20} |I_b| < \frac{9}{10} |I_b|$,
  so that $\hat{b}_n \neq b$.
  Hence, on the event $E_2 \cap E_3 \cap E_4$, we have $\hat{b}_n \notin \{1,\ldots,b^*\} \setminus \PrbGoodB$.
  Together with the above conclusion that $\hat{b}_n \leq b^*$,
  on the event $E_2 \cap E_3 \cap E_4$, 
  we have that 
  \begin{equation}
  \label{eqn:hatb-in-PrbGoodB}
  \hat{b}_n \in \PrbGoodB.
  \end{equation}
  We have established this under the assumption that $\PrbGoodB \neq \{1,\ldots,b^*\}$;
  however, as mentioned above, \eqref{eqn:hatb-in-PrbGoodB} is trivially
  satisfied on the event $E_2 \cap E_3$ 
  when $\PrbGoodB = \{1,\ldots,b^*\}$ 
  (due to the fact that $\hat{b}_n \leq b^*$, as established above).
  Thus, the conclusion that \eqref{eqn:hatb-in-PrbGoodB} holds on $E_2 \cap E_3 \cap E_4$ is valid in either case 
  (defining $E_4$ to be a vacuous event of probability one 
  when $\PrbGoodB = \{1,\ldots,b^*\}$).

  We now resume the proof for the general case 
  (regardless of whether $\PrbGoodB = \{1,\ldots,b^*\}$ or not).
  Since, on the above events, we have constrained the set of possible $\hat{b}_n$ values to $\PrbGoodB$,
  let us now argue that any of these possible $\hat{b}_n$ values would suffice to provide a guarantee on $\er_{\PXY}(\hat{h}_n^1)$.
  Consider any $b \in \PrbGoodB$.
  Recall that we have established above that, on the event $E_2 \cap E_3$, every $b \in \PrbGoodB$ satisfies \eqref{eqn:OPTI-b-size-lower-bound},
  which, together with the definition of $\PrbGoodB$, implies
  \begin{equation*}
    |\OPTI(b)|
    \geq \left( \P\!\left( 1 \in \HasOPT(b) \right) - \frac{1}{50} \right) |I_b|
    \geq \frac{34}{50} |I_b|
    > \frac{3}{5} |I_b|.
  \end{equation*}
  Thus, on the event $E_2 \cap E_3 \cap E_4$, 
  together with \eqref{eqn:hatb-in-PrbGoodB} we have
  \begin{equation}
  \label{eqn:opti-hatb-three-fifths}
    |\OPTI(\hat{b}_n)| > \frac{3}{5} | I_{\hat{b}_n} |.
  \end{equation}
  Additionally, as is well known, for any measurable function $g : \X \to \{0,1\}$,
  for $(X,Y) \sim \PXY$, 
  \begin{equation*}
    \er_{\PXY}(g) - \er_{\PXY}(\target) = \E\!\Big[ \left| 1 - 2 \P(Y=1|X) \right| \cdot \ind\!\left[ g(X) \neq \target(X) \right] \Big],  
  \end{equation*}
  which implies that if $\er_{\PXY}(g) = \er_{\PXY}(\target)$,
  then with probability one, 
  $\P(Y=1|X) \neq \frac{1}{2} \implies g(X) = \target(X)$.
  In particular, since every $i \in \OPTI(\hat{b}_n)$ has $\er_{\PXY}(h_{\hat{b}_n,i}) = \er_{\PXY}(\target)$,
  by the union bound we have that, 
  for $(X,Y) \sim \PXY$ (independent of $S_n$),
  with conditional probability one (given $S_n$),
  \begin{equation*}
    \P(Y=1|X) \neq \frac{1}{2} \implies \forall i \in \OPTI(\hat{b}_n), h_{\hat{b}_n,i}(X) = \target(X).
  \end{equation*}
  Since, on the event $E_2 \cap E_3 \cap E_4$, 
  \eqref{eqn:opti-hatb-three-fifths} implies 
  the set $\OPTI(\hat{b}_n)$ contains
  at least $\frac{3}{5}$ of the values in $I_{\hat{b}_n}$,
  and since $\hat{h}_n^1(x)$ is the majority vote of $\{h_{\hat{b}_n,i}(x) : i \in I_{\hat{b}_n}\}$,
  we have that on the event $E_2 \cap E_3 \cap E_4$,
  for any $x \in \X$ such that every $i \in \OPTI(\hat{b}_n)$ has $h_{\hat{b}_n,i}(x) = \target(x)$,
  we also have $\hat{h}_n^1(x) = \target(x)$.
  Altogether, we have that, on the event $E_2 \cap E_3 \cap E_4$,
  \begin{equation}
  \label{eqn:hathn1-zero-excess-error}
    \er_{\PXY}(\hat{h}_n^1) - \er_{\PXY}(\target) = \E\!\left[ \left| 1 - 2 \P(Y=1|X) \right| \cdot \ind\!\left[ \hat{h}_n^1(X) \neq \target(X) \right] \middle| S_n \right] = 0.
  \end{equation}

  It remains to bound $\er_{\PXY}(\hat{h}_n)$ in terms of $\er_{\PXY}(\hat{h}_n^1)$.
  On the event $E_1$, if $\hat{h}_n^1$ is defined and $\er_{\PXY}(\hat{h}_n^1) \leq \er_{\PXY}(\hat{h}_n^0)$, then
  \begin{equation*}
    \hat{\er}_{S_n^2}(\hat{h}_n^1)
    \leq \er_{\PXY}(\hat{h}_n^1) + \sqrt{\frac{3\psi(n)}{2n}}
    \leq \er_{\PXY}(\hat{h}_n^0) + \sqrt{\frac{3\psi(n)}{2n}}
    \leq \hat{\er}_{S_n^2}(\hat{h}_n^0) + \sqrt{\frac{6\psi(n)}{n}},
  \end{equation*}
  where the first inequality is due to \eqref{eqn:E1-Sn2-Hoeffding}, 
  the second inequality is due to $\er_{\PXY}(\hat{h}^1_n) \leq \er_{\PXY}(\hat{h}_n^0)$,
  and the third inequality is again due to \eqref{eqn:E1-Sn2-Hoeffding}.
  In particular, in this case, by definition the algorithm chooses $\hat{h}_n = \hat{h}_n^1$.
  On the other hand, if $\er_{\PXY}(\hat{h}_n^1) > \er_{\PXY}(\hat{h}_n^0)$,
  then regardless of whether $\hat{h}_n = \hat{h}_n^0$ or $\hat{h}_n = \hat{h}_n^1$,
  we have $\er_{\PXY}(\hat{h}_n) \leq \er_{\PXY}(\hat{h}_n^1)$.
  Thus, since we have argued that $\hat{h}_n^1$ is defined on the event $E_2 \cap E_3 \cap E_4$,
  and since $\er_{\PXY}(\hat{h}_n) - \inf_h \er_{\PXY}(h) \leq 1$, we have 
  \begin{align*}
    & \E\!\left[\er_{\PXY}(\hat{h}_n)\right] - \inf_{h \in \H} \er_{\PXY}(h) 
    = \E\!\left[\er_{\PXY}(\hat{h}_n)\right] - \er_{\PXY}(\target)
    \\ & \leq \E\!\left[\left( \er_{\PXY}(\hat{h}_n^1) - \er_{\PXY}(\target) \right) \ind_{E_1 \cap E_2 \cap E_3 \cap E_4} \right] + \left( 1 - \P\!\left(E_1 \cap E_2 \cap E_3 \cap E_4 \right) \right)
    \\ & = 1 - \P\!\left(E_1 \cap E_2 \cap E_3 \cap E_4 \right)
    \leq 4 e^{-\psi(n)} + e^{-\psi(n)} + b^* e^{-C_1 n} + (b^*-1) e^{-C_2 n},
  \end{align*}
  where the equality on the third line is due to \eqref{eqn:hathn1-zero-excess-error} and the last inequality is by the union bound
  and the respective bounds on the failure probabilities of each event 
  $E_i$ established above.
  Since $\psi(n) = o(n)$, for any sufficiently large $n$ we have $\psi(n) \leq \min\{C_1 n,C_2 n\}$,
  so that this last expression is at most $(4 + 2b^*) e^{-\psi(n)}$ for all sufficiently large $n$. 
  This completes the proof.
\end{proof}

\section{Slower Than Near-exponential is not Faster Than Super-Root}
\label{sec:super-root-lower-bound}

As our next component in the proof of Theorem~\ref{thm:agnostic-char}, 
we turn to establishing the super-root lower bounds 
for classes which shatter an infinite Littlestone tree.

\begin{theorem}
\label{thm:agnostic-super-root-lower-bound}
If $\H$ shatters an infinite Littlestone tree, then 
for every $R(n) = o(n^{-1/2})$,
$\H$ is not agnostically learnable at rate $R$.
\end{theorem}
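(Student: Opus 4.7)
The plan is to reduce the theorem to the concrete claim: \emph{for every learning algorithm $\hat h_n$ there is a distribution $\PXY$ and a constant $c>0$ such that $\E[\er_\PXY(\hat h_n)]-\inf_{h\in\H}\er_\PXY(h)\geq c\,n^{-1/2}$ for infinitely many $n$}. This reduction is immediate: if $\H$ were agnostically learnable at some rate $R=o(n^{-1/2})$, then on the distribution supplied by the claim above there would exist $C,c'>0$ with $\E[\text{excess}]\leq CR(c'n)=o(n^{-1/2})$ for all $n$, contradicting the lower bound $c\,n^{-1/2}$ at all sufficiently large $n$ in the infinite ``bad'' set.

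Using the infinite Littlestone tree shattered by $\H$, fix any infinite path $\bm\xi\in\{0,1\}^{\nats}$ through the tree, with associated path nodes $x_1,x_2,\ldots\in\X$. For a target sequence $y^{\star}\in\{0,1\}^{\nats}$ to be chosen, let $\PXY_{y^{\star}}$ be the distribution with $P(X=x_i)=w_i:=1/(i(i+1))$ and $P(Y=y_i^{\star}\mid X=x_i)=1/2+\gamma$ for a fixed $\gamma\in(0,1/5)$. The shattering property yields, for every finite $K$, some $h\in\H$ matching the first $K$ coordinates of $y^{\star}$; letting $K\to\infty$ (using $\sum_{i>K}w_i\to0$) gives $\inf_\H\er_{\PXY_{y^{\star}}}=1/2-\gamma$, so the excess risk of any $\hat h$ equals $2\gamma\sum_iw_i\ind[\hat h(x_i)\neq y_i^{\star}]\in[0,2\gamma]$. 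In particular $\PXY_{y^{\star}}$ satisfies Massart noise with margin $\gamma$.

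Randomize $y^{\star}$ as a uniform Bernoulli product on $\{0,1\}^{\nats}$. Conditional on $y^{\star}$ and on $N_i\sim\mathrm{Binomial}(n,w_i)$, the labels observed at $x_i$ are i.i.d.\ $\mathrm{Bernoulli}(1/2\pm\gamma)$ depending only on $y_i^{\star}$, and coordinates decouple. Lemma~\ref{lem:coin-testing-lower-bound} applied conditionally on $N_i$ and on $y_{-i}^{\star}$ gives constants $K_0,\delta$ (depending only on $\gamma$) such that $P(\hat h(x_i)\neq y_i^{\star}\mid N_i\leq K_0)\geq\delta$; Markov on $N_i$ gives $P(N_i\leq K_0)\geq1/2$ whenever $nw_i\leq 2K_0$, i.e.\ $i\geq c_1\sqrt n$. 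Using the telescoping identity $\sum_{i\geq m}w_i=1/m$ and summing over the hard coordinates yields
\[
\E_{y^{\star},\mathrm{data}}\bigl[\er_{\PXY_{y^{\star}}}(\hat h_n)-\inf_\H\er_{\PXY_{y^{\star}}}\bigr]\geq c_0\,n^{-1/2}.
\]

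The main obstacle is the passage from this per-$n$ Bayesian bound to a single $y^{\star}$ failing at infinitely many $n$. Let $f_n(y^{\star})$ denote the data-averaged excess risk, so $\E[f_n]\geq c_0/\sqrt n$ while $f_n\leq 2\gamma$ pointwise. Markov gives $P(f_n\geq c_0/(2\sqrt n))\geq c_0/(4\gamma\sqrt n)$, hence $\sum_n P(f_n\geq c_0/(2\sqrt n))\geq(c_0/(4\gamma))\sum_n 1/\sqrt n=\infty$. By Fubini, $|\{n:f_n(y^{\star})\geq c_0/(2\sqrt n)\}|$ has infinite expectation in $y^{\star}$ and is therefore infinite on a positive-measure set of $y^{\star}$; any $y^{\star}$ in this set witnesses the required lower bound. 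This step uses essentially the uniform pointwise bound $f_n\leq 2\gamma$ (an artifact of the Massart construction) together with the divergence of $\sum 1/\sqrt n$, which is precisely what Markov applied to the $c_0/\sqrt n$ lower bound produces, and circumvents the failure of dominated convergence caused by $\sqrt n\,f_n$ being unbounded in $n$.
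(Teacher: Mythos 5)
Your reduction in the opening paragraph is sound. But the construction and the last step both have genuine gaps.

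\textbf{The realizability claim is false.} You fix a single branch $\xi$, hence a fixed sequence of path nodes $x_1,x_2,\ldots$, place a fixed marginal $w_i$ on these, and vary the Bayes labeling $y^\star$ over all of $\{0,1\}^\nats$. Littlestone-tree shattering does \emph{not} imply that $\H$ can approximately realize an arbitrary labeling of the fixed sequence $x_1,x_2,\ldots$; it only guarantees consistency along a branch \emph{using that branch's own nodes}. For $y^\star\neq\xi$, $\inf_{h\in\H}\er_{\PXY_{y^\star}}$ is in general strictly larger than $1/2-\gamma$, so your excess-risk formula does not hold. (If $\H$ could realize all labelings of a fixed infinite set, it would VC-shatter an infinite set, a strictly stronger hypothesis than shattering an infinite Littlestone tree.) The paper avoids this by letting the support of $P_{\mathbf y}$ move with the branch: the depth-$k$ point is $x_{\mathbf y_{<k}}$. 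That restores Bayes-realizability but destroys your coordinate decoupling, since a sample at depth $>k$ reveals $y^\star_k$; the paper therefore conditions on the event that no sample exceeds depth $k$, and chooses $p_{k+1}\ll 1/n_k$ so that this event is likely.

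\textbf{``Infinite expectation $\Rightarrow$ infinite with positive probability'' is a non sequitur.} A nonnegative integer-valued random variable can have infinite expectation and still be finite almost surely (e.g.\ $P(W=2^k)=2^{-k}$), and without independence of the events $\{f_n\geq c_0/(2\sqrt n)\}$ across $n$ neither Fubini nor the second Borel--Cantelli lemma yields the infinitely-often conclusion. The paper circumvents this with reverse Fatou applied to the bounded sequence $Z_k:=\frac{1}{p_k}\P\bigl(\hat{h}_{n_k}(x_{\ybranch_{<k}})\neq\ybranch_k\text{ and }K=k\mid\bybranch\bigr)\in[0,1]$, giving $\E[\limsup_k Z_k]\geq\limsup_k\E[Z_k]\geq c$ and hence a fixed branch at which $Z_k\gtrsim c$ infinitely often. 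That trick works precisely because the paper's lower bound at sample size $n_k$ is carried by a \emph{single} depth $k$: the noise margin there is $\phi(n_k)/p_k$, so depth $k$'s excess alone equals $\phi(n_k)$. With your fixed margin $\gamma$ and weights $w_i\approx 1/i^2$, a single hard coordinate $i\approx\sqrt n$ contributes only $\Theta(\gamma/n)$, forcing a sum over $\Theta(\sqrt n)$ coordinates, and there is then no bounded per-$n$ quantity to feed into Fatou; the Markov-plus-Fubini substitute does not close the gap, and it cannot be patched without changing the construction to concentrate the excess at a single depth (with depth-varying noise margin), as the paper does.
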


\stilltodo{TODO: sketch of the proof for the main 25 pages.}

\begin{proof}
  Fix any function $\phi(n) = o(n^{-1/2})$, and without loss of generality, suppose $\phi(n) \geq \frac{1}{n}$.
  For any learning algorithm $f_n$ we will construct a distribution $\PXY$
  such that for $S_n \sim \PXY^n$ and $\hat{h}_n = f_n(S_n)$,  
  \begin{equation*}
    \E[ \er_{\PXY}(\hat{h}_n) ] - \inf_{h \in \H} \er_{\PXY}(h) \geq C \phi(n)
  \end{equation*}
  for infinitely many $n \in \nats$, for a constant $C = \frac{1}{30}$.
  The theorem follows immediately, by defining $\phi(n) = \max\!\left\{ 30 R(n), \frac{1}{n} \right\}$ for any given $R(n) = o(n^{-1/2})$.

  The construction is similar to a proof of \citet*{bousquet:21} for 
  the realizable case (establishing an $n^{-1}$ rate lower bound there),
  in that we construct a distribution supported along a single random branch of the Littlestone tree.
  The key difference here is that we add classification noise to the labels $Y$ 
  and adjust the marginal probability distribution on $\X$ to decrease more rapidly along the branch.
  The idea is that, unlike the realizable case, since there is classification 
  noise in this scenario, 
  the Bernoulli mean testing lower bound from Lemma~\ref{lem:coin-testing-lower-bound} implies 
  the data set must contain at least some number 
  of copies (growing as the mean approaches $1/2$) of a given point in the support of $\Px$ before the learner can identify 
  the Bayes optimal classification of that point.

  We will describe a family of distributions with marginals on $\X$ supported on the branches of a Littlestone tree.
  Specifically, consider an infinite Littlestone tree shattered by $\H$, 
  as in Definition~\ref{defn:littlestone-tree}:  
  namely, let  
  \begin{equation*} 
    \{x_{\mathbf{u}}: \mathbf{u}\in\{0,1\}^k, 0\leq k<\infty \} \subseteq \X
  \end{equation*}
  be such that for every $1 \leq d < \infty$ and 
  $\mathbf{y} = (y_1,\ldots,y_d) \in\{0,1\}^d$, 
  there exists
  $h_{\mathbf{y}} \in \H$ such that 
  $h_{\mathbf{y}}(x_{y_{< k}})=y_{k}$ for  
  $1\leq k\leq d$.

  We first define a sequence $\{ p_k \}_{k \in \nats}$ in $(0,1)$ 
  (where $p_k$ will be the marginal probability $\Px$ at depth $k$)
  and an increasing sequence $\{n_k\}_{k \in \nats}$ in $\nats \cup \{0\}$ (where $n_k$ will correspond to a sample size $n$ for which a lower bound may be obtained on average).
  These are defined inductively, as follows.
  Define $n_1 = 0$.
  Define $p_2 = \frac{1}{2}$. 
  Inductively, for each $k \in \nats \setminus \{1\}$, 
  supposing $p_{k}$ and $n_{k-1}$ are already defined,
  let $n_k > n_{k-1}$ be any sufficiently large value in $\nats$ satisfying
  \begin{equation}
  \label{eqn:superroot-nk-defn}
    \sqrt{n_k} \phi(n_k) < \sqrt{\frac{p_k}{16}}.
  \end{equation}
  Such a value must exist since $\phi(n) = o(n^{-1/2})$.
  Additionally, define $p_{k+1}$ as any value in $(0,1)$ satisfying
  \begin{equation}
  \label{eqn:superroot-pk-defn}
    p_{k+1} < \frac{1}{4 n_k}.
  \end{equation}
  This completes the inductive definition of $\{ p_k \}_{k \in \nats \setminus \{1\}}$ and $\{ n_k \}_{k \in \nats}$.
  Recalling that $\phi(n_k) \geq \frac{1}{n_k}$, 
  the above constraint \eqref{eqn:superroot-nk-defn} implies 
  $\frac{1}{n_k} < \frac{p_k}{16}$
  which, together with \eqref{eqn:superroot-pk-defn}, implies 
  $p_{k+1} < \frac{p_k}{64}$,
  so that every $k \geq 2$ satisfies $p_k \leq 2^{-6(k-2)-1}$,
  and hence $\sum_{k \geq 2} p_k < 1$.
  Thus, we may define $p_1 = 1 - \sum_{k \geq 2} p_k > 0$,
  so that altogether the sequence $\{ p_k \}_{k \in \nats}$ 
  satisfies $\sum_{k \in \nats} p_k = 1$: 
  that is, it defines a probability distribution over $\nats$.

  Note that, since $p_{k+1} < \frac{p_k}{64}$ for every $k \geq 2$,
  we have, for every $k \in \nats$, 
   \begin{equation}
   \label{eqn:superroot-pk-sum}
    \sum_{k' > k} p_{k'} 
    < \sum_{k' > k}  2^{-6 (k' - k-1)} p_{k+1} 
    = \frac{64}{63} p_{k+1}
    < \frac{16}{63 n_k},
  \end{equation}
  where the last inequality is due to \eqref{eqn:superroot-pk-defn}.
  Moreover, since $\phi(n_k) \geq \frac{1}{n_k}$ for any $k \geq 2$,
  the definition of $n_k$ in \eqref{eqn:superroot-nk-defn} implies
  \begin{equation}
    \label{eqn:superroot-phi-nk-bounded-by-pk}
    \frac{p_k}{16} > n_k \phi(n_k)^2 \geq \phi(n_k).
  \end{equation}

  For each $\mathbf{y} = \{y_k\}_{k \in \nats}$ 
  sequence of values in $\{0,1\}$
  (corresponding to a branch in the Littlestone tree), 
  we define a distribution $P_{\mathbf{y}}$ with 
  marginal distribution on $\X$ supported along this path.
  Specifically, define $P_{\mathbf{y}}$ by the property that,
  for $(X,Y) \sim P_{\mathbf{y}}$, 
  for every $k \in \nats$, 
  the point $x_{y_{< k}}$ (i.e., the point at depth $k$ along the branch indicated by $\mathbf{y}$)
  has $\P(X = x_{y_{< k}}) = p_k$.
  The conditional distribution of $Y$ given $X = x_{y_{< k}}$ is defined as
  \begin{equation}
  \label{eqn:superroot-conditional-definition}
    \P(Y = y_{k} | X = x_{y_{< k}} ) = \frac{1}{2} + \frac{1}{2} \frac{\phi(n_k)}{p_k},
  \end{equation}
  which is in the range $(\frac{1}{2},\frac{17}{32})$ due to \eqref{eqn:superroot-phi-nk-bounded-by-pk}.
  Since $\sum_{k \in \nats} p_k = 1$, 
  this completely specifies a probability measure $P_{\mathbf{y}}$.

  The values above are constructed to serve the following argument.
  Fix any learning algorithm $\{ f_n \}_{n \in \nats}$.
  Let $\bybranch = \{\ybranch_k\}_{k \in \nats}$ be i.i.d.\ $\mathrm{Bernoulli}(\frac{1}{2})$.
  Fix any $k \in \nats \setminus \{1\}$.
  Conditioned on $\bybranch$, 
  let $S_{n_k} = \{(X_1,Y_1),\ldots,(X_{n_k},Y_{n_k})\} \sim P_{\bybranch}^{n_k}$ 
  and let $\hat{h}_{n_k} = f_{n_k}(S_{n_k})$.
  Since the marginal distribution of $P_{\bybranch}$ on $\X$ 
  is supported entirely on the branch $\{ x_{\ybranch_{< k'}} : k' \in \nats \}$, 
  we may denote by $K_i$ the $X_i$-dependent $\nats$-valued random variable 
  with value satisfying $X_i = x_{\ybranch_{< K_i}}$ (almost surely): 
  that is, $K_i$ is the \emph{depth} of the sample $X_i$ along the branch $\bybranch$ 
  in the Littlestone tree.
  
  Let $N_k = |\{ i \leq n : K_i = k \}|$, 
  and let $i_1, \ldots, i_{N_k}$ be the subsequence of values $i \in \{1,\ldots,n_k\}$ 
  for which $K_i = k$.
  We aim to apply Lemma~\ref{lem:coin-testing-lower-bound}, 
  with $t^* = \ybranch_k$, $n = N_k$, 
  $\gamma = \frac{\phi(n_k)}{2 p_k}$,
  $\delta = \frac{1}{8e}$,
  $(B_1,\ldots,B_n) = (Y_{i_1},\ldots,Y_{N_k})$, 
  and $\hat{t}_n(B_1,\ldots,B_n) = \hat{h}_{n_k}(x_{\ybranch_{< k}})$.
  For this to be a valid use of the lemma, 
  we must apply the lemma under a \emph{conditional} distribution.
  Specifically, in the context of the lemma, 
  we intend to condition on the data $(X_i,Y_i)$ having $K_i \neq k$;
  we must be careful in this, since any elements $(X_i,Y_i)$ with $K_i > k$ 
  would cause a dependence with $\ybranch_k$ (in particular, 
  the value of such an $X_i$ would immediately reveal the value of $\ybranch_k$).
  To avoid this, we will restrict to the event 
  $\max\{ K_i : i \leq n_k \} \leq k$, 
  in which case the data subsequence 
  $S_{\setminus k} := \{ (X_i,Y_i) : i \in \{1,\ldots,n_k\} \setminus \{i_1,\ldots,i_{N_k}\} \}$
  is conditionally independent of $\ybranch_k$ given this event.
  In particular, on this event, 
  $\ybranch_k$ remains conditionally $\mathrm{Bernoulli}(\frac{1}{2})$ given
  $N_k$, $i_1,\ldots,i_{N_k}$, and $S_{\setminus k}$.
  Also note that the random variables  
  $Y_{i_1},\ldots,Y_{i_{N_k}}$ are conditionally 
  i.i.d.\ $\mathrm{Bernoulli}\!\left( \frac{1}{2} + (2 \ybranch_k - 1) \frac{1}{2} \frac{\phi(n_k)}{p_k} \right)$
  given $\ybranch_k$, $N_k$, $i_1,\ldots,i_{N_k}$, and $S_{\setminus k}$. 
  Thus, by Lemma~\ref{lem:coin-testing-lower-bound},
  on the events that 
  $\max\{ K_i : i \leq n_k \} \leq k$ and $N_k < \frac{p_k^2}{2 \phi(n_k)^2}$, 
  we have  
  \begin{equation}
  \label{eqn:superroot-lower-bound-testing-prob}
    \P\!\left( \hat{h}_{n_k}(x_{\ybranch_{< k}}) \neq \ybranch_k \middle| N_k, i_1,\ldots,i_{N_k}, S_{\setminus k} \right) > \frac{1}{8e}.
  \end{equation}

  We now turn to bounding the probabilities these events fail to hold.
  First, we have  
  \begin{align*}
  \P\!\left( \max\!\left\{ K_i : i \leq n_k \right\} > k \right)
  & = \P\!\left( \exists i \leq n_k : K_i > k \right)
  \leq \sum_{i \leq n_k} \P\!\left( K_i > k \right) 
  = n_k \sum_{k' > k} p_{k'}
  < \frac{16}{63},
  \end{align*}
  where the first inequality is by the union bound, 
  and the last inequality is due to \eqref{eqn:superroot-pk-sum}.
  Second, we lower bound the probability of $N_k < \frac{p_k^2}{2 \phi(n_k)^2}$.
  Note that $N_k$ is a $\mathrm{Binomial}(n_k,p_k)$
  random variable.
  By \eqref{eqn:superroot-nk-defn}, 
  we have $\E[ N_k ] = p_k n_k < \frac{p_k^2}{16 \phi(n_k)^2}$.
  Therefore, a Chernoff bound implies 
  \begin{equation*}
  \P\!\left( N_k \geq \frac{p_k^2}{2 \phi(n_k)^2} \right)
  \leq \exp\!\left\{ - \frac{49}{9} \frac{p_k^2}{16 \phi(n_k)^2} \right\}
  < \exp\!\left\{ - \frac{49 \cdot 16}{9} \right\}
  < \frac{1}{63},
  \end{equation*}
  where the second inequality is due to \eqref{eqn:superroot-phi-nk-bounded-by-pk} which implies $\frac{p_k}{\phi(n_k)} > 16$.

  Let us now combine the above elements into a single unconditional lower bound: namely, 
  \begin{align} 
  & \P\!\left( \hat{h}_{n_k}(x_{\ybranch_{< k}}) \neq \ybranch_k \right)
  \notag \\ & \geq \E\!\left[ \P\!\left( \hat{h}_{n_k}(x_{\ybranch_{< k}}) \neq \ybranch_k \middle| N_k, i_1,\ldots,i_{N_k}, S_{\setminus k} \right) \ind\!\left[ \max\{ K_i : i \leq n_k \} \leq k \text{ and } N_k < \frac{p_k^2}{2 \phi(n_k)^2} \right] \right]
  \notag \\ & \geq \frac{1}{8e} \P\!\left( \max\{ K_i : i \leq n_k \} \leq k \text{ and } N_k < \frac{p_k^2}{2 \phi(n_k)^2} \right)
  \notag \\ & \geq \frac{1}{8e} \left( 1 - \P\!\left( \max\{ K_i : i \leq n_k \} > k \right) - \P\!\left( N_k \geq \frac{p_k^2}{2 \phi(n_k)^2} \right) \right)
  \notag \\ & > \frac{1}{8e} \left( 1 - \frac{16}{63} - \frac{1}{63} \right)
  > \frac{1}{30}, \label{eqn:superroot-lower-bound-level-k-nk-constant-error-prob}
  \end{align}
  where the second inequality is due to \eqref{eqn:superroot-lower-bound-testing-prob},
  the third inequality is by the union bound, 
  and the fourth inequality is by the analysis of the 
  probabilities of these two events above.

  Our next step will be to remove the randomness 
  in the choice of branch $\bybranch$, via Fatou's lemma.
  Conditioned on $\bybranch$, 
  let $(X,Y) \sim P_{\bybranch}$ 
  be conditionally independent of $S_{n_k}$ given $\bybranch$.
  Denote by $K$ the $X$-dependent $\nats$-valued random variable 
  satisfying $X = x_{\ybranch_{< K}}$.
  Noting that $K$ is independent of $\bybranch$ 
  (since the marginal probability mass under $P_{\mathbf{y}}$ of the set of all nodes at any given depth $k$ is $p_k$, which is invariant to $\bybranch$), 
  we have (almost surely) 
  \begin{equation*}
  \frac{1}{p_k} \P\!\left( \hat{h}_{n_k}(x_{\ybranch_{< k}}) \neq \ybranch_k \text{ and } K = k \middle| \bybranch \right)
  \leq \frac{1}{p_k} \P( K = k ) 
  = 1.
  \end{equation*}
  In particular, this value is \emph{bounded} (almost surely), which will enable us to make use of a
  variant of Fatou's lemma for the $\limsup$ of bounded sequences of random variables.
  Specifically, by applying the above analysis to each $k \in \nats \setminus \{1\}$, 
  together with Fatou's lemma and the law of total probability, we have  
  \begin{align*} 
  & \E\!\left[ \limsup_{k \to \infty} \frac{1}{p_k} \P\!\left( \hat{h}_{n_k}(x_{\ybranch_{< k}}) \neq \ybranch_k \text{ and } K = k \middle| \bybranch \right) \right]
  \\ & \geq \limsup_{k \to \infty} \frac{1}{p_k} \P\!\left( \hat{h}_{n_k}(x_{\ybranch_{< k}}) \neq \ybranch_k \text{ and } K = k \right)
  \\ & = \limsup_{k \to \infty} \frac{1}{p_k} \P\!\left( \hat{h}_{n_k}(x_{\ybranch_{< k}}) \neq \ybranch_k \right) \P\!\left( K = k \right)
  \\ & = \limsup_{k \to \infty} \P\!\left( \hat{h}_{n_k}(x_{\ybranch_{< k}}) \neq \ybranch_k \right) \geq \frac{1}{30},
  \end{align*}
  where the equality on the third line follows from independence of $K$ 
  from $\hat{h}_{n_k}$ and $\bybranch$ (as discussed above), 
  the second inequality is due to the definition of $P_{\bybranch}$ (i.e., $\P( K = k ) = p_k$),
  and the final inequality is due to 
  \eqref{eqn:superroot-lower-bound-level-k-nk-constant-error-prob}.
  Therefore, there exists a realization of $\bybranch$ 
  such that 
  $\P\!\left( \hat{h}_{n_k}(x_{\ybranch_{< k}}) \neq \ybranch_k \text{ and } K = k \middle| \bybranch \right) \geq \frac{p_k}{30}$ 
  infinitely often.
  Let $\mathbf{y} = \bybranch$ for this realization:
  that is, $\mathbf{y}$ is a \emph{non-random} choice of 
  a sequence $\mathbf{y} = \{y_k\}_{k \in \nats}$ 
  of values in $\{0,1\}$ such that 
  for $\PXY = P_{\mathbf{y}}$, 
  $S_n \sim \PXY^n$ and $\hat{h}_n = f_n(S_n)$ 
  for each $n \in \nats$,
  for $(X,Y) \sim \PXY$ independent of every $S_n$,
  it holds that   
  \begin{equation*} 
  \P\!\left( \hat{h}_{n_k}(x_{y_{< k}}) \neq y_k \text{ and } X = x_{y_{< k}} \right) \geq \frac{p_k}{30}
  \end{equation*}
  for infinitely many $k$.

  Also note that, since the Littlestone tree above is 
  shattered by $\H$, every $k \in \nats$ has some 
  $h_{y_{\leq k}} \in \H$ with 
  $\forall k' \in \{1,\ldots,k\}$, $h_{y_{\leq k}}(x_{y_{< k'}}) = y_{k'}$.
  Since the marginal distribution of $\PXY$ is supported on the branch $\{ x_{y_{< k}} : k \in \nats \}$, 
  letting $\target_{\mathbf{y}}$ denote a measurable function 
  $\X \to \{0,1\}$ with $\target_{\mathbf{y}}(x_{y_{< k}}) = y_k$ for every $k \in \nats$, 
  we have that
   \begin{align*}
  & \inf_{h \in \H} \P\!\left( h(X) \neq \target_{\mathbf{y}}(X) \right) 
  \leq \lim_{k \to \infty} \P\!\left( h_{y_{\leq k}}(X) \neq \target_{\mathbf{y}}(X) \right) 
  = 0.
  \end{align*}
  Note that \eqref{eqn:superroot-conditional-definition}
  implies $\target_{\mathbf{y}}$ is a \emph{Bayes optimal} 
  function: 
  that is, for every measurable $h : \X \to \{0,1\}$, 
  $\P\!\left( \target_{\mathbf{y}}(X) \neq Y \middle| X \right) \leq \P\!\left( h(X) \neq Y \middle| X \right)$ almost surely,
  and therefore also $\er_{\PXY}\!\left(\target_{\mathbf{y}}\right) \leq \er_{\PXY}(h)$.
  It follows that
  \begin{equation*}
  \inf_{h \in \H} \er_{\PXY}(h) = \er_{\PXY}\!\left( \target_{\mathbf{y}} \right).
  \end{equation*}
  Moreover, for any measurable $h : \X \to \{0,1\}$ 
  and any $k \in \nats$, 
  letting $K$ be as above (that is, $K$ is defined to satisfy $X = x_{y_{< K}}$ almost surely)
  \begin{align*}
    & \er_{\PXY}(h) - \er_{\PXY}\!\left(\target_{\mathbf{y}}\right)
    \\ & = \E\!\left[ \P\!\left( h(X) \neq Y \middle| X \right) - \P\!\left( \target_{\mathbf{y}}(X) \neq Y \middle| X \right) \right]
    \\ & \geq \E\!\left[ \left( \P\!\left( h(X) \neq Y \middle| X \right) - \P\!\left( \target_{\mathbf{y}}(X) \neq Y \middle| X \right) \right) \ind\!\left[ K = k \right] \right]
    \\ & = \E\!\left[ \left( \left( \frac{1}{2} + \frac{1}{2}\frac{\phi(n_k)}{p_k} \right) - \left( \frac{1}{2} - \frac{1}{2}\frac{\phi(n_k)}{p_k} \right) \right) \ind\!\left[ h(X) \neq \target_{\mathbf{y}}(X) \right] \ind\!\left[ K = k \right] \right]
    \\ & = \frac{\phi(n_k)}{p_k} \P\!\left( h(x_{y_{< k}}) \neq y_k \text{ and } K = k \right),
  \end{align*}
  where the equality on the fourth line follows from \eqref{eqn:superroot-conditional-definition}
  and the last equality follows from the definitions of $K$ and $\target_{\mathbf{y}}$
  (i.e., $K=k$ implies $X = x_{y_{< k}}$ and $\target_{\mathbf{y}}(X) = \target_{\mathbf{y}}(x_{y_{< k}}) = y_k$).
  Altogether, we have that for infinitely many $k \in \nats$, 
  \begin{align*}
  \E\!\left[ \er_{\PXY}\!\left(\hat{h}_{n_k}\right) - \inf_{h \in \H} \er_{\PXY}(h) \right]
  & \geq \E\!\left[ \frac{\phi(n_k)}{p_k} \P\!\left( \hat{h}_{n_k}(x_{y_{< k}}) \neq y_k \text{ and } X = x_{y_{< k}} \middle| \hat{h}_{n_k} \right) \right] 
  \\ & \geq \frac{1}{30} \phi(n_k).
  \end{align*}
  Since $n_k \to \infty$ as $k \to \infty$, 
  we have established that 
  \begin{equation*}
    \E\!\left[ \er_{\PXY}\!\left(\hat{h}_{n}\right) \right] - \inf_{h \in \H} \er_{\PXY}(h)
    \geq \frac{1}{30} \phi(n)
  \end{equation*}
  for infinitely many $n$.
  Since this conclusion holds for any learning algorithm $f_n$, 
  this completes the proof.
  \end{proof}

\section{Super-Root Rates}
\label{sec:super-square-roof}

As the final component needed to establish Theorem~\ref{thm:agnostic-char}, 
we present the proof that classes which do not shatter an infinite VCL tree are learnable at super-root rates.
This represents by-far the most technically-challenging 
proof in this work.
In particular, a key piece of the proof 
requires a new analysis of universal rates for 
\emph{partial} concept classes of finite VC dimension, 
involving careful reasoning about relations of 
conditional error rates given data prefixes $X_{\leq m}$
of varying sizes.

Formally, this section is dedicated to establishing the following theorem.

\begin{theorem}
  \label{thm:agnostic-super-sqrt-upper-bound}
  If $\H$ does not shatter an infinite VCL tree, then $\H$ is agnostically learnable with $o\!\left(n^{-1/2}\right)$ rates: 
  that is, 
  there exists a learning algorithm $\hat{h}_n$ such that, 
  for every distribution $\PXY$, 
  $\E\!\left[ \er_{\PXY}(\hat{h}_n) \right] - \inf_{h \in \H} \er_{\PXY}(h) = o\!\left( n^{-1/2} \right)$.
\end{theorem}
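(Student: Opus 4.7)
The plan is to follow the general template of the realizable-case proof of \citet*{bousquet:21} achieving rate $1/n$, with substantial modifications for the agnostic setting. Following the scheme of Theorem~\ref{thm:agnostic-near-exponential-upper-bound}, I would first reduce to the Bayes-realizable case: build two subroutines $\hat{h}_n^0$ and $\hat{h}_n^1$ from independent chunks of the sample, take $\hat{h}_n^0$ to be a universally Bayes-consistent algorithm (Lemma~\ref{lem:universally-bayes-consistent}), and design $\hat{h}_n^1$ so that it attains $o(n^{-1/2})$ excess risk whenever $\PXY$ is Bayes-realizable with respect to $\H$. A third held-out chunk is used to compare their empirical error rates up to a slack that is itself $o(n^{-1/2})$ (for example $\sqrt{\psi(n)/n}$ with any $\psi(n)=o(1)$ growing to infinity). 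When $\PXY$ is not Bayes-realizable, $\hat{h}_n^0$ eventually dominates; when $\PXY$ is Bayes-realizable, Lemmas~\ref{lem:no-infinite-VCL-implies-UGC}, \ref{lem:UGC-totally-bounded}, and \ref{lem:limit-function} produce a Bayes-optimal measurable function $\target$ whose pushforward distribution $P_{\target}$ is \emph{realizable} with respect to $\H$, enabling us to work in a simulated-realizable regime.

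To define $\hat{h}_n^1$, I partition its training chunk into batches $B^b_i$ of size $b$, and for each hypothetical relabeling $\mathbf{y}\in\{0,1\}^b$ form $B^b_i(\mathbf{y})$ and invoke Lemma~\ref{lem:VCL-SOA-zero-error-rate-bstar} to obtain an arity $\mathbf{k}=k^b_{\VCL}(B^b_i(\mathbf{y}))$ and a forbidden-pattern function $\algVCLSOA^{b,\mathbf{k}}(B^b_i(\mathbf{y}),\cdot)$ on $\mathbf{k}$-tuples of test points. For the ``correct'' relabeling $\mathbf{y}^*_{b,i}=(\target(X_{1+(i-1)b}),\ldots,\target(X_{ib}))$ the batch is $P_{\target}$-realizable, so for every $b$ above a $\PXY$-dependent threshold $b^*$ the lemma guarantees, with constant probability over the draw of the batch, that under $P_{\target}^{\mathbf{k}}$ the forbidden pattern has measure zero. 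Each such forbidden-pattern function induces a \emph{partial concept class} of $\mathbf{k}$-ary predictors --- namely the $\mathbf{y}$-indexed patterns that \emph{disagree} with the forbidden one --- and the property ``there exists a partial concept in the class of zero error'' holds with positive probability for $\mathbf{y}=\mathbf{y}^*_{b,i}$. A Chernoff argument over batches, combined with the hypothetical-relabeling trick, promotes this constant-probability event to a guarantee that a constant fraction of the batches produce a partial-concept class containing a perfect member relative to the $\target$-induced distribution.

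The main obstacle is the step that extracts an $o(n^{-1/2})$-rate predictor from each such batch-induced partial concept class. This is where a dedicated analysis of universal agnostic rates for partial concept classes of finite VC dimension is required: the classical uniform rate $\Theta(n^{-1/2})$ is too slow, so I would need to show that when a finite-VC partial concept class admits at least one member that is zero-error under the ``good'' distribution $P_{\target}$, an appropriate selection rule over the $\mathbf{y}$-indexed partial concepts --- combined with additional held-out data for tuning --- yields an \emph{asymptotic} rate strictly faster than $n^{-1/2}$, universally in the underlying $\PXY$. This is the contribution flagged in Section~\ref{sec:subsection-partial-concepts-super-root-upper-bound}, and its proof should rely on a careful decomposition of the excess risk across conditional probabilities given prefixes $X_{\leq m}$ of varying lengths, exploiting the fact that the forbidden-pattern constraint acts simultaneously on $\mathbf{k}$-tuples and that the optimal member has vanishing conditional error on these tuples. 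I expect this step to be the technical heart of the proof, because the partial-concept setting lacks the clean pointwise ERM machinery of total concept classes and one must instead localize on an event (zero tuple-error) whose empirical verification is itself delicate.

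Granting the partial-concept rate, the remainder of the construction proceeds in direct analogy with the proof of Theorem~\ref{thm:agnostic-near-exponential-upper-bound}. A held-out chunk is used to select the batch size $\hat{b}_n$ via a ``good index'' criterion controlled by a uniform Hoeffding bound with slack of order $\sqrt{\psi(n)/n}$; this ensures $\hat{b}_n\le b^*$ with high probability and rules out small $b$'s whose batches too often fail to contain a near-optimal $\mathbf{y}$, by a standard gap-and-concentration argument. The final predictor $\hat{h}_n^1$ is a majority vote over the per-batch partial-concept-based predictors at level $\hat{b}_n$, which, by a Chernoff argument over a constant (at most $n/b^*$) number of independent batches and the per-batch $o(n^{-1/2})$ guarantee, inherits the desired $o(n^{-1/2})$ excess risk. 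The outer model-selection step between $\hat{h}_n^0$ and $\hat{h}_n^1$ then gives the theorem.
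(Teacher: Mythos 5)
Your high-level architecture matches the paper's: split the data, run a universally Bayes-consistent $\hat{h}_n^0$ on one chunk, build $\hat{h}_n^1$ for the Bayes-realizable case via batches with hypothetical relabelings and the pattern-avoidance machinery from Lemma~\ref{lem:VCL-SOA-zero-error-rate-bstar}, select between them with a held-out chunk, and flag the partial-concept rate lemma as the technical heart. However, two of the steps as you describe them would not go through.

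\textbf{The majority-vote aggregation is not a Chernoff argument.} You claim the final majority vote inherits $o(n^{-1/2})$ ``by a Chernoff argument over \ldots independent batches and the per-batch $o(n^{-1/2})$ guarantee.'' Chernoff-style amplification is what drives the $e^{-o(n)}$ proof, because there each good batch yields an \emph{exactly} Bayes-optimal predictor with constant probability, so one can show that at every $x$ a $\frac{9}{10}$-majority of votes is $\target(x)$. Here, a good batch yields a predictor whose \emph{excess error} is $o(n^{-1/2})$, but its raw disagreement probability with $\target$ can be $\Omega(1)$, for instance on regions where $\P(Y=1|X)$ is near $\frac{1}{2}$. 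So there is no pointwise high-probability event to amplify. The argument the paper actually needs is a Markov-style averaging bound built on the identity $\er_{\PXY}(h)-\er_{\PXY}(\target)=\E\bigl[|1-2\P(Y{=}1|X)|\,\ind[h(X)\neq\target(X)]\bigr]$: if the majority votes against $\target(x)$ then at least a $\frac{1}{6}$-fraction of the \emph{good}-batch predictors disagree at $x$, so the majority's excess error is at most $6$ times the average excess error over good batches. A further subtlety your sketch omits is that the per-batch rates $\phi(n/4;\G_{b,i},\PXY)$ are themselves random (batch-dependent), so passing $o(n^{-1/2})$ through the expectation over batches requires the uniform bound $\phi(n/4;\G_{b,i},\PXY)\leq c\sqrt{\vc(\G_{b,i})/n}$ to invoke Fatou's lemma.

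\textbf{The partial-concept lemma and the handling of $\mathbf{y}$.} You describe ``an appropriate selection rule over the $\mathbf{y}$-indexed partial concepts, combined with additional held-out data for tuning.'' Selecting a $\mathbf{y}$ from $2^b$ candidates using a held-out set of size $\Theta(n)$ incurs a selection error of order $\sqrt{b/n}=\Theta(n^{-1/2})$, which already consumes the entire budget even for constant $b$. The paper does not select a $\mathbf{y}$: it forms the \emph{union} $\G_{b,i}=\bigcup_{\mathbf{y}}\G^{\mathbf{y}}_{b,i}$, shows $\vc(\G_{b,i})=O(b)$ via Sauer's lemma, observes that Bayes-realizability w.r.t.\ $\G^{\mathbf{y}^*}_{b,i}$ implies Bayes-realizability w.r.t.\ $\G_{b,i}$, and then applies a standalone learner for finite-VC partial concept classes achieving $o(n^{-1/2})$ under Bayes-realizability (Lemma~\ref{lem:partial-VC-super-root}). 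That learner is transductive ERM, and its analysis has to be carried out purely in terms of the data: one works with semi-empirical errors $\ber_t(g)=\E[\hat{\er}_t(g)\mid X_{\leq t}]$ rather than population errors (because a partial concept in $\G_{b,i}$ may be supported exactly on the sample), proves a uniform Bernstein inequality for conditionally independent non-identically distributed samples, uses sample-size chaining, and establishes a combinatorial stabilization argument showing $\E[\bar{\sigma}^2_n]\to 0$. Your sketch gestures at ``decomposition of the excess risk across conditional probabilities given prefixes'' but does not give a plausible path to this machinery, and the $\mathbf{y}$-selection framing points in a different and problematic direction.

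\textbf{A minor point.} You propose an outer-selection slack $\sqrt{\psi(n)/n}$ with $\psi(n)=o(1)$ ``growing to infinity,'' which is self-contradictory, and you claim the slack should be $o(n^{-1/2})$. In fact the slack must be at least $\Theta(\sqrt{\log n/n})$, hence $\omega(n^{-1/2})$, so that the Hoeffding failure probability is $o(n^{-1/2})$; this is harmless because the slack enters the rate only through the probability of the comparison event, not additively.
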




The formal proof of Theorem~\ref{thm:agnostic-super-sqrt-upper-bound}
is developed in detail below.
Before getting into the details, 
we first briefly outline the strategy underlying the algorithm 
$\hat{h}_n$ achieving this $o\!\left(n^{-1/2}\right)$ rate.
Similarly to the strategy in Theorem~\ref{thm:agnostic-near-exponential-upper-bound}, 
we divide the approach into two subroutines:
one designed to address the case where $\PXY$ is not 
Bayes-realizable with respect to $\H$, which simply 
trains a universally Bayes-consistent learning algorithm $\hat{h}^0_n$ (Lemma~\ref{lem:universally-bayes-consistent}), 
and a second algorithm $\hat{h}^1_n$ 
designed to address the case 
where $\PXY$ is Bayes-realizable with respect to $\H$.
We then select between $\hat{h}^0_n$ and $\hat{h}^1_n$
using held-out data to obtain the final predictor $\hat{h}_n$.
Most of the technical contributions of the approach 
concern the $\hat{h}^1_n$ algorithm.

The strategy underlying this $\hat{h}^1_n$ algorithm 
is inspired by the realizable-case analysis of \citet*{bousquet:21}.
It will be instructive to first summarize that 
original strategy, before discussing the significant modifications required to extend to the agnostic setting.

\paragraph{The Realizable Case:}
In the case of a distribution $\PXY$ realizable with respect to $\H$, 
the key insight of \citet*{bousquet:21} is that, 
if $\H$ does not shatter an infinite VCL tree, 
then (by making use of a winning strategy for an associated Gale-Stewart game)
Lemma~\ref{lem:VCL-SOA-zero-error-rate-bstar} supplies a procedure 
for which, given a large enough i.i.d.\ data set $B^b \sim \PXY^b$ ($b \geq b^*_{\gamma}$), 
with some probability $\gamma > 0$ 
the procedure produces 
a value $\mathbf{k} \in \nats$ 
and a function $\alg^{b,\mathbf{k}}_{\VCL}(B^b) : \X^{\mathbf{k}} \to \{0,1\}^{\mathbf{k}}$
for which
\begin{equation} 
\label{eqn:realizable-pattern-avoidance}
\PXY^{\mathbf{k}}\!\left( (x_1,y_1),\ldots,(x_{\mathbf{k}},y_{\mathbf{k}}) : \alg^{b,\mathbf{k}}_{\VCL}(B^b,x_1,\ldots,x_{\mathbf{k}}) \neq (y_{1},\ldots,y_{\mathbf{k}}) \right) = 1.
\end{equation}
The next observation is that one can use this function 
$\alg^{b,\mathbf{k}}_{\VCL}(B^b)$ to induce a 
\emph{partial concept class} $\G$ of finite VC dimension
(i.e., a class of measurable functions $g : \X \to \{0,1,*\}$), 
for which $\PXY$ is realizable.
Specifically, consider the set $\G$ of \emph{all} partial concepts 
$g : \X \to \{0,1,*\}$ with finite \emph{support}
$\mathrm{supp}(g) := \{ x \in \X : g(x) \neq * \}$
such that every $\{x_1,\ldots,x_{\mathbf{k}}\} \subseteq \mathrm{supp}(g)$
satisfies $\alg^{b,\mathbf{k}}_{\VCL}(B^b,x_1,\ldots,x_{\mathbf{k}}) \neq (g(x_{1}),\ldots,g(x_{\mathbf{k}}))$.
Then note that, for any $m \in \nats$,
for $\{(X_1,Y_1),\ldots,(X_m,Y_m)\} \sim \PXY^m$, 
it follows from \eqref{eqn:realizable-pattern-avoidance},
and a union bound over the $\binom{m}{\mathbf{k}}$ subsets of size $\mathbf{k}$
that, with probability one, $\exists g \in \G$ 
with $(g(X_1),\ldots,g(X_m)) = (Y_1,\ldots,Y_m)$.
We refer to this scenario 
by saying $\PXY$ is \emph{realizable} with respect to $\G$.
Moreover, by its definition, 
the VC dimension of $\G$ is at most $\mathbf{k}-1$
(i.e., for any $x_1,\ldots,x_{\mathbf{k}} \in \X$,
no $g \in \G$ realizes the labels
$\alg^{b,\mathbf{k}}_{\VCL}(B^b,x_1,\ldots,x_{\mathbf{k}})$),
which we may coarsely bound by $b$ since $\mathbf{k} \leq b$
(by definition in Lemma~\ref{lem:VCL-SOA-zero-error-rate-bstar}).
\citet*{bousquet:21} then use a held-out training set of size $\Omega(n)$,
applying a general realizable-case learning algorithm 
for partial concept classes $\G$ of finite VC dimension (namely, the \emph{one-inclusion graph} predictor of \citealp*{haussler:94}), 
to obtain a conditional expected error guarantee $O\!\left(\frac{b}{n}\right)$
given $B^b$.

Some technicalities need be addressed for this strategy to succeed: 
namely, choosing a size $b$ sufficiently large 
($b \geq b^{*}_{\gamma}$) for Lemma~\ref{lem:VCL-SOA-zero-error-rate-bstar} to be applicable (recalling $b^*_{\gamma}$ is a $\PXY$-dependent value), 
and addressing the fact that the property \eqref{eqn:realizable-pattern-avoidance} (and hence realizability of $\PXY$ with respect to $\G$)
only holds with a constant probability $\gamma > 0$.
These are addressed via similar techniques as in 
the proof of Theorem~\ref{thm:agnostic-near-exponential-upper-bound}, 
by using multiple ($\propto n/b$) 
independent batches $B^b_i$ of size $b$
(with a $\gamma > 1/2$), 
and aggregating the resulting predictors via a majority vote,
and selecting the batch size $b$ using additional held-out data.


\paragraph{The Agnostic Case:}
As mentioned above, the algorithm $\hat{h}^1_n$ will 
only need to address the case where $\PXY$ is 
Bayes-realizable with respect to $\H$: 
that is, $\inf_{h \in \H} \er_{\PXY}(h) = \inf_{h} \er_{\PXY}(h)$.
In particular, due to Lemmas~\ref{lem:no-infinite-VCL-implies-UGC}, \ref{lem:UGC-totally-bounded}, and \ref{lem:limit-function}, 
in this case there exists a measurable function 
$\target : \X \to \{0,1\}$ 
with $\er_{\PXY}(\target) = \inf_{h} \er_{\PXY}(h)$
and $\inf_{h \in \H} \Px( x : h(x) \neq \target(x) ) = 0$.
This further implies that, if we denote by $P_{\target}$ the 
distribution of $(X,\target(X))$ (for $X \sim \Px$), 
$P_{\target}$ is realizable with respect to $\H$.

Several aspects of the above realizable-case strategy are 
made more challenging in the case when $\PXY$ 
is merely \emph{Bayes-realizable} with respect to $\H$
rather than realizable.
First, since $\PXY$ is not necessarily 
realizable with respect to $\H$, 
we cannot simply apply the procedure from Lemma~\ref{lem:VCL-SOA-zero-error-rate-bstar}
to induce an appropriate partial concept class $\G$.
We address this issue 
similarly to the use of Lemma~\ref{lem:SOA-zero-error-rate-bstar} in the proof of Theorem~\ref{thm:agnostic-near-exponential-upper-bound}.
Specifically, 
letting $B^b = \{(X_1,Y_1),\ldots,(X_b,Y_b)\} \sim \PXY^b$, 
we consider applying the procedure from Lemma~\ref{lem:VCL-SOA-zero-error-rate-bstar}
under all $2^b$ \emph{relabelings} of $B^b$:
that is, for each $\mathbf{y} = (y_1,\ldots,y_b) \in \{0,1\}^b$,
we let $B^b(\mathbf{y}) := \{ (X_1,y_1),\ldots,(X_b,y_b) \}$, 
and apply the procedure from Lemma~\ref{lem:VCL-SOA-zero-error-rate-bstar}
to obtain a $\mathbf{k}^\mathbf{y}_b \in \nats$
and function $\alg^{b,\mathbf{k}^{\mathbf{y}}_b}_{\VCL}(B^b(\mathbf{y})) : \X^{\mathbf{k}^{\mathbf{y}}_b} \to \{0,1\}^{\mathbf{k}^{\mathbf{y}}_b}$,
which induces a partial concept class $\G^{\mathbf{y}}_{b}$
of VC dimension $\mathbf{k}^{\mathbf{y}}_b-1$ as above.
We then define a partial concept class 
$\G_b := \bigcup_{\mathbf{y}} \G^{\mathbf{y}}_b$.
In particular, since each $\mathbf{k}^{\mathbf{y}}_b \leq b$
(by definition), 
we can argue that this partial concept class $\G$ will have VC dimension $O(b)$. 
Moreover, since one of these $\mathbf{y}$ labelings 
satisfies $\mathbf{y} = \target(X_{\leq b})$,
and the distribution $P_{\target}$ 
is realizable with respect to $\H$, 
the guarantee from Lemma~\ref{lem:VCL-SOA-zero-error-rate-bstar} provides that, 
with probability $\gamma > 0$, 
$P_{\target}$ is also realizable with respect to $\G_b$.

Given the above partial concept class $\G_b$, 
we arrive at what is perhaps the most challenging part of the proof: 
namely, we require a learning rule for partial concept classes $\G$ of finite VC dimension achieving $o(n^{-1/2})$ 
rate for all distributions Bayes-realizable with respect to $\G$.
Such a learning rule is not known in the literature (indeed, even $n^{-1/2}$ is not known; see \citealp*{alon:21}), 
and thus the design and analysis of such a learner is itself a novel contribution of this work.
Fortunately, the Bayes-realizability condition 
turns out to make this problem approachable by a natural 
learning strategy based on \emph{transductive} empirical risk minimization: that is, for a data set $S$ of size $n$, 
plus a test point $x$ we wish to predict for, 
we consider the set of all classifications of $S$ plus $x$
realizable by $\G$, and among these we choose the classification having minimum empirical error rate on the 
first $n/2$ examples in $S$, and predict the label of $x$ 
in this classification.
The analysis of this method, showing the $o(n^{-1/2})$ rate, 
relies on reasoning about the problem 
in the framework of \emph{transductive} learning, 
arguing that any reasonably-good function in the class 
has a near-Bayes function relatively close to it in empirical $L_1$ distance (specifically, at a distance vanishing as $n \to \infty$), so that a 
uniform Bernstein inequality implies an excess risk which decreases slightly faster than $n^{-1/2}$.
This reasoning focuses on concentration of 
differences of empirical error rates around their \emph{conditional} expectations given $X_{\leq n}$, 
together with concentration of such conditional expectations
(and related quantities) 
as a function of the sample size $n$.

With the $o(n^{-1/2})$ rate thereby established 
for any partial concept class of finite VC dimension, 
we can use this learning rule in combination with the 
partial concept class $\G_b$ induced by the first part above.
As with the realizable case, 
for this strategy to be successful, 
we need to address some additional technicalities:
most notably, the fact that the realizability of 
$P_{\target}$ with respect to $\G_b$ only holds with 
a constant probability $\gamma > 0$, 
and the fact that this guarantee only holds for 
choices of batch size $b \geq b^*_{\gamma}$
(a $P_{\target}$-dependent value).
We address these analogously to the realizable-case strategy, 
though the argument is significantly more nuanced
in the case of non-realizable distributions.
Specifically, rather than a single batch $B^b$, 
we use multiple ($\propto n/b$) 
independent batches $B^b_i$ of size $b$, 
each of which leads to a partial concept class $\G_{b,i}$
and corresponding predictor $\hat{h}_{n,i}$ 
via the above strategy (with a $\gamma > 1/2$),
and we aggregate the resulting predictors $\hat{h}_{n,i}$
by a majority vote
(which, we argue, retains the $o(n^{-1/2})$ rate guarantee,
using the Bayes-realizability of $\PXY$).
To select an appropriate batch size $\hat{b}_n$,
we use a held-out data set $S^1_n$ of size $\Omega(n)$
to compare the minimal empirical error rates 
$\hat{\er}_{S^1_n}(\G_{b,i})$
achievable by partial concepts in $\G_{b,i}$,
and select the smallest $b$ for which 
a significant fraction of indices $i$ 
have $\hat{\er}_{S^1_n}(\G_{b,i})$
not much larger than \emph{every} 
$\hat{\er}_{S^1_n}(\G_{b',i'})$, $b' > b$.
We argue that such a selection $\hat{b}_n$
will indeed (with high probability) 
result in $P_{\target}$ being Bayes-realizable 
with respect to most of the $\G_{b,i}$ classes,
so that the above guarantees on the 
resulting predictors $\hat{h}_{n,i}$ are valid.



We present the formal details in the subsections below,
beginning with the proof of $o(n^{-1/2})$ rates 
for partial concept classes of finite VC dimension
in the case of Bayes-realizable $\PXY$
(Section~\ref{sec:subsection-partial-concepts-super-root-upper-bound}).
We then present the main proof of Theorem~\ref{thm:agnostic-super-sqrt-upper-bound} 
in Section~\ref{sec:subsection-proof-of-super-root-upper-bound}.

\subsection{Super-root Rates for Partial Concept Classes of Finite VC Dimension}
\label{sec:subsection-partial-concepts-super-root-upper-bound}

As discussed above, as one of the key steps in the proof of Theorem~\ref{thm:agnostic-super-sqrt-upper-bound}, 
we will need a $o(n^{-1/2})$ rate guarantee for any \emph{partial} concept class of finite VC dimension, under the assumption that $\PXY$ is \emph{Bayes-realizable} (defined formally below).  Such partial concept classes arise in the proof of Theorem~\ref{thm:agnostic-super-sqrt-upper-bound} with data-dependent definitions, 
having bounded VC dimension with high probability.

We first introduce some notation for partial concept classes
based on the works of \citet*{alon:21,bartlett:98,long:01}.
A \emph{partial concept} is a function 
$g : \X \to \{0,1,*\}$,
where we interpret `$*$' as meaning the value is ``undefined''.
For any partial concept $g$, the \emph{support} of $g$
is defined as the set $\{ x \in \X : g(x) \in \{0,1\} \}$.
A \emph{partial concept class} $\G$ is a set of partial concepts.

For any $n \in \nats \cup \{0\}$ and 
$x_1,\ldots,x_n \in \X$, 
denote by 
\begin{equation*} 
\G(x_1,\ldots,x_n) = \{ (g(x_1),\ldots,g(x_n)) : g \in \G \} \cap \{0,1\}^n,
\end{equation*}
that is, 
the set of $\G$-realizable \emph{binary} classifications of $x_1,\ldots,x_n$.
For ease of notation, for a sequence 
$S = \{(x_1,y_1),\ldots,(x_n,y_n)\} \in (\X \times \{0,1\})^n$,
we also define $\G(S) = \G(x_1,\ldots,x_n)$.
We will say $\G$ is \emph{universally measurable} if the 
function $(x_1,\ldots,x_n) \mapsto \G(x_1,\ldots,x_n)$ 
is universally measurable for all $n \in \nats \cup \{0\}$.
In all contexts below, 
we restrict our focus to universally measurable
partial concept classes; for brevity, 
we omit this explicit qualification below, 
simply using the term ``partial concept class'' to mean 
``universally measurable partial concept class''.

The VC dimension of $\G$, denoted by $\VC(\G)$, 
is the maximum $n \in \nats \cup \{0\}$ such that 
$\exists S_X \in \X^n$ with $\G(S_X) = \{0,1\}^n$,
or else $\VC(\G) = \infty$ if no such maximum exists \citep*{bartlett:98,long:01,alon:21}.

For any partial concept class $\G$, 
we say a distribution $\PXY$ on $\X \times \{0,1\}$ is \emph{Bayes-realizable} with respect to $\G$ if, 
$\forall n \in \nats$, for $S = \{(X_1,Y_1),\ldots,(X_n,Y_n)\} \sim \PXY^n$, 
it holds that\footnote{It is a straightforward exercise to verify that this coincides with the definition of Bayes-realizable for total concept classes introduced above, in the special case that $\G$ is a total concept class which does not shatter an infinite VCL tree (follows from Lemmas~\ref{lem:no-infinite-VCL-implies-UGC}, \ref{lem:UGC-totally-bounded}, and \ref{lem:limit-function}), though this will not be important for our purposes.}
\begin{equation*}
  \P\!\left( \exists (y_1,\ldots,y_n) \in \G(S) : \forall t \leq n, \P(Y_t=y_t|X_t) \geq \frac{1}{2} \right) = 1.
\end{equation*}
We also denote by $\target_{\PXY}$ a \emph{Bayes optimal} classifier: namely, for $(X,Y) \sim \PXY$, for every $x \in \X$, define 
\begin{equation*}
\target_{\PXY}(x) := \ind\!\left[ \P(Y = 1 | X = x) \geq \frac{1}{2} \right].
\end{equation*}

\citet*{alon:21} study agnostic learning for partial concept classes, 
where they find that the appropriate notion 
``excess error rate'' for a predictor $\hat{h} : \X \to \{0,1\}$ relative to a partial concept class $\G$ is 
$\er_{\PXY}\!\left(\hat{h}\right) - \er_P(\G)$, 
where 
$\er_{\PXY}(\G) = \sup_{n}\E\!\left[\min_{g \in \G} \frac{1}{n} \sum_{i=1}^{n} \ind[ g(X_i) \neq Y_i ]  \right]$,
for $(X_1,Y_1),(X_2,Y_2),\ldots$ independent $\PXY$-distributed random variables.
They propose a learning rule guaranteeing 
a uniform bound of order $\sqrt{\frac{\VC(\G)\log^2(n)}{n}}$,
and it remains open whether this can be improved to 
$\sqrt{\frac{\VC(\G)}{n}}$.
Fortunately, for our purposes (in the proof of Theorem~\ref{thm:agnostic-super-sqrt-upper-bound}), 
we can focus on the case where $\PXY$ 
is \emph{Bayes-realizable} with respect to $\G$, 
which greatly simplifies the problem of designing learning rules with appropriate convergence rate guarantees.
In this context (and assuming $\VC(\G)<\infty$), one can show that 
the definition of $\er_{\PXY}(\G)$ 
simplifies to merely $\er_{\PXY}\!\left(\target_{\PXY}\right)$
(follows from Theorem 14 of \citealp*{alon:21} and 
Lemma~\ref{lem:transductive-uniform-bernstein} below),
and thus we can simply aim to 
bound $\er_{\PXY}\!\left(\hat{h}\right) - \er_{\PXY}\!\left(\target_{\PXY}\right)$ 
for an appropriate predictor $\hat{h}$.

Formally, we have the following result, which will be a central part 
of the proof of Theorem~\ref{thm:agnostic-super-sqrt-upper-bound}, 
and is also of independent interest.

\begin{lemma}
  \label{lem:partial-VC-super-root}
Let $\G$ be any partial concept class with $\VC(\G) < \infty$.
  There exists a sequence $\{\hat{f}_n^{\G}\}_{n \in \nats}$ of universally measurable functions 
  $\hat{f}_n^{\G} : (\X \times \{0,1\})^n \times \X \to \{0,1\}$ 
  (defined explicitly below)
  with the following property.
  For any distribution $\PXY$ on $\X \times \{0,1\}$ 
  that is Bayes-realizable with respect to $\G$,
  there exists a function $\phi(n;\G,\PXY) = o\!\left(n^{-1/2}\right)$ such that $\forall n \in \nats$, 
  $\phi(n;\G,\PXY) \leq c \sqrt{\frac{\VC(\G)}{n}}$ (for a universal constant $c$), and 
  for $(X_1,Y_1),\ldots,(X_n,Y_n)$ independent $\PXY$-distributed random variables, 
  \begin{equation*}
  \E\!\left[ \er_{\PXY}\!\left( \hat{f}_n^{\G}(X_{\leq n},Y_{\leq n}) \right) \right] - \er_{\PXY}\!\left(\target_{\PXY}\right) \leq \phi(n;\G,\PXY).
  \end{equation*}  
\end{lemma}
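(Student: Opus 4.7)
My plan is to take $\hat{f}_n^\G$ to be a transductive empirical risk minimizer. Setting $m := \lfloor n/2 \rfloor$ and writing $d := \VC(\G)$, given training data $(X_1,Y_1),\dots,(X_n,Y_n)$ and a test point $x$, I consider the finite set $\G(X_1,\dots,X_n,x) \subseteq \{0,1\}^{n+1}$ of $\G$-realizable labelings of $(X_1,\dots,X_n,x)$, whose cardinality is at most $\binom{n+1}{\leq d} \leq (e(n+1)/d)^d$ by the Sauer--Shelah bound for partial classes. I select
\begin{equation*}
\hat{v} \in \argmin_{v \in \G(X_{\leq n},x)} \sum_{i=1}^{m}\ind[v_i \neq Y_i]
\end{equation*}
(breaking ties lexicographically to preserve measurability), and output $\hat{f}_n^\G(X_{\leq n},Y_{\leq n})(x) := \hat{v}_{n+1}$. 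Bayes-realizability ensures that $v^* := (\target_\PXY(X_1),\dots,\target_\PXY(X_n),\target_\PXY(x))$ lies in $\G(X_{\leq n},x)$ almost surely, so $\hat{v}$ beats $v^*$ in empirical risk on the first $m$ labels.

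The analysis proceeds transductively, conditional on $X_{\leq n+1}$. First, I would apply a relative uniform Bernstein inequality to the family of contrast losses $\{v \mapsto \ind[v_i \neq Y_i] - \ind[v^*_i \neq Y_i]\}_{v \in \G(X_{\leq n},x)}$, whose cardinality is polynomial in $n$ by Sauer--Shelah. Since each contrast loss has variance bounded by the empirical Hamming distance $\hat{d}(v,v^*) := \|v-v^*\|_0/(n+1)$, this gives, uniformly in $v$, control of the excess empirical contrast of order $O\bigl(\sqrt{\hat{d}(v,v^*)\cdot d \log n / n} + d \log n / n\bigr)$. Bounding $\hat{d}(v,v^*) \leq 1$ trivially and using the ERM inequality $\hat{R}_m(\hat{v}) \leq \hat{R}_m(v^*)$ already yields the universal bound $c\sqrt{\VC(\G)/n}$ (absorbing $\log$ factors into $c$). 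For the $\PXY$-dependent $o(n^{-1/2})$ rate, I would combine two ingredients: (i) a transductive Glivenko--Cantelli property for $\G$ (derivable from $\VC(\G)<\infty$ by an adaptation of Lemmas~\ref{lem:no-infinite-VCL-implies-UGC} and \ref{lem:UGC-totally-bounded} to partial classes, in the spirit of \citet*{alon:21}) yielding a $\PXY$-dependent sequence $\alpha_n \to 0$ such that, with high probability, $\G(X_{\leq n+1})$ contains some labeling at empirical Hamming distance at most $\alpha_n$ from $v^*$; and (ii) Bernstein concentration of $\hat{R}_m(\hat{v}) - \hat{R}_m(v^*)$ around its conditional mean, which via the ERM inequality forces $\hat{d}(\hat{v},v^*) \leq \alpha_n + o(1)$ with high probability. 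Plugging this sharpened variance back into the Bernstein bound then yields excess risk of order $\sqrt{\alpha_n \cdot d \log n / n}$, which is $o(n^{-1/2})$ as soon as $\alpha_n = o(1/\log n)$.

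The main obstacle is precisely this matching of scales: a direct uniform Bernstein gives a $\log n$ in the variance slot due to Sauer--Shelah, so landing strictly below $n^{-1/2}$ will require either a quantitative transductive Glivenko--Cantelli rate $\alpha_n = o(1/\log n)$, or a chaining/localization refinement removing the log factor at the relevant scale (along the lines of the one-inclusion-graph analysis of partial classes in \citet*{alon:21}). A further subtlety, highlighted in the excerpt's sketch, is that the argument must interleave two concentration layers --- concentration of empirical error differences around their conditional expectations given $X_{\leq n+1}$, and concentration of those conditional expectations around their unconditional counterparts across $n$ --- while preserving $\PXY$-dependence of constants. Doing this cleanly, without inadvertently promoting $\PXY$-dependent quantities into uniform constants, is the technically most delicate part of the proof.
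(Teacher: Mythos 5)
Your algorithm is the paper's: transductive ERM on the first half-sample over $\G(X_{\leq n+1})$, analyzed transductively. The outline of the localization (control excess risk by a variance/localization term, then show that term shrinks) is also the paper's. But there are two substantive gaps, one of which is the heart of the matter.

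The central gap is the claim that a ``transductive Glivenko--Cantelli'' property with a vanishing rate $\alpha_n \to 0$ is ``derivable from $\VC(\G)<\infty$ by an adaptation of Lemmas~\ref{lem:no-infinite-VCL-implies-UGC} and \ref{lem:UGC-totally-bounded} to partial classes.'' Those lemmas are about total concept classes and live in the population pseudo-metric $L_1(\Px)$; for partial concepts that notion simply does not exist, since elements of $\G(X_{\leq n+1})$ may have support restricted to the data and have no population footprint at all. The paper makes this obstruction explicit, and the bulk of Section~\ref{sec:proof-of-lemma-partial-VC-super-root} is devoted to replacing the $L_1(\Px)$-compactness argument with a purely empirical one: embed $X_{\leq n}$ into an infinite i.i.d.\ sequence, chain concentration across sample sizes (Lemma~\ref{lem:transductive-ber-uniform-convergence}), and use a pigeonhole argument on the finitely many possible projections $\bar{\G}_n(\cdot)\big|_m \in 2^{\{0,1\}^m}$ (Lemma~\ref{lem:transductive-equal-equivalence-classes}), culminating in $\E[\bar{\sigma}^2_n] \to 0$ (Lemma~\ref{lem:transductive-expected-bar-sigma-converging-to-zero}). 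Nothing resembling this mechanism appears in your proposal, and without it the argument does not close.

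Two further issues. First, your localization variable is not quite right. You state (i) as ``$\G(X_{\leq n+1})$ contains some labeling at Hamming distance $\alpha_n$ from $v^*$'' --- but $v^* \in \G(X_{\leq n+1})$ almost surely by Bayes-realizability, so that statement is vacuous with $\alpha_n=0$. And (ii), that Bernstein concentration via the ERM inequality ``forces $\hat{d}(\hat{v},v^*) \leq \alpha_n + o(1)$,'' does not follow: small excess semi-empirical risk $\ber_m(\hat{v}) - \ber_m(\target_{\PXY})$ does not bound the Hamming distance to $v^*$ (consider $\P(Y=1\mid X)\equiv 1/2$, where every labeling has zero excess semi-empirical risk but arbitrary Hamming distance to $v^*$). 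The correct quantity --- the paper's $\bar{\sigma}^2_n$ --- is the radius of the low-excess sublevel set around the \emph{very}-low-excess sublevel set, not the distance of $\hat{v}$ to $v^*$. Second, you cannot ``absorb log factors into $c$'': a naive Sauer--Shelah union bound over $|\G(X_{\leq n+1})|\lesssim (n/d)^d$ patterns gives $\sqrt{d\log n / n}$, which is not $\leq c\sqrt{d/n}$ since $\log n$ diverges. The paper's Proposition~\ref{prop:uniform-bernstein} replaces the union bound with a Dudley-chaining argument so that the logarithm appears as $\log(1/\sigma^2)$ rather than $\log n$; only then does the universal bound $c\sqrt{\VC(\G)/n}$ and, after localization, the $o(n^{-1/2})$ rate go through.
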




We construct the learning rule $\hat{f}_n^{\G}$ for Lemma~\ref{lem:partial-VC-super-root} explicitly, 
based on the classic method of \emph{transductive empirical risk minimization} \citep*{vapnik:74,vapnik:98} as follows.
Let $\G$ be a partial concept class 
with $\VC(\G) < \infty$.
For any $n \in \nats$, define a 
function $\TERM_n^{\G} : (\X \times \{0,1\})^{\lceil n/2 \rceil} \times \X^{\lfloor n/2 \rfloor + 1} \to \{0,1\}^{n+1}$
(called \emph{transductive empirical risk minimization}) with the property that, 
for any $(x_1,y_1),\ldots,(x_{\lceil n/2 \rceil},y_{\lceil n/2 \rceil}) \in \X \times \{0,1\}$ and 
$x_{\lceil n/2 \rceil+1},\ldots,x_{n+1} \in \X$, 
\begin{equation*} 
\TERM_n^{\G}(x_{\leq \lceil n/2 \rceil},y_{\leq \lceil n/2 \rceil},x_{\lceil n/2 \rceil + 1},\ldots,x_{n+1}) 
=\!\!  \argmin_{(g(x_1),\ldots,g(x_{n+1})) \in \G(x_1,\ldots,x_{n+1})} \!\sum_{i=1}^{\lceil n/2 \rceil}\! \ind[ g(x_i) \!\neq\! y_i ].
\end{equation*}
In other words, $\TERM_n^{\G}$ 
selects a classification of $x_1,\ldots,x_{n+1}$ among $\G(x_1,\ldots,x_{n+1})$ which minimizes the empirical error rate on the first $\lceil n/2 \rceil$ examples.
For a returned sequence 
$(\hat{g}(x_1),\ldots,\hat{g}(x_{n+1})) = \TERM_n^{\G}(x_{\leq \lceil n/2 \rceil},y_{\lceil n/2 \rceil},x_{\lceil n/2 \rceil+1},\ldots,x_{n+1})$,
for each $i \!\in\! \{1,\ldots,n\!+\!1\}$ 
we denote by\footnote{Note that if a value $x_i$ occurs multiple times in the sequence, $\hat{g}(x_i)$ is necessarily the same for each (since $\hat{g}$ is consistent with some $g \in \G$), so that there is no ambiguity expressing this as a function of $x_i$ rather than $i$.} 
\begin{equation*} 
\TERM_n^{\G}(x_{\leq \lceil n/2 \rceil},y_{\lceil n/2 \rceil},x_{\lceil n/2 \rceil+1},\ldots,x_{n+1})(x_i) = \hat{g}(x_i).
\end{equation*} 
Though it does not affect our analysis, 
for completeness we may define $\TERM_n^{\G}$ 
to return the 
classification $(0,\ldots,0)$ in the event that $\G(x_1,\ldots,x_{n+1}) = \emptyset$.
More importantly for our analysis, we require that the $\argmin$ in the definition of $\TERM_n^{\G}$ breaks ties 
in any way that ensures universal measurability of $\TERM_n^{\G}$
\emph{and} 
which has no dependence on the \emph{order} of $x_{\lceil n/2 \rceil+1},\ldots,x_{n+1}$,\footnote{For instance, this may be achieved 
by first constructing the order statistic 
$x_{(1)},\ldots,x_{(\lfloor n/2 \rfloor+1)}$
of the sequence 
$x_{\lceil n/2 \rceil+1},\ldots,x_{n+1}$
based on a measurable total ordering of $\X$ 
(to obscure the original order)
before applying any measurable tie-breaking strategy
for evaluating $\TERM_n^{\G}(x_{\leq \lceil n/2 \rceil},y_{\leq \lceil n/2 \rceil},x_{(1)},\ldots,x_{(\lfloor n/2 \rfloor+1)})$, 
and then reversing this re-ordering to produce the output sequence for  
$\TERM_n^{\G}(x_{\leq \lceil n/2 \rceil},y_{\leq \lceil n/2 \rceil},x_{\lceil n/2 \rceil+1},\ldots,x_{n+1})$.} 
meaning any bijection $\pi : \{ \lceil n/2 \rceil + 1, \ldots, n+1 \} \to \{ \lceil n/2 \rceil + 1, \ldots, n+1 \}$ satisfies that if 
$(y_1,\ldots,y_{n+1}) = \TERM_n^{\G}(x_{\leq \lceil n/2 \rceil},y_{\leq \lceil n/2 \rceil},x_{\lceil n/2 \rceil+1},\ldots,x_{n+1})$
then 
\begin{equation*} 
\TERM_n^{\G}(x_{\leq \lceil n/2 \rceil},y_{\leq \lceil n/2 \rceil},x_{\pi(\lceil n/2 \rceil + 1)},\ldots,x_{\pi(n+1)}) 
\!=\! (y_{1},\ldots,y_{\lceil n/2 \rceil},y_{\pi(\lceil n/2 \rceil + 1)},\ldots,y_{\pi(n+1)}).
\end{equation*} 
We then define the learning rule $\hat{f}_n^{\G}$ satisfying Lemma~\ref{lem:partial-VC-super-root} explicitly as follows.
For any $(x_1,y_1),\ldots,(x_n,y_n) \in \X \times \{0,1\}$ and $x_{n+1} \in \X$, 
\begin{equation}
\label{eqn:partial-VC-term-learning-rule}
\hat{f}_n^{\G}(x_{\leq n},y_{\leq n},x_{n+1}) = \TERM_n^{\G}(x_{\leq \lceil n/2 \rceil},y_{\leq \lceil n/2 \rceil},x_{\lceil n/2 \rceil +1},\ldots,x_{n+1})(x_{n+1}).
\end{equation}
In other words, the algorithm ignores the labels of the last 
$\lfloor n/2 \rfloor$ training examples, and runs transductive ERM.
The intuitive reasoning behind this approach is that, 
due to the 
above order-invariance property and exchangeability of the i.i.d.\ data sequence, 
letting $\hat{g} = \TERM_n^{\G}(X_{\leq \lceil n/2 \rceil}, Y_{\leq \lceil n/2 \rceil}, X_{\lceil n/2 \rceil+1},\ldots,X_{n+1})$,
we have 
\begin{align*} 
& \E\!\left[\er_{\PXY}\!\left(\hat{f}_n^{\G}(X_{\leq n},Y_{\leq n})\right) - \er_{\PXY}(\target_{\PXY}) \right] 
= \E\!\left[ \ind\!\left[ \hat{g}(X_{n+1}) \neq Y_{n+1} \right] - \ind\!\left[ \target_{\PXY}(X_{n+1}) \neq Y_{n+1} \right] \right]
\\ & = \frac{1}{\lfloor n/2 \rfloor + 1} \sum_{i=\lceil n/2 \rceil+1}^{n+1} \E\!\left[ \ind\!\left[ \hat{g}(X_i) \neq Y_i \right] - \ind\!\left[ \target_{\PXY}(X_i) \neq Y_i \right] \right]
\\ & = \E\!\left[ \frac{1}{\lfloor n/2 \rfloor + 1} \sum_{i=\lceil n/2 \rceil+1}^{n+1} \ind\!\left[ \hat{g}(X_i) \neq Y_i \right] - \ind\!\left[ \target_{\PXY}(X_i) \neq Y_i \right] \right],
\end{align*}
so that we can focus the analysis on the 
excess empirical loss on the entire half-sample $(X_{\lceil n/2 \rceil+1},Y_{\lceil n/2 \rceil+1}),\ldots,(X_{n+1},Y_{n+1})$.
Moreover, the learner minimizes the empirical loss on the first 
half-sample $(X_{\leq \lceil n/2 \rceil},Y_{\leq \lceil n/2 \rceil})$.
By exchangeability, this first half-sample can be viewed as a random sample (without replacement) from the full data set $(X_{\leq n+1},Y_{\leq n+1})$,
so that (by uniform concentration for sampling without replacement) 
it is unlikely that the excess empirical loss on the second half-sample 
$(X_{\lceil n/2 \rceil+1},Y_{\lceil n/2 \rceil+1}),\ldots,(X_{n+1},Y_{n+1})$
is significantly different from that on the first half-sample 
$(X_{\leq \lceil n/2 \rceil},Y_{\leq \lceil n/2 \rceil})$.
This latter type of analysis is common to the literature on 
\emph{transductive} learning, a setting in which the data sequence 
$(X_{\leq n+1},Y_{\leq n+1})$ is considered fixed, and the learner 
observes all of $X_{\leq n+1}$ along with the labels $Y_i$ 
for a uniform random (without replacement) 
sample of $m < n+1$ of the indices $i \leq n+1$, 
and performance is measured by the empirical error on the remaining $n+1-m$ samples \citep*{vapnik:74,vapnik:98}.
In the case of i.i.d.\ samples, by exchangeability the $m$ observable labels can be $Y_1,\ldots,Y_m$ without loss of generality.

\paragraph{Outline of the proof of Lemma~\ref{lem:partial-VC-super-root}:}
It remains to prove that the properties claimed in the lemma are indeed satisfied by this learning rule $\hat{f}_n^{\G}$.
We briefly outline the argument, before proceeding with the formal details.
As mentioned above, the core idea is to proceed by analysis of the \emph{transductive} 
learning problem: namely, it suffices to argue that the 
excess empirical error rate on $(X_{\lceil n/2 \rceil+1},Y_{\lceil n/2 \rceil +1}),\ldots,(X_{n+1},Y_{n+1})$ is $O\!\left(\sqrt{\frac{\vc(\G)}{n}}\right)$ and $o(n^{-1/2})$ in expectation, 
in which case the same is true of
the expected excess error rate $\E\!\left[ \er_{\PXY}\!\left(\hat{f}_n^{\G}(X_{\leq n},Y_{\leq n})\right) \right] - \er_{\PXY}(\target_{\PXY})$
by the above argument. 

The $\sqrt{\frac{\vc(\G)}{n}}$ bound follows from an 
extension of the classic \emph{chaining} analysis to transductive 
learning, and is known \citep*[e.g.,][]{el-yaniv:09}.
However, the $o(n^{-1/2})$ bound is significantly more involved.
In essence, this part of the proof follows a \emph{localization}-based analysis for empirical risk minimization (see e.g., \citealp*{bartlett:04,bartlett:05,koltchinskii:06}).
However, the details are made more challenging 
due to the strictly empirical nature of the transductive setting.
While extensions of localization-based analysis are known for transductive 
learning \citep*{tolstikhin:14}, the key component enabling the $o(n^{-1/2})$ rate 
does not follow from prior works.

\paragraph{Total concept classes:}
To outline the basic idea of the analysis, consider first how this analysis would proceed 
in the much simpler case of a \emph{total} concept class $\H$ of finite VC dimension,
for the standard (inductive) \emph{empirical risk minimization} algorithm: 
namely, 
$\hat{h}_n = \argmin_{h \in \H} \hat{\er}_{S_n}(h)$, 
for $S_n = \{(X_1,Y_1),\ldots,(X_n,Y_n)\}$ i.i.d.\ $\PXY$.
A rate of $o(n^{-1/2})$ for this method is established in the work of \citet*{hanneke:25a}, by the following argument.
The first key component is the well-known \emph{uniform Bernstein inequality} 
for excess error rates \citep*[based on][]{bousquet:02,van-der-Vaart:96},
which states that, 
with probability at least $1-\delta$, 
every $f,g \in \H$ satisfy that 
\begin{align}
& \left| \left( \hat{\er}_{S_n}(f) - \hat{\er}_{S_n}(g) \right) - \left( \er_{\PXY}(f) - \er_{\PXY}(g) \right) \right|
\notag \\ & \lesssim \sqrt{ \Px( x : f(x) \neq g(x) ) \frac{1}{n}\left( \vc(\H) \log\!\left(\frac{1}{\Px( x : f(x) \neq g(x) )}\right) + \log\!\left(\frac{1}{\delta}\right) \right) }
\notag \\ & + \frac{1}{n}\left( \vc(\H) \log\!\left(\frac{n}{\vc(\H)}\right) + \log\!\left(\frac{1}{\delta}\right) \right).
\label{eqn:inductive-uniform-bernstein}
\end{align}
Together with the fact that $\hat{h}_n$ has minimal $\hat{\er}_{S_n}(\hat{h}_n)$, 
it follows that with probability at least $1-\frac{1}{n}$, 
$\er_{\PXY}(\hat{h}_n) - \inf_{h \in \H} \er_{\PXY}(h) \leq \epsilon'_n$ for a value $\epsilon'_n = O\!\left(\sqrt{\frac{\vc(\H)+\log(n)}{n}}\right)$.
Denote by $\H(\epsilon) := \left\{ h \in \H : \er_{\PXY}(h) - \inf_{h' \in \H} \er_{\PXY}(h') \leq \epsilon \right\}$ for all $\epsilon > 0$, 
and, for a value $\tilde{\epsilon}_n = O\!\left( \frac{\vc(\H)}{n}\log\!\left(\frac{n}{\vc(\H)}\right) \right)$,
define
\begin{equation*}
\sigma^2_n := \tilde{\epsilon}_n \lor \sup_{h \in \H(\epsilon'_n)} \inf_{h' \in \H(\tilde{\epsilon}_n)} \Px( x : h(x) \neq h'(x) ).
\end{equation*}
Then, again based on the inequality \eqref{eqn:inductive-uniform-bernstein}, this further implies that 
\begin{equation}
\label{eqn:inductive-localized-bound}
\E\!\left[ \er_{\PXY}(\hat{h}_n) \right] - \inf_{h \in \H} \er_{\PXY}(h)
\lesssim \sqrt{ \sigma^2_n \frac{1}{n} \left( \vc(\H) \log\!\left(\frac{1}{\sigma^2_n}\right) \right) },
\end{equation}

As a second key component, 
by a simple topological argument, 
using the fact that total concept classes $\H$ of finite VC dimension are \emph{totally bounded} in the pseudo-metric $(f,g) \mapsto \Px( x : f(x) \neq g(x) )$,
\citet*{hanneke:25a} argue that $\sigma^2_n \to 0$ as $n \to \infty$, 
establishing the rate $o(n^{-1/2})$.

\paragraph{Partial concept classes:} 
There are several challenges in extending this analysis to \emph{partial} concept classes $\G$.
As a first issue, 
abbreviating by $\hat{\er}_m(g) := \frac{1}{m} \sum_{i=1}^{m} \ind[ g(X_i) \neq Y_i ]$ for any $m \in \nats$, 
we \emph{cannot} rely on concentration of the empirical error rates 
$\hat{\er}_{\lceil n/2 \rceil}(g)$ to the population error rate $\er_{\PXY}(g)$ for $g \in \G$: for instance, there may be partial concepts $g \in \G$ 
whose support is precisely restricted to the data $X_{\leq n+1}$, 
so that even if they have low empirical error $\hat{\er}_{\lceil n/2 \rceil}(g)$, 
they may have $g(X) = *$ with probability one for an independent $X \sim \Px$.
Thus, we need a purely \emph{empirical} analysis, 
which only argues about performance \emph{on the data}.
Naturally, we may consider replacing $\er_{\PXY}(g)$ 
with the empirical loss $\hat{\er}_{n+1}(g)$ 
on the data set $(X_{\leq n+1},Y_{\leq n+1})$, 
which at least avoids the above issue of the supports.
Likewise, we may replace $\Px( x : f(x) \neq g(x) )$ 
with the empirical distances $P_{n+1}( f \neq g ) := \frac{1}{n+1} \sum_{i=1}^{n+1} \ind[ f(X_i) \neq g(X_i) ]$.
However, the second issue, which is not addressed by this, 
is the argument that $\sigma^2_n \to 0$.  This is inherently a 
property of a \emph{population} $\PXY$.
Extending this aspect of the argument is significantly more challenging,
and most of the proof will be dedicated toward this issue.

Our approach to handling this is in two parts,
separating the analysis of the random $Y_i$ 
labels from the analysis of the random $X_i$ sequence.  
We outline these parts briefly here, 
and explain each part more explicitly after,
and in full detail in Section~\ref{sec:proof-of-lemma-partial-VC-super-root}.
As the first part, 
we provide a concentration inequality analogous to the 
uniform Bernstein inequality in \eqref{eqn:inductive-uniform-bernstein},
but rather than concentration of differences of empirical errors 
$\hat{\er}_{m}(f) - \hat{\er}_{m}(g)$ around
differences of \emph{population} errors
for $f,g \in \G(X_{\leq m})$, 
we instead argue concentration around
differences $\ber_{m}(f) - \ber_{m}(g)$ 
of the respective \emph{conditional} error rates given 
the $X_i$'s: 
\begin{equation*} 
\ber_{m}(g) := \E\!\left[ \hat{\er}_{m}(g) \middle| X_{\leq m} \right] = \frac{1}{m} \sum_{i=1}^{m} \P(Y_i \neq g(X_i) | X_i),
\end{equation*}
a quantity we refer to as the \emph{semi-empirical} error rate.
From there, as the second part of the analysis, 
we focus on the randomness in the $X_i$ sequence.
In this context, it turns out the relevant quantity to replace $\sigma^2_n$
is the maximum \emph{empirical} distance $P_{n+1}( g \neq g' )$ 
from any $g \in \G(X_{\leq n+1})$
with $\ber_{n+1}(g) - \ber_{n+1}(\target_{\PXY}) \lesssim \epsilon_n = \tilde{O}\!\left(n^{-1/2}\right)$
to the closest $g' \in \G(X_{\leq n+1})$ 
having $\ber_{n+1}(g') - \ber_{n+1}(\target_{\PXY}) \lesssim \epsilon_n^2$
(see $\bar{\sigma}^2_{n+1}$ defined below).
To show this distance is small, we again must proceed by arguments 
purely based on the data sequence. 
Toward this end, we imagine the $(X_{\leq n+1},Y_{\leq n+1})$ data as a prefix of 
an infinite i.i.d.\ data sequence $(X_{< \infty},Y_{< \infty})$, 
and reason that, for some diverging sequence $M_n \leq n$ and some $n' = \tilde{\Omega}(n^2)$,
for each classification $g$ in $\G(X_{\leq n+1})$ 
with $\ber_{n+1}(g) - \ber_{n+1}(\target_{\PXY}) \leq \epsilon_n$,
there exists a classification $g' \in \G(X_{\leq n'})$ 
with $\ber_{n'}(g) - \ber_{n'}(\target_{\PXY}) \leq \epsilon_{n'}$
such that $g$ and $g'$ \emph{agree} on a \emph{prefix}
$X_{\leq M_n}$.
Together with reasoning about 
uniform concentration for sampling without replacement, 
this leads to the desired result.

More explicitly,  
for the first part, 
%
by an extension of the proof of \eqref{eqn:inductive-uniform-bernstein}
to independent but \emph{non-identically distributed} samples (Proposition~\ref{prop:uniform-bernstein} below), 
we are able to prove 
(in Lemma~\ref{lem:transductive-uniform-bernstein})
a variant of the uniform Bernstein inequality
\eqref{eqn:inductive-uniform-bernstein}, 
establishing that, 
with conditional probability at least $1-\delta$ (given
$X_{\leq m}$), 
every $f,g \in \G(X_{\leq m})$ satisfy
\begin{align}
& \left| \left( \hat{\er}_{m}(f) - \hat{\er}_{m}(g) \right) - \left( \ber_{m}(f) - \ber_{m}(g) \right) \right|
\notag \\ & \lesssim \sqrt{ P_m( f \neq g ) \frac{1}{m}\left( \vc(\H) \log\!\left(\frac{1}{P_m( f \neq g )}\right) + \log\!\left(\frac{1}{\delta}\right) \right) }
\notag \\ & + \frac{1}{m}\left( \vc(\H) \log\!\left(\frac{m}{\vc(\H)}\right) + \log\!\left(\frac{1}{\delta}\right) \right),
\label{eqn:transductive-uniform-bernstein}
\end{align}
where 
\begin{equation*} 
P_m( f \neq g ) := \frac{1}{m} \sum_{i=1}^{m} \ind[ f(X_i) \neq g(X_i) ].
\end{equation*}

Letting $m = \lceil n/2 \rceil$, 
it follows from \eqref{eqn:transductive-uniform-bernstein} 
that
\begin{equation*}
\E\!\left[ \ber_{m}(\hat{f}_n^{\G}(X_{\leq n},Y_{\leq n})) - \ber_{m}(\target_{\PXY}) \middle| X_{\leq m} \right] = O\!\left( \sqrt{\frac{\vc(\G)}{m}} \right).
\end{equation*}
Moreover, since  
$\P( Y_i \neq g(X_i) | X_i ) - \P( Y_i \neq \target_{\PXY}(X_i) | X_i ) \in [0,1]$ for every $g \in \G(X_{\leq n+1})$ and $i \leq n+1$,  
we can employ a uniform \emph{multiplicative} Chernoff bound 
for sampling without replacement (Lemma~\ref{lem:transductive-ber-multiplicative-chernoff}) 
to find that, 
with probability at least $1-\delta$, 
every $g \in \G(X_{\leq n+1})$ satisfies 
\begin{equation}
\label{eqn:multiplicative-bar-er-m-to-n+1}
\ber_{n+1}(g) - \ber_{n+1}(\target_{\PXY})
\leq 2 \left( \ber_{m}(g) - \ber_{m}(\target_{\PXY}) \right) + O\!\left( \frac{1}{m} \left( \vc(\G) \log\!\left(\frac{m}{\vc(\G)}\right) + \log\!\left(\frac{1}{\delta}\right) \right) \right).
\end{equation}
A combination of these two facts already recovers 
(by exchangeability of $X_{m+1},\ldots,X_{n+1}$) 
the claimed guarantee
\begin{equation*}
\E\!\left[ \er_{\PXY}\!\left(\hat{f}_n^{\G}(X_{\leq n},Y_{\leq n})\right) \right] - \er_{\PXY}(\target_{\PXY}) = O\!\left( \sqrt{\frac{\vc(\G)}{n}} \right).
\end{equation*}

Turning to the issue of extending the $o(n^{-1/2})$ argument to partial concept classes,
for any $t \in \nats$ and $\epsilon > 0$, 
let $\bar{\G}_{t}(\epsilon) := \{ g \in \G(X_{\leq t}) : \ber_{t}(g) - \ber_{t}(\target_{\PXY}) \leq \epsilon \}$.
Letting $\epsilon_{t} = \sqrt{\frac{\VC(\G)}{t}\log\!\left(\frac{t}{\vc(\G)}\right)}$,
we define (for appropriate universal constants $c,c'$)
\begin{equation*}
\bar{\sigma}^2_{t} := \epsilon_{t}^2 \lor \max_{g \in \bar{\G}_{t}(c\epsilon_{t})} \min_{f \in \bar{\G}_{t}(c' \epsilon_{t}^2)} P_{t}( f \neq g ).
\end{equation*}
Then, analogously to \eqref{eqn:inductive-localized-bound}, 
a localization argument based on 
\eqref{eqn:transductive-uniform-bernstein}, 
together with \eqref{eqn:multiplicative-bar-er-m-to-n+1}
and an analogous bound relating $P_m( g \neq g' )$ to $P_{n+1}( g \neq g' )$, 
implies 
(Lemma~\ref{lem:transductive-localized-expected-ber-term-expectation-bound})
that
\begin{equation}
\label{eqn:outline-excess-risk-bound}
\E\!\left[ \ber_{n+1}\!\left(\hat{f}_n^{\G}(X_{\leq n},Y_{\leq n})\right) - \ber_{n+1}(\target_{\PXY}) \right] 
\lesssim 
\sqrt{\E\!\left[\bar{\sigma}^2_{n+1}\right] \frac{1}{n} \left(\VC(\G)\log\!\left(\frac{1}{\E\!\left[\bar{\sigma}^2_{n+1}\right]}\right) \right)}.
\end{equation}

Since \eqref{eqn:transductive-uniform-bernstein} and \eqref{eqn:multiplicative-bar-er-m-to-n+1}
imply $\ber_{n+1}\!\left(\hat{f}_n^{\G}(X_{\leq n},Y_{\leq n})\right) - \ber_{n+1}(\target_{\PXY}) \leq c\epsilon_{n+1}$ with probability $1-O\!\left(\frac{1}{n}\right)$, 
the above inequality will suffice for the $o(n^{-1/2})$ guarantee if we can 
argue that $\E\!\left[ \bar{\sigma}^2_{n}\right] \to 0$ as $n \to \infty$.
The latter is perhaps the most challenging 
aspect of this analysis.
The argument for total concept classes, 
showing $\sigma^2_n \to 0$ \citep*{hanneke:25a}, 
relies on the topology of the total concept class $\H$ 
induced by the 
the $L_1(\Px)$ pseudo-metric: 
namely, total boundedness of $\H$.
In the case of partial concept classes, these arguments 
do not apply, and we must formulate an argument 
based purely on the \emph{empirical} behaviors of the 
patterns in $\G(X_{\leq n})$.

Our argument establishing this employs several novel techniques, 
which may be of independent interest.
Specifically, we first provide a uniform concentration 
inequality for sampling without replacement 
(via a technique we call \emph{sample size chaining}), 
guaranteeing that for any $m \in \nats$ with 
$m \leq n$, 
with probability at least $1-\frac{1}{m}$, 
every $f,g \in \G(X_{\leq n})$ with $f(X_{\leq m}) = g(X_{\leq m})$
have $P_n(f \neq g) \lesssim \epsilon_m$ (Lemma~\ref{lem:transductive-ber-uniform-convergence}).
Thus, denoting by $\bar{\G}_{n}(\epsilon) \big|_{m} := \{ g(X_{\leq m}) : g \in \bar{\G}_n(\epsilon) \}$ (the projection of $\bar{\G}_n(\epsilon)$ to $X_{\leq m}$), 
if we can argue that there exists a sequence 
$M_n \to \infty$ such that, with probability $1-O\big(\frac{1}{M_n}\big)$, we have 
$\bar{\G}_n(c\epsilon_n) \big|_{M_n} \subseteq \bar{\G}_n(c'\epsilon_n^2) \big|_{M_n}$, 
then together with the above we will have that
$\bar{\sigma}^2_n \leq \epsilon_{{\scriptscriptstyle M_n}}$
with probability $1-O\big(\frac{1}{M_n}\big)$.

Toward establishing this, as mentioned above, 
we first imagine the sequence 
$(X_{\leq n},Y_{\leq n})$ to be a \emph{prefix} of a (hypothetical) 
\emph{infinite} i.i.d.\ sequence 
$(X_{< \infty},Y_{< \infty}) \sim \PXY^{\infty}$.
Then note that, 
for any outcome of the data sequence $X_{< \infty}$, 
there must exist an $\hat{N}_m \geq m$ such that every 
$n \geq \hat{N}_m$ has infinitely many $t > n$ with 
$\bar{\G}_t(c \epsilon_t) \big|_{m} = \bar{\G}_n(c \epsilon_n) \big|_{m}$: to see this, simply take $\hat{N}_m$ 
one larger than the largest $n \geq m$
for which $\bar{\G}_n(c \epsilon_n) \big|_{m}$ appears finitely many 
times in the sequence of sets $\bar{\G}_t(c \epsilon_t) \big|_{m}$
(noting that $\hat{N}_m$ is well-defined since there are only finitely many possible subsets of $\{0,1\}^m$).
We may then define $N_m$ as a $(\G,\PXY)$-dependent 
sequence such that, for any $m \in \nats$, 
with probability at least $1-\frac{1}{m}$, 
$\hat{N}_m \leq N_m$.
This implies that for any $m,n \in \nats$ 
with $n \geq N_m$, with probability at least $1-\frac{1}{m}$, 
there exist arbitrarily large values $n'$
with $\bar{\G}_{n'}(c \epsilon_{n'})\big|_{m} = \bar{\G}_n(c\epsilon_n)\big|_{m}$
(Lemma~\ref{lem:transductive-equal-equivalence-classes}).
In particular, 
taking such a value $n'$ with $n' = \tilde{\Omega}(n^2)$, 
we have $\bar{\G}_{n'}(c\epsilon_{n'}) \subseteq \bar{\G}_{n'}(c\epsilon_n^2)$.
Moreover, by additional uniform concentration arguments 
for sampling without replacement 
(Lemmas~\ref{lem:transductive-ber-multiplicative-chernoff} and \ref{lem:transductive-ber-uniform-convergence}, 
the latter again based on sample size chaining),
we show that with probability at least $1 - O\!\left(\frac{1}{m}\right)$, 
any such $n' = \tilde{\Omega}(n^2)$
has $\bar{\G}_{n'}(c\epsilon_n^2) \big|_{n} \subseteq \bar{\G}_{n}(c'\epsilon_n^2)$.
Altogether, for any $n \geq N_m$, 
with probability at least 
$1 - O\!\left(\frac{1}{m}\right)$, we have 
$\bar{\G}_n(c\epsilon_n) \big|_{m} \subseteq \bar{\G}_{n}(c'\epsilon_{n}^2) \big|_{m}$
(Lemma~\ref{lem:transductive-projected-prefix-patterns-agree-with-good-concepts}).
Thus, letting $M_n = \max\{ m : n \geq N_m \}$, 
combining the above arguments yields that 
$\bar{\sigma}^2_n \lesssim \epsilon_{{\scriptscriptstyle M_n}}$
with probability at least $1 - O\big(\frac{1}{M_n}\big)$, 
so that $\E\!\left[\bar{\sigma}^2_n\right] \lesssim \epsilon_{{\scriptscriptstyle M_n}} + \frac{1}{M_n}$ (Lemma~\ref{lem:transductive-expected-bar-sigma-converging-to-zero}), 
which indeed establishes $\E\!\left[\bar{\sigma}^2_n\right] \to 0$
by observing that $M_n \to \infty$.
Combining this with \eqref{eqn:outline-excess-risk-bound}
completes the proof of Lemma~\ref{lem:partial-VC-super-root}.

We present the details of this argument in a series of lemmas below.

\subsubsection{A Uniform Bernstein Inequality for Non-identically Distributed Data}
\label{sec:non-identical-uniform-bernstein}

Before presenting the proof of Lemma~\ref{lem:partial-VC-super-root}, 
we first provide a general concentration inequality, 
which extends the \emph{uniform Bernstein inequality} --- stated in \eqref{eqn:inductive-uniform-bernstein} above --- 
to the case of independent non-identically distributed random variables.
In addition to being a crucial part of the proof of Lemma~\ref{lem:partial-VC-super-root}, 
this result is also useful in other contexts, 
such as multi-task learning \citep*[e.g.,][]{hanneke:22a}
or learning with distribution shifts \citep*[e.g.,][]{barve:97,mohri:12,hanneke:19a}.
In particular, it refines a $\log$ factor in Lemma 1 of \citealp*{hanneke:22a} (stated in Lemma~\ref{lem:uniform-bernstein-with-worse-log-factor} below).
As such, we state the result in a slightly stronger form than 
needed for our purposes in the present work.
Its proof follows a familiar line from the literature on 
localized uniform concentration inequalities, adapted to 
account for non-identically distributed samples.

\begin{proposition}
    \label{prop:uniform-bernstein}
    Let $\H$ be any concept class with $\VC(\H) < \infty$.
    Fix any $n \in \nats$ and $\delta \in (0,1)$.
    Let $P_1,\ldots,P_n$ be any probability distributions on $\X \times \{0,1\}$ 
    and denote by $\bar{P} = \frac{1}{n} \sum_{i=1}^n P_i$ (the uniform mixture distribution).
    Let $S = \{(X_1,Y_1),\ldots,(X_n,Y_n)\} \sim P_1 \times \cdots \times P_n$.
    For every $f,g \in \H$ define $\sigma^2(f,g) = \bar{P}((x,y) : f(x) \neq g(x))$,
    $\hat{\sigma}^2(f,g) = \frac{1}{n} \sum_{i=1}^{n} \ind[ f(X_i) \neq g(X_i) ]$,
    and $\tilde{\sigma}^2(f,g) = \sigma^2(f,g) \land \hat{\sigma}^2(f,g)$.
    Also define 
    $\epsilon^2(n,\delta) := \frac{1}{n}\left( \vc(\H)\log\!\left(\frac{n}{\vc(\H)}\right) + \log\!\left(\frac{1}{\delta}\right) \right)$.
    Then with probability at least $1-\delta$, 
    every $f,g \in \H$ satisfy
\begin{align*}
    & \left| \left( \er_{\bar{P}}(f) - \er_{\bar{P}}(g) \right) - \left( \hat{\er}_{S}(f) - \hat{\er}_{S}(g) \right) \right| 
    \\ & \leq \sqrt{\tilde{\sigma}^2(f,g) \frac{\Czero}{n}\!\left( \!\VC(\H) \log\!\left(\frac{1}{\tilde{\sigma}^2(f,g)} \!\land\! \frac{n}{\VC(\H)} \right) \!+\! \log\!\left(\frac{1}{\delta}\right) \right)}
    + \Czero \epsilon^2(n,\delta)
\end{align*}
and {\hskip 22mm} $\dfrac{1}{2}\sigma^2(f,g) - \Czero \epsilon^2(n,\delta)  
\leq \hat{\sigma}^2(f,g) 
\leq 2 \sigma^2(f,g) + \Czero \epsilon^2(n,\delta)$, 

{\vskip 2mm}\noindent where $\Czero \geq 1$ is a universal constant.
\end{proposition}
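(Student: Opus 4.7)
The plan is to prove Proposition~\ref{prop:uniform-bernstein} by extending the classical uniform Bernstein inequality for i.i.d.\ samples to the independent non-identically distributed regime. The crucial observation is that every ingredient in the standard proof — symmetrization, a localized Dudley chaining bound on the Rademacher complexity, Talagrand--Bousquet concentration, and a dyadic peeling argument — uses only the independence of the $(X_i,Y_i)$'s and the identity in distribution of each $(X_i,Y_i)$ with its own ghost copy $(X'_i,Y'_i) \sim P_i$, never identity of distributions across different $i$. The role of the averaged distribution $\bar P$ is purely to aggregate variances, via the trivial identity $\tfrac{1}{n}\sum_i P_i = \bar P$.

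First I would apply symmetrization. Introducing an independent ghost sample $(X'_i,Y'_i) \sim P_i$ and Rademacher signs $\{\epsilon_i\}$, the standard triangle-inequality argument yields
\begin{equation*}
\E \sup_{f,g\in\H} \left|(\hat{\er}_S(f)-\hat{\er}_S(g))-(\er_{\bar P}(f)-\er_{\bar P}(g))\right| \leq 2\,\E\sup_{f,g\in\H}\left|\tfrac{1}{n}\sum_{i=1}^n \epsilon_i(\ind[f(X_i)\neq Y_i]-\ind[g(X_i)\neq Y_i])\right|.
\end{equation*}
Next I would bound the Rademacher process by a chaining argument applied to the class $\H_\Delta = \{(x,y)\mapsto\ind[f(x)\neq y]-\ind[g(x)\neq y] : f,g\in\H\}$. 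Since $\VC(\H_\Delta) = O(\VC(\H))$ by Sauer--Shelah, Haussler's uniform covering bound gives $\log N(u,\H_\Delta,L_2(\hat P_n)) \lesssim \VC(\H)\log(1/u)$, and restricting the chaining to the localized sub-class $\{(f,g):\hat\sigma^2(f,g)\leq r^2\}$ produces a Rademacher complexity of order $\sqrt{r^2\VC(\H)\log(1/r^2)/n} + \VC(\H)\log(n/\VC(\H))/n$.

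To convert expectation to high probability I would invoke Talagrand--Bousquet's concentration inequality for empirical process suprema, which applies directly to independent non-identically distributed summands. The variance proxy within each dyadic shell $\{(f,g):\sigma^2(f,g)\in(2^{-k-1},2^{-k}]\}$ is
\begin{equation*}
\tfrac{1}{n^2}\sum_i \mathrm{Var}_{P_i}(\ind[f(X_i)\neq Y_i]-\ind[g(X_i)\neq Y_i]) \leq \tfrac{1}{n^2}\sum_i P_i(f\neq g) = \sigma^2(f,g)/n,
\end{equation*}
which is precisely where $\bar P$ enters. A union bound across the $O(\log n)$ dyadic shells then yields the main inequality with $\sigma^2$ on the right. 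Finally, applying the \emph{same} argument to the $\{0,1\}$-valued indicator class $\{(x,y)\mapsto\ind[f(x)\neq g(x)] : f,g\in\H\}$ (which also has VC dimension $O(\VC(\H))$) proves the two-sided comparison $\tfrac{1}{2}\sigma^2 - O(\epsilon^2(n,\delta)) \leq \hat\sigma^2 \leq 2\sigma^2 + O(\epsilon^2(n,\delta))$, which allows us to replace $\sigma^2$ by $\tilde\sigma^2 = \sigma^2\land\hat\sigma^2$ throughout at the cost of only universal constants.

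The main obstacle will be sharpening the logarithm from the coarser $\log n$ factor appearing in Lemma~1 of~\citet*{hanneke:22a} (which one would obtain by invoking that lemma as a black box after a preliminary reduction) down to the $\log(1/\tilde\sigma^2\land n/\VC(\H))$ form stated. This requires running the Dudley chaining \emph{within} each dyadic shell using local covering numbers, and then verifying that the $O(\log n)$ cost of the union bound across shells is absorbed into the shell-local logarithm $\log(1/\sigma^2)$, together with a separate thresholding argument for the regime $\sigma^2\lesssim\VC(\H)/n$ where the additive $\epsilon^2(n,\delta)$ term already dominates. A secondary bookkeeping point: Lemma~1 of~\citet*{hanneke:22a} is stated for only two distinct distributions, whereas we have $n$ distinct marginals $P_i$; but the proof there uses those marginals only through their average, so extending to arbitrary $n$ is immediate once variances are aggregated via $\bar P$ as above.
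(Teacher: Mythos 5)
Your proposal is correct and follows essentially the same route as the paper's proof: symmetrization, localized Dudley chaining with Haussler's covering-number bound, a Klein–Rio/Bousquet-type concentration inequality for independent non-identically distributed sums (the paper's Lemma~\ref{lem:non-ident-bousqet-inequality}), geometric peeling over $\sigma^2$-shells with shell-dependent confidence levels so that the union-bound cost matches the shell-local logarithm, and a separate case for $\sigma^2 \lesssim \vc(\H)/n$ where the additive term dominates, using the coarser Lemma~\ref{lem:uniform-bernstein-with-worse-log-factor} as a stepping stone both to control the empirical radius $\E[\hat\beta^2]$ and to establish the two-sided $\hat\sigma^2$-vs-$\sigma^2$ comparison. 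The only cosmetic difference is that the paper runs chaining over $\{Z_h\}_{h\in\H}$ and bounds $\sup |Z_f - Z_g|$ directly rather than working with the difference class $\H_\Delta$, but these are equivalent up to constants.
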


As mentioned, a slightly weaker form of the above proposition 
(insufficient for our purposes in the present work) was given in the 
work of \citet*[][Lemma 1]{hanneke:22a}: namely, 
they show the following lemma.

\begin{lemma}[\citealp*{hanneke:22a}]
\label{lem:uniform-bernstein-with-worse-log-factor}
    Consider the same setup as in Proposition~\ref{prop:uniform-bernstein}.
    With probability at least $1-\delta$, 
    every $f,g \in \H$ satisfy
\begin{align*}
    & \left| \left( \er_{\bar{P}}(f) - \er_{\bar{P}}(g) \right) - \left( \hat{\er}_{S}(f) - \hat{\er}_{S}(g) \right) \right| 
    \leq \sqrt{\tilde{\sigma}^2(f,g) \HKCzero \epsilon^2(n,\delta)} + \HKCzero \epsilon^2(n,\delta)
\end{align*}
and {\hskip 22mm} $\dfrac{1}{2}\sigma^2(f,g) - \HKCzero \epsilon^2(n,\delta)  
\leq \hat{\sigma}^2(f,g) 
\leq 2 \sigma^2(f,g) + \HKCzero \epsilon^2(n,\delta)$, 

{\vskip 2mm}\noindent where $\HKCzero \geq 1$ is a universal constant.
\end{lemma}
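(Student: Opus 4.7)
The plan is to follow the classical Bernstein-localization-plus-peeling recipe, carefully verified to work under mere independence (without identical distribution) of the samples $(X_i,Y_i) \sim P_i$. The pivotal observation is that $\E[\hat{\er}_S(f) - \hat{\er}_S(g)] = \er_{\bar P}(f) - \er_{\bar P}(g)$ by linearity of expectation, so the centering is correct, and both Talagrand's concentration inequality (in Bousquet's form) and the standard symmetrization lemma require only independence, not identical distribution --- the ghost sample $(X_i',Y_i') \sim P_i$ is drawn marginally from the same $P_i$ as $(X_i,Y_i)$, which makes the Rademacher swap $(\varepsilon_i(\phi(X_i,Y_i) - \phi(X_i',Y_i')))_i \stackrel{d}{=} (\phi(X_i,Y_i) - \phi(X_i',Y_i'))_i$ valid coordinatewise.

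First, I would introduce the loss-difference class $\mathcal{F} = \{\phi_{f,g}(x,y) := \ind[f(x)\neq y] - \ind[g(x)\neq y] : f,g \in \H\}$ of $[-1,1]$-valued functions. Two applications of the Sauer--Shelah lemma bound $|\{(\phi_{f,g}(x_i,y_i))_{i \leq n} : f,g \in \H\}| \leq (en/\VC(\H))^{2\VC(\H)}$ on every sample of size $n$, giving $\mathcal{F}$ a growth number controlled by $\VC(\H)$.

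Second, for each dyadic level $r_k := 2^{-k}$, $k = 0, 1, \ldots, \lceil \log_2 n \rceil$, I would consider the shell $\mathcal{F}_k := \{\phi_{f,g} \in \mathcal{F} : \sigma^2(f,g) \leq r_k\}$ (so that $\frac{1}{n}\sum_i \mathrm{Var}_{P_i}(\phi_{f,g}) \leq r_k$). By the Bousquet--Talagrand inequality for suprema of empirical processes of independent $[-1,1]$-valued variables, with probability at least $1 - \delta_k$ (choosing $\delta_k \propto \delta/k^2$ so they sum to $\delta$),
\begin{equation*}
\sup_{\phi \in \mathcal{F}_k} \left| \tfrac{1}{n}\textstyle\sum_i \phi(X_i,Y_i) - \E\phi(X_i,Y_i) \right| \leq 2\,\E\!\left[\sup_{\phi \in \mathcal{F}_k} |\cdots|\right] + \sqrt{\tfrac{2 r_k \log(1/\delta_k)}{n}} + \tfrac{4\log(1/\delta_k)}{3n}.
\end{equation*}
Symmetrization followed by Massart's finite-class lemma (applied conditionally on $X_{\leq n}$ and using the growth bound above) yields $\E[\sup_{\phi \in \mathcal{F}_k}|\cdots|] \leq C\sqrt{r_k \cdot \VC(\H)\log(n/\VC(\H))/n}$, and combining gives a shell-level deviation bound of the desired form $O(\sqrt{r_k \, \epsilon^2(n,\delta)} + \epsilon^2(n,\delta))$. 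A union bound over $k$ then produces, with probability at least $1 - \delta/2$, the claimed inequality but phrased in terms of $\sigma^2(f,g)$ in place of $\tilde{\sigma}^2(f,g)$.

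Third, to establish the variance-matching inequalities (the second conclusion) and upgrade to $\tilde{\sigma}^2$, I would apply the same shell argument to the single-indicator class $\{(x,y) \mapsto \ind[f(x) \neq g(x)] : f,g \in \H\}$, whose growth is again $\VC(\H)$-controlled; this gives $\tfrac{1}{2}\sigma^2(f,g) - C\epsilon^2(n,\delta) \leq \hat\sigma^2(f,g) \leq 2\sigma^2(f,g) + C\epsilon^2(n,\delta)$ uniformly, with probability at least $1-\delta/2$. Substituting these relations into the first bound lets one replace $\sigma^2$ by $\hat\sigma^2$, hence by $\tilde\sigma^2 = \sigma^2 \land \hat\sigma^2$, at the cost of absorbing an additional additive $O(\epsilon^2(n,\delta))$ term into the constant $\HKCzero$. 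A final union bound over the two events completes the proof.

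The main obstacle is checking that symmetrization and Talagrand's inequality survive the loss of identical distribution. Both steps only invoke independence plus a uniform envelope, which hold here, but it must be verified that the relevant variance parameter in Talagrand's inequality is $\frac{1}{n}\sum_i \mathrm{Var}_{P_i}(\phi) \leq \sigma^2(\phi)$ (rather than the i.i.d.\ variance $\mathrm{Var}_{\bar P}(\phi)$, which could be larger); this is immediate from the definition $\bar P = \frac{1}{n}\sum_i P_i$ and convexity of variance, and once in hand the remainder of the argument is routine.
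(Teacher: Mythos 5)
This statement is quoted from prior work (\citealp*{hanneke:22a}, Lemma 1) and the paper supplies no proof of it, so there is no in-paper argument to compare against line by line; the relevant benchmark is the paper's proof of the sharper Proposition~\ref{prop:uniform-bernstein}, which uses exactly the architecture you describe --- localization over variance shells, the Klein--Rio extension of Bousquet's inequality to independent non-identically distributed samples, and symmetrization --- with chaining plus Haussler's packing bound in place of your Massart step (that substitution is precisely what buys the refined $\log(1/\tilde{\sigma}^2)$ factor; your Massart-based route correctly lands on the weaker $\epsilon^2(n,\delta)$ form stated here). Your checks that centering, symmetrization, and the variance parameter $\frac{1}{n}\sum_i \mathrm{Var}_{P_i}(\phi) \leq \sigma^2(f,g)$ all survive the loss of identical distribution are the right ones and are correct.

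One genuine issue of ordering: in your second step, extracting the factor $\sqrt{r_k}$ from Massart's lemma (applied conditionally on the sample) requires controlling the \emph{empirical} radius $\sup_{\phi \in \mathcal{F}_k} \hat{\sigma}^2(f,g)$ of the shell, whereas the shell is defined by the \emph{population} radius $\sigma^2(f,g) \leq r_k$. That control is exactly the second conclusion of the lemma, which you only establish in your third step --- as written the argument is circular. The fix is to prove the variance-matching inequalities first: the disagreement class $\{(x,y) \mapsto \ind[f(x) \neq g(x)]\}$ consists of nonnegative indicators whose variance is dominated by their mean, so a relative-deviation (multiplicative Chernoff) bound combined with symmetrization and Sauer's lemma gives $\hat{\sigma}^2 \leq 2\sigma^2 + O(\epsilon^2(n,\delta))$ and its converse without any reference to the loss-difference bound; then $\E[\sup_{\mathcal{F}_k} n\hat{\sigma}^2] \lesssim n r_k + n\epsilon^2(n,\delta)$ feeds into Massart as needed. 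The paper itself is explicit about this dependence: its proof of Proposition~\ref{prop:uniform-bernstein} invokes the $\hat{\sigma}^2$-versus-$\sigma^2$ portion of this lemma as an input to bound $\E[\hat{\beta}^2]$ before running the shell argument. With that reordering, your proposal is a correct and essentially standard reconstruction.
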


Our proof of Proposition~\ref{prop:uniform-bernstein} 
will directly rely on the portion of this lemma relating $\hat{\sigma}^2(f,g)$ and $\sigma^2(f,g)$, 
rather than retracing the proof of this part 
(we include this relation in the statement of Proposition~\ref{prop:uniform-bernstein} merely for the convenience of unifying the events).

We additionally rely on another basic result, 
stated by \citet*{hanneke:22a} 
(as equation 30 therein, specialized to our present context by taking
$\mathcal{W} = \X \times \{0,1\}$, 
$\F = \{ (x,y) \mapsto \ind[f(x) \neq y] - \ind[g(x) \neq y] : f,g \in \H \}$,
and all $\alpha_i=1$, in their notation), 
which follows immediately from a result of 
\citet*{klein:05} (see also Section 12.5 of \citealp*{boucheron:13}),
a variant of a concentration inequality of 
\citet*{bousquet:02} generalized to 
handle independent but non-identically distributed samples.

\begin{lemma}[\citealp*{klein:05,hanneke:22a}]
\label{lem:non-ident-bousqet-inequality}
Consider the same setup as in Proposition~\ref{prop:uniform-bernstein}.
For any $\beta > 0$, 
letting $\F_{\beta} = \{ (f,g) \in \H^2 : n\sigma^2(f,g) \leq \beta^2 \}$
and
$L_{\beta} = n \cdot \sup_{(f,g) \in \F_{\beta}} \left( \left( \hat{\er}_{S}(f) - \hat{\er}_{S}(g) \right) - \left( \er_{\bar{P}}(f) - \er_{\bar{P}}(g) \right) \right)$,
with probability at least $1-\delta$, 
\begin{equation*}
L_{\beta} \leq \E[L_{\beta}] + 24 \max\!\left\{ \sqrt{ \left( \E[L_{\beta}] + \beta^2 \right) \ln\!\left(\frac{1}{\delta}\right) }, \ln\!\left(\frac{1}{\delta}\right) \right\}.
\end{equation*}
\end{lemma}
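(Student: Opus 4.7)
The plan is to view the stated inequality as a direct application of the Klein--Rio concentration inequality for the supremum of an empirical process over independent (not necessarily identically distributed) samples. To set up the reduction, I would first index the process by the pairs $(f,g) \in \F_{\beta}$ and consider the centered functions $\phi_{f,g}(x,y) = \ind[f(x) \neq y] - \ind[g(x) \neq y]$, viewed as functions on $\X \times \{0,1\}$. These are bounded by $1$ in absolute value, and by construction
\[
L_{\beta} = \sup_{(f,g) \in \F_{\beta}} \sum_{i=1}^{n} \bigl(\phi_{f,g}(X_i,Y_i) - \E_{P_i}\phi_{f,g}\bigr),
\]
so $L_{\beta}$ is exactly the centered supremum of the empirical process $\sum_i \phi_{f,g}(X_i,Y_i)$ indexed by $\F_{\beta}$ with independent $(X_i,Y_i) \sim P_i$.

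Next I would bound the ``weak variance'' uniformly over $\F_{\beta}$. Since $\ind[f(x)\neq y] - \ind[g(x)\neq y]$ vanishes whenever $f(x)=g(x)$, we have
\[
\sum_{i=1}^{n} \Var_{P_i}(\phi_{f,g}) \leq \sum_{i=1}^{n} P_i\bigl((x,y): f(x) \neq g(x)\bigr) = n \sigma^2(f,g) \leq \beta^2
\]
for every $(f,g) \in \F_{\beta}$. With the uniform envelope $\|\phi_{f,g}\|_\infty \leq 1$ and variance proxy $\beta^2$ in hand, the Klein--Rio inequality (Theorem 1.1 of \citealp*{klein:05}; also Theorem 12.5 in \citealp*{boucheron:13}) applies to $L_{\beta}$ and yields a bound of the form
\[
L_{\beta} \leq \E[L_{\beta}] + \sqrt{2\bigl(\beta^2 + 2\E[L_{\beta}]\bigr)\ln(1/\delta)} + c\, \ln(1/\delta)
\]
with probability at least $1-\delta$, for an absolute constant $c$ (coming from the sub-exponential tail term in Klein--Rio).

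Finally I would perform the routine bookkeeping to rewrite this bound in the advertised form. Using $\sqrt{a+b} \leq \sqrt{a}+\sqrt{b}$ and $2\sqrt{uv} \leq u+v$, one absorbs the $\sqrt{2\E[L_{\beta}]\ln(1/\delta)}$ contribution into a constant multiple of $\sqrt{(\E[L_{\beta}] + \beta^2)\ln(1/\delta)}$, and uses $\max\{\sqrt{x\ln(1/\delta)},\ln(1/\delta)\}$ to dominate both summands simultaneously. Choosing the numerical constant $24$ then suffices to cover $\sqrt{2}\cdot(\text{Klein--Rio constants}) + c$ in either regime (the $\sqrt{\cdot}$ term dominating versus the linear $\ln(1/\delta)$ term dominating). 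I do not anticipate any real obstacle here: the only ``work'' is verifying that the Klein--Rio statement applies verbatim to the non-i.i.d.\ setting (which is its whole purpose) and that the constants can be consolidated into the single factor $24$ in a clean way that handles both regimes of the maximum.
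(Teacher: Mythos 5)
Your reduction to the Klein--Rio inequality is exactly the justification the paper intends: the lemma is stated as a citation (it is equation 30 of \citealp*{hanneke:22a}, which in turn ``follows immediately'' from \citealp*{klein:05}), so no proof appears in the paper, and your identification of $L_{\beta}$ as a centered empirical-process supremum, the variance bound $\sum_i \Var_{P_i}(\phi_{f,g}) \leq n\sigma^2(f,g) \leq \beta^2$, and the consolidation of constants into the stated $\max$ form are all correct. Nothing further is needed.
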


We now proceed with the proof of Proposition~\ref{prop:uniform-bernstein}.

\begin{proof}[of Proposition~\ref{prop:uniform-bernstein}]
We adapt a well-known technique for obtaining similar inequalities 
in the case of i.i.d.\ samples (moreover, the essential difference compared to 
the proof of \citealp*{hanneke:22a} is merely the use of 
a sharper inequality, based on chaining, to refine the log factor).
The general idea applies a combination of localization \citep*{bartlett:04,bartlett:05,koltchinskii:06}, 
the concentration inequality from Lemma~\ref{lem:non-ident-bousqet-inequality}, 
and an entropy integral bound on the rate of uniform convergence which also accounts for variances of loss differences \citep*{van-der-Vaart:96,gine:06,van-der-Vaart:11}, 
together with well-known bounds on the covering numbers of VC classes \citep*{haussler:95}.

The claimed inequality is vacuously true for $n < e^2 \VC(\H)$ (for $\Czero \geq e^2$)
or $\vc(\H) = 0$ (i.e., $|\H|=1$);
to address the remaining case, for the remainder of the proof 
we suppose $n \geq e^2\VC(\H) > 0$.

Fix any $\delta \in (0,1)$ and any $\beta > 0$, 
let $\delta_{\beta} = \frac{\delta \beta^2}{4 n}$,
and let $L_{\beta}$ be as in Lemma~\ref{lem:non-ident-bousqet-inequality}.
By Lemma~\ref{lem:non-ident-bousqet-inequality}, 
on an event $E_{\beta}$ of probability at least 
$1 - \delta_{\beta}$,
\begin{equation}
\label{eqn:non-ident-bousquet-inequality}
L_{\beta} 
\leq \E[L_{\beta}] + 24 \max\!\left\{ \sqrt{ \left( \E[L_{\beta}] + \beta^2 \right) \ln\!\left(\frac{1}{\delta_{\beta}}\right) }, \ln\!\left(\frac{1}{\delta_{\beta}}\right) \right\}.
\end{equation}

Next we turn to bounding $\E[L_{\beta}]$.
Let $\xi_1,\ldots,\xi_n$ be independent 
$\mathrm{Uniform}(\{-1,1\})$ random variables 
(also independent of $S$).
By the classic \emph{symmetrization} argument (see e.g., Lemma 11.4 of \citealp*{boucheron:13}),
\begin{align}
\label{eqn:symmetrization-pre-chaining}
\E\!\left[ L_{\beta} \right]
\leq 2 \E\!\left[ \sup_{(f,g) \in \F_{\beta}} \sum_{i=1}^{n} \xi_i \left( \ind[ f(X_i) \neq Y_i ] - \ind[ g(X_i) \neq Y_i ] \right) \right]. 
\end{align}

Next we apply an inequality based on the 
classic \emph{chaining} argument: 
specifically, Corollary 2.2.8 of \citet*{van-der-Vaart:96}.
For each $h \in \H$, 
define a random variable 
$Z_h = \sum_{i=1}^{n} \xi_i \ind[ h(X_i) \neq Y_i ]$.
Note that, for any $f,g \in \H$, 
for any $\gamma > 0$, 
since each $\left| \xi_i \left( \ind[ h(X_i) \neq Y_i ] - \ind[ g(X_i) \neq Y_i ] \right) \right| \leq \ind[ f(X_i) \neq g(X_i) ]$, 
Hoeffding's inequality implies that  
\begin{equation*}
\P\!\left( |Z_f - Z_g| > \gamma \Big| S \right)
\leq 2 e^{- \frac{1}{2} \gamma^2 / n \hat{\sigma}^2(f,g)}.
\end{equation*}
Therefore, letting $\hat{D}(\gamma)$ denote the 
\emph{$\gamma$-packing number} of $\H$
(for any $\gamma > 0$) 
under the pseudo-metric $(f,g) \mapsto \sqrt{n \hat{\sigma}^2(f,g)}$, 
Corollary 2.2.8 of \citet*{van-der-Vaart:96} 
implies that, 
letting $\hat{\beta}^2 = \sup_{(f,g) \in \F_{\beta}} n \hat{\sigma}^2(f,g)$, 
\begin{align*}
& \E\!\left[ \sup_{(f,g) \in \F_{\beta}} \sum_{i=1}^{n} \xi_i \left( \ind[ f(X_i) \neq Y_i ] - \ind[ g(X_i) \neq Y_i ] \right) \middle| S \right]
\\ & \leq \E\!\left[ \sup_{(f,g) \in \F_{\beta}} \left| Z_f - Z_g \right| \middle| S \right]
\leq C_1 \int_{0}^{\hat{\beta}} \sqrt{\log \hat{D}(\gamma)} \mathrm{d}\gamma
\end{align*}
for a universal constant $C_1$.
Moreover, \citet*{haussler:95} has shown that,
for any $\gamma > 0$, 
$\hat{D}(\gamma) \leq \left( \frac{n C_2}{\gamma^2} \right)^{\vc(\H)}$
for a universal constant $C_2$,
so that the last expression above is at most
\begin{align*}
C_1 \int_{0}^{\hat{\beta}} \sqrt{\vc(\H) \log\!\left(\frac{n C_2}{\gamma^2}\right)} \mathrm{d}\gamma
\leq C_3 \sqrt{ \hat{\beta}^2 \cdot \vc(\H) \log\!\left( \frac{n}{\hat{\beta}^2} \right) }
\end{align*}
for a universal constant $C_3$.

Together with \eqref{eqn:symmetrization-pre-chaining} and the law of total expectation, 
we have that
\begin{equation}
\label{eqn:Lbeta-bound-1}
\E\!\left[ L_{\beta} \right] 
\leq 2 C_3 \E\!\left[ \sqrt{ \hat{\beta}^2 \cdot \vc(\H) \log\!\left( \frac{n}{\hat{\beta}^2} \right) } \right]
\leq 2C_3 \sqrt{ \E\!\left[\hat{\beta}^2\right] \cdot \vc(\H) \log\!\left( \frac{n}{\E\!\left[\hat{\beta}^2\right]} \right) },
\end{equation}
where the second inequality is due to 
Jensen's inequality
(noting that $x \mapsto \sqrt{x \log(1/x)}$ is concave).

Our next step is to bound $\E\!\left[\hat{\beta}^2\right]$.
A bound sufficient for our purposes 
follows directly from Lemma~\ref{lem:uniform-bernstein-with-worse-log-factor}.
Specifically, since any $(f,g) \in \F_{\beta}$ 
have $n\sigma^2(f,g) \leq \beta^2$, 
Lemma~\ref{lem:uniform-bernstein-with-worse-log-factor}
implies that,
for any $\delta' \in (0,e^{-1})$, 
with probability at least
$1-\delta'$, 
\begin{equation*}
\frac{\hat{\beta}^2}{n}
= \sup_{(f,g) \in \F_{\beta}} \hat{\sigma}^2(f,g)
\leq \sup_{(f,g) \in \F_{\beta}} 2 \sigma^2(f,g) + \HKCzero \epsilon^2(n,\delta')
\leq \frac{2 \beta^2}{n} + \HKCzero \epsilon^2(n,\delta').
\end{equation*}
Since this holds for any choice of $\delta' \in (0,e^{-1})$, 
it can be stated equivalently as a tail bound:
namely, for any $\gamma > n\epsilon^2(n,e^{-1}) = \vc(\H) \log\!\left(\frac{n}{\vc(\H)}\right)+1$, 
\begin{equation*}
\P\!\left( \hat{\beta}^2 > 2 \beta^2 + \gamma \right) 
\leq \exp\!\left\{\vc(\H) \log\!\left(\frac{n}{\vc(\H)}\right) - \gamma \right\}
= \left(\frac{n}{\vc(\H)}\right)^{\vc(\H)} e^{ - \gamma }.
\end{equation*}
Thus, we have
\begin{align*}
\E\!\left[ \hat{\beta}^2 \right]
& = \int_{0}^{\infty} \P\!\left( \hat{\beta^2} > \tau \right) \mathrm{d}\tau
\\ & \leq 2\beta^2 + n\epsilon^2(n,e^{-1}) + \int_{n\epsilon^2(n,e^{-1})}^{\infty} \P\!\left( \hat{\beta}^2 > 2\beta^2 + \gamma \right) \mathrm{d}\gamma
\\ & \leq 2\beta^2 + n\epsilon^2(n,e^{-1}) + \int_{n\epsilon^2(n,e^{-1})}^{\infty} \left(\frac{n}{\vc(\H)}\right)^{\vc(\H)} e^{-\gamma} \mathrm{d}\gamma
\\ & = 2\beta^2 + n\epsilon^2(n,e^{-1}) + e^{-1}.
\end{align*}
Denoting by $\epsilon^2_n = \frac{\vc(\H)}{n}\log\!\left(\frac{n}{\vc(\H)}\right)$, 
we may further note that for $\beta^2 \geq (1/2) n \epsilon^2_n$, 
$2\beta^2 + n\epsilon^2(n,e^{-1}) + e^{-1} \leq 6\beta^2$.

Plugging this back into \eqref{eqn:Lbeta-bound-1}, 
together with monotonicity of $x \mapsto x \log(1/x)$, 
yields that for $\beta^2 \geq (1/2) n \epsilon^2_n$, 
\begin{equation}
\label{eqn:ELbeta-bound}
\E\!\left[ L_{\beta} \right] 
\leq 2C_3 \sqrt{ 6 \beta^2 \cdot \vc(\H) \log\!\left( \frac{n}{6 \beta^2} \right) }.
\end{equation}
This bound will be used to bound the additive $\E\!\left[ L_{\beta} \right]$
term in \eqref{eqn:non-ident-bousquet-inequality}.
For the $\E\!\left[ L_{\beta} \right]$ term inside the square root
in \eqref{eqn:non-ident-bousquet-inequality}, 
a simpler coarse bound will suffice: 
namely, for $\beta^2 \geq (1/2) n \epsilon^2_n$, 
the expression on the right hand side of \eqref{eqn:ELbeta-bound} 
is at most
\begin{equation*}
2C_3 \sqrt{ 6 \beta^2 \cdot n \epsilon^2_n }
\leq C_3 \sqrt{48} \beta^2.
\end{equation*}
Plugging these into \eqref{eqn:non-ident-bousquet-inequality} yields that,
on the event $E_{\beta}$, 
\begin{align*}
L_{\beta} 
& \leq 2C_3 \sqrt{ 6 \beta^2 \cdot \vc(\H) \log\!\left( \frac{n}{6 \beta^2} \right) }
+ 24 \max\!\left\{ \sqrt{ \left( C_3 \sqrt{48} + 1 \right) \beta^2 \cdot \ln\!\left(\frac{1}{\delta_{\beta}}\right) }, \ln\!\left(\frac{1}{\delta_{\beta}}\right) \right\}
\\ & \leq C_4 \sqrt{ \beta^2 \cdot \left( \vc(\H) \log\!\left(\frac{n}{\beta^2}\right) + \ln\!\left(\frac{1}{\delta_{\beta}}\right) \right) } + C_4 \ln\!\left(\frac{1}{\delta_{\beta}}\right)
\end{align*}
for a universal constant $C_4$.
Recalling the definition 
$\delta_{\beta} = \frac{\delta \beta^2}{4 n}$,
this further implies that for 
$\beta^2 \geq (1/2) n \epsilon^2_n$, 
on the event $E_{\beta}$, 
\begin{equation}
\label{eqn:final-Lbeta-bound}
L_{\beta} 
\leq C_5 \sqrt{ \beta^2 \cdot \left( \vc(\H) \log\!\left(\frac{n}{\beta^2}\right) + \log\!\left(\frac{1}{\delta}\right) \right) } + C_5 n \epsilon^2(n,\delta)
\end{equation}
for a universal constant $C_5$.

Let $B_n = [(1/2) n \epsilon^2_n, n] \cap \{ n \cdot 2^{1-k} : k \in \nats \}$,
and note that $B_n \neq \emptyset$ by the assumption $n \geq e^2 \vc(\H)$.
Define an event $E = \bigcap_{\beta^2 \in B_n} E_{\beta}$.
By the union bound, the event $E$ has probability at least 
$1 - \sum_{\beta^2 \in B_n} \delta_{\beta}
\geq 1 - \sum_{k \in \nats} \frac{\delta}{4} 2^{1-k} = 1 - \frac{\delta}{2}$.

Consider any $f,g \in \H$,
and let $\beta^2(f,g) = \min\!\left\{ \beta^2 \in B_n : n \sigma^2(f,g) \leq \beta^2 \right\}$ (which is well-defined since $n\sigma^2(f,g) \leq n \in B_n$).
In particular, we have $(f,g),(g,f) \in \F_{\beta(f,g)}$,
so that 
\begin{equation} 
\label{eqn:diff-of-diff-bounded-by-Lbeta}
\left| \left( \hat{\er}_{S}(f) - \hat{\er}_{S}(g) \right) - \left( \er_{\bar{P}}(f) - \er_{\bar{P}}(g) \right) \right|
\leq \frac{1}{n} L_{\beta(f,g)}.
\end{equation}
Thus, on the event $E$, 
\eqref{eqn:final-Lbeta-bound} and \eqref{eqn:diff-of-diff-bounded-by-Lbeta} imply
\begin{align} 
& \left| \left( \hat{\er}_{S}(f) - \hat{\er}_{S}(g) \right) - \left( \er_{\bar{P}}(f) - \er_{\bar{P}}(g) \right) \right|
\notag \\ & \leq C_5 \sqrt{ \beta^2(f,g) \cdot \frac{1}{n^2} \left( \vc(\H) \log\!\left(\frac{n}{\beta^2(f,g)}\right) + \log\!\left(\frac{1}{\delta}\right) \right) } + C_5 \cdot \epsilon^2(n,\delta). \label{eqn:er-diff-bound-with-beta-f-g}
\end{align}

We consider two cases.
First, if $\beta^2(f,g) = \min B_n$, 
then $n \sigma^2(f,g) \leq \beta^2(f,g) \leq n \epsilon^2_n$,
so that (by monotonicity of $x \mapsto x \log(1/x)$ for $x > 0$) 
the expression in \eqref{eqn:er-diff-bound-with-beta-f-g} is at most
\begin{equation*}
C_5 \sqrt{ \epsilon^2_n \cdot \frac{1}{n} \left( \vc(\H) \log\!\left(\frac{1}{\epsilon^2_n}\right) + \log\!\left(\frac{1}{\delta}\right) \right) } + C_5 \cdot \epsilon^2(n,\delta)
\leq 2 C_5 \cdot \epsilon^2(n,\delta).
\end{equation*}
For the remaining case, if $\beta^2(f,g) > \min B_n$,
then $n \sigma^2(f,g) \leq \beta^2(f,g) \leq 2 n \sigma^2(f,g)$,
so that the expression in \eqref{eqn:er-diff-bound-with-beta-f-g} is at most
\begin{align*}
& C_5 \sqrt{ 2 \sigma^2(f,g) \cdot \frac{1}{n} \left( \vc(\H) \log\!\left(\frac{1}{2\sigma^2(f,g)}\right) + \log\!\left(\frac{1}{\delta}\right) \right) } + C_5 \cdot \epsilon^2(n,\delta).
\end{align*}
Since this second case, where $\beta^2(f,g) > \min B_n$, necessarily has 
$\sigma^2(f,g) > (1/2) \epsilon^2_n > \frac{\vc(\H)}{2n}$,
we have in both cases that, on the event $E$, 
the expression in \eqref{eqn:er-diff-bound-with-beta-f-g} is at most
\begin{equation}
\label{eqn:sigma-er-diff-bound-in-proof}
C_5 \sqrt{ 2 \sigma^2(f,g) \cdot \frac{1}{n} \left( \vc(\H) \log\!\left(\frac{1}{\sigma^2(f,g)} \land \frac{n}{\vc(\H)} \right) + \log\!\left(\frac{1}{\delta}\right) \right) } + 2 C_5 \cdot \epsilon^2(n,\delta).
\end{equation}

It remains only to replace $\sigma^2(f,g)$ with $\tilde{\sigma}^2(f,g)$
in the above inequality.
For this, we again rely on Lemma~\ref{lem:uniform-bernstein-with-worse-log-factor}, which, noting that 
$\epsilon^2\!\left(n,\frac{\delta}{2}\right) \leq 2 \epsilon^2(n,\delta)$,
guarantees that, 
on an event $E'$ of probability at least $1 - \frac{\delta}{2}$, 
every $f,g \in \H$ satisfy 
\begin{equation}
\label{eqn:diff-chernoff-in-proof}
\frac{1}{2} \sigma^2(f,g) - 2\HKCzero \epsilon^2(n,\delta)
\leq \hat{\sigma}^2(f,g) 
\leq 2 \sigma^2(f,g) + 2\HKCzero \epsilon^2(n,\delta).
\end{equation}
In particular, this also implies 
\begin{equation*}
\sigma^2(f,g) 
\leq 2 \hat{\sigma}^2(f,g) + 4 \HKCzero \epsilon^2(n,\delta)
\leq \max\!\left\{ 4 \hat{\sigma}^2(f,g), 8 \HKCzero \epsilon^2(n,\delta)\right\}.
\end{equation*}
On this event, by monotonicity of $x \mapsto x \log(1/x)$, 
the expression in \eqref{eqn:sigma-er-diff-bound-in-proof}
is at most
\begin{align}
& C_5 \sqrt{ \left( 8 \hat{\sigma}^2(f,g) \!\lor\! 16 \HKCzero \epsilon^2(n,\delta) \right) \!\frac{1}{n}\! \left( \vc(\H) \log\!\left(\frac{1}{4 \hat{\sigma}^2(f,g) \!\lor\! 8 \HKCzero \epsilon^2(n,\delta)} \right) \!\!+\! \log\!\left(\frac{1}{\delta}\right) \!\right) } \!+\! 2 C_5 \epsilon^2(n,\delta)
\notag \\ & \leq C_5 \sqrt{ 8 \hat{\sigma}^2(f,g) \cdot \frac{1}{n} \left( \vc(\H) \log\!\left(\frac{1}{\hat{\sigma}^2(f,g)} \land \frac{n}{\vc(\H)} \right) + \log\!\left(\frac{1}{\delta}\right) \right) } 
+ \left( 8 \sqrt{ \HKCzero } \!+\! 2 \right) C_5 \epsilon^2(n,\delta).
\label{eqn:hat-sigma-er-diff-bound-in-proof}
\end{align}

By the union bound, the event $E \cap E'$ has probability at least $1-\delta$.
Since we have established that, 
on the event $E \cap E'$, 
every $f,g \in \H$ satisfy 
\eqref{eqn:diff-chernoff-in-proof}
and also satisfy that 
$\left| \left( \hat{\er}_{S}(f) - \hat{\er}_{S}(g) \right) - \left( \er_{\bar{P}}(f) - \er_{\bar{P}}(g) \right) \right|$
is upper bounded by both 
\eqref{eqn:sigma-er-diff-bound-in-proof}
and \eqref{eqn:hat-sigma-er-diff-bound-in-proof},
the result follows by defining 
$\Czero = \max\!\left\{ \left( 8\sqrt{\HKCzero}+2 \right) C_5, 2\HKCzero \right\}$.
\end{proof}

\subsubsection{Detailed Proof of Lemma~\ref{lem:partial-VC-super-root}}
\label{sec:proof-of-lemma-partial-VC-super-root}

Throughout this subsection, we let $\G$ be any partial concept class with $\vc(\G) < \infty$, 
let $\PXY$ be any distribution on $\X \times \{0,1\}$ 
which is Bayes-realizable with respect to $\G$, 
and let $(X_1,Y_1),(X_2,Y_2),\ldots$ be i.i.d.\ $\PXY$-distributed random variables.
For any $t \in \nats$, 
and for $g = (g(X_1),\ldots,g(X_t)) \in \{0,1\}^{t'}$ 
and $g' = (g'(X_1),\ldots,g'(X_{t'})) \in \{0,1\}^{t''}$ (for $t',t'' \geq t$), 
define 
$\hat{\er}_{t}(g) := \frac{1}{t} \sum_{i=1}^{t} \ind[ Y_i \neq g(X_i) ]$
and 
$\ber_t(g) = \frac{1}{t} \sum_{i=1}^{t} \P(Y_i \neq g(X_i) | X_i)$ (the \emph{semi-empirical} error rate) 
and 
$P_t( g \neq g' ) = \frac{1}{t} \sum_{i=1}^{t} \ind[ g(X_i) \neq g'(X_i) ]$.
To be clear, in the context of a $
(X_{\leq t},Y_{\leq t})$-dependent random $\{0,1\}^t$-valued $\hat{g}$,
we define $\ber_t(\hat{g}) = \frac{1}{t} \sum_{i=1}^{t} \P(Y'_i \neq \hat{g}(X_i) | X_i, \hat{g})$
where $Y'_i$ is conditionally independent of $(X_{\leq t},Y_{\leq t})$
given $X_i$, with conditional distribution given $X_i$ identical to that of $Y_i$.
Also define the \emph{Bayes classifier}
$x \mapsto \target(x) := \ind\!\left[ \P(Y=1|X=x) \geq \frac{1}{2} \right]$ 
(for $(X,Y) \sim \PXY$ independent of the $(X_i,Y_i)$ sequence), 
and define 
$\ber_t(\target_{\PXY}) = \frac{1}{t} \sum_{i=1}^{t} \P(Y_i \neq \target_{\PXY}(X_i) | X_i) = \frac{1}{t} \sum_{i=1}^{t} \min_{y \in \{0,1\}} \P(Y_i = y | X_i)$.

Many of the results below will rely on the basic event 
implied by the Bayes-realizability assumption.
Formally, for any $t \in \nats$, we let $E_t^\star$ 
denote the event of probability one (by Bayes-realizability) that 
\begin{equation}
\label{eqn:Bayes-realizable-g-star}
\exists g_t^\star \in \G(X_{\leq t}) \text{ s.t. } \forall i \leq t, \P( Y_i = g_n^\star(X_i) | X_i ) = \P( Y_i = \target_{\PXY}(X_i) | X_i ).
\end{equation}
Note that this event depends only on the 
\emph{set} of values $\{ X_i : i \leq t \}$ (not their order in $X_{\leq t}$).

We remark that Lemma~\ref{lem:partial-VC-super-root} holds rather trivially 
in the case of $\vc(\G) = 0$.
For simplicity, in all remaining lemmas in this subsection,  
we will suppose $\vc(\G) \geq 1$.
The proof of Lemma~\ref{lem:partial-VC-super-root} will address the 
general case of any $\vc(\G) \in \nats \cup \{0\}$.

The proof of Lemma~\ref{lem:partial-VC-super-root} 
relies on several lemmas concerning concentration of empirical and semi-empirical errors (Lemma~\ref{lem:transductive-uniform-bernstein}),
and concentration of semi-empirical errors for different sample sizes
(Lemmas~\ref{lem:transductive-ber-multiplicative-chernoff} and \ref{lem:transductive-ber-uniform-convergence}).
We remark that, for these lemmas and many others below, 
it in fact suffices for the $X_{i}$ sequence to 
merely be \emph{exchangeable}, 
with the $Y_i$ variables conditionally independent 
given the respective $X_i$ variables (and supposing 
an appropriate analogue of Bayes-realizability).
For instance, for the lemmas only involving finite 
prefixes of the data sequence, 
it would suffice for the $X_i$'s to be a 
uniform random \emph{permutation} of some fixed sequence 
(as is commonly studied in the literature on \emph{transductive}
learning).
For simplicity, we do not discuss the formal details of this 
generalization of the results.

We begin with the following lemma, 
representing a \emph{uniform Bernstein inequality} 
for the empirical and semi-empirical error rates.

\begin{lemma}
\label{lem:transductive-uniform-bernstein}
Fix any $n \in \nats$ and $\delta \in (0,1)$.
With conditional probability at least $1-\delta$
given $X_{\leq n}$, 
every $f,g \in \G(X_{\leq n})$ satisfy 
\begin{align*}
    & \left| \left( \ber_n(f) - \ber_n(g) \right) - \left( \hat{\er}_{n}(f) - \hat{\er}_{n}(g) \right) \right| 
    \\ & \leq \sqrt{P_n( f \neq g ) \frac{\Czero}{n}\!\left( \VC(\G) \log\!\left(\frac{1}{P_n( f \neq g )} \land \frac{n}{\VC(\G)} \right) + \log\!\left(\frac{1}{\delta}\right) \right)}
    + \Czero \epsilon^2(n,\delta), 
\end{align*}
    where $\Czero$ is the universal constant from Proposition~\ref{prop:uniform-bernstein}.
\end{lemma}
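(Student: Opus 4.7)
The plan is to apply Proposition~\ref{prop:uniform-bernstein} under the conditional distribution given $X_{\leq n}$. Conditioning on $X_{\leq n}$, the labels $Y_1, \ldots, Y_n$ are independent (since $Y_i$ depends only on $X_i$), and each $Y_i$ has a Bernoulli distribution with parameter $\P(Y_i = 1 \mid X_i)$. So, conditionally, the data is a sequence of independent but non-identically-distributed samples, which is exactly the setting of Proposition~\ref{prop:uniform-bernstein}.

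The next step is to identify the partial concept class $\G$, as it acts on $X_{\leq n}$, with a total concept class on the ``instance space'' $\{X_1,\ldots,X_n\}$ so that Proposition~\ref{prop:uniform-bernstein} applies. Concretely, let $\H$ be the set $\G(X_{\leq n}) \subseteq \{0,1\}^n$, viewed as a total concept class on $n$ points (identifying each vector in $\G(X_{\leq n})$ with a $\{0,1\}$-valued function on $\{X_1,\ldots,X_n\}$). Since any subset of $\{X_1,\ldots,X_n\}$ shattered by $\H$ is also shattered by $\G$ (one simply lifts the witnessing patterns in $\H$ back to witnessing partial concepts in $\G$), we have $\vc(\H) \leq \vc(\G)$. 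The factor $\epsilon^2(n,\delta)$ in Proposition~\ref{prop:uniform-bernstein} is monotone in $\vc(\H)$, so replacing $\vc(\H)$ by the upper bound $\vc(\G)$ preserves all stated inequalities.

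Now match up the remaining quantities. The conditional distribution $P_i$ of $(X_i,Y_i)$ given $X_{\leq n}$ is a point mass at $X_i$ in the first coordinate with the appropriate Bernoulli label, so the mixture $\bar P = \tfrac{1}{n}\sum_i P_i$ has marginal on $\X$ equal to the empirical distribution on $X_{\leq n}$. Thus for any $f, g \in \H$,
\begin{equation*}
\sigma^2(f,g) \;=\; \bar P((x,y) : f(x) \neq g(x)) \;=\; \tfrac{1}{n}\sum_{i=1}^n \ind[f(X_i) \neq g(X_i)] \;=\; P_n(f \neq g),
\end{equation*}
and identically $\hat\sigma^2(f,g) = P_n(f \neq g)$, so $\tilde\sigma^2(f,g) = P_n(f \neq g)$. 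Moreover, $\er_{\bar P}(g) = \tfrac{1}{n}\sum_i \P(Y_i \neq g(X_i) \mid X_i) = \ber_n(g)$ and $\hat\er_S(g) = \hat\er_n(g)$. Plugging these identifications into the conclusion of Proposition~\ref{prop:uniform-bernstein} yields exactly the displayed inequality. The result then holds under the conditional distribution given $X_{\leq n}$, with conditional probability at least $1 - \delta$, which is what the lemma asserts.

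The one minor technical point worth checking is measurability: we should verify that the set $\G(X_{\leq n})$ gives rise to a measurable-in-$X_{\leq n}$ family of total concept classes, so that the conditional statement is well-defined. This is immediate from the universal measurability of $\G$ assumed at the outset. There is no real obstacle here: the proof is essentially a specialization of Proposition~\ref{prop:uniform-bernstein} to the conditional distribution, together with the trivial observation that $\sigma^2$ and $\hat\sigma^2$ coincide when the marginal on $\X$ is the empirical distribution.
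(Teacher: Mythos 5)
Your proposal is correct and follows essentially the same approach as the paper: condition on $X_{\leq n}$, note the labels become independent but non-identically distributed, and apply Proposition~\ref{prop:uniform-bernstein} to $\G(X_{\leq n})$ viewed as a finite (hence trivially measurable) total concept class on $\{X_1,\ldots,X_n\}$, identifying $\er_{\bar P}(g) = \ber_n(g)$ and $\sigma^2(f,g)=\hat\sigma^2(f,g)=P_n(f\neq g)$. The paper's own proof is terser (it simply invokes Proposition~\ref{prop:uniform-bernstein} after noting the conditional product structure), so your writeup is a more explicit elaboration of the same argument; the only slight imprecision is that you should say the \emph{entire} right-hand side of the bound, not just the $\epsilon^2(n,\delta)$ term, is monotone in $\vc(\H)$, but this is true (since $x\mapsto x\log(n/x)$ and $x\mapsto x\log\!\left(\frac{1}{\tilde\sigma^2}\land\frac{n}{x}\right)$ are both nondecreasing under the paper's convention $\log(x)=\ln(\max\{x,e\})$) and the replacement of $\vc(\H)$ by $\vc(\G)$ is valid.
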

\begin{proof}
For each $i \in \{1,\ldots,n\}$, 
let $P_i$ denote the conditional distribution of $(X_i,Y_i)$ given $X_i$.
Note that, given $X_{\leq n}$,  
$(X_{\leq n},Y_{\leq n})$ has conditional distribution 
$P_1 \times \cdots \times P_n$.
Applying Proposition~\ref{prop:uniform-bernstein}
yields the claimed inequality, holding simultaneously 
for all $f,g \in \G(X_{\leq n})$, 
with conditional probability at least $1-\delta$ 
given $X_{\leq n}$.
\end{proof}

For any $n \in \nats$, define
\begin{equation*}
\epsilon_n := \sqrt{\frac{\vc(\G)}{n} \log\!\left(\frac{n}{\vc(\G)}\right)}.
\end{equation*}

The following lemma is an immediate consequence of 
Lemma~\ref{lem:transductive-uniform-bernstein}.

\begin{lemma}
\label{lem:transductive-erm-ber-excess-bound}
For 
$n \in \nats$, 
letting $m = \lceil n/2 \rceil$
and $\hat{g}_n = \TERM^{\G}_n(X_{\leq m},Y_{\leq m},X_{m+1},\ldots,X_{n+1})$,
on the event $E_{n+1}^\star$, 
with conditional probability at least $1-\frac{1}{m}$ given $X_{\leq n+1}$,
\begin{equation*}
\ber_{m}(\hat{g}_n) - \ber_{m}(\target_{\PXY}) \leq \Cone \epsilon_{m},    
\end{equation*}
where 
$\Cone = 2\sqrt{2 \Czero}$
is a universal constant.
\end{lemma}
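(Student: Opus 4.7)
The plan is to use the Bayes-realizability event $E_{n+1}^\star$ to produce a comparator classification in $\G(X_{\leq m})$ with semi-empirical error exactly $\ber_m(\target_{\PXY})$, and then apply the uniform Bernstein inequality of Lemma~\ref{lem:transductive-uniform-bernstein} at sample size $m$ to control the gap between semi-empirical and empirical error differences. The ERM property of $\TERM_n^{\G}$ then absorbs the empirical-error difference, yielding the claimed bound.

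More precisely, on $E_{n+1}^\star$ there exists $g_{n+1}^\star \in \G(X_{\leq n+1})$ with $\P(Y_i = g_{n+1}^\star(X_i) \mid X_i) = \P(Y_i = \target_{\PXY}(X_i) \mid X_i)$ for all $i \leq n+1$; in particular its projection to $X_{\leq m}$ lies in $\G(X_{\leq m})$ and satisfies $\ber_m(g_{n+1}^\star) = \ber_m(\target_{\PXY})$. Since the full vector $(g_{n+1}^\star(X_1), \ldots, g_{n+1}^\star(X_{n+1}))$ is a candidate in the minimization defining $\TERM_n^{\G}$, we have $\hat{\er}_m(\hat{g}_n) \leq \hat{\er}_m(g_{n+1}^\star)$. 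Likewise $\hat{g}_n$ restricted to $X_{\leq m}$ is an element of $\G(X_{\leq m})$.

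Next I would apply Lemma~\ref{lem:transductive-uniform-bernstein} with sample size $m$ and $\delta = 1/m$: with conditional probability at least $1 - 1/m$ given $X_{\leq m}$, every pair in $\G(X_{\leq m})$ satisfies the uniform Bernstein inequality. Since $\hat{g}_n$ and $g_{n+1}^\star$ depend on $Y_{\leq m}$ only through training on $(X_{\leq m}, Y_{\leq m})$, and $Y_{\leq m}$ is conditionally independent of $X_{m+1}, \ldots, X_{n+1}$ given $X_{\leq m}$, the bound of probability $1-1/m$ transfers to the conditional distribution given $X_{\leq n+1}$. Applying the uniform Bernstein bound to $f = \hat{g}_n$ and $g = g_{n+1}^\star$ (projected to $X_{\leq m}$), and using $\hat{\er}_m(\hat{g}_n) - \hat{\er}_m(g_{n+1}^\star) \leq 0$, we obtain
\[
\ber_m(\hat{g}_n) - \ber_m(\target_{\PXY}) \leq \sqrt{P_m(\hat{g}_n \neq g_{n+1}^\star) \frac{\Czero}{m}\!\left(\VC(\G)\log\!\left(\tfrac{1}{P_m(\hat{g}_n \neq g_{n+1}^\star)} \land \tfrac{m}{\VC(\G)}\right) + \log m\right)} + \Czero\, \epsilon^2(m, 1/m).
\]

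Finally I would crudely bound $P_m(\hat{g}_n \neq g_{n+1}^\star) \leq 1$ (so the logarithm collapses to $\log(m/\VC(\G))$) and use $\epsilon^2(m,1/m) = \tfrac{1}{m}(\VC(\G)\log(m/\VC(\G)) + \log m)$ together with the convention $\log(x) = \ln(\max\{x,e\})$ (which keeps $\epsilon_m \geq 1$ whenever $m \leq \VC(\G)$, making the claim trivial in that range). For $m > \VC(\G)$ the relation $\log m \leq \VC(\G) \log(m/\VC(\G))$ gives $\epsilon^2(m,1/m) \leq 2 \epsilon_m^2$, hence $\sqrt{\Czero}\,\epsilon(m,1/m) + \Czero\,\epsilon^2(m,1/m) \leq 2\sqrt{2\Czero}\,\epsilon_m = \Cone\,\epsilon_m$. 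I expect no serious obstacle here beyond the bookkeeping needed to verify the constant $\Cone = 2\sqrt{2\Czero}$ over the relevant ranges of $m$ and $\VC(\G)$; the main conceptual content is the use of the Bayes-realizable witness $g_{n+1}^\star$ as a fixed comparator in $\G(X_{\leq m})$ against which ERM can be evaluated via the non-identically-distributed uniform Bernstein inequality.
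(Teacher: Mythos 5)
Your proposal is correct and takes essentially the same approach as the paper's own proof: use the Bayes-realizable witness $g_{n+1}^\star$ (whose projection to $X_{\leq m}$ lies in $\G(X_{\leq m})$ and has $\ber_m(g_{n+1}^\star) = \ber_m(\target_{\PXY})$) as the comparator, absorb $\hat{\er}_m(\hat{g}_n) - \hat{\er}_m(g_{n+1}^\star) \leq 0$ via the ERM property of $\TERM_n^{\G}$, apply Lemma~\ref{lem:transductive-uniform-bernstein} at sample size $m$ with $\delta = 1/m$, and crudely bound $P_m(\hat{g}_n \neq g_{n+1}^\star)$ to obtain $\Cone \epsilon_m$. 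The paper uses the slightly tighter step $P_m \log(1/P_m) \leq 1$ where you bound the log factor by $\log(m/\vc(\G))$, but both chains produce $\sqrt{2\Czero}\epsilon_m + 2\Czero\epsilon_m^2$ and the same constant $\Cone = 2\sqrt{2\Czero}$ after the case split on whether $2\Czero\epsilon_m^2 \leq 1$.
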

\begin{proof}
Supposing the event $E_{n+1}^\star$, 
let $g^\star = g_{n+1}^\star$ for $g_{n+1}^\star$ as in \eqref{eqn:Bayes-realizable-g-star}.
In particular, this implies $\G(X_{\leq n+1}) \neq \emptyset$
and moreover that 
$g^\star(X_i) = \target_{\PXY}(X_i)$ for each $i \leq n+1$ with $\P(Y_i = 1 | X_i) \neq \frac{1}{2}$, 
whereas for those $i \leq n+1$ with $\P(Y_i = 1 | X_i)$, we have $\P(Y_i \neq g^\star(X_i) | X_i) = \frac{1}{2} = \P(Y_i \neq \target_{\PXY}(X_i) | X_i)$.
In particular, these observations imply 
$\ber_{m}(g^\star) = \ber_{m}(\target_{\PXY})$.
Noting that 
$\hat{g}_n(X_{\leq m}), g^\star(X_{\leq m}) \in \G(X_{\leq m})$, 
Lemma~\ref{lem:transductive-uniform-bernstein} (with $\delta = \frac{1}{m}$)
implies that with conditional probability at least $1 - \frac{1}{m}$ given $X_{\leq n+1}$, 
\begin{align*}
& \ber_m(\hat{g}_n) - \ber_m(\target_{\PXY})
= \ber_m(\hat{g}_n) - \ber_m(g^\star)
\\ & \leq \hat{\er}_{m}(\hat{g}_n) - \hat{\er}_{m}(g^\star) + 
\sqrt{P_m( \hat{g}_n \neq g^\star ) \frac{\Czero}{m} \left( \vc(\G) \log\!\left( \frac{1}{P_m( \hat{g}_n \neq g^\star )} \land \frac{m}{\vc(\G)} \right) + \log(m) \right) }
\\ & {\hskip 12mm}+ \frac{\Czero}{m} \left( \vc(\G) \log\!\left( \frac{m}{\vc(\G)} \right) + \log(m) \right).
\end{align*}
Since both $\hat{g}_n, g^\star \in \G(X_{\leq n+1})$ 
and (by definition) $\hat{g}_n = \argmin_{g \in \G(X_{\leq n+1})} \hat{\er}_{m}(g)$, 
we have that 
$\hat{\er}_{m}(\hat{g}_n) - \hat{\er}_{m}(g^\star) \leq 0$.
We can also coarsely bound $P_m( \hat{g}_n \neq g^\star ) \leq 1$, which moreover implies 
$P_m( \hat{g}_n \neq g^\star ) \log\!\left( \frac{1}{P_m( \hat{g}_n \neq g^\star )} \right) \leq 1$
(noting that $x \mapsto x\log(1/x) = x \ln((1/x) \lor e)$ is increasing in $x > 0$).
Altogether, on the above event, we have
\begin{align*}
& \ber_m(\hat{g}_n) - \ber_m(\target_{\PXY})
\leq \sqrt{\frac{\Czero}{m} \left( \vc(\G) + \log(m) \right) }
+ \frac{\Czero}{m} \left( \vc(\G) \log\!\left( \frac{m}{\vc(\G)} \right) + \log(m) \right)
\\ & \leq \sqrt{\frac{2\Czero \vc(\G)}{m} \log\!\left(\frac{m}{\vc(\G)}\right) }
+ \frac{2\Czero \vc(\G)}{m} \log\!\left( \frac{m}{\vc(\G)} \right)
= \sqrt{2\Czero} \epsilon_m + 2\Czero \epsilon_m^2.
\end{align*}
If $2\Czero \epsilon_m^2 \leq 1$, 
taking square root of the last term above yields  
\begin{equation*}
\ber_m(\hat{g}_n) - \ber_m(\target_{\PXY})
\leq 2 \sqrt{2 \Czero} \epsilon_m,
\end{equation*}
whereas if $2 \Czero \epsilon_m^2 > 1$, 
then the above inequality trivially holds
since $\ber_m(\hat{g}_n) - \ber_m(\target_{\PXY}) \leq 1$.
The result thus follows, defining $\Cone = 2\sqrt{2 \Czero}$.
\end{proof}

We next state several lemmas based on the observation that, 
for any $n,m \in \nats$ with $m \leq n$, 
by exchangeability
the prefix $X_{\leq m}$ may be viewed as a uniform random 
sample (without replacement) from the $n$-element 
\emph{multiset} of values $X_i$, $i \in \{1,\ldots,n\}$.
We denote by $\lbag X_{\leq n} \rbag$ this multiset of values.
Moreover, we may note that the set $\G(X_{\leq n})$ is 
(up to permutations of indices) only dependent on 
this multiset $\lbag X_{\leq n} \rbag$, not the particular order in which these elements occur in the sequence $X_{\leq n}$.
To formalize the arguments based on these observations,
it will be convenient to introduce notation 
to study random \emph{permutations} of 
the sequence $X_{\leq n}$.

Toward this end, for any $n \in \nats$ denote by $[n] = \{1,\ldots,n\}$
and let $\Sym(n)$ denote the set of all bijections $\pi : [n] \to [n]$ (i.e., the set of all permutations of $\{1,\ldots,n\}$).
For each $\pi \in \Sym(n)$,
for any $m \in [n]$, 
denote by $\pi([m]) = (\pi(1),\ldots,\pi(m))$, 
and similarly 
$X_{\pi([m])} = (X_{\pi(1)},\ldots,X_{\pi(m)})$.
As mentioned, we will rely on the observation 
that $\G(X_{\pi([n])}) = \{ g(X_{\pi([n])}) : g \in \G(X_{\leq n}) \}$, 
where $g(X_{\pi([n])}) := (g(X_{\pi(1)}),\ldots,g(X_{\pi(n)}))$: 
that is, the sets of 
$n$-element \emph{multisets} $\lbag (X_{\leq n},g(X_{\leq n})) \rbag$ of classifications 
are the same for $\G(X_{\leq n})$ and $\G(X_{\pi([n])})$.


We now state the lemmas based on these ideas.

\begin{lemma}
\label{lem:transductive-ber-multiplicative-chernoff}
For any $n,m \in \nats$ with $n \geq m$,
for any $\delta \in (0,1)$,
with conditional probability at least $1-\delta$ given $\lbag X_{\leq n} \rbag$, 
every $g \in \G(X_{\leq n})$ satisfies 
\begin{align*}
& \ber_{n}(g) - \ber_{n}(\target_{\PXY})
\leq 2 \left( \ber_{m}(g) - \ber_{m}(\target_{\PXY}) \right)
+ \frac{\Ctwo}{m} \left( \vc(\G) \log\!\left(\frac{n}{\vc(\G)}\right) + \log\!\left(\frac{1}{\delta}\right) \right)
\\ \text{and } & \ber_{m}(g) - \ber_{m}(\target_{\PXY})
\leq 2 \left( \ber_{n}(g) - \ber_{n}(\target_{\PXY}) \right)
+ \frac{\Ctwo}{m} \left( \vc(\G) \log\!\left(\frac{n}{\vc(\G)}\right) + \log\!\left(\frac{1}{\delta}\right) \right),
\end{align*}
and every $f,g \in \G(X_{\leq n})$ satisfy 
\begin{align*}
& P_n( f \neq g ) \leq 2 P_m( f \neq g ) + \frac{\Ctwo}{m}\left( \vc(\G) \log\!\left(\frac{n}{\vc(\G)}\right) + \log\!\left(\frac{1}{\delta}\right) \right)
\\ \text{and } & P_m( f \neq g ) \leq 2 P_n( f \neq g ) + \frac{\Ctwo}{m}\left( \vc(\G) \log\!\left(\frac{n}{\vc(\G)}\right) + \log\!\left(\frac{1}{\delta}\right) \right),
\end{align*}
where $\Ctwo = 24$ is a universal constant.
\end{lemma}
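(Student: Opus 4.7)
The plan is to condition on the multiset $\lbag X_{\leq n}\rbag$ and exploit the fact that, by exchangeability of $X_1,\ldots,X_n$, the prefix $X_{\leq m}$ is a uniform sample without replacement of size $m$ from this multiset. For any fixed $g \in \G(X_{\leq n})$, the $n$ numbers
\begin{equation*}
z_i(g) \;:=\; \P(Y_i \neq g(X_i)\mid X_i) - \P(Y_i \neq \target_{\PXY}(X_i)\mid X_i), \qquad i=1,\ldots,n,
\end{equation*}
are deterministic functions of the multiset, lie in $[0,1]$ (nonnegativity uses that $\target_{\PXY}$ is the pointwise Bayes optimal classifier), and have population mean equal to $\ber_n(g) - \ber_n(\target_{\PXY})$. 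The sample mean over the first $m$ positions in the random permutation equals $\ber_m(g) - \ber_m(\target_{\PXY})$. Similarly, for a pair $f,g \in \G(X_{\leq n})$, the numbers $\ind[f(X_i)\neq g(X_i)] \in \{0,1\}$ have population mean $P_n(f\neq g)$ and sample mean $P_m(f\neq g)$.

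Next I would invoke a multiplicative Chernoff bound for sampling without replacement from a finite population of $[0,1]$-valued numbers. Such without-replacement samples are at least as concentrated as their i.i.d.\ counterparts (Hoeffding's reduction), so the standard multiplicative Chernoff bounds apply: for any $\gamma \in (0,1)$,
\begin{equation*}
\P\!\left( \hat\mu \geq (1+\gamma)\mu \,\Big|\, \lbag X_{\leq n}\rbag \right) \leq e^{-\gamma^2 m\mu/3}, \qquad \P\!\left( \hat\mu \leq (1-\gamma)\mu \,\Big|\, \lbag X_{\leq n}\rbag \right) \leq e^{-\gamma^2 m\mu/2}.
\end{equation*}
Taking $\gamma = 1/2$ in the upper tail and rearranging gives $\hat\mu \leq \tfrac{3}{2}\mu + \tfrac{c}{m}\log(1/\delta')$ on an event of probability $\geq 1-\delta'$, and the corresponding lower-tail application, combined with the trivial bound $\mu \leq \tfrac{c}{m}\log(1/\delta')$ when $\mu$ is small, yields $\mu \leq 2\hat\mu + \tfrac{c}{m}\log(1/\delta')$. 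A symmetric rearrangement gives the required ``factor $2$'' in both directions for each fixed $g$.

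To lift this to a uniform statement, I would apply the Sauer--Shelah lemma: conditional on $\lbag X_{\leq n}\rbag$, the set $\G(X_{\leq n})$ has cardinality at most $(en/\vc(\G))^{\vc(\G)}$, and the set of distinct patterns $\ind[f(\cdot)\neq g(\cdot)]$ over $X_{\leq n}$ induced by pairs $(f,g) \in \G(X_{\leq n})^2$ has cardinality at most $(en/\vc(\G))^{2\vc(\G)}$ (the symmetric-difference class has VC dimension $O(\vc(\G))$). Choosing $\delta' = \delta/\bigl[(en/\vc(\G))^{O(\vc(\G))}\bigr]$ in each of the per-pair Chernoff bounds and taking a union bound contributes precisely $\vc(\G)\log(n/\vc(\G)) + \log(1/\delta)$ inside the $\frac{1}{m}(\cdot)$ term, absorbing all constants into $\Ctwo$. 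The two $\ber$ bounds and the two $P$ bounds are then obtained in parallel by the same argument.

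The only nontrivial obstacle is making sure the multiplicative Chernoff bound really applies to samples without replacement of $[0,1]$-valued numbers (rather than just $\{0,1\}$-valued), and then chasing constants so that the factor $2$ emerges cleanly in both directions; neither step is deep, but both must be executed carefully. Once that is in hand, the argument is entirely standard Sauer--Shelah plus union bound, and the dependence on $\lbag X_{\leq n}\rbag$ is completely transparent.
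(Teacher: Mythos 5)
Your proposal is correct and follows essentially the same route as the paper's proof: condition on the multiset $\lbag X_{\leq n}\rbag$, observe by exchangeability that $X_{\leq m}$ is a uniform without-replacement sample of size $m$, define the $[0,1]$-valued quantities $\ell(g;i)=\P(Y_i\neq g(X_i)\mid X_i)-\P(Y_i\neq \target_{\PXY}(X_i)\mid X_i)$ (and $\ind[f(X_i)\neq g(X_i)]$ for pairs), apply a multiplicative Chernoff bound for sampling without replacement to each fixed $g$ (resp.\ pair), and then take a union bound with the Sauer--Shelah cardinality bounds $|\G(X_{\leq n})|\le(en/\vc(\G))^{\vc(\G)}$ and $|\G(X_{\leq n})|^2\le(en/\vc(\G))^{2\vc(\G)}$. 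The only cosmetic difference is that you appeal to the VC dimension of the symmetric-difference class to bound the number of distinct patterns $\ind[f(\cdot)\neq g(\cdot)]$, whereas the paper simply counts pairs $(f,g)$ directly via $|\G(X_{\leq n})|^2$; both yield the same $(en/\vc(\G))^{2\vc(\G)}$ factor and hence the same $2\vc(\G)\log(n/\vc(\G))$ contribution.
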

%

\begin{proof}[of Lemma~\ref{lem:transductive-ber-multiplicative-chernoff}]
If $m < \vc(\G)$, the bound trivially holds (the right hand sides being greater than $1$); to focus on the remaining case, suppose $m \geq \vc(\G)$.
For any $g \in \G(X_{\leq n})$ and $i \leq n$, 
denote by 
\begin{equation*} 
\ell(g;i) := \P(Y_i \neq g(X_i) | X_i) - \P(Y_i \neq \target_{\PXY}(X_i) | X_i),
\end{equation*}
and for any $k \leq n$ and any sequence 
$(i_1,\ldots,i_k) \in \{1,\ldots,n\}^k$, 
denote by $\bar{L}(g; i_1,\ldots,i_k) = \frac{1}{k}\sum_{j=1}^k \ell(g;i_j)$.
In particular, note that 
$\ber_n(g) - \ber_n(\target_{\PXY}) = \bar{L}(g; [n])$ 
and 
$\ber_m(g) - \ber_m(\target_{\PXY}) = \bar{L}(g; [m])$.

Fix a value $\epsilon = \frac{8}{m} \left( 2 \vc(\G) \log\!\left(\frac{e n}{\vc(\G)}\right)+\log\!\left(\frac{4}{\delta}\right) \right)$.
Let $\bpi$ be a $\mathrm{Uniform}(\Sym(n))$ random variable 
independent of $X_{\leq n}$.
Note that, by exchangeability,
given $\lbag X_{\leq n} \rbag$
the sequence $X_{\leq n}$ is a uniform random ordering of the 
multiset $\lbag X_{\leq n} \rbag$.
Moreover, recall that for any $\pi \in \Sym(n)$ we always have 
$\G(X_{\pi([n])}) = \{ (g(X_{\pi([n])}) : g \in \G(X_{\leq n}) \}$,
and that $\bar{L}(g; \pi([n])) = \bar{L}(g; [n])$.
We therefore have that 
\begin{align*} 
& \P\!\left( \exists g \in \G(X_{\leq n}) : \bar{L}(g; [m]) \notin \left[ \frac{1}{2} \bar{L}(g;[n]) - \frac{\epsilon}{2}, 2 \bar{L}(g; [n]) + \epsilon \right] \middle| \lbag X_{\leq n} \rbag \right)
\\ & = \P\!\left( \exists g \in \G(X_{\leq n}) : \bar{L}(g; \bpi([m])) \notin \left[ \frac{1}{2} \bar{L}(g;[n]) - \frac{\epsilon}{2}, 2 \bar{L}(g; [n]) + \epsilon \right] \middle| X_{\leq n} \right).
\end{align*}
By the union bound, this last quantity is at most 
\begin{equation}
\label{eqn:transductive-multiplicative-chernoff-sum-intermediate}
\sum_{g \in \G(X_{\leq n})} \P\!\left( \bar{L}(g; \bpi([m])) \notin \left[ \frac{1}{2} \bar{L}(g;[n]) - \frac{\epsilon}{2}, 2 \bar{L}(g; [n]) + \epsilon \right] \middle| X_{\leq n} \right).
\end{equation}

We bound each term in this sum  
by a multiplicative Chernoff bound for sampling without replacement.
Specifically, 
for each $g \in \G(X_{\leq n})$, by the union bound we have 
\begin{align*}
& \P\!\left( \bar{L}(g; \bpi([m])) \notin \left[ \frac{1}{2} \bar{L}(g;[n]) - \frac{\epsilon}{2}, 2 \bar{L}(g; [n]) + \epsilon \right] \middle| X_{\leq n} \right)
\\ & \leq \P\!\left( \bar{L}(g; \bpi([m])) + \frac{\epsilon}{2} < \frac{1}{2} \bar{L}(g;[n]) \middle| X_{\leq n} \right) 
+ \P\!\left( \bar{L}(g; \bpi([m])) > 2 \bar{L}(g; [n]) + \epsilon \phantom{\frac{1}{2}}\!\!\!\middle| X_{\leq n} \right).
\end{align*}
Then note that, conditioned on $X_{\leq n}$,
for any $g \in \G(X_{\leq n})$, 
the value 
$\bar{L}(g;\bpi([m])) = \frac{1}{m} \sum_{i=1}^m \ell(g;\bpi(i))$
is an average of $m$ values $\ell(g;i)$
sampled uniformly without replacement from 
the multiset $\lbag \ell(g;i) : i \leq n \rbag$.
Further noting that each $\ell(g;i) \in [0,1]$,
applying a multiplicative Chernoff bound 
for sampling without replacement \citep*{hoeffding:53} 
under the conditional distribution given $X_{\leq n}$ 
yields that 
\begin{align*}
& \P\!\left( \bar{L}(g; \bpi([m])) \!+\! \frac{\epsilon}{2} \!<\! \frac{1}{2} \bar{L}(g;[n]) \middle| X_{\leq n} \right)
\leq \ind\!\left[ \bar{L}(g;[n]) \!>\! \epsilon \right] \P\!\left( \bar{L}(g; \bpi([m])) \!<\! \frac{1}{2} \bar{L}(g;[n]) \middle| X_{\leq n} \right)
\\ & \leq \ind\!\left[ \bar{L}(g;[n]) \!>\! \epsilon \right] e^{- \bar{L}(g;[n]) m / 8}
\leq e^{- \epsilon m / 8}.
\end{align*}
Likewise, if $\bar{L}(g;[n]) > \epsilon$, 
\begin{align*}
& \P\!\left( \bar{L}(g; \bpi([m])) > 2 \bar{L}(g; [n]) + \epsilon \middle| X_{\leq n} \right)
\\ & \leq \P\!\left( \bar{L}(g; \bpi([m])) > 2 \bar{L}(g; [n]) \middle| X_{\leq n} \right)
\leq e^{- \bar{L}(g;[n]) m / 3} 
\leq e^{-\epsilon m / 3},
\end{align*}
whereas if $0 < \bar{L}(g;[n]) \leq \epsilon$, 
\begin{align*}
& \P\!\left( \bar{L}(g; \bpi([m])) > 2 \bar{L}(g; [n]) + \epsilon \middle| X_{\leq n} \right)
\\ & \leq \P\!\left( \bar{L}(g; \bpi([m])) > \left(1 + \frac{\epsilon}{\bar{L}(g;[n])} \right) \bar{L}(g; [n]) \middle| X_{\leq n} \right)
\\ & \leq \exp\!\left\{- \bar{L}(g;[n]) m \left(\frac{\epsilon}{\bar{L}(g;[n])} \right)^2 \frac{1}{3} \right\}
= \exp\!\left\{- m \frac{\epsilon^2}{\bar{L}(g;[n])} \frac{1}{3} \right\}
\leq e^{- \epsilon m / 3}.
\end{align*}
Moreover, if $\bar{L}(g;[n]) = 0$, we clearly have 
\begin{equation*}
\P\!\left( \bar{L}(g; \bpi([m])) > 2 \bar{L}(g; [n]) + \epsilon \middle| X_{\leq n} \right) 
\leq \P\!\left( \bar{L}(g; \bpi([m])) > 0 \middle| X_{\leq n} \right) 
= 0.
\end{equation*}

Plugging these back into \eqref{eqn:transductive-multiplicative-chernoff-sum-intermediate} yields
\begin{align*}
& \P\!\left( \exists g \in \G(X_{\leq n}) : \bar{L}(g; [m]) \notin \left[ \frac{1}{2} \bar{L}(g;[n]) - \frac{\epsilon}{2}, 2 \bar{L}(g; [n]) + \epsilon \right] \middle| \lbag X_{\leq n} \rbag \right)
\\ & \leq \E\!\left[ |\G(X_{\leq n})| \right] 2 e^{- \epsilon m / 8}
\leq \left(\frac{e n}{\vc(\G)}\right)^{\vc(\G)} 2 e^{-\epsilon m / 8} \leq \frac{\delta}{2},
\end{align*}
where the second inequality is due to 
Sauer's lemma \citep*{vapnik:71,sauer:72}, 
which states that 
$|\G(X_{\leq n})| \leq \sum_{i=0}^{\vc(\G)} \binom{n}{i} \leq \left(\frac{e n}{\vc(\G)}\right)^{\vc(\G)}$,
and the final inequality is due to our choice of $\epsilon$.
Choosing $\Ctwo = 24$, we have that 
$\epsilon \leq \frac{\Ctwo}{m} \left( \vc(\G) \log\!\left(\frac{n}{\vc(\G)}\right) + \log\!\left(\frac{1}{\delta}\right) \right)$, 
which establishes the first two inequalities on an event of conditional probability at least $1-\frac{\delta}{2}$ given $\lbag X_{\leq n} \rbag$.

The remaining inequalities are established similarly.
Specifically, denote by 
$P_{\pi([m])}( f \neq g ) = \frac{1}{m} \sum_{i=1}^m \ind[ f(X_{\pi(i)}) \neq g(X_{\pi(i)}) ]$ 
and $P_{\pi([n])}( f \neq g ) = \frac{1}{n} \sum_{i=1}^n \ind[ f(X_{\pi(i)}) \neq g(X_{\pi(i)}) ]$ 
for any $\pi \in \Sym(n)$ and $f,g \in \G(X_{\leq n})$.
Noting that $P_{\pi([n])}( f \neq g ) = P_n( f \neq g )$, 
together with the above observations relating $\G(X_{\pi([n])})$ and $\G(X_{\leq n})$, 
by exchangeability of $X_{\leq n}$ we have 
\begin{align*}
& \P\!\left( \exists f,g \in \G(X_{\leq n}) : P_m( f \neq g ) \notin \left[ \frac{1}{2} P_n( f \neq g ) - \frac{\epsilon}{2}, 2 P_n( f \neq g ) + \epsilon \right] \middle| \lbag X_{\leq n} \rbag \right)
\\ & = \P\!\left( \exists f,g \in \G(X_{\leq n}) : P_{\bpi([m])}( f \neq g ) \notin \left[ \frac{1}{2} P_n( f \neq g ) - \frac{\epsilon}{2}, 2 P_n( f \neq g ) + \epsilon \right] \middle| X_{\leq n} \right).
\end{align*}
By the union bound, this last quantity is at most
\begin{equation*}
\sum_{f,g \in \G(X_{\leq n})} \P\!\left( P_{\bpi([m])}( f \neq g ) \notin \left[ \frac{1}{2} P_n( f \neq g ) - \frac{\epsilon}{2}, 2 P_n( f \neq g ) + \epsilon \right] \middle| X_{\leq n} \right).
\end{equation*}

We now proceed to bound each term in the sum.
For each $f,g \in \G(X_{\leq n})$, by the union bound, 
\begin{align*}
& \P\!\left( P_{\bpi([m])}( f \neq g ) \notin \left[ \frac{1}{2} P_n( f \neq g ) - \frac{\epsilon}{2}, 2 P_n( f \neq g ) + \epsilon \right] \middle| X_{\leq n} \right)
\\ & \leq \P\!\left( P_{\bpi([m])}( f \neq g ) \!+\! \frac{\epsilon}{2} < \frac{1}{2} P_n( f \neq g ) \middle| X_{\leq n} \right) 
\!+ \P\!\left( P_{\bpi([m])}( f \neq g ) > 2 P_n( f \neq g ) \!+\! \epsilon \middle| X_{\leq n} \right). 
\end{align*}
Noting that $P_{\bpi([m])}( f \neq g ) = \frac{1}{m} \sum_{i=1}^m \ind[ f(X_{\bpi(i)}) \neq g(X_{\bpi(i)}) ]$
is an average of $m$ binary values $\ind[ f(X_i) \neq g(X_i) ]$
sampled uniformly without replacement from the multiset 
$\lbag \ind[ f(X_i) \neq g(X_i) ] : i \leq n \rbag$, 
applying a multiplicative Chernoff bound for sampling without replacement 
\citep*{hoeffding:53} under the conditional distribution given $X_{\leq n}$
yields that 
\begin{align*}
& \P\!\left( P_{\bpi([m])}( f \neq g ) + \frac{\epsilon}{2} < \frac{1}{2} P_n( f \neq g ) \middle| X_{\leq n} \right)
\leq \ind\!\left[ P_n( f \neq g ) > \epsilon \right] e^{- P_n( f \neq g ) m / 8}
\leq e^{- \epsilon m / 8 }.
\end{align*}
Similarly for the other term, if $P_n( f \neq g ) > \epsilon$, 
\begin{align*}
& \P\!\left( P_{\bpi([m])}( f \neq g ) > 2 P_n( f \neq g ) + \epsilon \middle| X_{\leq n} \right) 
\leq e^{- P_n( f \neq g ) m / 3} \leq e^{- \epsilon m / 3},
\end{align*}
whereas if $0 < P_n( f \neq g ) \leq \epsilon$, 
\begin{align*}
& \P\!\left( P_{\bpi([m])}( f \neq g ) > 2 P_n( f \neq g ) + \epsilon \middle| X_{\leq n} \right)
\\ & \leq \P\!\left( P_{\bpi([m])}( f \neq g ) > \left( 1 + \frac{\epsilon}{P_n( f \neq g )} \right) P_n( f \neq g ) \middle| X_{\leq n} \right)
\\ & \leq \exp\!\left\{ P_n( f \neq g ) m \left( \frac{\epsilon}{P_n( f \neq g )} \right)^2 \frac{1}{3} \right\}
= \exp\!\left\{ - m \frac{\epsilon^2}{P_n( f \neq g )} \frac{1}{3} \right\}
\leq e^{- \epsilon m / 3}.
\end{align*}
If $P_n( f \neq g) = 0$, we trivially have 
\begin{equation*}
\P\!\left( P_{\bpi([m])}( f \neq g ) > 2 P_n( f \neq g ) + \epsilon \middle| X_{\leq n} \right) 
\leq \P\!\left( P_{\bpi([m])}( f \neq g ) > 0 \middle| X_{\leq n} \right) 
= 0.
\end{equation*}

Altogether, we have that 
\begin{align*}
& \P\!\left( \exists f,g \in \G(X_{\leq n}) : P_m( f \neq g ) \notin \left[ \frac{1}{2} P_n( f \neq g ) - \frac{\epsilon}{2}, 2 P_n( f \neq g ) + \epsilon \right] \middle| \lbag X_{\leq n} \rbag \right)
\\ & \leq |\G(X_{\leq n})|^2 2 e^{- \epsilon m / 8}
\leq \left( \frac{e n}{\vc(\G)} \right)^{2 \vc(\G)} 2 e^{- \epsilon m / 8}
\leq \frac{\delta}{2},
\end{align*}
where the second inequality is again due to Sauer's lemma \citep*{vapnik:71,sauer:72}, 
and the final inequality is due to our choice of $\epsilon$.
Thus, the third and fourth claimed inequalities in the lemma are established, 
on an event of conditional probability at least $1-\frac{\delta}{2}$ given $\lbag X_{\leq n} \rbag$.
By the union bound, both of these events occur simultaneously
with conditional probability at least $1-\delta$,
which completes the proof.
\end{proof}

Next we combine the above lemmas to obtain a bound on 
the excess error rate of $\hat{f}^{\G}_n$, 
which is our starting point for the proof of the $o(n^{-1/2})$ rate.
For any $n \in \nats$ and any $\epsilon > 0$, define
\begin{equation*}
\bar{\G}_n(\epsilon) := \left\{ g \in \G(X_{\leq n}) : \ber_n(g) - \ber_n(\target_{\PXY}) \leq \epsilon \right\},
\end{equation*}
and define 
\begin{equation} 
\label{eqn:bar-sigma-definition}
\bar{\sigma}^2_{n} := \epsilon_{n}^2 \lor \max_{g \in \bar{\G}_{n}( \Cthree \epsilon_n )} \min_{f \in \bar{\G}_{n}( \Cfour \epsilon_n^2 )} P_n( f \neq g ),
\end{equation}
where $\Cthree$ and $\Cfour$ are universal constants (informed by the analysis below):
namely, $\Cthree = 4 \Cone + 8 \Ctwo$ 
and $\Cfour = \sqrt{8}(\Cthree+\Cseven) + 4\Ctwo$
for $\Cseven = \sqrt{2e} \left( 5 + \sqrt{12}  \right)$.
For completeness, define $\bar{\sigma}^2_n = 1$ 
in the event that either of 
$\bar{\G}_n(\Cthree \epsilon_n)$ or $\bar{\G}(\Cfour \epsilon_n^2)$ 
is empty (though this is unimportant for our analysis, 
since this event has probability zero under the Bayes-realizability assumption).

\begin{lemma}
\label{lem:transductive-localized-ber-term-tail-bound}
For 
$n \in \nats$, letting $m = \lceil n/2 \rceil$,
and $\hat{g}_n \!=\! \TERM^{\G}_n(X_{\leq m},Y_{\leq m},X_{m+1},\ldots,X_{n+1})$, 
for any $\delta \in (0,1)$,
on the event $E_{n+1}^\star$, 
with conditional probability at least $1-\frac{4}{n} - \delta$ given $\lbag X_{\leq n+1} \rbag$, 
\begin{equation*}
\ber_{n+1}(\hat{g}_n) - \ber_{n+1}(\target_{\PXY})
\leq \Cfive \sqrt{ \bar{\sigma}^2_{n+1} \frac{1}{n}\!\left( \vc(\G) \log\!\left(\frac{1}{\bar{\sigma}^2_{n+1}}\right) + \log\!\left( \frac{1}{\delta}\right) \right)} + \Cfive\! \left( \epsilon_n^2 + \frac{1}{n}\log\!\left(\frac{1}{\delta}\right) \right)\!, 
\end{equation*}
where 
$\Cfive = 4 \Czero + 4 \Cfour + 12 \Ctwo + 2\sqrt{8 \Ctwo \Czero}$
is a universal constant.
\end{lemma}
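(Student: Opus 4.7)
The plan is to carry out a standard localization argument, but expressed purely in terms of the semi-empirical error $\ber_{\cdot}$ and the empirical distance $P_{\cdot}$. All concentration ingredients are already developed: Lemma~\ref{lem:transductive-uniform-bernstein} supplies a uniform Bernstein inequality comparing $\hat{\er}_m$ to $\ber_m$ in terms of $P_m$, and Lemma~\ref{lem:transductive-ber-multiplicative-chernoff} lets me transport excess semi-empirical errors and empirical distances between sample sizes $m$ and $n+1$. I will let $m = \lceil n/2 \rceil$, condition on $\lbag X_{\leq n+1}\rbag$ throughout, and work on the event $E_{n+1}^\star$, so the Bayes-realizer $g^\star = g_{n+1}^\star$ from \eqref{eqn:Bayes-realizable-g-star} is available.

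First I would show that $\hat{g}_n$ is itself well-localized at scale $n+1$: Lemma~\ref{lem:transductive-erm-ber-excess-bound} gives $\ber_m(\hat{g}_n) - \ber_m(\target_{\PXY}) \leq \Cone \epsilon_m$ with conditional probability at least $1 - 1/m$, and Lemma~\ref{lem:transductive-ber-multiplicative-chernoff} (applied with $\delta = 1/n$) then yields
\begin{equation*}
\ber_{n+1}(\hat{g}_n) - \ber_{n+1}(\target_{\PXY}) \leq 2\Cone \epsilon_m + \tfrac{\Ctwo}{m}\bigl(\vc(\G)\log(n/\vc(\G)) + \log n\bigr) \leq \Cthree \epsilon_{n+1},
\end{equation*}
using $m \geq n/2$ and the definition $\Cthree = 4\Cone + 8\Ctwo$. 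In particular $\hat{g}_n \in \bar{\G}_{n+1}(\Cthree \epsilon_{n+1})$, so by the definition \eqref{eqn:bar-sigma-definition} of $\bar{\sigma}^2_{n+1}$ there exists $f \in \bar{\G}_{n+1}(\Cfour \epsilon_{n+1}^2)$ with $P_{n+1}(\hat{g}_n \neq f) \leq \bar{\sigma}^2_{n+1}$.

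Next I would compare $\hat{g}_n$ and $f$ on the first $m$ samples. Lemma~\ref{lem:transductive-uniform-bernstein} (with confidence $\delta$) gives
\begin{equation*}
\ber_m(\hat{g}_n) - \ber_m(f) \leq \hat{\er}_m(\hat{g}_n) - \hat{\er}_m(f) + \sqrt{P_m(\hat{g}_n \neq f)\tfrac{\Czero}{m}\bigl(\vc(\G)\log\tfrac{1}{P_m(\hat{g}_n \neq f)} + \log\tfrac{1}{\delta}\bigr)} + \Czero\epsilon^2(m,\delta),
\end{equation*}
and since $\hat{g}_n$ is the empirical risk minimizer on $X_{\leq m}$ over all patterns in $\G(X_{\leq n+1})$, the first difference on the right is $\leq 0$. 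I will then use Lemma~\ref{lem:transductive-ber-multiplicative-chernoff} twice more (again at confidence $\delta$ and $1/n$): once to replace $P_m(\hat{g}_n \neq f)$ by $2\bar{\sigma}^2_{n+1} + \tfrac{\Ctwo}{m}(\cdots)$, and once to pass $\ber_m(\hat{g}_n) - \ber_m(f) = (\ber_m(\hat{g}_n)-\ber_m(\target_{\PXY})) - (\ber_m(f)-\ber_m(\target_{\PXY}))$ over to size $n+1$, using that $\ber_{n+1}(f) - \ber_{n+1}(\target_{\PXY}) \leq \Cfour \epsilon_{n+1}^2$ gives a corresponding bound at size $m$ up to additive $\tfrac{\Ctwo}{m}(\cdots)$. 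Monotonicity of $x \mapsto x\log(1/x)$ absorbs the $P_m \mapsto 2\bar{\sigma}^2_{n+1}$ replacement inside the square root, losing only a factor $\sqrt{2}$ and a logarithmic term. Stringing these together and collecting constants yields the claimed inequality with $\Cfive = 4\Czero + 4\Cfour + 12\Ctwo + 2\sqrt{8\Ctwo \Czero}$. Summing the failure probabilities (two applications of Lemma~\ref{lem:transductive-ber-multiplicative-chernoff} at level $1/n$, one at level $\delta$, and Lemma~\ref{lem:transductive-erm-ber-excess-bound} at level $1/m \leq 2/n$, plus Lemma~\ref{lem:transductive-uniform-bernstein} at level $\delta$) and coarsening gives $4/n + \delta$.

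The main obstacle will be bookkeeping: there are several events to intersect, each corresponding to a different conditioning (on $X_{\leq n+1}$ versus on $\lbag X_{\leq n+1}\rbag$), and the constants $\Cthree$, $\Cfour$, $\Cfive$ must be chosen so that the inductive step in the localization closes---that is, so that the $f$ extracted from the definition of $\bar{\sigma}^2_{n+1}$ really lies in $\bar{\G}_{n+1}(\Cfour\epsilon_{n+1}^2)$ after the Bernstein bound is folded back in. A minor technical care point is that the inequality should be stated with the logarithm $\log(1/\bar{\sigma}^2_{n+1})$ capped at $\log(n/\vc(\G))$ in the proof, which matches the form of Lemma~\ref{lem:transductive-uniform-bernstein} and is absorbed into the $\epsilon_n^2$ additive term at the end.
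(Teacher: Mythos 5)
Your plan matches the paper's proof essentially step for step: the same sequence of ingredients (Lemma~\ref{lem:transductive-erm-ber-excess-bound} to localize $\hat{g}_n$ at scale $m$, Lemma~\ref{lem:transductive-ber-multiplicative-chernoff} to transport that to scale $n+1$ and to relate the two distance scales, extraction of a nearby element of $\bar{\G}_{n+1}(\Cfour\epsilon_{n+1}^2)$ via the definition of $\bar{\sigma}^2_{n+1}$, and Lemma~\ref{lem:transductive-uniform-bernstein} combined with the transductive ERM inequality $\hat{\er}_m(\hat{g}_n) \leq \hat{\er}_m(f)$), arriving at the same final constant $\Cfive$. The only superficial difference is that the paper defines the comparator $\bar{g}$ unconditionally as the $P_{n+1}$-nearest element of $\bar{\G}_{n+1}(\Cfour\epsilon_{n+1}^2)$ and only verifies $P_{n+1}(\hat{g}_n\neq\bar{g})\leq\bar{\sigma}^2_{n+1}$ at the end, whereas you extract $f$ after establishing $\hat{g}_n\in\bar{\G}_{n+1}(\Cthree\epsilon_{n+1})$ — both are fine.

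One bookkeeping slip worth flagging: as written, your failure-probability budget tallies to $4/n + 2\delta$, not $4/n + \delta$. You invoke Lemma~\ref{lem:transductive-ber-multiplicative-chernoff} three times (at levels $1/n$, $1/n$, and $\delta$), plus Lemma~\ref{lem:transductive-uniform-bernstein} at level $\delta$ and Lemma~\ref{lem:transductive-erm-ber-excess-bound} at level $1/m \leq 2/n$, which sums to $2/n + \delta + \delta + 2/n = 4/n + 2\delta$. The fix is exactly what the paper does: apply Lemma~\ref{lem:transductive-ber-multiplicative-chernoff} a single time at confidence $2/n$; the lemma yields all four of its inequalities (for both the $\ber$'s and the $P$-distances, in both directions) simultaneously on that one event, so you do not pay for separate applications. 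The budget then becomes $1/m + \delta + 2/n \leq 4/n + \delta$, matching the statement, and the additive Chernoff terms all collapse to the single $4\Ctwo\epsilon_n^2$ used in the paper's computation of $\Cfive$.
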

\begin{proof}
The claim is trivially satisfied if $n \leq \vc(\G) \lor 2e$
(as the right hand side is greater than $1$);
to address the remaining case, suppose $n > \vc(\G) \lor 2e$.
Suppose the event $E_{n+1}^\star$ occurs.
In particular, this implies $\G(X_{\leq n+1}) \neq \emptyset$, 
so that (by definition) $\hat{g}_n \in \G(X_{\leq n+1})$ 
and is of minimal $\hat{\er}_{m}(g)$ among $g \in \G(X_{\leq n+1})$.
The event $E_{n+1}^\star$ 
also implies $\bar{\G}_{n+1}(\Cfour \epsilon_{n+1}^2) \neq \emptyset$.
Let $\bar{g} = \argmin_{g \in \bar{\G}_{n+1}(\Cfour \epsilon_{n+1}^2)} P_{n+1}( g \neq \hat{g}_n )$, breaking ties arbitrarily.

By Lemma~\ref{lem:transductive-erm-ber-excess-bound} and the law of total probability, 
on an event $E$ of conditional probability at least $1-\frac{1}{m}$ given $\lbag X_{\leq n+1} \rbag$, 
we have  
$\hat{g}_n(X_{\leq m}) \in \bar{\G}_m(\Cone \epsilon_m)$.
%
%
Moreover, Lemma~\ref{lem:transductive-uniform-bernstein} and the law of total probability imply that, 
on an event $E'$ of conditional probability at least $1-\delta$ given $\lbag X_{\leq n+1} \rbag$,
every $f,g \in \G(X_{\leq m})$ satisfy
\begin{align*}
\ber_m(f) & - \ber_m(g) \leq \hat{\er}_{m}(f) - \hat{\er}_{m}(g)
\\ & + \sqrt{P_m(f \neq g) \frac{\Czero}{m}\left(\vc(\G)\log\!\left(\frac{1}{P_m( f \neq g )} \right) + \log\!\left(\frac{1}{\delta}\right) \right)} + \Czero \left( \epsilon_m^2 + \frac{1}{m}\log\!\left(\frac{1}{\delta}\right) \right).
\end{align*}

In particular, on the event $E_{n+1}^\star \cap E'$,
since (by definition) $\forall g \in \G(X_{\leq n+1})$, 
$\hat{\er}_{m}(\hat{g}_n) \leq \hat{\er}_{m}(g)$, we have 
\begin{align}
& \ber_m(\hat{g}_n) - \ber_m(\bar{g})
\notag \\ & \leq \sqrt{P_m( \hat{g}_n \neq \bar{g} ) \frac{\Czero}{m}\left(\vc(\G)\log\!\left(\frac{1}{P_m( \hat{g}_n \neq \bar{g} )} \right) + \log\!\left(\frac{1}{\delta}\right) \right)} + \Czero \left( \epsilon_m^2 + \frac{1}{m}\log\!\left(\frac{1}{\delta}\right) \right).
\label{eqn:ber-m-hat-g-vs-bar-g}
\end{align}

Lemma~\ref{lem:transductive-ber-multiplicative-chernoff}
implies that, on an event $E''$ of conditional probability at least $1-\frac{2}{n}$ given $\lbag X_{\leq n+1} \rbag$, 
every $g \in \G(X_{\leq n+1})$ satisfies
\begin{align}
\ber_{n+1}(g) - \ber_{n+1}(\target_{\PXY})
& \leq 2 \left( \ber_m(g) - \ber_m(\target_{\PXY}) \right) + 4\Ctwo \epsilon_{n}^2 \label{eqn:chernoff-n-to-m-size}
\\ \text{ and } \ber_m(g) - \ber_m(\target_{\PXY})
& \leq 2 \left( \ber_{n+1}(g) - \ber_{n+1}(\target_{\PXY}) \right) + 4\Ctwo \epsilon_{n}^2 \label{eqn:chernoff-m-to-n-size}
\end{align}
and $\forall f,g \in \G(X_{\leq n+1})$, 
\begin{equation}
\label{eqn:chernoff-m-to-n-size-diffs}
P_{m}( f \neq g ) \leq 2 P_{n+1}( f \neq g ) + 4\Ctwo \epsilon_{n}^2.
\end{equation}

In particular, 
since $\bar{g} \in \bar{\G}_{n+1}(\Cfour \epsilon_{n+1}^2)$, 
on $E_{n+1}^\star \cap E''$
\eqref{eqn:chernoff-m-to-n-size} implies 
\begin{equation*} 
\ber_m(\bar{g}) - \ber_m(\target_{\PXY}) \leq (2 \Cfour + 4 \Ctwo) \epsilon_{n}^2.
\end{equation*}
Together with 
\eqref{eqn:ber-m-hat-g-vs-bar-g}, we have 
that on the event $E_{n+1}^\star \cap E' \cap E''$, 
letting $C = 2 \Czero + 2 \Cfour + 4 \Ctwo$, 
\begin{align*}
& \ber_m(\hat{g}_n) - \ber_m(\target_{\PXY}) 
= \ber_m(\hat{g}_n) - \ber_m(\bar{g}) + \ber_m(\bar{g}) - \ber_m(\target_{\PXY})
\\ & \leq \sqrt{P_m( \hat{g}_n \neq \bar{g} ) \frac{2\Czero}{n}\left(\vc(\G)\log\!\left(\frac{1}{P_m( \hat{g}_n \neq \bar{g} )} \right) + \log\!\left(\frac{1}{\delta}\right) \right)} 
+ C \left( \epsilon_{n}^2 + \frac{1}{n}\log\!\left(\frac{1}{\delta}\right) \right).
\end{align*}
Combined with \eqref{eqn:chernoff-n-to-m-size}, 
this further implies that on $E_{n+1}^\star \cap E' \cap E''$, 
letting $C' = 2 C + 4 \Ctwo$, 
\begin{align*}
& \ber_{n+1}(\hat{g}_n) - \ber_{n+1}(\target_{\PXY})
\\ & \leq 2\sqrt{P_m( \hat{g}_n \neq \bar{g} ) \frac{2\Czero}{n}\left(\vc(\G)\log\!\left(\frac{1}{P_m( \hat{g}_n \neq \bar{g} )} \right) + \log\!\left(\frac{1}{\delta}\right) \right)} + C' \left( \epsilon_{n}^2 + \frac{1}{n}\log\!\left(\frac{1}{\delta}\right) \right).
\end{align*}
Additionally, \eqref{eqn:chernoff-m-to-n-size-diffs} 
implies that on this same event, 
$P_m( \hat{g}_n \neq \bar{g} ) \leq 2 P_{n+1}( \hat{g}_n \neq \bar{g} ) + 4\Ctwo \epsilon_{n}^2$.
Since $x \mapsto x \log(1/x)$ is increasing in $x \geq 0$, 
altogether (after some basic relaxations to simplify the expression) 
we have that on $E_{n+1}^\star \cap E' \cap E''$, 
letting 
$\Cfive = C' + 2\sqrt{8 \Ctwo \Czero} = 4 \Czero + 4 \Cfour + 12 \Ctwo + 2\sqrt{8 \Ctwo \Czero}$,
\begin{align*}
& \ber_{n+1}(\hat{g}_n) - \ber_{n+1}(\target_{\PXY})
\\ & \leq 4\sqrt{P_{n+1}( \hat{g}_n \neq \bar{g} ) \frac{\Czero}{n}\left(\vc(\G)\log\!\left(\frac{1}{P_{n+1}( \hat{g}_n \neq \bar{g} )} \right) + \log\!\left(\frac{1}{\delta}\right) \right)} + \Cfive \left( \epsilon_{n}^2 + \frac{1}{n}\log\!\left(\frac{1}{\delta}\right) \right).
\end{align*}

Recalling that, on the event $E_{n+1}^\star \cap E$, 
$\hat{g}_n \in \bar{\G}_m(\Cone \epsilon_m)$,
\eqref{eqn:chernoff-n-to-m-size} implies that 
on the event $E_{n+1}^\star \cap E \cap E''$, 
\begin{equation*}
\ber_{n+1}(\hat{g}_n) - \ber_{n+1}(\target_{\PXY})
\leq 2 \Cone \epsilon_m +  4 \Ctwo \epsilon_{n}^2
\leq (4 \Cone + 8 \Ctwo) \epsilon_{n+1}
= \Cthree \epsilon_{n+1},
\end{equation*}
so that $\hat{g}_n \in \bar{\G}_{n+1}(\Cthree \epsilon_{n+1})$.
In particular, on this event, 
since by definition 
$\bar{g}$ has minimal $P_{n+1}( g \neq \hat{g}_n )$
among all $g \in \bar{\G}_{n+1}(\Cfour \epsilon_{n+1}^2)$, 
by definition of $\bar{\sigma}^2_{n+1}$ 
we have 
\begin{equation*}
P_{n+1}( \hat{g}_n \neq \bar{g} ) \leq \bar{\sigma}^2_{n+1}.
\end{equation*}
Altogether, recalling that $x \mapsto x \log(1/x)$ is increasing in $x \geq 0$, 
on the event 
$E_{n+1}^\star \cap E \cap E' \cap E''$, 
\begin{align*}
& \ber_{n+1}(\hat{g}_n) - \ber_{n+1}(\target_{\PXY})
\\ & \leq 4\sqrt{\bar{\sigma}^2_{n+1} \frac{\Czero}{n}\left(\vc(\G)\log\!\left(\frac{1}{\bar{\sigma}^2_{n+1}} \right) + \log\!\left(\frac{1}{\delta}\right) \right)} + \Cfive \left( \epsilon_{n}^2 + \frac{1}{n}\log\!\left(\frac{1}{\delta}\right) \right).
\end{align*}
To complete the proof, 
we note that $\Cfive \geq 4\sqrt{\Czero}$ 
and that, by the union bound, 
on $E_{n+1}^\star$ 
the event $E \cap E' \cap E''$
has conditional probability at least 
$1 - \frac{1}{m} - \frac{2}{n} - \delta \geq 1 - \frac{4}{n} - \delta$.
\end{proof}

In particular, the following lemma is an immediate 
implication of Lemma~\ref{lem:transductive-localized-ber-term-tail-bound}.

\begin{lemma}
\label{lem:transductive-localized-expected-ber-term-expectation-bound}
For 
$n \in \nats$, 
letting $m = \lceil n/2 \rceil$
and $\hat{g}_n = \TERM^{\G}_n(X_{\leq m},Y_{\leq m},X_{m+1},\ldots,X_{n+1})$, 
\begin{equation*}
\E\!\left[ \ber_{n+1}(\hat{g}_n) - \ber_{n+1}(\target_{\PXY}) \right] 
\leq \Csix \sqrt{ \E\!\left[\bar{\sigma}^2_{n+1}\right] \frac{\vc(\G)}{n} \log\!\left(\frac{1}{\E\!\left[\bar{\sigma}^2_{n+1}\right]}\right)}, 
\end{equation*}
where 
$\Csix = \sqrt{18}\left( 2 + e^{-1} \right) \Cfive + \sqrt{32}$
is a universal constant.
\end{lemma}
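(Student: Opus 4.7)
\textbf{Proof plan for Lemma~\ref{lem:transductive-localized-expected-ber-term-expectation-bound}.}

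The plan is to deduce the claimed expectation bound directly from the conditional tail bound in Lemma~\ref{lem:transductive-localized-ber-term-tail-bound}, by integrating the tail and then applying Jensen's inequality to pull the expectation inside. First, since $\P(E_{n+1}^\star) = 1$ by Bayes-realizability, I may condition on this event without loss. Choosing $\delta = 1/n$ in Lemma~\ref{lem:transductive-localized-ber-term-tail-bound}, on an event of conditional probability at least $1 - 5/n$ given $\lbag X_{\leq n+1} \rbag$,
\begin{equation*}
  \ber_{n+1}(\hat{g}_n) - \ber_{n+1}(\target_{\PXY}) \leq \Cfive\sqrt{\bar{\sigma}^2_{n+1}\tfrac{1}{n}\left(\vc(\G)\log(1/\bar{\sigma}^2_{n+1})+\log n\right)} + \Cfive\left(\epsilon_n^2 + \tfrac{\log n}{n}\right).
\end{equation*}
Since $\ber_{n+1}(\hat{g}_n) - \ber_{n+1}(\target_{\PXY}) \leq 1$, the complementary event contributes at most $5/n$ to the conditional expectation given $\lbag X_{\leq n+1} \rbag$; taking outer expectation over $\lbag X_{\leq n+1}\rbag$ reduces the problem to bounding $\E\bigl[\sqrt{\bar{\sigma}^2_{n+1}\log(1/\bar{\sigma}^2_{n+1})}\bigr]$.

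The hard part will be justifying the Jensen-type step. With the paper's convention $\log(y) = \ln(\max\{y,e\})$, the function $\phi(x) = x\log(1/x) = \max\{-x\ln x,\, x\}$ is \emph{not} globally concave on $(0,1]$: its derivative jumps upward at $x=1/e$. The fix is to pass to the concave upper envelope $\hat\phi(x) := x(1-\ln x) = -x\ln x + x$, which satisfies $\phi(x) \leq \hat\phi(x) \leq 2\phi(x)$ on $(0,1]$ (the second inequality splits into the two cases $x\leq 1/e$ and $x>1/e$, using $\log(1/x)\geq 1$ always). A direct computation shows $\hat\phi'' = -1/x$ and $2\hat\phi''\hat\phi - (\hat\phi')^2 = -((\ln x - 1)^2 + 1) < 0$, so both $\hat\phi$ and $\sqrt{\hat\phi}$ are concave on $(0,1]$. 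Hence Jensen gives
\begin{equation*}
  \E\bigl[\sqrt{\bar{\sigma}^2_{n+1}\log(1/\bar{\sigma}^2_{n+1})}\bigr] \leq \E\bigl[\sqrt{\hat\phi(\bar{\sigma}^2_{n+1})}\bigr] \leq \sqrt{\hat\phi(\E[\bar{\sigma}^2_{n+1}])} \leq \sqrt{2\,\E[\bar{\sigma}^2_{n+1}]\log(1/\E[\bar{\sigma}^2_{n+1}])},
\end{equation*}
which is the desired Jensen-type inequality (up to a $\sqrt{2}$ factor that accounts for the $\sqrt{18}$ appearing in $\Csix$).

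Finally, I need to absorb the additive lower-order terms $\Cfive\epsilon_n^2$, $\Cfive(\log n)/n$, and $5/n$ into the leading square-root term on the right-hand side of the claim. This is where the lower bound built into the definition of $\bar{\sigma}^2_{n+1}$ becomes essential: by \eqref{eqn:bar-sigma-definition}, $\bar{\sigma}^2_{n+1} \geq \epsilon_{n+1}^2$ pointwise, so $\E[\bar{\sigma}^2_{n+1}] \geq \epsilon_{n+1}^2 \asymp \frac{\vc(\G)}{n}\log\frac{n}{\vc(\G)}$. Plugging this lower bound into the right-hand side of the claim shows that $\sqrt{\E[\bar{\sigma}^2_{n+1}]\,\vc(\G)/n\cdot\log(1/\E[\bar{\sigma}^2_{n+1}])}$ dominates each of $\epsilon_n^2$, $(\log n)/n$, and $1/n$ up to universal constants (using monotonicity of $x\log(1/x)$ on $(0,1/e]$ and $\log(n/\vc(\G)) \leq \log n$). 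Combining the integrated tail bound, the Jensen step above, and this absorption of lower-order terms, and tracking the constants $\Cfive$, $\sqrt{2}$, and the $5/n$ failure contribution, yields the claim with $\Csix = \sqrt{18}(2+e^{-1})\Cfive + \sqrt{32}$ as stated. The only technically delicate point in this plan is the concavity argument, which is why I go through the intermediate envelope $\hat\phi$; the rest is bookkeeping.
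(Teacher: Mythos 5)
The core of your plan diverges from the paper's proof in one decisive step, and that divergence introduces a gap that I don't see a way to close.

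\textbf{The gap: fixing $\delta = 1/n$ leaves an irremovable $\log n$ inside the square root.}
By committing to a single $\delta = 1/n$ in Lemma~\ref{lem:transductive-localized-ber-term-tail-bound}, you obtain, on the good event,
\[
\ber_{n+1}(\hat{g}_n) - \ber_{n+1}(\target_{\PXY}) \;\leq\; \Cfive\sqrt{\bar{\sigma}^2_{n+1}\,\tfrac{1}{n}\bigl(\vc(\G)\log(1/\bar{\sigma}^2_{n+1})+\log n\bigr)} + \text{(lower order)},
\]
and when you then write that this ``reduces the problem to bounding $\E[\sqrt{\bar{\sigma}^2_{n+1}\log(1/\bar{\sigma}^2_{n+1})}]$,'' you have silently dropped the $\log n$ inside the square root. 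But $\log n$ cannot be absorbed into $\vc(\G)\log(1/\bar{\sigma}^2_{n+1})$: although $\bar{\sigma}^2_{n+1} \geq \epsilon_{n+1}^2$ gives you a useful \emph{lower} bound on $\bar{\sigma}^2_{n+1}$ (and hence an \emph{upper} bound $\log(1/\bar{\sigma}^2_{n+1}) \lesssim \log(n/\vc(\G))$), it gives you no useful \emph{lower} bound on $\log(1/\bar{\sigma}^2_{n+1})$, and the definition does not prevent $\bar{\sigma}^2_{n+1}$ from being order one. For $\vc(\G) = 1$ and $\E[\bar{\sigma}^2_{n+1}]$ of constant size, your bound is $\asymp \sqrt{\log n / n}$ while the claimed bound is $\asymp \sqrt{1/n}$; the ratio $\sqrt{\log n}$ is not a universal constant. (This also matters downstream: the lemma's claim is used to get both $\phi \leq c\sqrt{\vc(\G)/n}$ with a universal $c$ and $\phi = o(n^{-1/2})$; your bound would force $\E[\bar{\sigma}^2_{n+1}]\log n \to 0$, which is strictly stronger than $\E[\bar{\sigma}^2_{n+1}] \to 0$.)

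\textbf{How the paper avoids this.}
The paper does not pick a single $\delta$. It converts Lemma~\ref{lem:transductive-localized-ber-term-tail-bound} into a genuine tail bound in $\tau$ (i.e.\ treats $\delta$ as varying), sets a threshold $\tau_n$ that contains only a ``$\,+1$'' rather than a ``$\,+\log n$'' inside the square root, and integrates
\[
\E[\,\cdot\, \mid \lbag X_{\leq n+1}\rbag] \leq \tau_n + \tfrac{4}{n} + \int_{\tau_n}^1 e^{\,\vc(\G)\log(1/\bar\sigma^2_{n+1}) - \tau^2 n / (4\Cfive^2\bar\sigma^2_{n+1})}\,\mathrm{d}\tau + \int_{\tau_n}^1 e^{\,\epsilon_n^2 n - \tau n/(2\Cfive)}\,\mathrm{d}\tau,
\]
showing both integrals are $O(\sqrt{\bar\sigma^2_{n+1}/n})$ and $O(1/n)$, so that the final bound has no extraneous $\log n$. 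Your plan would need to do the same integral computation; the single-$\delta$ shortcut does not replace it.

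\textbf{A point in your favor.}
Your observation about concavity is correct and is glossed over in the paper: with the convention $\log(x) = \ln(\max\{x,e\})$, the map $x \mapsto \sqrt{x\log(1/x)}$ has an upward jump in derivative at $x = 1/e$ and is therefore not globally concave on $(0,1]$. Your envelope $\hat\phi(x) = x(1-\ln x)$, which satisfies $\phi \leq \hat\phi \leq 2\phi$ with $\sqrt{\hat\phi}$ concave, is the right fix and is worth recording. But this refinement does not rescue the overall argument from the $\log n$ issue above.

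\textbf{Bottom line.}
The Jensen step and the absorption of the additive lower-order terms $\epsilon_n^2$, $(\log n)/n$, and $1/n$ are fine. The failure is the $\log n$ inside the square root, introduced by collapsing the tail bound at $\delta = 1/n$. You need the paper's tail-integration argument (or an equivalent union over dyadic $\delta$'s with geometrically decaying contributions) to get the lemma with the stated universal constant.
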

\begin{proof}
By Lemma~\ref{lem:transductive-localized-ber-term-tail-bound} and a relaxation of $a + b \leq 2 \max\{a,b\}$ (for $a,b \geq 0$), 
for any $\delta \in (0,e^{-1})$, 
on the event $E_{n+1}^\star$ (of probability one), 
with conditional probability at least $1-\frac{4}{n} - \delta$
given $\lbag X_{\leq n+1} \rbag$, 
\begin{align*}
& \ber_{n+1}(\hat{g}_n) - \ber_{n+1}(\target_{\PXY})
\\ & \leq \max\!\left\{ 2 \Cfive \sqrt{\bar{\sigma}^2_{n+1} \frac{1}{n}\left( \vc(\G) \log\!\left(\frac{1}{\bar{\sigma}^2_{n+1}}\right) + \ln\!\left(\frac{1}{\delta}\right) \right)}, 2 \Cfive \left( \epsilon_n^2 + \frac{1}{n}\ln\!\left(\frac{1}{\delta}\right) \right) \right\}.
\end{align*}
This implies the following tail bound: 
namely, letting 
\begin{equation*} 
\tau_n = \max\!\left\{ 2 \Cfive \sqrt{\bar{\sigma}^2_{n+1} \frac{1}{n}\left( \vc(\G) \log\!\left(\frac{1}{\bar{\sigma}^2_{n+1}}\right) + 1 \right)}, 2 \Cfive \left( \epsilon_n^2 + \frac{1}{n} \right) \right\},
\end{equation*}
for any $\tau > \tau_n$, 
\begin{align*}
& \P\!\Big( \ber_{n+1}(\hat{g}_n) - \ber_{n+1}(\target_{\PXY}) > \tau ~\Big|~ \lbag X_{\leq n+1} \rbag \Big)
\\ & \leq \frac{4}{n} + \exp\!\left\{ \vc(\G) \log\!\left(\frac{1}{\bar{\sigma}^2_{n+1}}\right) - \frac{\tau^2 n}{4 \Cfive^2 \bar{\sigma}^2_{n+1}} \right\} + \exp\!\left\{ \epsilon_n^2 n - \frac{\tau n}{2 \Cfive} \right\}.
\end{align*}
We can therefore bound the conditional expectation as follows:
\begin{align*}
& \E\!\left[ \ber_{n+1}(\hat{g}_n) - \ber_{n+1}(\target_{\PXY}) ~\Big| \lbag X_{\leq n+1} \rbag \right]
= \int_{0}^{1} \P\!\Big( \ber_{n+1}(\hat{g}_n) - \ber_{n+1}(\target_{\PXY}) > \tau ~\Big|~ \lbag X_{\leq n+1} \rbag \Big) \mathrm{d}\tau
\\ & \leq \tau_n + \frac{4}{n} + \int_{\tau_n}^{1} \exp\!\left\{ \vc(\G) \log\!\left(\frac{1}{\bar{\sigma}^2_{n+1}}\right) - \frac{\tau^2 n}{4 \Cfive^2 \bar{\sigma}^2_{n+1}} \right\} \mathrm{d}\tau + \int_{\tau_n}^{1} \exp\!\left\{ \epsilon_n^2 n - \frac{\tau n}{2 \Cfive} \right\} \mathrm{d}\tau.
\end{align*}
Then note that
\begin{align*}
& \int_{\tau_n}^{1} \exp\!\left\{ \epsilon_n^2 n - \frac{\tau n}{2 \Cfive} \right\} \mathrm{d}\tau
\leq \frac{2 \Cfive}{n} \exp\!\left\{\epsilon_n^2 n - \frac{\tau_n n}{2 \Cfive} \right\}
\leq \frac{2 \Cfive}{n} \exp\!\left\{\epsilon_n^2 n - \left( \epsilon_n^2 + \frac{1}{n} \right) n \right\}
= \frac{2 \Cfive e^{-1}}{n}. 
\end{align*}
Also, 
\begin{align*}
& \int_{\tau_n}^{1} \exp\!\left\{ \vc(\G) \log\!\left(\frac{1}{\bar{\sigma}^2_{n+1}}\right) - \frac{\tau^2 n}{4 \Cfive^2 \bar{\sigma}^2_{n+1}} \right\} \mathrm{d}\tau
\\ & \leq \exp\!\left\{ \vc(\G) \log\!\left(\frac{1}{\bar{\sigma}^2_{n+1}}\right) \right\} \int_{\tau_n}^{\infty} \frac{\tau}{\tau_n} \exp\!\left\{ - \frac{\tau^2 n}{4 \Cfive^2 \bar{\sigma}^2_{n+1}} \right\} \mathrm{d}\tau
\\ & = \exp\!\left\{ \vc(\G) \log\!\left(\frac{1}{\bar{\sigma}^2_{n+1}}\right) \right\} \frac{2 \Cfive^2 \bar{\sigma}^2_{n+1}}{\tau_n n} \int_{\tau_n^2 n / (4 \Cfive^2 \bar{\sigma}^2_{n+1})}^{\infty} e^{-u} \mathrm{d}u
\\ & = \frac{2 \Cfive^2 \bar{\sigma}^2_{n+1}}{\tau_n n} \exp\!\left\{ \vc(\G) \log\!\left(\frac{1}{\bar{\sigma}^2_{n+1}}\right) - \frac{\tau_n^2 n}{4 \Cfive^2 \bar{\sigma}^2_{n+1}} \right\}.
\end{align*}
Since $\tau_n \geq 2 \Cfive \sqrt{\bar{\sigma}^2_{n+1} \frac{1}{n}\left( \vc(\G) \log\!\left(\frac{1}{\bar{\sigma}^2_{n+1}}\right) + 1 \right)}$, 
this last expression is at most
\begin{align*}
& \frac{2 \Cfive^2 \bar{\sigma}^2_{n+1} e^{-1}}{\tau_n n} 
\leq \Cfive e^{-1} \sqrt{\bar{\sigma}^2_{n+1} \frac{1}{n}}.
\end{align*}
Altogether, we have that
\begin{align*}
& \E\!\left[ \ber_{n+1}(\hat{g}_n) - \ber_{n+1}(\target_{\PXY}) ~\Big|~ \lbag X_{\leq n+1} \rbag \right]
\leq \tau_n + \frac{4}{n} + \Cfive e^{-1} \sqrt{\bar{\sigma}^2_{n+1} \frac{1}{n}} + \frac{2 \Cfive e^{-1}}{n}
\\ & \leq \left( 2 + e^{-1} \right) \sqrt{2} \Cfive \sqrt{\bar{\sigma}^2_{n+1} \frac{\vc(\G)}{n} \log\!\left(\frac{1}{\bar{\sigma}^2_{n+1}}\right)}
+ \left( \left( 4 + 2 e^{-1} \right) \Cfive + 4 \right) \epsilon_n^2.
\end{align*}
Recalling that (by definition) $\bar{\sigma}^2_{n+1} \geq \epsilon_{n+1}^2 \geq \frac{1}{2} \epsilon_n^2$,
and that $x \mapsto x \log(1/x)$ is increasing in $x > 0$, 
we have that 
\begin{equation*}
\sqrt{\bar{\sigma}^2_{n+1} \frac{\vc(\G)}{n} \log\!\left(\frac{1}{\bar{\sigma}^2_{n+1}}\right)}
\geq \sqrt{ \frac{1}{2} \epsilon_n^2 \frac{\vc(\G)}{n} \log\!\left( \frac{n}{\vc(\G)} \right)}
= \frac{1}{\sqrt{2}} \epsilon_n^2.
\end{equation*}
Thus, we conclude that
\begin{equation*}
\E\!\left[ \ber_{n+1}(\hat{g}_n) - \ber_{n+1}(\target_{\PXY}) ~\Big|~ \lbag X_{\leq n+1} \rbag \right]
\leq \Csix \sqrt{\bar{\sigma}^2_{n+1} \frac{\vc(\G)}{n} \log\!\left(\frac{1}{\bar{\sigma}^2_{n+1}}\right)}
\end{equation*}
for a universal constant 
$\Csix = \sqrt{18}\left( 2 + e^{-1} \right) \Cfive + \sqrt{32}$.
The result now follows by the law of total expectation and Jensen's inequality,
noting that $x \mapsto \sqrt{x \log(1/x)}$ is concave 
in $x > 0$.
\end{proof}

Together with exchangeability of the samples 
$(X_{m+1},Y_{m+1}),\ldots,(X_{n+1},Y_{n+1})$, 
we arrive at the following result.

\begin{lemma}
\label{lem:partial-VC-localized-expected-error-term-expectation-bound}
For any $n \in \nats$, letting $\hat{f}_n^{\G}$ be as in \eqref{eqn:partial-VC-term-learning-rule}, 
\begin{equation*}
\E\!\left[ \er_{\PXY}(\hat{f}^{\G}_n(X_{\leq n},Y_{\leq n})) \right] - \er_{\PXY}(\target_{\PXY}) 
\leq 2 \Csix \sqrt{ \E\!\left[\bar{\sigma}^2_{n+1}\right] \frac{\vc(\G)}{n} \log\!\left(\frac{1}{\E\!\left[\bar{\sigma}^2_{n+1}\right]}\right)}.
\end{equation*}
\end{lemma}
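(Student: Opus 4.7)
The plan is to reduce the claim to Lemma~\ref{lem:transductive-localized-expected-ber-term-expectation-bound} via a symmetrization argument exploiting the exchangeability of the ``unlabeled'' half of the sample together with the order-invariance of $\TERM^{\G}_n$ with respect to its last $\lfloor n/2\rfloor+1$ arguments. Specifically, I will interpret the expected risk $\E[\er_{\PXY}(\hat{f}^{\G}_n(X_{\leq n},Y_{\leq n}))]$ in terms of a fresh test point $(X_{n+1},Y_{n+1}) \sim \PXY$ independent of the training sample, and observe that by definition of $\hat{f}_n^{\G}$ in \eqref{eqn:partial-VC-term-learning-rule},
\begin{equation*}
\hat{f}_n^{\G}(X_{\leq n},Y_{\leq n})(X_{n+1}) = \TERM_n^{\G}(X_{\leq m},Y_{\leq m},X_{m+1},\ldots,X_n,X_{n+1})(X_{n+1}) = \hat{g}_n(X_{n+1}),
\end{equation*}
with $m=\lceil n/2\rceil$ and $\hat{g}_n$ exactly as in Lemma~\ref{lem:transductive-localized-expected-ber-term-expectation-bound}. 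Hence the excess risk equals $\E\!\left[\ind[\hat{g}_n(X_{n+1})\neq Y_{n+1}]-\ind[\target_{\PXY}(X_{n+1})\neq Y_{n+1}]\right]$.

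Next I will symmetrize over the indices $\{m+1,\dots,n+1\}$. Because $\TERM_n^{\G}$ was explicitly constructed to be invariant to the order of its last $\lfloor n/2\rfloor+1$ inputs, and because $\{(X_i,Y_i)\}_{i=m+1}^{n+1}$ is i.i.d.\ (hence exchangeable) and independent of $(X_{\leq m},Y_{\leq m})$, the swap $i\leftrightarrow n+1$ in the joint distribution shows that $\E[\ind[\hat{g}_n(X_i)\neq Y_i]]$ has the same value for every $i\in\{m+1,\dots,n+1\}$, and likewise for $\target_{\PXY}$. Averaging,
\begin{equation*}
\E[\er_{\PXY}(\hat{f}^{\G}_n(X_{\leq n},Y_{\leq n}))]-\er_{\PXY}(\target_{\PXY})
=\frac{1}{n+1-m}\sum_{i=m+1}^{n+1}\E\!\left[\ind[\hat{g}_n(X_i)\neq Y_i]-\ind[\target_{\PXY}(X_i)\neq Y_i]\right].
\end{equation*}
For each such $i\geq m+1$, the label $Y_i$ is not used by $\hat{g}_n$, so $Y_i$ is conditionally independent of $\hat{g}_n$ given $X_i$; thus each summand equals $\E\!\left[\P(Y'_i\neq\hat{g}_n(X_i)\mid X_i,\hat{g}_n)-\P(Y'_i\neq\target_{\PXY}(X_i)\mid X_i)\right]$, which matches the $i$-th term in the definition of $\ber_{n+1}(\hat{g}_n)-\ber_{n+1}(\target_{\PXY})$.

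Finally, since $\target_{\PXY}$ is pointwise Bayes optimal, each of the $n+1$ terms $\E\!\left[\P(Y'_i\neq\hat{g}_n(X_i)\mid X_i,\hat{g}_n)-\P(Y'_i\neq\target_{\PXY}(X_i)\mid X_i)\right]$ is non-negative, so dropping the terms $i\leq m$ only decreases the sum. Therefore
\begin{equation*}
\sum_{i=m+1}^{n+1}\E\!\left[\ind[\hat{g}_n(X_i)\neq Y_i]-\ind[\target_{\PXY}(X_i)\neq Y_i]\right]
\leq (n+1)\,\E\!\left[\ber_{n+1}(\hat{g}_n)-\ber_{n+1}(\target_{\PXY})\right],
\end{equation*}
and dividing by $n+1-m\geq (n+1)/2$ yields the factor $2$ and reduces the bound to Lemma~\ref{lem:transductive-localized-expected-ber-term-expectation-bound}. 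The main (minor) obstacle is purely bookkeeping: verifying that the order-invariance property of $\TERM_n^{\G}$ justifies the index swap at the level of the joint distribution, and checking that the $\ber_{n+1}$ formalism with fresh copies $Y'_i$ correctly matches the plain expectations on the ``untouched'' indices $i>m$ — everything else is a direct substitution.
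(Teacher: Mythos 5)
Your proof is correct and follows essentially the same route as the paper: express the expected risk as $\P(Y_{n+1} \neq \hat g_n(X_{n+1}))$, symmetrize over the indices $\{m+1,\dots,n+1\}$ using exchangeability and the order-invariance of $\TERM_n^{\G}$, recognize the resulting average of conditional probabilities as (part of) $\ber_{n+1}$, add the nonnegative prefix terms, divide by $n+1-m \geq (n+1)/2$ to get the factor $2$, and invoke Lemma~\ref{lem:transductive-localized-expected-ber-term-expectation-bound}. The only cosmetic difference is that the paper makes the index-swap argument explicit via the bijections $\pi_i$ and the notation $\hat g_n^i$; your condensed statement of the same exchangeability/order-invariance reasoning is equivalent.
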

\begin{proof}
Let $m = \lceil n/2 \rceil$
and let $\hat{g}_n = \TERM(X_{\leq m},Y_{\leq m},X_{m+1},\ldots,X_{n+1})$.
Our main goal in this proof is relating 
$\E\!\left[ \er_{\PXY}(\hat{f}_n^{\G}(X_{\leq n},Y_{\leq n})) \right] - \er_{\PXY}(\target_{\PXY})$ 
to $\E\!\left[ \ber_{n+1}(\hat{g}_n) - \ber_{n+1}(\target_{\PXY}) \right]$.
First note that, by the law of total expectation, 
\begin{equation}
\label{eqn:error-to-probability}
\E\!\left[ \er_{\PXY}(\hat{f}_n^{\G}(X_{\leq n},Y_{\leq n})) \right]
= \P\!\left( Y_{n+1} \neq \hat{g}_n(X_{n+1}) \right).
\end{equation}

For each $i \in \{m+1,\ldots,n+1\}$, 
let $\pi_i : \{m+1,\ldots,n+1\} \to \{m+1,\ldots,n+1\}$ 
denote the bijection which has $\pi_i(n+1)=i$, $\pi_i(i)=n+1$, 
and every $j \in \{m+1,\ldots,n+1\} \setminus \{i,n+1\}$ 
has $\pi_i(j) = j$ (i.e., $\pi_i$ merely swaps $i$ and $n+1$).
Let $\hat{g}_n^i = \TERM(X_{\leq m}, Y_{\leq m}, X_{\pi_i(m+1)},\ldots,X_{\pi_i(n+1)})$.
By exchangeability of the random variables 
$(X_{m+1},Y_{m+1}),\ldots,(X_{n+1},Y_{n+1})$ 
(and independence from $(X_{\leq m}, Y_{\leq m})$), 
we have that \eqref{eqn:error-to-probability} equals
\begin{equation}
\label{eqn:leave-one-out-expectation}
\frac{1}{n+1-m} \sum_{i=m+1}^{n+1} \P\!\left( Y_{\pi_i(n+1)} \neq \hat{g}_n^{i}(X_{\pi_i(n+1)}) \right)
= \frac{1}{n+1-m} \sum_{i=m+1}^{n+1} \P\!\left( Y_{i} \neq \hat{g}_n^{i}(X_i) \right).
\end{equation}

Recall that the definition of $\TERM$ stipulates that,
for any bijection $\pi : \{m+1,\ldots,n+1\} \to \{m+1,\ldots,n+1\}$, 
\begin{equation*} 
\TERM_n^{\G}(X_{\leq m}, Y_{\leq m}, X_{\pi(m+1)},\ldots,X_{\pi(n+1)})
= (\hat{g}_n(X_{\leq m}),\hat{g}_n(X_{\pi(m+1)}),\ldots,\hat{g}_n(X_{\pi(n+1)})).
\end{equation*}
We therefore have that \eqref{eqn:leave-one-out-expectation} equals
\begin{align*}
\frac{1}{n+1-m} \sum_{i=m+1}^{n+1} \P\!\left( Y_{i} \neq \hat{g}_n(X_i) \right)
& = \frac{1}{n+1-m} \sum_{i=m+1}^{n+1} \E\!\left[ \P\!\left( Y_{i} \neq \hat{g}_n(X_i) \middle| X_i, \hat{g}_n \right) \right]
\\ & = \E\!\left[ \frac{1}{n+1-m} \sum_{i=m+1}^{n+1} \P\!\left( Y_{i} \neq \hat{g}_n(X_i) \middle| X_i, \hat{g}_n \right) \right],
\end{align*}
where the first equality is by the law of total probability and the second is by linearity of the expectation.
Similarly, by exchangeability, the law of total probability, and linearity of the expectation, we have that
\begin{equation*}
\er_{\PXY}(\target_{\PXY})
= \frac{1}{n+1-m} \sum_{i=m+1}^{n+1} \P( Y_i \neq \target_{\PXY}(X_i) )
= \E\!\left[ \frac{1}{n+1-m} \sum_{i=m+1}^{n+1} \P\!\left( Y_i \neq \target_{\PXY}(X_i) \middle| X_{i} \right) \right].
\end{equation*}
Altogether, we have that
\begin{align*}
& \E\!\left[ \er_{\PXY}(\hat{f}_n^{\G}(X_{\leq n},Y_{\leq n})) \right] - \er_{\PXY}(\target_{\PXY})
\\ & = \E\!\left[ \frac{1}{n+1-m} \sum_{i=m+1}^{n+1} \Big( \P\!\left( Y_{i} \neq \hat{g}_n(X_i) \middle| X_{i}, \hat{g}_n \right) - \P\!\left( Y_i \neq \target_{\PXY}(X_i) \middle| X_{i} \right) \Big) \right].
\end{align*}
Since $\ber_{m}(\hat{g}_n) - \ber_{m}(\target_{\PXY}) \geq 0$, 
the last expression above is upper bounded by 
\begin{align*}
& \frac{1}{n+1-m} \E\!\left[ m \left( \ber_{m}(\hat{g}_n) - \ber_{m}(\target_{\PXY}) \right) +\!\! \sum_{i=m+1}^{n+1} \!\!\left( \P\!\left( Y_{i} \neq \hat{g}_n\!(X_i) \middle| X_{i}, \hat{g}_n \right) - \P\!\left( Y_i \neq \target_{\PXY}(X_i) \middle| X_{i} \right) \right) \right]
\\ & = \frac{1}{n+1-m} \E\!\left[ (n+1) \left( \ber_{n+1}(\hat{g}_n) - \ber_{n+1}(\target_{\PXY}) \right) \right]
\leq 2 \E\!\left[ \ber_{n+1}(\hat{g}_n) - \ber_{n+1}(\target_{\PXY}) \right].
\end{align*}
The claimed result now follows from the above, 
in combination with Lemma~\ref{lem:transductive-localized-expected-ber-term-expectation-bound}.
\end{proof}

The remainder of the lemmas in this subsection 
build toward establishing that 
$\E[\bar{\sigma}^2_n] \to 0$ as $n \to \infty$, 
to obtain the claimed $o(n^{-1/2})$ rate.


\begin{lemma}
\label{lem:transductive-ber-uniform-convergence}
For any $m \in \nats$, 
with probability at least $1-\frac{1}{m}$,
for every $n \geq m$, 
every $g \in \G(X_{\leq n})$ has 
\begin{align*}
\left| \left( \ber_m(g) - \ber_m(\target_{\PXY}) \right) - \left( \ber_n(g) - \ber_n(\target_{\PXY}) \right) \right| \leq \Cseven \epsilon_m,
\end{align*}
and every $f,g \in \G(X_{\leq n})$ have 
\begin{align*}
\left| P_m( f \neq g ) - P_n( f \neq g ) \right| \leq \Cseven \epsilon_m,
\end{align*}
where $\Cseven = \sqrt{2e} \left( 5 + \sqrt{12}  \right)$ is a universal constant (as defined above, following \eqref{eqn:bar-sigma-definition}).
\end{lemma}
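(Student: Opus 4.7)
The plan is to apply a \emph{sample-size chaining} argument. The key observation is that, by exchangeability of the i.i.d.\ sequence, conditional on the multiset $\lbag X_{\leq n}\rbag$, for any $n' \leq n$ the prefix $X_{\leq n'}$ is distributed as a uniform random sample without replacement from $\lbag X_{\leq n}\rbag$. This lets us control single-step deviations between prefix sizes via Hoeffding's inequality for sampling without replacement, and then telescope across a halving chain to handle all $n \geq m$ simultaneously.

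I will first establish a single-step estimate. For each $n \geq m$, set $n' = \max\{m,\lceil n/2\rceil\}$. By Bayes-optimality of $\target_{\PXY}$, the quantity $\ell(g;i) := \P(Y_i \neq g(X_i)|X_i) - \P(Y_i \neq \target_{\PXY}(X_i)|X_i)$ lies in $[0,1]$, and $\ind[f(X_i)\neq g(X_i)]\in[0,1]$, so Hoeffding's inequality for sampling without replacement yields, for any fixed $g$ (respectively, any fixed pair $f,g$),
\[
\P\!\Big( \big|(\ber_{n'}(g)-\ber_{n'}(\target_{\PXY})) - (\ber_n(g)-\ber_n(\target_{\PXY}))\big| > t \,\Big|\, \lbag X_{\leq n}\rbag \Big) \leq 2e^{-2n't^2},
\]
and the analogous bound for $|P_{n'}(f\neq g) - P_n(f\neq g)|$. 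Sauer's lemma gives $|\G(X_{\leq n})|\leq (en/\vc(\G))^{\vc(\G)}$, so a union bound over $\G(X_{\leq n})$ (or its Cartesian square, for the $P$-statement) produces a uniform estimate of order $\sqrt{(\vc(\G)\log(n/\vc(\G))+\log(1/\eta))/n'}$ with conditional probability at least $1-\eta$. Taking $\eta_n = 1/(mn(n+1))$ (which sum to at most $1/m$) and union-bounding in $n$, on a single event $E$ of probability at least $1 - 1/m$ the per-$n$ single-step bound $t_n := C\sqrt{(\vc(\G)\log(n/\vc(\G)) + \log(mn(n+1)))/n'} = O(\epsilon_n)$ holds simultaneously for every $n \geq m$ (using $n' \geq n/2$ and $n \geq m$).

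Finally, I telescope. For any $n \geq m$, define the halving chain $n_0 = n$, $n_{i+1}=\max\{m,\lceil n_i/2\rceil\}$, terminating at $n_L = m$ in $L = O(\log(n/m))$ steps. The key observation enabling the telescope is that any $g \in \G(X_{\leq n})$ restricts to $g|_{X_{\leq n_i}} \in \G(X_{\leq n_i})$: a partial concept $g^\star \in \G$ witnessing $g \in \G(X_{\leq n})$ also witnesses its restriction to a shorter prefix. Hence on $E$ the step-$i$ uniform bound applies to $g|_{X_{\leq n_i}}$, giving
\[
\big|(\ber_{n_{i+1}}(g)-\ber_{n_{i+1}}(\target_{\PXY})) - (\ber_{n_i}(g)-\ber_{n_i}(\target_{\PXY}))\big| \leq t_{n_i},
\]
and telescoping bounds the quantity in the lemma by $\sum_{i=0}^{L-1} t_{n_i}$. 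Since $n_i$ halves at each step until reaching $m$, $\epsilon_{n_i}$ grows geometrically with ratio approximately $\sqrt{2}$, so the sum is dominated by the final term $\epsilon_m$ and is $O(\epsilon_m)$; a sufficiently large choice of $\Cseven$ absorbs the constant. The bound for $|P_m(f \neq g) - P_n(f \neq g)|$ is identical, using the $P$-version of the single-step estimate.

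The main obstacle is the bookkeeping needed to control two effects simultaneously: choosing the weights $\eta_n$ so they sum to at most $1/m$ while each $t_n$ remains $O(\epsilon_n)$, and verifying that restrictions of $\G$-patterns to prefixes remain in the corresponding $\G(X_{\leq n_i})$ so the single-step uniform bound can be invoked at every chain level. Both require only elementary arguments, but the interplay between the summable weight scheme and the geometric-sum structure of $\epsilon_{n_i}$ must be handled carefully to produce an $O(\epsilon_m)$ total contribution rather than an accumulated $\log$-factor blow-up.
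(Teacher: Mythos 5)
Your proposal is correct and uses essentially the same \emph{sample-size chaining} technique as the paper's proof: condition on the multiset $\lbag X_{\leq n}\rbag$, apply Hoeffding's inequality for sampling without replacement together with Sauer's lemma to get a per-level uniform concentration bound, union-bound over a summable family of failure probabilities totaling $\leq 1/m$, and then telescope along a (roughly) halving chain from $n$ down to $m$, using the fact that any $g \in \G(X_{\leq n})$ restricts to a valid pattern in $\G(X_{\leq n_i})$ and that the $\epsilon_{n_i}$ form a geometrically decaying series backwards from $\epsilon_m$. The only cosmetic difference is that the paper telescopes over a fixed grid of powers of two in $[m,n]$ while you use a target-dependent halving chain, and the paper explicitly dispenses with the trivial range $m \lesssim \vc(\G)$ up front (which your bound $t_n = O(\epsilon_n)$ implicitly requires); neither affects the substance of the argument.
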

\begin{proof}
Since $\Cseven \geq \sqrt{2e}$, 
the result is trivially satisfied if 
$m \leq 2e \vc(\G)$, 
since then the term $\Cseven \epsilon_m$ on the right hand side is greater than one (whereas the quantities on the left hand sides of the two inequalities are both at most one).
To focus the remainder of the proof on the remaining case 
we suppose 
$m > 2 e \vc(\G)$.

We continue to use the notation introduced in the proof of Lemma~\ref{lem:transductive-ber-multiplicative-chernoff}:
namely, for any $n,i \in \nats$ with $i \leq n$
and any $g \in \G(X_{\leq n})$, 
$\ell(g;i) := \P(Y_i \neq g(X_i)|X_i) - \P(Y_i \neq \target_{\PXY}(X_i)|X_i)$, 
and for any $k \leq n$ and distinct $i_1,\ldots,i_k \in [n]$,
$\bar{L}(g;i_1,\ldots,i_k) := \frac{1}{k}\sum_{j=1}^k \ell(g;i_j)$.
In particular, we have $\bar{L}(g;[n]) = \ber_n(g)-\ber_n(\target_{\PXY})$ and for $m \leq n$ we have $\bar{L}(g;[m]) = \ber_m(g) - \ber_m(\target_{\PXY})$.
Also recall that we always have $\ell(g;i) \in [0,1]$.
Likewise, for any distinct $i_1,\ldots,i_m \in [n]$, 
denote by $P_{\{i_1,\ldots,i_m\}}( f \neq g ) := \frac{1}{m} \sum_{j=1}^m \ind[ f(X_{i_j}) \neq g(X_{i_j}) ]$.
Also, for any $n \in \nats$, define $\lfloor n \rfloor_2 = 2^{\lfloor \log_2(n) \rfloor}$: 
that is, $\lfloor n \rfloor_2$ rounds $n$ down to a power of $2$.

The essential strategy is to first establish the claimed relation for any fixed $n,m$ with $m > 2 e \vc(\G)$ 
and $n \geq m \geq \lfloor n-1 \rfloor_2$, 
with probability at least $1-\frac{1}{3 m^2}$.
For fixed $m$, we then extend this (by a union bound) to hold simultaneously for all $n$ satisfying this relation to $m$, 
with probability at least $1 - \frac{1}{3m}$. 
For the case of general $n \geq m > 2 e \vc(\G)$, 
we then chain together these relations (in a telescoping series) among consecutive $m',n'$ powers of $2$ in the interval $[m,n]$, 
which establishes the claim since the sum of these $\epsilon_{n'}$ values is $O(\epsilon_m)$
and the sum of their failure probabilities is at most 
$\frac{1}{m}$.
We refer to this idea as \emph{sample size chaining}.

To start, consider any $n,m \in \nats$ 
with $m > 2 e \vc(\G)$ and 
$n \geq m \geq \lfloor n-1 \rfloor_2$.
As discussed above, the set $\G(X_{\leq n})$ is 
determined (up to permutations) by the multiset 
$\lbag X_{\leq n} \rbag$ of values in $X_{\leq n}$, 
and each $f,g \in \G(X_{\leq n})$
have $P_n( f \neq g )$ and $\ber_n(g) - \ber_n(\target_{\PXY})$ invariant to permutations of this sequence (and similarly $P_m( f \neq g )$ and $\ber_m(g)-\ber_m(\target_{\PXY})$ are invariant to permutations of $X_{\leq m}$).
Moreover, by exchangeability, 
given the multiset $\lbag X_{\leq n} \rbag$ of values in $X_{\leq n}$, 
the sequence $X_{\leq n}$ is conditionally a uniform random ordering of the multiset $\lbag X_{\leq n} \rbag$.
Let $\bpi$ be a $\mathrm{Uniform}(\Sym([n]))$ 
random variable independent of $X_{\leq n}$.
By the above observations, for any $\epsilon > 0$, 
we have that 
\begin{align*}
& \P\Big( \exists g \in \G(X_{\leq n}) : \left| \bar{L}(g;[m]) - \bar{L}(g;[n]) \right| > \epsilon ~\Big|~ \lbag X_{\leq n} \rbag \Big)
\\ & = \P\Big( \exists f,g \in \G(X_{\leq n}) : \left| \bar{L}(g;\bpi([m])) - \bar{L}(g;[n]) \right| > \epsilon ~\Big|~ X_{\leq n} \Big).
\end{align*}
By the union bound, this last quantity is at most
\begin{equation*}
\sum_{g \in \G(X_{\leq n})} \P\Big( \left| \bar{L}(g;\bpi([m])) - \bar{L}(g;[n]) \right| > \epsilon ~\Big|~ X_{\leq n} \Big).
\end{equation*}
We bound each term in this sum by Hoeffding's inequality
for sampling without replacement \citep*{hoeffding:53}.
Specifically, for each $g \in \G(X_{\leq n})$, we have
\begin{equation*}
\P\Big( \left| \bar{L}(g;\bpi([m])) - \bar{L}(g;[n]) \right| > \epsilon ~\Big|~ X_{\leq n} \Big)
\leq 2e^{-2 \epsilon^2 m}.
\end{equation*}
Sauer's lemma \citep*{vapnik:71,sauer:72} 
implies that $|\G(X_{\leq n})| \leq \left( \frac{e n}{\vc(\G)} \right)^{\vc(\G)}$,
so that altogether we have that 
\begin{equation}
\label{eqn:transductive-uniform-convergence-tail-bound}
\P\Big( \exists g \in \G(X_{\leq n}) : \left| \bar{L}(g;[m]) - \bar{L}(g;[n]) \right| > \epsilon ~\Big|~ \lbag X_{\leq n} \rbag \Big) \leq 2 \left( \frac{e n}{\vc(\G)} \right)^{\vc(\G)} e^{-2 \epsilon^2 m}.
\end{equation}

Similarly, for the claim about pairs $f,g \in \G(X_{\leq n})$, 
for any $\epsilon > 0$, we have that
\begin{align*}
& \P\Big( \exists f,g \in \G(X_{\leq n}) : \left| P_m( f \neq g ) - P_n( f \neq g ) \right| > \epsilon ~\Big|~ \lbag X_{\leq n} \rbag \Big)
\\ & = \P\Big( \exists f,g \in \G(X_{\leq n}) : \left| P_{\bpi([m])}( f \neq g ) - P_n( f \neq g ) \right| > \epsilon ~\Big|~ X_{\leq n} \Big).
\end{align*}
By the union bound, this last quantity is at most
\begin{equation*}
\sum_{f,g \in \G(X_{\leq n})} \P\Big( \left| P_{\bpi([m])}( f \neq g ) - P_n( f \neq g ) \right| > \epsilon ~\Big|~ X_{\leq n} \Big).
\end{equation*}
By Hoeffding's inequality for sampling without replacement \citep*{hoeffding:53},
for each $f,g \in \G(X_{\leq n})$, we have
\begin{equation*}
\P\Big( \left| P_{\bpi([m])}( f \neq g ) - P_n( f \neq g ) \right| > \epsilon ~\Big|~ X_{\leq n} \Big)
\leq 2e^{-2 \epsilon^2 m}.
\end{equation*}
Again, Sauer's lemma 
implies that $|\G(X_{\leq n})| \leq \left( \frac{e n}{\vc(\G)} \right)^{\vc(\G)}$, 
so that altogether we have that 
\begin{equation}
\label{eqn:transductive-uniform-convergence-differences-tail-bound}
\P\Big( \exists f,g \in \G(X_{\leq n}) : \left| P_m( f \neq g ) - P_n( f \neq g ) \right| > \epsilon ~\Big|~ \lbag X_{\leq n} \rbag \Big) \leq 2 \left( \frac{e n}{\vc(\G)} \right)^{2\vc(\G)} e^{-2 \epsilon^2 m}.
\end{equation}

Letting $\epsilon = \sqrt{2 e} \epsilon_m$, 
and noting that (since $m \geq n/2$ and $m > 2 e \vc(\G)$)
\begin{align*} 
& \left( \frac{e n}{\vc(\G)} \right)^{2\vc(\G)} \!\!e^{-4e \epsilon_m^2 m} 
= \left( \frac{e n}{\vc(\G)} \right)^{2\vc(\G)} \!\!\left( \frac{m}{\vc(\G)} \right)^{\!- 4e\vc(\G)}
\!\!\!\!\leq (2e)^{2\vc(\G)} \left( \frac{m}{\vc(\G)} \right)^{\!- 8\vc(\G)}
\\ & = \frac{1}{12m^2} \cdot 12 \vc(\G)^2 (2e)^{2\vc(\G)} \left( \frac{m}{\vc(\G)} \right)^{2 - 8\vc(\G)}
\!\!\leq \frac{1}{12m^2} \cdot 12 \vc(\G)^2 ( 2e )^{2 - 6\vc(\G)}
\leq \frac{1}{12m^2},
\end{align*}
we conclude that, by the union bound (over these two failure events from \eqref{eqn:transductive-uniform-convergence-tail-bound} and \eqref{eqn:transductive-uniform-convergence-differences-tail-bound}),
on an event $E'_{m,n}$ of probability at least $1 - \frac{1}{3m^2}$, 
\begin{equation}
\label{eqn:transductive-uniform-convergence-power-of-two-bound}
\forall g \in \G(X_{\leq n}),~ \left| \bar{L}(g;[m]) - \bar{L}(g;[n]) \right| \leq \sqrt{2 e}\epsilon_m
\end{equation}
and 
\begin{equation}
\label{eqn:transductive-uniform-convergence-differences-power-of-two-bound}
\forall f,g \in \G(X_{\leq n}),~ \left| P_m( f \neq g ) - P_n( f \neq g ) \right| \leq \sqrt{2 e}\epsilon_m.
\end{equation}

Before proceeding to general $n,m$, we first slightly strengthen the above guarantee, 
to hold simultaneously for all $n$ satisfying this relation to a given $m$.
For any $m \in \nats$ with $m > 2 e \vc(\G)$, 
define an event $E'_{m} = \bigcap_{n = m+1}^{2 \lfloor m \rfloor_2} E'_{m,n}$.
The values $n \in \{m+1,\ldots,2 \lfloor m \rfloor_2 \}$ 
are merely those $n > m$ ranging up to the next power of $2$ above $m$;
equivalently, these are all values $n \in \nats$ satisfying 
$n > m \geq \lfloor n - 1 \rfloor_2$.
Therefore, by the union bound and the above analysis, 
the event $E'_m$ has probability at least 
$1 - \frac{1}{3 m^2} \left( 2 \lfloor m \rfloor_2 - m \right) \geq 1 - \frac{1}{3m}$.
We then have that, for any $m > 2 e \vc(\G)$, 
on the event $E'_m$, 
\eqref{eqn:transductive-uniform-convergence-power-of-two-bound} and 
\eqref{eqn:transductive-uniform-convergence-differences-power-of-two-bound}
hold for every $n \in \{m,\ldots,2 \lfloor m \rfloor_2 \}$
(noting that they hold vacuously for $n=m$).

Next, for any fixed $m > 2 e \vc(\G)$,
we extend \eqref{eqn:transductive-uniform-convergence-power-of-two-bound} and 
\eqref{eqn:transductive-uniform-convergence-differences-power-of-two-bound}
(with a slight increase in the constant factor) 
to hold for \emph{all} $n \geq m$
by a \emph{sample size chaining} technique: 
chaining together each of the above inequalities
in a telescoping series, for consecutive \emph{powers of two} $m',n'$ in $[m,n]$.
Let $E_m = E'_m \cap \bigcap_{k= 1+\lfloor \log_2(m) \rfloor}^{\infty} E'_{2^k}$,
and note that, by the union bound and the fact that $\lfloor m \rfloor_2 > m/2$, 
the event $E_m$ has probability at least 
\begin{equation*} 
1 - \frac{1}{3m} - \sum_{k=1+\lfloor \log_2(m) \rfloor}^{\infty} \frac{1}{3 \cdot 2^k} = 1 - \frac{1}{3m} - \frac{1}{3 \lfloor m \rfloor_2} \geq 1 - \frac{1}{m}.
\end{equation*}

Consider any $n \in \nats$ with $n \geq m$.
Let $K = \lfloor \log_2(n-1) \rfloor$.
As we have already established above that 
\eqref{eqn:transductive-uniform-convergence-power-of-two-bound} and 
\eqref{eqn:transductive-uniform-convergence-differences-power-of-two-bound}
hold in the case $m \geq 2^{K}$ on the event $E_m$,
let us suppose $2^{K} > m$.
Let $K' = \lfloor \log_2(m) \rfloor$.
Define $m_{K'} = m$ and $m_{K+1} = n$, 
and for values $k \in \{K'+1,\ldots,K \}$, 
define $m_k = 2^k$.
Note that for each $k \in \{K',\ldots,K\}$, 
we have $m_{k} \leq m_{k+1} \leq 2 \lfloor m_{k} \rfloor_2$. 
Therefore, since $E_m \subseteq \bigcap_{k=K'}^{K} E'_{m_k}$,
on the event $E_m$ it holds that, 
for every $k \in \{K',\ldots,K\}$, 
we have 
\begin{equation*}
\forall g \in \G(X_{\leq m_{k+1}}),~ \left| \bar{L}(g;[m_k]) - \bar{L}(g;[m_{k+1}]) \right| \leq \sqrt{2 e}\epsilon_{m_k}
\end{equation*}
and 
\begin{equation*}
\forall f,g \in \G(X_{\leq m_{k+1}}),~ \left| P_{m_k}( f \neq g ) - P_{m_{k+1}}( f \neq g ) \right| \leq \sqrt{2 e}\epsilon_{m_k}.
\end{equation*}
Moreover, we may express $\bar{L}(g;[m]) - \bar{L}(g;[n])$
and $P_{m}(f \neq g) - P_{n}(f \neq g)$
as telescoping series: namely,
$\forall g \in \G(X_{\leq n})$, 
\begin{equation*}
\bar{L}(g;[m]) - \bar{L}(g;[n])
= \sum_{k = K'}^{K} \left( \bar{L}(g;[m_{k}]) - \bar{L}(g;[m_{k+1}]) \right) 
\end{equation*}
and $\forall f,g \in \G(X_{\leq n})$, 
\begin{equation*}
P_{m}(f \neq g) - P_{n}(f \neq g)
= \sum_{k = K'}^{K} \left( P_{m_{k}}( f \neq g ) - P_{m_{k+1}}( f \neq g ) \right).
\end{equation*}
Thus, on the event $E_m$, 
$\forall g \in \G(X_{\leq n})$, 
\begin{align*}
\left| \bar{L}(g;[m]) - \bar{L}(g;[n]) \right|
& = \left| \sum_{k = K'}^{K} \left( \bar{L}(g;[m_{k}]) - \bar{L}(g;[m_{k+1}]) \right) \right|
\\ & \leq \sum_{k = K'}^{K} \left| \bar{L}(g;[m_{k}]) - \bar{L}(g;[m_{k+1}]) \right|
\leq \sum_{k=K'}^{K} \sqrt{2 e} \epsilon_{m_{k}} 
\end{align*}
and similarly, 
$\forall f,g \in \G(X_{\leq n})$, 
\begin{align*}
\left| P_{m}(f \neq g) - P_{n}(f \neq g) \right| 
& = \left| \sum_{k = K'}^{K} \left( P_{m_{k}}( f \neq g ) - P_{m_{k+1}}( f \neq g ) \right) \right|
\\ & \leq \sum_{k = K'}^{K} \left| P_{m_{k}}( f \neq g ) - P_{m_{k+1}}( f \neq g ) \right|
\leq \sum_{k=K'}^{K} \sqrt{2 e} \epsilon_{m_{k}}.
\end{align*}
Also note that, for any $k \in \{K'+2,\ldots,K\}$, 
we have $\epsilon_{m_{k}} \leq \sqrt{\frac{3}{4}}\epsilon_{m_{k-1}}$ (using the fact that $m_{k-1} \geq m > 2 e \vc(\G)$), 
whereas $\epsilon_{m_{K'+1}} \leq \epsilon_{m}$ 
(by monotonicity of $x \mapsto x \log(1/x)$ for $x > 0$).
We therefore have that 
\begin{align*}
\sum_{k=K'}^{K} \sqrt{2 e} \epsilon_{m_{k}}
\leq \sqrt{2e} \left( \epsilon_m + \sum_{k=K'+1}^{K} \!\left( \epsilon_m \cdot \left( \frac{3}{4} \right)^{\!(k-K'-1)/2} \right) \right)
\leq \sqrt{2e} \left( 5 + \sqrt{12}  \right) \epsilon_m
= \Cseven \epsilon_m,
\end{align*}
which completes the proof.
\end{proof}

For any $n,m \in \nats$ with $n \geq m$, 
for any $\epsilon > 0$, 
define the \emph{projection} of $\bar{\G}_{n}(\epsilon)$ to $X_{\leq m}$: 
\begin{equation*}
\bar{\G}_{n}(\epsilon)\big|_{m} := \left\{ g(X_{\leq m}) : g \in \bar{\G}_{n}(\epsilon) \right\} \subseteq \G(X_{\leq m}).
\end{equation*}

\begin{lemma}
\label{lem:transductive-equal-equivalence-classes}
For any sequence $\{\epsilon'_n\}_{n \in \nats}$ in $(0,1]$, 
for any $m \in \nats$, 
there exists $N_m \in \nats$ with $N_m \geq m$ such that,
with probability at least $1 - \frac{1}{m}$, 
for every $n \geq N_m$, 
there exists an infinite sequence 
$n < n_1 < n_2 < \cdots$ such that 
$\forall i \in \nats$, 
\begin{equation*} 
\bar{\G}_{n_i}(\epsilon'_{n_i})\big|_{m} = \bar{\G}_{n}(\epsilon'_n)\big|_{m}~.
\end{equation*}
\end{lemma}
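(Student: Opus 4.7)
The plan is to exploit the fact that, for fixed $m$, the projections $\bar{\G}_n(\epsilon'_n)|_m$ take values in a \emph{finite} set --- namely, the collection $\mathcal{F}$ of subsets of $\{0,1\}^m$, which has cardinality at most $2^{2^m}$. So the infinite sequence $\{\bar{\G}_n(\epsilon'_n)|_m\}_{n \geq m}$ takes values in a finite set, which forces a pigeonhole-style recurrence.

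First I would define, for each outcome of the $X_i$ sequence, the (random) recurrent set
\begin{equation*}
  \hat{\mathcal{S}}^\star := \left\{ A \in \mathcal{F} : A = \bar{\G}_n(\epsilon'_n)\big|_m \text{ for infinitely many } n \geq m \right\},
\end{equation*}
and the (random) time
\begin{equation*}
  \hat{N}_m := 1 + \max\!\left\{ n \geq m : \bar{\G}_n(\epsilon'_n)\big|_m \notin \hat{\mathcal{S}}^\star \right\},
\end{equation*}
with the convention $\hat{N}_m = m$ if the set above is empty. Since every $A \in \mathcal{F} \setminus \hat{\mathcal{S}}^\star$ appears only finitely many times by definition, and $\mathcal{F}$ is finite, this max is almost surely finite, so $\hat{N}_m < \infty$ almost surely. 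By construction, for every $n \geq \hat{N}_m$ we have $\bar{\G}_n(\epsilon'_n)|_m \in \hat{\mathcal{S}}^\star$, and hence the value $\bar{\G}_n(\epsilon'_n)|_m$ must reappear at infinitely many indices strictly greater than $n$; choosing those indices yields the desired $n < n_1 < n_2 < \cdots$.

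Next I would verify that $\hat{N}_m$ is a bona-fide (universally measurable) random variable. The map $X_{\leq n} \mapsto \G(X_{\leq n})$ is universally measurable by our standing assumption on $\G$, so for every $n \geq m$ and $A \in \mathcal{F}$ the event $\{\bar{\G}_n(\epsilon'_n)|_m = A\}$ is measurable (note that $\ber_n(g)-\ber_n(\target_\PXY)$ is a fixed measurable function of $X_{\leq n}$). For each $A \in \mathcal{F}$, the event $E_A = \{A \text{ appears infinitely often}\} = \bigcap_{N}\bigcup_{n \geq N}\{\bar{\G}_n(\epsilon'_n)|_m = A\}$ is measurable, hence so is $\{\hat{\mathcal{S}}^\star = \mathcal{S}\}$ for each $\mathcal{S} \subseteq \mathcal{F}$ and consequently so is $\{\hat{N}_m \leq N\}$ for each $N$.

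Since $\hat{N}_m$ is a.s.\ finite, $\lim_{N \to \infty}\P(\hat{N}_m \leq N) = 1$, so I can choose $N_m \geq m$ large enough (depending on $\G$, $\PXY$, the sequence $\{\epsilon'_n\}$, and $m$) that $\P(\hat{N}_m \leq N_m) \geq 1 - \tfrac{1}{m}$. On this event, every $n \geq N_m \geq \hat{N}_m$ yields $\bar{\G}_n(\epsilon'_n)|_m \in \hat{\mathcal{S}}^\star$, so the recurrence guarantee above supplies the infinite increasing sequence $n_1 < n_2 < \cdots$ with $\bar{\G}_{n_i}(\epsilon'_{n_i})|_m = \bar{\G}_n(\epsilon'_n)|_m$ for all $i$. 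There is no substantive obstacle here --- the heart of the argument is the finiteness of $\mathcal{F}$ combined with the trivial fact that any sequence in a finite set almost surely enters its recurrent-values subset in finite time; the only mildly delicate point is the measurability bookkeeping for $\hat{N}_m$, which is handled by the decomposition above.
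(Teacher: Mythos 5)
Your proof is correct and follows essentially the same route as the paper's: identify the finite codomain $2^{\{0,1\}^m}$, define $\hat{N}_m$ as one past the last index at which a non-recurrent pattern set appears, observe $\hat{N}_m < \infty$ deterministically by finiteness of $2^{\{0,1\}^m}$, and then pick $N_m$ with $\P(\hat{N}_m > N_m) \leq \tfrac{1}{m}$. The only addition is your explicit measurability bookkeeping for $\hat{N}_m$, which the paper leaves implicit.
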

\begin{proof}
Fix any $m \in \nats$.
Since each $\bar{\G}_n(\epsilon'_n) \big|_m \in 2^{\{0,1\}^m}$ (the set of all subsets of $\{0,1\}^m$), 
and $2^{\{0,1\}^m}$ has finite size, 
we observe that there must exist $\hat{N}_m \in \nats$ with $\hat{N}_m \geq m$ 
such that, $\forall n \geq \hat{N}_m$, 
the set $\bar{\G}_n(\epsilon'_n) \big|_m$
occurs infinitely often in the sequence 
$\{\bar{\G}_{n'}(\epsilon'_{n'})\big|_m \}_{n' = m}^{\infty}$.
To see this, note that for each $G \in 2^{\{0,1\}^m}$ 
which occurs in the sequence
$\{\bar{\G}_{n'}(\epsilon'_{n'})\big|_m \}_{n' = m}^{\infty}$
but only finitely often, 
there must be a largest value $\hat{n}(G)$ for which 
$\bar{\G}_{\hat{n}(G)}(\epsilon'_{\hat{n}(G)}) \big|_m = G$;
it therefore suffices to take $\hat{N}_m$ as one larger than the 
maximum $\hat{n}(G)$ among all $G \in 2^{\{0,1\}^m}$ 
which occur finitely often in this sequence 
(or else $\hat{N}_m = m$ if every $G$ occurs either zero or infinitely many times).
Equivalently, the above guarantees that, for every $n \geq \hat{N}_m$, 
there exists an infinite sequence $n < n_1 < n_2 < \cdots$
for which $\bar{\G}_{n_i}(\epsilon'_{n_i}) \big|_m = \bar{\G}_n(\epsilon'_n)\big|_m$.
Since $\hat{N}_m$ is a finite $\{X_t\}_{t \in \nats}$-dependent random variable, 
we have $\lim_{n \to \infty} \P( \hat{N}_m > n ) = 0$, 
so that there exists $N_m \in \nats$ such that 
$\P(\hat{N}_m > N_m) \leq \frac{1}{m}$.
Thus, with probability at least $1-\frac{1}{m}$, 
$\hat{N}_m \leq N_m$, 
so that every $n \geq N_m$ has $n \geq \hat{N}_m$, 
and the claim in the lemma follows.
\end{proof}

We are now ready to start combining the above lemmas to 
obtain that $\E[\bar{\sigma}^2_n] \to 0$
via the following two lemmas.

\begin{lemma}
\label{lem:transductive-projected-prefix-patterns-agree-with-good-concepts}
Fix any $m \in \nats$ 
and let $N_{m}$ be as in Lemma~\ref{lem:transductive-equal-equivalence-classes} for the sequence 
$\epsilon'_{n} := \Cthree \epsilon_{n}$, $n \in \nats$.
Then for every $n \in \nats$ with $n \geq N_{m}$ and $n \geq \vc(\G)$,
with probability at least $1 - \frac{3}{m}$,
\begin{equation*}
\bar{\G}_n( \Cthree \epsilon_n ) \big|_{m} \subseteq \bar{\G}_{n}( \Cfour \epsilon_{n}^2 ) \big|_{m}~,
\end{equation*}
where $\Cfour = \sqrt{8}(\Cthree+\Cseven) + 4\Ctwo$
is a universal constant (as defined above, following \eqref{eqn:bar-sigma-definition}).
\end{lemma}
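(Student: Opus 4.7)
The plan is to combine Lemma~\ref{lem:transductive-equal-equivalence-classes} (which gives equality of projections at some arbitrarily large scale $n''$) with a two-step transfer back down to scale $n$ that routes through an intermediate \emph{deterministic} scale $n' := n^2$. Given $g \in \bar{\G}_n(\Cthree \epsilon_n)$, I would first apply Lemma~\ref{lem:transductive-equal-equivalence-classes} (event $E_0$, probability $\geq 1 - 1/m$) to obtain some $n'' \geq n' = n^2$ in the infinite sequence with $\bar{\G}_{n''}(\Cthree\epsilon_{n''})|_m = \bar{\G}_n(\Cthree\epsilon_n)|_m$, and pick $\tilde{g} \in \bar{\G}_{n''}(\Cthree\epsilon_{n''})$ with $\tilde g(X_{\leq m}) = g(X_{\leq m})$.

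Next, since Lemma~\ref{lem:transductive-ber-uniform-convergence} is uniform over the larger scale, it (event $E_2$ with ``$m$'' $= n'$, probability $\geq 1 - 1/n' \geq 1 - 1/m$ since $n' = n^2 \geq n \geq N_m \geq m$) yields
\begin{equation*}
\ber_{n'}(\tilde g|_{n'}) - \ber_{n'}(\target_{\PXY}) \leq \ber_{n''}(\tilde g) - \ber_{n''}(\target_{\PXY}) + \Cseven \epsilon_{n'} \leq (\Cthree + \Cseven)\epsilon_{n'},
\end{equation*}
using $\epsilon_{n''} \leq \epsilon_{n'}$. Finally, setting $h := \tilde g|_{n'} \in \G(X_{\leq n'})$ and invoking Lemma~\ref{lem:transductive-ber-multiplicative-chernoff} at the single fixed pair of scales $(n, n')$ with $\delta = 1/m$ (event $E_1$, probability $\geq 1 - 1/m$) gives
\begin{equation*}
\ber_n(h|_n) - \ber_n(\target_{\PXY}) \leq 2(\Cthree + \Cseven)\epsilon_{n'} + \frac{\Ctwo}{n}\bigl(\vc(\G)\log(n'/\vc(\G)) + \log m\bigr).
\end{equation*}
Since $n' = n^2$ is polynomial in $n$, both terms are $O(\epsilon_n^2)$: for $n$ sufficiently large (a condition I would absorb into $N_m$ at the cost of a factor in the failure probability), a direct computation using $\epsilon_k^2 = \vc(\G)\log(k/\vc(\G))/k$ shows $\epsilon_{n^2} \leq \epsilon_n^2$, while $\vc(\G)\log(n^2/\vc(\G)) \leq 2\vc(\G)\log(n/\vc(\G)) + O(1)$ and $\log m \leq \log n = O(\vc(\G)\log(n/\vc(\G)))$. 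Tallying the constants, I expect a bound of the form $\bigl(2(\Cthree + \Cseven) + 3\Ctwo\bigr)\epsilon_n^2 \leq \Cfour \epsilon_n^2$, which matches the definition $\Cfour = \sqrt{8}(\Cthree + \Cseven) + 4\Ctwo$ (noting $\sqrt{8} > 2$ and $4\Ctwo > 3\Ctwo + 1$). Because $(h|_n)(X_{\leq m}) = g(X_{\leq m})$, this places $g(X_{\leq m}) \in \bar{\G}_n(\Cfour\epsilon_n^2)|_m$, establishing the claimed inclusion on $E_0 \cap E_1 \cap E_2$, which has probability $\geq 1 - 3/m$ by the union bound.

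\textbf{Where the difficulty lies.} The hard part --- and the main design decision --- is the choice to route through the deterministic intermediate scale $n' = n^2$ rather than applying Lemma~\ref{lem:transductive-ber-multiplicative-chernoff} directly at scales $(n, n'')$. The obstacle is that $n''$, coming out of Lemma~\ref{lem:transductive-equal-equivalence-classes}, is a random variable with no \emph{a priori} polynomial upper bound, whereas Lemma~\ref{lem:transductive-ber-multiplicative-chernoff}'s additive residual carries a $\vc(\G)\log(n''/\vc(\G))/n$ term that would overwhelm the target bound $\Cfour\epsilon_n^2$ if $n''$ happened to be super-polynomial in $n$ (and no summable-$\delta_{n''}$ union bound can cure this, since the $\log(n''/\vc(\G))$ dependence is intrinsic to the multiplicative Chernoff step). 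Factoring through $n'$ lets the built-in uniformity-in-larger-scale of Lemma~\ref{lem:transductive-ber-uniform-convergence} absorb the randomness in $n''$ at only an additive $\Cseven\epsilon_{n'}$ cost (which is fine because $\epsilon_{n'} \leq \epsilon_n^2$), leaving Lemma~\ref{lem:transductive-ber-multiplicative-chernoff} to be invoked at a single fixed pair of scales $(n, n^2)$ where $\log(n'/\vc(\G)) = O(\log(n/\vc(\G)))$ automatically keeps its residual at the correct $O(\epsilon_n^2)$ scale.
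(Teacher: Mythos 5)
Your proof is structurally identical to the paper's: the same three events (Lemma~\ref{lem:transductive-equal-equivalence-classes} to produce $n''$, Lemma~\ref{lem:transductive-ber-uniform-convergence} at scale $n'$ to absorb the random $n''$, and Lemma~\ref{lem:transductive-ber-multiplicative-chernoff} at the fixed deterministic pair $(n,n')$), the same union bound, and --- most importantly --- the same key design decision you correctly flag as the crux, namely routing through a deterministic intermediate scale so that the unboundedly random $n''$ is handled by sample-size uniformity rather than by the Chernoff step.

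The only substantive difference is the intermediate scale: you take $n' = n^2$, whereas the paper takes $n' = \lceil n^2/(\vc(\G)\log(n/\vc(\G)))\rceil$. Your bookkeeping slips here: for $n,d$ with $n/d\ge e$ one has $\vc(\G)\log(n^2/\vc(\G)) = 2\vc(\G)\log(n/\vc(\G)) + \vc(\G)\ln\vc(\G)$, not $+O(1)$, so with $n'=n^2$ the bound $\epsilon_{n'} \lesssim \epsilon_n^2$ (and the matching control of the Chernoff residual) genuinely fails once $\vc(\G)$ is comparable to $n$. The paper's smaller $n'$ yields $\epsilon_{n'}\le\sqrt{2}\,\epsilon_n^2$ and an additive term $\le 4\Ctwo\epsilon_n^2$ for \emph{every} $n\ge\vc(\G)$, which is precisely why no ``sufficiently large $n$'' caveat is needed there. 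Your proposed workaround of enlarging $N_m$ is acceptable in spirit (the extra threshold is deterministic given $\vc(\G)$, and the downstream Lemma~\ref{lem:transductive-expected-bar-sigma-converging-to-zero} tolerates any deterministic enlargement of $N_m$ since $M_n\to\infty$ persists), but contrary to what you wrote it costs nothing in failure probability --- it simply shifts the threshold. Adopting the paper's $n'$ removes the issue cleanly.
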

\begin{proof}
Fix any $n \in \nats$ with $n \geq N_m$ and $n \geq \vc(\G)$, 
and let $n' = \left\lceil \frac{n^2}{\vc(\G) \log(n/\vc(\G))} \right\rceil$ 
(and note that $n' \geq n$ due to $n \geq \vc(\G)$).
Lemma~\ref{lem:transductive-ber-uniform-convergence} 
implies that, with probability at least 
$1-\frac{1}{n'} \geq 1 - \frac{1}{m}$, 
every $n'' \geq n'$ satisfies that, 
$\forall g \in \G(X_{\leq n''})$, 
\begin{equation*}
\ber_{n'}(g) - \ber_{n'}(\target_{\PXY}) 
\leq \ber_{n''}(g) - \ber_{n''}(\target_{\PXY}) + \Cseven \epsilon_{n'}.
\end{equation*}
In particular, this implies that 
\begin{equation}
\label{eqn:npp-inclusion-np}
\bar{\G}_{n''}( \Cthree \epsilon_{n''} ) \big|_{n'} \subseteq \bar{\G}_{n'}( \Cthree \epsilon_{n''} + \Cseven \epsilon_{n'} ) \subseteq \bar{\G}_{n'}( (\Cthree+\Cseven) \epsilon_{n'} ). 
\end{equation}

By Lemma~\ref{lem:transductive-ber-multiplicative-chernoff} (applied to $n'$ and $n$, and with $\delta = \frac{1}{m}$), 
with probability at least $1-\frac{1}{m}$, 
every $g \in \G(X_{\leq n'})$ has
\begin{align*}
\ber_n(g) - \ber_n(\target_{\PXY}) 
& \leq 2 \left( \ber_{n'}(g) - \ber_{n'}(\target_{\PXY}) \right) + \frac{\Ctwo}{n} \left( \vc(\G) \log\!\left(\frac{n'}{\vc(\G)}\right) + \log(m) \right)
\\ & \leq 2 \left( \ber_{n'}(g) - \ber_{n'}(\target_{\PXY}) \right) + 4 \Ctwo \epsilon_n^2.
\end{align*}
In particular, 
noting that 
\begin{equation*} 
\epsilon_{n'} \leq \sqrt{\frac{\vc(\G)^2}{n^2} \log\!\left(\frac{n}{\vc(\G)}\right) \log\!\left(\frac{n^2}{\vc(\G)^2}\right)} \leq \sqrt{2} \epsilon_n^2,
\end{equation*}
this implies that
\begin{equation}
\label{eqn:np-proj-n-inclusion}
\bar{\G}_{n'}((\Cthree + \Cseven)\epsilon_{n'}) \big|_n 
\subseteq \bar{\G}_{n}( 2(\Cthree+\Cseven)\epsilon_{n'} + 4\Ctwo\epsilon_n^2)
\subseteq \bar{\G}_{n}\!\left( \left( \sqrt{8}(\Cthree+\Cseven) + 4\Ctwo \right)\epsilon_{n}^2 \right) 
= \bar{\G}_{n}( \Cfour \epsilon_n^2 ).
\end{equation}

Finally, by Lemma~\ref{lem:transductive-equal-equivalence-classes}, 
with probability at least $1 - \frac{1}{m}$, 
there exists $n'' \geq n'$ with 
\begin{equation} 
\label{eqn:npp-proj-m-n-proj-m}
\bar{\G}_{n''}(\Cthree\epsilon_{n''})\big|_{m} = \bar{\G}_{n}(\Cthree\epsilon_n)\big|_{m}.
\end{equation}
By the union bound, all three of the above events occur simultaneously
with probability at least $1-\frac{3}{m}$.
Supposing this occurs, then for the $n''$ satisfying \eqref{eqn:npp-proj-m-n-proj-m}, we have that
\begin{equation*}
\bar{\G}_{n}(\Cthree\epsilon_n)\big|_{m} 
= \bar{\G}_{n''}(\Cthree \epsilon_{n''})\big|_{m}
\subseteq \bar{\G}_{n'}((\Cthree + \Cseven)\epsilon_{n'})\big|_{m}
\subseteq \bar{\G}_{n}( \Cfour\epsilon_{n}^2 )\big|_{m}, 
\end{equation*}
where the first equality is due to \eqref{eqn:npp-proj-m-n-proj-m} 
and the subsequent subset inclusions are due to 
\eqref{eqn:npp-inclusion-np} and \eqref{eqn:np-proj-n-inclusion}, 
respectively.
\end{proof}

\begin{lemma}
\label{lem:transductive-expected-bar-sigma-converging-to-zero}
For every $m \in \nats$, let $N_{m}$
be defined as in Lemma~\ref{lem:transductive-equal-equivalence-classes} 
for the sequence $\epsilon'_{n} := \Cthree \epsilon_{n}$, $n \in \nats$.
For every $n \in \nats$ with $n \geq N_1$ and $n \geq \vc(\G)$, 
letting $M_n = \max\{ m \in \nats : n \geq N_m \}$,
it holds that 
\begin{equation*}
\E\!\left[ \bar{\sigma}^2_n \right] \leq \Cseven \epsilon_{{\scriptscriptstyle M_n}} + \frac{4}{M_n}.
\end{equation*}
In particular, since each $N_m < \infty$, 
we have $M_n \to \infty$ as $n \to \infty$, 
so that 
\begin{equation*} 
\lim_{n \to \infty} \E\!\left[ \bar{\sigma}^2_n \right] = 0.
\end{equation*}
\end{lemma}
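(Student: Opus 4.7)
The plan is to combine Lemmas~\ref{lem:transductive-ber-uniform-convergence} and \ref{lem:transductive-projected-prefix-patterns-agree-with-good-concepts} directly, both instantiated with the deterministic sample size $m = M_n$. Observe that $m = M_n$ is a legal choice in both: by the definition of $M_n$ we have $n \geq N_{M_n}$, which together with the hypothesis $n \geq \vc(\G)$ validates Lemma~\ref{lem:transductive-projected-prefix-patterns-agree-with-good-concepts}, and $M_n \leq n$ (since $N_m \geq m$) validates Lemma~\ref{lem:transductive-ber-uniform-convergence}.

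Concretely, I would first invoke Lemma~\ref{lem:transductive-projected-prefix-patterns-agree-with-good-concepts} to produce an event $B$ of probability at least $1 - 3/M_n$ on which $\bar{\G}_n(\Cthree \epsilon_n)\big|_{M_n} \subseteq \bar{\G}_n(\Cfour \epsilon_n^2)\big|_{M_n}$, and then invoke Lemma~\ref{lem:transductive-ber-uniform-convergence} to produce an event $A$ of probability at least $1 - 1/M_n$ on which every $f,g \in \G(X_{\leq n})$ satisfy $P_n(f \neq g) \leq P_{M_n}(f \neq g) + \Cseven \epsilon_{M_n}$. The crux is that, on $A \cap B$, for each $g \in \bar{\G}_n(\Cthree \epsilon_n)$ the projection inclusion from $B$ produces some $f \in \bar{\G}_n(\Cfour \epsilon_n^2)$ with $f(X_{\leq M_n}) = g(X_{\leq M_n})$, forcing $P_{M_n}(f \neq g) = 0$ and hence $P_n(f \neq g) \leq \Cseven \epsilon_{M_n}$ by $A$. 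Taking the minimum over $f$ and the maximum over $g$ in the definition of $\bar{\sigma}^2_n$, and using the monotonicity of the sequence $\epsilon_n$ (so that $\epsilon_n^2 \leq \epsilon_n \leq \epsilon_{M_n} \leq \Cseven \epsilon_{M_n}$) to absorb the $\epsilon_n^2$ floor in the definition of $\bar{\sigma}^2_n$, this yields $\bar{\sigma}^2_n \leq \Cseven \epsilon_{M_n}$ on $A \cap B$. A union bound together with the trivial bound $\bar{\sigma}^2_n \leq 1$ off $A \cap B$ then delivers the claimed inequality $\E[\bar{\sigma}^2_n] \leq \Cseven \epsilon_{M_n} + 4/M_n$.

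For the concluding limit, I would argue that $M_n \to \infty$ as $n \to \infty$: since $N_m \geq m$ and $N_m < \infty$ for every $m$, for any threshold $m_0$ every $n \geq N_{m_0}$ satisfies $M_n \geq m_0$. Therefore $\epsilon_{M_n} \to 0$ and $4/M_n \to 0$, giving $\E[\bar{\sigma}^2_n] \to 0$.

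I do not anticipate any significant obstacle in this particular step: the hard work has already been distributed into Lemma~\ref{lem:transductive-ber-uniform-convergence} (the sample-size chaining argument that bridges $P_{M_n}$ and $P_n$) and Lemma~\ref{lem:transductive-projected-prefix-patterns-agree-with-good-concepts} (the projection comparison that bridges $\bar{\G}_n(\Cthree \epsilon_n)$ and $\bar{\G}_n(\Cfour \epsilon_n^2)$ at scale $M_n$). The only mild care needed here is in checking that the constants $\Cthree, \Cfour, \Cseven$ — fixed globally in the definition of $\bar{\sigma}^2_n$ — are exactly those produced by the two lemmas, so that the inclusion and the concentration bound compose cleanly without any further constant inflation.
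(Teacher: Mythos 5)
Your proof is correct and follows essentially the same route as the paper: both invoke Lemma~\ref{lem:transductive-projected-prefix-patterns-agree-with-good-concepts} and Lemma~\ref{lem:transductive-ber-uniform-convergence} with $m = M_n$, combine them via the observation that a projection match at scale $M_n$ forces $P_{M_n}(f\neq g)=0$ and hence $P_n(f\neq g) \leq \Cseven\epsilon_{M_n}$, absorb the $\epsilon_n^2$ floor using monotonicity of $\epsilon$ together with $M_n \leq n$ and $\Cseven \geq 1$, and conclude with the union bound, the trivial bound $\bar{\sigma}^2_n \leq 1$, and $M_n \to \infty$. No gaps; the constant bookkeeping you flagged as the only point of care indeed matches the paper's definitions.
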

\begin{proof}
We first note that $M_n$ is well-defined since 
$N_m \geq m$ implies $\{ m \in \nats : n \geq N_m \}$ is a finite set 
(and is non-empty since $n \geq N_1$).
Moreover, by definition of $M_n$, 
it holds that $n \geq N_{M_n}$.
Since, by assumption, we also have $n \geq \vc(\G)$, 
Lemma~\ref{lem:transductive-projected-prefix-patterns-agree-with-good-concepts}
implies that, with probability at least $1 - \frac{3}{M_n}$, 
we have
$\bar{\G}_n(\Cthree \epsilon_n)\big|_{M_n} \subseteq \bar{\G}_n(\Cfour \epsilon_n^2)\big|_{M_n}$.
In other words, 
for every $g \in \bar{\G}_n(\Cthree \epsilon_n)$, 
there exists $f \in \bar{\G}_n(\Cfour \epsilon_n^2)$
with $f(X_{\leq M_n}) = g(X_{\leq M_n})$.

Also, since $n \geq N_{M_n} \geq M_n$, 
Lemma~\ref{lem:transductive-ber-uniform-convergence} (applied to $M_n$)
implies that, with probability at least $1 - \frac{1}{M_n}$, 
every $f,g \in \G(X_{\leq n})$ 
with $f(X_{\leq M_n}) = g(X_{\leq M_n})$ 
(hence $P_{{\scriptscriptstyle M_n}}(f \neq g)=0$)
have
$P_n(f \neq g) \leq \Cseven \epsilon_{{\scriptscriptstyle M_n}}$.

By the union bound, the above two events occur simultaneously
with probability at least $1 - \frac{4}{M_n}$.
Supposing this occurs, 
we have that for every $g \in \bar{\G}_n(\Cthree \epsilon_n)$, 
there exists $f \in \bar{\G}_n(\Cfour \epsilon_n^2)$
with $P_n(f \neq g) \leq \Cseven \epsilon_{{\scriptscriptstyle M_n}}$.
Since $n \geq \vc(\G)$ (hence $\epsilon_n \leq 1$)
and $n \geq N_{M_n} \geq M_n$, 
and since $\Cseven \geq 1$, 
we have $\epsilon_n^2 \leq \epsilon_n \leq \Cseven \epsilon_{{\scriptscriptstyle M_n}}$.
Thus, on the above event, 
$\bar{\sigma}^2_n \leq \Cseven \epsilon_{{\scriptscriptstyle M_n}}$.

To conclude, we note that if either of the above events fails, 
we still have $\bar{\sigma}_n^2 \leq 1$ (again due to $\epsilon_n \leq 1$),
and therefore 
$\E\!\left[ \bar{\sigma}^2_n \right]
\leq \Cseven \epsilon_{{\scriptscriptstyle M_n}} + \frac{4}{M_n}$.
\end{proof}

We are now ready to present the proof of Lemma~\ref{lem:partial-VC-super-root}, establishing that the learning rule $\hat{f}^{\G}_n$ based on $\TERM$ 
achieves excess error that is 
$o(n^{-1/2})$.

{\vskip 2mm}\begin{proof}[of Lemma~\ref{lem:partial-VC-super-root}]
We consider the learning rule 
$\hat{f}^{\G}_n$ as defined in \eqref{eqn:partial-VC-term-learning-rule}.
Let $\PXY$ be any distribution that is Bayes-realizable with respect to $\G$.
We first address the trivial case of $\vc(\G) = 0$.
In this case, note that for any $n \in \nats$, 
the set $\G(X_{\leq n+1})$ has size at most one.
Moreover, the assumption that $\PXY$ is Bayes-realizable with respect to $\G$ implies that, 
with probability one, 
$\exists g^{\star} \in \G(X_{\leq n+1})$
such that the value $y_{n+1} = g^{\star}(X_{n+1})$
satisfies $\P( y_{n+1} \neq Y_{n+1} | X_{n+1} ) = \P( \target_{\PXY}(X_{n+1}) \neq Y_{n+1} | X_{n+1} )$.
In particular, these two observations together imply that, 
with probability one, 
\begin{equation*} 
\P\!\left( \hat{f}^{\G}_n(X_{\leq n},Y_{\leq n},X_{n+1}) \neq Y_{n+1} \middle| X_{\leq n+1}, Y_{\leq n} \right) = \P\!\left( \target_{\PXY}(X_{n+1}) \neq Y_{n+1} \middle| X_{n+1} \right).
\end{equation*}
Therefore, 
\begin{align*} 
& \E\!\left[ \er_{\PXY}\!\left( \hat{f}^{\G}_n(X_{\leq n},Y_{\leq n}) \right) \right] - \er_{\PXY}(\target_{\PXY})
\\ & = \E\!\left[ \P\!\left(\hat{f}^{\G}_n(X_{\leq n},Y_{\leq n},X_{n+1}) \neq Y_{n+1} \middle| X_{\leq n+1},Y_{\leq n}\right) - \P\!\left( \target_{\PXY}(X_{n+1}) \neq Y_{n+1} \middle| X_{n+1} \right) \right]
= 0.
\end{align*}
Thus, defining $\phi(n;\G,\PXY) = 0$ for every $n \in \nats$ 
completes the proof for this case.

To address the remaining case, suppose $1 \leq \vc(\G) < \infty$.
For every $n \in \nats$, define
\begin{equation*}
\phi(n;\G,\PXY) = 2 \Csix \sqrt{ \E\!\left[\bar{\sigma}^2_{n+1}\right] \frac{\vc(\G)}{n} \log\!\left(\frac{1}{\E\!\left[\bar{\sigma}^2_{n+1}\right]}\right)}.
\end{equation*}
By Lemma~\ref{lem:partial-VC-localized-expected-error-term-expectation-bound}, 
for every $n \in \nats$, 
\begin{equation*}
\E\!\left[ \er_{\PXY}(\hat{f}^{\G}_n(X_{\leq n},Y_{\leq n})) \right] - \er_{\PXY}(\target_{\PXY}) 
\leq \phi(n;\G,\PXY).
\end{equation*}
It remains only to argue that $\phi(n;\G,\PXY) \leq c \sqrt{\frac{\vc(\G)}{n}}$ (for a universal constant $c$)
and $\phi(n;\G,\PXY) = o(n^{-1/2})$.
Since $0 \leq \E\!\left[ \bar{\sigma}^2_{n+1} \right] \leq 1$, 
monotonicity of $x \mapsto x \log(1/x)$ for $x \geq 0$ implies 
$\phi(n;\G,\PXY) \leq 2 \Csix \sqrt{\frac{\vc(\G)}{n}}$ (recalling that $\log(x) = \ln(\max\{x,e\})$),
so that $c = 2 \Csix$ suffices for the first claim.
Moreover, 
by Lemma~\ref{lem:transductive-expected-bar-sigma-converging-to-zero},
we have $\lim_{n \to \infty} \E\!\left[\bar{\sigma}^2_{n+1}\right] = 0$,
and since $\sqrt{x \log(1/x)} \to 0$ as $x \to 0$, 
we have 
\begin{equation*}
\lim_{n \to \infty} n^{1/2} \cdot \phi(n;\G,\PXY) 
= 2 \Csix \sqrt{\vc(\G)} \lim_{n \to \infty} \sqrt{\E\!\left[ \bar{\sigma}^2_{n+1}\right] \log\!\left(\frac{1}{\E\!\left[ \bar{\sigma}^2_{n+1}\right]}\right)} 
= 0,
\end{equation*}
so that we indeed have $\phi(n;\G,\PXY) = o(n^{-1/2})$.
This completes the proof of Lemma~\ref{lem:partial-VC-super-root}.
\end{proof}

\subsection{Proof of Theorem~\ref{thm:agnostic-super-sqrt-upper-bound}: $o(n^{-1/2})$ Rates}
\label{sec:subsection-proof-of-super-root-upper-bound}

We are now ready for the proof of Theorem~\ref{thm:agnostic-super-sqrt-upper-bound},
which combines the above analysis of partial concept classes 
with the strategy outlined in the paragraphs 
following the statement of the theorem above.

\begin{proof}[of Theorem~\ref{thm:agnostic-super-sqrt-upper-bound}]
  %
  We begin by describing the learning algorithm.
  Let $\PXY$ be any distribution on $\X \times \{0,1\}$.
  Let $n \in \nats$,
  and let $S_n = \{(X_1,Y_1),\ldots,(X_n,Y_n)\} \sim \PXY^n$ be the i.i.d.\ training data input to the learning algorithm.
  For simplicity, suppose $n$ is an integer multiple of $4$; otherwise, replace $n$ with $4 \lfloor n/4 \rfloor$ in the description and analysis below (supposing $n \geq 4$).
  We first segment the data into chunks of size $n/4$: 
  for $k \in \{0,1,2,3\}$, let
  \begin{equation*}
    S_n^k = \{(X_{1 + k n/4},Y_{1 + k n/4}),\ldots,(X_{(k+1)n/4},Y_{(k+1)n/4})\}.
  \end{equation*}

  Recall that we say a distribution $\PXY$ on $\X \times \{0,1\}$ 
  is \emph{Bayes-realizable} with respect to $\H$ if 
  $\inf_{h \in \H} \er_{\PXY}(h) = \inf_{h} \er_{\PXY}(h)$, 
  where $h$ on the right hand side ranges over all measurable functions 
  $\X \to \{0,1\}$.  
  The algorithm has two main components.
  The first component is designed to handle the case that $\PXY$ is \emph{not} Bayes-realizable with respect to $\H$.
  In this case, since $\inf_{h} \er_{\PXY}(h) < \inf_{h \in \H} \er_{\PXY}(h)$, 
  it is possible to converge to a strictly \emph{negative} excess error rate
  $\er_{\PXY}(\hat{h}) - \inf_{h \in \H} \er_{\PXY}(h)$
  if we employ a learning algorithm guaranteed to converge to the 
  \emph{Bayes error} $\inf_{h} \er_{\PXY}(h)$.
  Specifically, let $\hat{h}_n^0$ be the classifier 
  returned by a universally Bayes-consistent learning algorithm 
  with training set $S_n^0$ (as in Lemma~\ref{lem:universally-bayes-consistent}).
  Thus, we have the property that 
  $\E[\er_{\PXY}(\hat{h}_n^0)] \to \inf_{h} \er_{\PXY}(h)$ 
  for all distributions $\PXY$ (where the $h$ in the infimum ranges over all measurable $h : \X \to \{0,1\}$).
  The intention is that, once we also specify a predictor $\hat{h}_n^1$ below,
  which is guaranteed to have the required rate when $\PXY$ is Bayes-realizable with respect to $\H$,
  we can merely select between these two functions $\hat{h}_n^0$, $\hat{h}_n^1$ in the end to always guarantee the required rate for the excess error rate 
  (indeed, the rate we actually achieve may become negative in the case $\PXY$ is not Bayes-realizable with respect to $\H$).

  Next we describe the second component, defining the aforementioned predictor $\hat{h}_n^1$ suited to the case that $\PXY$ is Bayes-realizable w.r.t.\ $\H$.
  For each $b \in \{1,\ldots,n/4\}$, 
  let us further segment the sequence $S_n^0$ into \emph{batches} $B^b_i$ of size $b$:
  let $I_b := \{1,\ldots, \lfloor n/(4b) \rfloor \}$, 
  and for each $i \in I_b$ define 
  \begin{equation*}
    B^b_i = \left\{(X_{1+(i-1)b},Y_{1+(i-1)b}),\ldots,(X_{i b},Y_{i b})\right\}.
  \end{equation*}
  For each $i \in I_b$,
  for each $\mathbf{y} = (y_1,\ldots,y_b) \in \{0,1\}^b$, 
  define 
  \begin{equation*} 
  B^b_{i}(\mathbf{y}) = \{(X_{1+(i-1)b},y_1),\ldots,(X_{i b},y_b)\},
  \end{equation*}
  that is, $B^b_i$, but with the $Y$ labels replaced by $y$ labels.
  Let $k^b_{\VCL}$ and $\algVCLSOA^{b,k}$ be as in Lemma~\ref{lem:VCL-SOA-zero-error-rate-bstar} (for the class $\H$),
  and define $k^{\mathbf{y}}_{b,i} = k^b_{\VCL}(B^b_i(\mathbf{y}))$
  and $f^{\mathbf{y}}_{b,i} = \algVCLSOA^{b,k^{\mathbf{y}}_{b,i}}(B^b_i(\mathbf{y}))$.
  Recall (from Lemma~\ref{lem:VCL-SOA-zero-error-rate-bstar})
  that $k^{\mathbf{y}}_{b,i}$ is a ($B^b_i$-dependent) value in $\nats$
  and 
  $f^{\mathbf{y}}_{b,i}$ is a ($B^b_i$-dependent) 
  function $\X^{k^{\mathbf{y}}_{b,i}} \to \{0,1\}^{k^{\mathbf{y}}_{b,i}}$.
  Additionally define a function 
  $G^{\mathbf{y}}_{b,i} : \bigcup_{t=1}^{\infty} (\X \times \{0,1\})^t \to \{0,1\}$ 
  defined by the property that, 
  $\forall t \in \nats$, 
  $\forall (x_1,y_1),\ldots,(x_t,y_t) \in \X \times \{0,1\}$,
  $G^{\mathbf{y}}_{b,i}(x_1,y_1,\ldots,x_t,y_t) = 1$ iff both of the following hold:
  \begin{align*}
  1.~~ & \forall i,j \leq t \text{ with } i \neq j, \text{ if } x_i = x_j \text{ then } y_i = y_j, 
   \\ 2.~~ & \forall \text{ distinct } x_{i_1},\ldots,x_{i_{k^{\mathbf{y}}_{b,i}}} \in \{x_1,\ldots,x_t\}, f^{\mathbf{y}}_{b,i}(x_{i_1},\ldots,x_{i_{k^{\mathbf{y}}_{b,i}}}) \neq (y_{i_1},\ldots,y_{i_{k^{\mathbf{y}}_{b,i}}}).
  \end{align*}
  In particular, note that $G^{\mathbf{y}}_{b,i}(x_1,y_1,\ldots,x_t,y_t) = 1$ iff every distinct $i_1,\ldots, i_{k^{\mathbf{y}}_{b,i}} \in \{1,\ldots,t\}$ have $G^{\mathbf{y}}_{b,i}(x_{i_1},y_{i_1},\ldots,x_{i_1},y_{i_{k^{\mathbf{y}}_{b,i}}})=1$.
  The interpretation is that each $f^{\mathbf{y}}_{b,i}$ defines a set of \emph{forbidden patterns},
  and hence defines a partial concept class $\G^{\mathbf{y}}_{b,i}$
  of all (finite-support) partial concepts which avoid the forbidden patterns. 
  Formally, $\G^{\mathbf{y}}_{b,i}$ is comprised of all partial concepts $g : \X \to \{0,1,*\}$ with finite support
  $\mathrm{supp}(g) := \{ x \in \X : g(x) \neq * \}$ 
  such that $\forall t$, $\forall \{x_1,\ldots,x_t\} \subseteq \mathrm{supp}(g)$,
  $G^{\mathbf{y}}_{b,i}(x_1,g(x_1),\ldots,x_t,g(x_t)) = 1$.
  
  For each $b \in \{1,\ldots,n/4\}$ and $i \in I_b$, 
  define a function $G_{b,i} : \bigcup_{t=1}^{\infty} (\X \times \{0,1\})^t \to \{0,1\}$ 
  defined by the property that, 
  $\forall t \in \nats$, 
  $\forall (x_1,y_1),\ldots,(x_t,y_t) \in \X \times \{0,1\}$,
  \begin{equation*} 
  G_{b,i}(x_1,y_1,\ldots,x_t,y_t) = 1 \text{ iff } \exists \mathbf{y} \in \{0,1\}^b \text{ s.t. } G^{\mathbf{y}}_{b,i}(x_1,y_1,\ldots,x_t,y_t) = 1.
  \end{equation*}
  The function $G_{b,i}$ thus specifies a partial concept class 
  $\G_{b,i} := \bigcup_{\mathbf{y} \in \{0,1\}^b} \G^{\mathbf{y}}_{b,i}$: 
  the set of all finite-support partial concepts $g$ for which 
  at least one of the forbidden-pattern functions $f^{\mathbf{y}}_{b,i}$
  does not forbid any of the patterns in its support.
  Moreover, since each $k^{\mathbf{y}}_{b,i} \leq b$ (by its definition in Lemma~\ref{lem:VCL-SOA-zero-error-rate-bstar}), and $f^{\mathbf{y}}_{b,i}$ forbids at least one pattern on any
  distinct $k^{\mathbf{y}}_{b,i}$ points $x_{i_1},\ldots,x_{i_{k^{\mathbf{y}}_{b,i}}}$,
  each $\G^{\mathbf{y}}_{b,i}$ has $\VC(\G^{\mathbf{y}}_{b,i}) \leq b-1$.
  To bound $\VC(\G_{b,i})$, suppose there is a set
  $\{x_1,\ldots,x_t\}$ of size 
  $t \geq b-1$ shattered by $\G_{b,i}$.
  By Sauer's lemma \citep*{sauer:72,vapnik:71}, 
  each $\G^{\mathbf{y}}_{b,i}$ has at most 
  $\left(\frac{e t}{b-1}\right)^{b-1}$
  realizable classifications of $\{x_1,\ldots,x_t\}$.
  Since each of the $2^t$ classifications of $\{x_1,\ldots,x_t\}$ 
  realizable by $\G_{b,i}$ is realizable by at least 
  one $\G^{\mathbf{y}}_{b,i}$, $\mathbf{y} \in \{0,1\}^b$, 
  we have $2^t \leq 2^{b} \left(\frac{e t}{b-1}\right)^{b-1}$,
  which, by Lemma 4.6 of \citet*{vidyasagar:03}, 
  implies 
  $\VC(\G_{b,i}) \leq 2 b + 2 (b-1)\log_2(e) \leq 5 b$.

  Next we use $S_n^1$ to select an appropriate batch size $\hat{b}_n$ as follows.
  For each $b \in \{1,\ldots,n/4\}$ and $i \in I_b$,
  recall that we define 
  \begin{equation*}
    \G_{b,i}(S_n^1) = \left\{ (g(X_{1+n/4}),\ldots,g(X_{n/2})) : g \in \G_{b,i} \right\} \cap \{0,1\}^{n/4},
  \end{equation*}
  the set of \emph{total} classifications of $X_{1+n/4},\ldots,X_{n/2}$ realizable by $\G_{b,i}$.
%
  For each $b \!\in\! \{1,\ldots,n/4\}$, define $\GoodI(b)$ as the set of all $i \in I_b$ such that,
  $\G_{b,i}(S_n^1)$ is non-empty, and 
  $\forall b' > b$ with $b' \in \{1,\ldots,n/4\}$, $\forall i' \in I_{b'}$ for which $\G_{b',i'}(S_n^1)$ is non-empty,
  \begin{equation*}
    \min_{\substack{(y_1,\ldots,y_{n/4}) \\ \in \G_{b,i}(S_n^1)}} \frac{4}{n} \sum_{t=1}^{n/4} \ind[ y_t \neq Y_{t+n/4} ]
    \leq \min_{\substack{(y_1,\ldots,y_{n/4}) \\ \in \G_{b',i'}(S_n^1)}} \frac{4}{n} \sum_{t=1}^{n/4} \ind[ y_t \neq Y_{t+n/4} ]
    + 8 \sqrt{\frac{b'\log(n)}{n}}.
  \end{equation*}
  Define $\hat{b}_n$ to be the minimum $b \in \{1,\ldots,n/4\}$ such that $|\GoodI(b)| \geq \frac{9}{10} |I_b|$,
  if such a $b$ exists,
  and otherwise simply define $\hat{h}_n = \hat{h}_n^0$ and in this case the algorithm returns this classifier $\hat{h}_n$.

  Supposing $\hat{b}_n$ exists, 
  the interpretation is that, for $b = \hat{b}_n$, at least $\frac{9}{10}$ of the partial concept classes $\G_{b,i}$, $i \in I_b$, are not verifiably worse
  (in their ability to fit the data $S_n^1$) than some other partial concept class $\G_{b',i'}$ with $b' > b$,
  where this verification comes via a uniform convergence bound
  we establish below 
  (using the fact that $\VC(\G_{b,i})$ and $\VC(\G_{b',i'})$ are both at most $5 b'$)
  holding uniformly over all such comparisons 
  with probability at least $1 - \frac{1}{n}$.

  Based on this choice of $\hat{b}_n$,
  for each $i \in I_{\hat{b}_n}$,
  let $\hat{f}_{n/4}^{i}$ denote the function $\hat{f}_{n/4}^{\G_{\hat{b}_n,i}}$ 
  from Lemma~\ref{lem:partial-VC-super-root}, defined for the partial concept class $\G_{\hat{b}_n,i}$. 
  To define $\hat{h}_n^1$, 
  we will use $S_n^2$ as the training set for the learning 
  algorithms $\hat{f}_{n/4}^{i}, i \in I_{\hat{b}_n}$, 
  and aggregate the resulting predictors by a majority vote.
  Formally, for any $x \in \X$, we define
  \begin{equation*}
    \hat{h}_n^1(x) = \ind\!\left[ \sum_{i \in I_{\hat{b}_n}} \hat{f}_{n/4}^{i}(S_n^2,x) \geq \frac{1}{2} |I_{\hat{b}_n}| \right],  
  \end{equation*}
  the majority vote of the $\hat{f}_{n/4}^{i}(S_n^2,x)$ predictions.

  To complete the definition of the algorithm, we define $\hat{h}_n$ by selecting one of $\hat{h}_n^0$ or $\hat{h}_n^1$ using the remaining data $S_n^3$, as follows.
  If
  \begin{equation*}
    \hat{\er}_{S_n^3}\!\left(\hat{h}_n^1\right) \leq \hat{\er}_{S_n^3}\!\left(\hat{h}_n^0\right) + \sqrt{\frac{32\ln(n)}{n}},
  \end{equation*}
  define $\hat{h}_n = \hat{h}_n^1$,
  and otherwise define $\hat{h}_n = \hat{h}_n^0$.
  In either case, the algorithm returns the classifier $\hat{h}_n$.
  The algorithm is summarized in Figure~\ref{fig:VCL-alg}.
  In particular, we note that the algorithm may be expressed 
  as a universally measurable function $f_n : (\X \times \{0,1\})^n \times \X \to \{0,1\}$ 
  mapping $S_n$ and a test point $X$ to a prediction $\{0,1\}$, 
  since evaluation of $\hat{h}_n^1(x)$ ultimately 
  requires only the use of 
  the universally measurable functions
  $B \mapsto k^b_{\VCL}(B)$ and $(B,x_1,\ldots,x_{k^b_{\VCL}(B)}) \mapsto \algVCLSOA^{b,k^b_{\VCL}(B)}(B,x_1,\ldots,x_{k^b_{\VCL}(B)})$,
  and $\hat{h}_n^0(x)$ is a universally measurable function of 
  $(S_n^0,x)$.

  \begin{figure}
  \begin{bigboxit}
    1. Let $\hat{h}_n^0$ be returned by a universally Bayes-consistent learner trained on $S_n^0$
    \\2. Let $\hat{b}_n = \min\!\left\{ b \leq n/4 : |\GoodI(b)| \geq \frac{9}{10} |I_b| \right\}$ {\small (if $\hat{b}_n$ does not exist, return $\hat{h}_n = \hat{h}_n^0$)}
    \\3. For each $i \leq n/(4\hat{b}_n)$, let $\hat{f}^i_{n/4}(S_n^2) = \hat{f}_{n/4}^{\G_{\hat{b}_n,i}}(S_n^2)$ (from Lemma~\ref{lem:partial-VC-super-root})
    \\4. Let $\hat{h}_n^1 = \mathrm{Majority}\!\left(\hat{f}^i_{n/4}(S_n^2) : i \leq n/(4\hat{b}_n)\right)$ 
    \\5. If $\hat{\er}_{S_n^3}(\hat{h}_n^1) \leq \hat{\er}_{S_n^3}(\hat{h}_n^0) + \sqrt{\frac{32\ln(n)}{n}}$, return $\hat{h}_n = \hat{h}_n^1$, else return $\hat{h}_n = \hat{h}_n^0$
  \end{bigboxit}
  \caption{Algorithm achieving $o(n^{-1/2})$ rate for classes $\H$ with no infinite VCL tree.}
  \label{fig:VCL-alg}
  \end{figure}

  With the definition of the returned classifier $\hat{h}_n$ now complete,
  we proceed to prove that it satisfies the required rate $o(n^{-1/2})$.
  Let $E_1$ denote the event that, either $\hat{h}_n^1$ is undefined
  (due to the early termination event in the algorithm, when the criterion for $\hat{b}_n$ cannot be satisfied)
  or else 
  \begin{equation*}
  \forall j \in \{0,1\}, \left| \hat{\er}_{S_n^3}(\hat{h}_n^j) - \er_{\PXY}(\hat{h}_n^j) \right| \leq \sqrt{\frac{2\ln(n)}{n}}.
  \end{equation*}
  Since $S_n^3$ is independent of $(S_n^0,S_n^1,S_n^2)$ (from which $\hat{h}_n^0$ and $\hat{h}_n^1$ are derived), 
  Hoeffding's inequality (applied under the conditional distribution given $\hat{h}_n^0$ and $\hat{h}_n^1$)
  and a union bound, with the law of total probability, imply that $E_1$ holds with probability at least $1 - \frac{4}{n}$.
    
  We first consider the case where $\PXY$ is \emph{not} Bayes-realizable 
  with respect to $\H$: that is, 
  $\inf_{h \in \H} \er_{\PXY}(h) > \inf_{h} \er_{\PXY}(h)$.
  In this case, let $\epsilon > 0$ be such that $\inf_{h \in \H} \er_{\PXY}(h) > \inf_{h} \er_{\PXY}(h) + \epsilon$.
  Since the $o(n^{-1/2})$ rate is asymptotic in nature,
  we may focus on the case that $n \geq \frac{2^{10}}{\epsilon^2}\ln\!\left(\frac{2^{9}}{\epsilon^2}\right)$,
  so that $\sqrt{\frac{2\ln(n)}{n}} \leq \frac{\epsilon}{16}$.
  On the event $E_1$ (and the event that $\hat{h}_n^1$ is defined), if additionally,
  $\er_{\PXY}(\hat{h}_n^0) < \inf_{h \in \H} \er_{\PXY}(h) - \frac{\epsilon}{2}$
  and $\er_{\PXY}(\hat{h}_n^1) > \inf_{h \in \H} \er_{\PXY}(h) - \frac{\epsilon}{8}$,
  then
  \begin{align*}
    \hat{\er}_{S_n^3}\!\left(\hat{h}_n^1\right)
    & \geq \er_{\PXY}\!\left(\hat{h}_n^1\right) - \frac{\epsilon}{16}
    > \inf_{h \in \H} \er_{\PXY}(h) - \frac{3\epsilon}{16} 
    \\ & > \er_{\PXY}\!\left(\hat{h}_n^0\right) + \frac{5\epsilon}{16}
    \geq \hat{\er}_{S_n^3}\!\left(\hat{h}_n^0\right) + \frac{\epsilon}{4}
    \geq \hat{\er}_{S_n^3}\!\left(\hat{h}_n^0\right) + \sqrt{\frac{32\ln(n)}{n}},
  \end{align*}
  so that the algorithm chooses $\hat{h}_n = \hat{h}_n^0$.
  On the other hand, if
  $\er_{\PXY}(\hat{h}_n^0) < \inf_{h \in \H} \er_{\PXY}(h) - \frac{\epsilon}{2}$
  and $\er_{\PXY}(\hat{h}_n^1) \leq \inf_{h \in \H} \er_{\PXY}(h) - \frac{\epsilon}{8}$,
  then regardless of whether $\hat{h}_n = \hat{h}_n^0$ or $\hat{h}_n = \hat{h}_n^1$,
  we have $\er_{\PXY}(\hat{h}_n) \leq \inf_{h \in \H} \er_{\PXY}(h) - \frac{\epsilon}{8}$.
  Moreover, if $\hat{h}_n^1$ is not defined, then the algorithm chooses $\hat{h}_n = \hat{h}_n^0$.
  Thus, in any case, on the event $E_1$, if $\er_{\PXY}(\hat{h}_n^0) < \inf_{h \in \H} \er_{\PXY}(h) - \frac{\epsilon}{2}$,
  we always have $\er_{\PXY}(\hat{h}_n) \leq \inf_{h \in \H} \er_{\PXY}(h) - \frac{\epsilon}{8}$.
  Since we always have $\er_{\PXY}(\hat{h}_n) \leq 1$, this implies 
  \begin{equation}
  \label{eqn:super-root-h0-case}
    \E\!\left[ \er_{\PXY}(\hat{h}_n) \right]
    \leq \inf_{h \in \H} \er_{\PXY}(h) - \frac{\epsilon}{8} + \P\!\left( \er_{\PXY}(\hat{h}_n^0) \geq \inf_{h \in \H} \er_{\PXY}(h) - \frac{\epsilon}{2} \right) + \left( 1 - \P(E_1) \right).
  \end{equation}
  Recall we have $1 - \P(E_1) \leq \frac{4}{n}$.
  Moreover, noting that $\er_{\PXY}(\hat{h}_n^0) - \inf_{h} \er_{\PXY}(h) \geq 0$, 
  \begin{equation*} 
  \P\!\left( \er_{\PXY}(\hat{h}_n^0) \!\geq\! \inf_{h \in \H} \er_{\PXY}(h) - \frac{\epsilon}{2} \right)
  \!\leq \P\!\left( \er_{\PXY}(\hat{h}_n^0) - \inf_{h} \er_{\PXY}(h) \!>\! \frac{\epsilon}{2} \right)
  \!\leq \frac{2}{\epsilon} \E\!\left[ \er_{\PXY}(\hat{h}_n^0)  - \inf_{h} \er_{\PXY}(h) \right]
  \end{equation*}
  where the last inequality follows from Markov's inequality, 
  Since the Bayes consistency property of $\hat{h}_n^0$ guarantees
  $\lim_{n \to \infty} \E\!\left[ \er_{\PXY}(\hat{h}_n^0) - \inf_{h} \er_{\PXY}(h) \right] = 0$, 
  and clearly $\lim_{n \to \infty} \frac{4}{n} = 0$, 
  we have for all sufficiently large $n$ 
  the right hand side of \eqref{eqn:super-root-h0-case} is at most 
  $\inf_{h \in \H} \er_{\PXY}(h) - \frac{\epsilon}{16}$.
  Thus, for all sufficiently large $n$,
  \begin{equation*}
    \E\!\left[ \er_{\PXY}(\hat{h}_n) \right] - \inf_{h \in \H} \er_{\PXY}(h) < 0 = o\!\left(n^{-1/2}\right).
  \end{equation*}

  To complete the proof, we now turn to the remaining case, in which 
  $\PXY$ is Bayes-realizable with respect to $\H$: 
  that is, 
  $\inf_{h \in \H} \er_{\PXY}(h) = \inf_{h} \er_{\PXY}(h)$, 
  and henceforth assume this to be the case.
  Since $\H$ does not shatter an infinite VCL tree, 
  Lemmas~\ref{lem:no-infinite-VCL-implies-UGC}, \ref{lem:UGC-totally-bounded}, and \ref{lem:limit-function} 
  imply that
  there exists a measurable function $\target : \X \to \{0,1\}$ with
  $\er_{\PXY}(\target) = \inf_{h \in \H} \er_{\PXY}(h)$ 
  and $\inf_{h \in \H} \Px( x : h(x) \neq \target(x) ) = 0$,
  where $\Px$ denotes the marginal of $\PXY$ on $\X$.
  In particular, let $E_2$ denote the event that 
  $\forall t \leq n/2$, 
  $\P(Y_t=\target(X_t)|X_t) \geq \frac{1}{2}$.
  As is well known, the fact that $\er_{\PXY}(\target) = \inf_h \er_{\PXY}(h)$ implies $E_2$ is satisfied with probability one.
  Moreover, note that because $\inf_{h \in \H} P_X( x : h(x) \neq \target(x) ) = 0$,
  letting $P_{\target}$ denote the distribution of $(X,\target(X))$, for $X \sim P_X$,
  we have that $P_{\target}$ is realizable with respect to $\H$.

  Let $b^* := b^*_{99/100} \in \nats$ be the value defined in 
  Lemma~\ref{lem:VCL-SOA-zero-error-rate-bstar} 
  under the realizable distribution $P_{\target}$ (and $\gamma = \frac{99}{100}$).
  For the remainder of the proof, suppose $n \geq 4 b^*$ (w.l.o.g., since the $o(n^{-1/2})$ claim is asymptotic in nature).
  We first argue that $\hat{b}_n \leq b^*$ with high probability.
  For each $i \in I_{b^*}$,
  let $\mathbf{y}^*_{i} = (\target(X_{1+(i-1)b^*}),\ldots,\target(X_{i b^*}))$.
  Then note that, for each such $i$, 
  the distribution $P_{\target}$ will be Bayes-realizable 
  with respect to the partial concept class $\G^{\mathbf{y}^*_{i}}_{b^*,i}$
  if, for $k = k^{\mathbf{y}^*_i}_{b^*,i}$,  
  $\forall t \in \nats$, 
  \begin{align*}
  0 & = \Px^{t}\!\left( (x_1,\ldots,x_t) \in \X^t : G^{\mathbf{y}^*_i}_{b^*,i}(x_1,\target(x_1),\ldots,x_t,\target(x_t)) = 0 \right)
  \\ & = \Px^{t}\Big( (x_1,\ldots,x_t) \in \X^t : \exists \text{ distinct } x_{i_1},\ldots,x_{i_{k}} \in \{x_1,\ldots,x_t\} \text{ s.t. }
  \\ & {\hskip 44mm}f^{\mathbf{y}^*_i}_{b^*,i}(x_{i_1},\ldots,x_{i_k}) = (\target(x_{i_1}),\ldots,\target(x_{i_k})) \Big),
  \end{align*}
  where the last equality is by definition of $G^{\mathbf{y}^*_i}_{b^*,i}$.
  Since there are only finitely many choices of distinct 
  $i_1,\ldots,i_k \in \{1,\ldots,t\}$, 
  by the union bound the quantity above is $0$ if 
  \begin{equation}
  \label{eqn:zero-prob-pattern}
    \Px^{k}\!\left( (x_1,\ldots,x_k) \in \X^k :  f^{\mathbf{y}^*_i}_{b^*,i}(x_{1},\ldots,x_{k}) = (\target(x_{1}),\ldots,\target(x_{k})) \right) = 0.
  \end{equation}
  Recalling that $k = k^{\mathbf{y}^*_i}_{b^*,i} = k^{b^*}_{\VCL}(B^{b^*}_{i}(\mathbf{y}^*_i))$ 
  and $f^{\mathbf{y}^*_i}_{b^*,i} = \algVCLSOA^{b^*,k}(B^{b^*}_{i}(\mathbf{y}^*_i))$,
  and noting that $B^{b^*}_{i}(\mathbf{y}^*_i) \sim P_{\target}^{b^*}$,
  and that $(X,Y) \sim P_{\target}$ has $Y=\target(X)$ (by definition),
  the defining property of $b^*$ from Lemma~\ref{lem:VCL-SOA-zero-error-rate-bstar} 
  implies \eqref{eqn:zero-prob-pattern} holds with probability 
  greater than $\frac{99}{100}$.
  Altogether, we have 
  \begin{equation*}
    \P\!\left( P_{\target} \text{ is Bayes-realizable with respect to } \G^{\mathbf{y}^*_{i}}_{b^*,i} \right) > \frac{99}{100}.
  \end{equation*}

  Based on this fact, 
  we will argue that $\hat{b}_n \leq b^*$ 
  with high probability. 
  Let $E'_3$ denote the event that the set 
  \begin{equation*}
    I^{\target}_{b^*} = \left\{ i \in I_{b^*} : P_{\target} \text{ is Bayes-realizable with respect to } \G^{\mathbf{y}^*_i}_{b^*,i} \right\}
  \end{equation*}
  satisfies $|I^{\target}_{b^*}| \geq \frac{9}{10} |I_{b^*}|$.
  Noting that the data sets $B^{b^*}_{i}(\mathbf{y}^*_i)$, $i \in I_{b^*}$, are independent,
  a Chernoff bound implies that
  the event $E'_3$ has probability at least $1 - e^{-C_1 n}$, 
  where $C_1 = \frac{99}{100} \frac{1}{11^2} \frac{1}{2} \frac{1}{8b^*}$.
  Thus, to show $\hat{b}_n \leq b^*$, on this event $E'_3$ 
  it suffices to argue that $\GoodI(b^*) \supseteq I^{\target}_{b^*}$.
  Additionally, denote by $E''_3$ the event that 
  every $i \in I^{\target}_{b^*}$ has 
  $(\target(X_{1+n/4}),\ldots,\target(X_{n/2}))$ $\in \G_{b^*,i}^{\mathbf{y}^*_i}(S_n^1)$.
  Note that, conditional on $S_n^0$, for $i \in I^{\target}_{b^*}$, 
  the definitions of $P_{\target}$ and Bayes-realizability of 
  $P_{\target}$ with respect to $\G_{b^*,i}^{\mathbf{y}^*_i}$
  imply that 
  with conditional probability one given $S_n^0$
  we have $(\target(X_{1+n/4}),\ldots,\target(X_{n/2})) \in \G_{b^*,i}^{\mathbf{y}^*_i}(S_n^1)$.
  Therefore, by the union bound and law of total probability, 
  $E''_3$ has probability one.
  Denote by $E_3 = E'_3 \cap E''_3$,
  and note that, by the union bound, 
  $E_3$ has probability at least $1 - e^{-C_1 n}$.
  Also note that one implication of $E_3$ is that 
  every $i \in I^{\target}_{b^*}$ has $\G_{b^*,i}(S_n^1) \neq \emptyset$
  (one of the properties necessary for establishing $\GoodI(b^*) \supseteq I^{\target}_{b^*}$).
  
  As in Section~\ref{sec:proof-of-lemma-partial-VC-super-root}, 
  it will simplify the discussion to introduce convenient notation in discussing the partial concept classes $\G_{b,i}$, 
  specifically in the context of the projections 
  $\G_{b,i}(S_n^1)$ to the data set $S_n^1$.
  For any $g \in \{0,1\}^{n/4}$, 
  to make explicit the correspondence of entries of $g$ 
  to the variables $X_{t+n/4}$ in $S_n^1$, 
  we denote the $t^{\mathrm{th}}$ entry of $g$ by $g(X_{t+n/4})$ (for $t \in \{1,\ldots,n/4\}$): 
  that is, $g =: (g(X_{1+n/4}),\ldots,g(X_{n/2}))$.
  Additionally, 
  denote by $\hat{\er}_{S_n^1}(g) = \frac{4}{n} \sum_{t=1}^{n/4} \ind[ g(X_{t+n/4}) \neq Y_{t+n/4} ]$
  (the empirical error rate of $g$ on $S_n^1$), 
  and denote by $\ber_{n}^{1}(g) = \frac{4}{n} \sum_{t=1}^{n/4} \P( Y_{t+n/4} \neq g(X_{t+n/4}) | X_{t+n/4} )$
  (the semi-empirical error rate of $g$ on $S_n^1$).
  Also extend this notation to $\target$, 
  defining $\ber_{n}^{1}(\target) = \frac{4}{n} \sum_{t=1}^{n/4} \P( Y_{t+n/4} \neq \target(X_{t+n/4}) | X_{t+n/4} )$.
  In particular, in this notation, 
  for each $b \in \{1,\ldots,n/4\}$, 
  the set $\GoodI(b)$ is comprised of all $i \in I_b$ 
  such that $\G_{b,i}(S_n^1)$ is non-empty,
  and $\forall b' > b$ with $b' \in \{1,\ldots,n/4\}$, 
  $\forall i' \in I_{b'}$, 
  for which $\G_{b',i'}(S_n^1)$ is non-empty, 
  \begin{equation*}
    \min_{g \in \G_{b,i}(S_n^1)} \hat{\er}_{S_n^1}(g)
    \leq \min_{g \in \G_{b',i'}(S_n^1)} \hat{\er}_{S_n^1}(g) + 8\sqrt{\frac{b' \log(n)}{n}}.
  \end{equation*}

  For every $b \in \{1,\ldots,n/4\}$ and $i \in I_b$,
  applying Hoeffding's inequality under the conditional distribution 
  given $\{X_1,\ldots,X_{n/2}\}$ implies that,
  for each $g \in \G_{b,i}(S_n^1)$, 
  with conditional probability at least $1 - 2 n^{-8b}$,
  \begin{equation}
  \label{eqn:conditional-uniform-convergence}
    \left| \hat{\er}_{S_n^1}(g) - \ber_n^1(g) \right| \leq 
    4\sqrt{\frac{b \log(n)}{n}}.
  \end{equation}
  Since $\VC(\G_{b,i}) \leq 5b$, 
  Sauer's lemma \citep*{sauer:72,vapnik:71} implies that 
  $|\G_{b,i}(S_n^1)| \leq n^{5b}$, 
  so that the union bound, 
  together with the law of total probability,  
  imply that with probability at least $1 - 2 n^{-3b}$,
  the inequality \eqref{eqn:conditional-uniform-convergence} holds
  simultaneously for every $g \in \G_{b,i}(S_n^1)$.
  Let $E_4$ denote the event 
  that \eqref{eqn:conditional-uniform-convergence} holds simultaneously for all 
  $b \in \{1,\ldots,n/4\}$, 
  $i \in I_b$,
  and $g \in \G_{b,i}(S_n^1)$.
  Noting that 
  $\sum_{b \leq n/4} \sum_{i \leq n/(4b)} 2 n^{-3b} \leq \frac{1}{n}$, 
  the union bound implies $E_4$ has probability at least 
  $1 - \frac{1}{n}$.

  On the event $E_2 \cap E_3 \cap E_4$, for all $i \in I^{\target}_{b^*}$, it holds that
  \begin{equation}
  \label{eqn:b-star-vs-target}
    \min_{g \in \G_{b^*,i}(S_n^1)} \hat{\er}_{S_n^1}(g) - 4 \sqrt{\frac{b^* \log(n)}{n}}
    \leq \min_{g \in \G_{b^*,i}(S_n^1)} \ber_n^1(g) 
    \leq \ber_{n}^{1}(\target)
    = \min_{g \in \{0,1\}^{n/4}} \ber_n^1(g),
  \end{equation}
  where the first inequality is due to $E_4$, 
  the second inequality is due to $i \in I^{\target}_{b^*}$ 
  and the event $E_3$, 
  and the last equality is due to the event $E_2$.
  Moreover, on the event $E_4$,
  any $b' \in \{1,\ldots,n/4\}$ and $i' \in I_{b'}$ with $\G_{b',i'}(S_n^1) \neq \emptyset$ have 
  \begin{equation*}
    \min_{g \in \G_{b',i'}(S_n^1)} \hat{\er}_{S_n^1}(g) + 4 \sqrt{\frac{b' \log(n)}{n}}
    \geq \min_{g \in \G_{b',i'}(S_n^1)} \ber_n^1(g)
    \geq \min_{g \in \{0,1\}^{n/4}} \ber_n^1(g).
  \end{equation*}
  Together, these facts imply that, on the event $E_2 \cap E_3 \cap E_4$,
  every $i \in I^{\target}_{b^*}$, every $b' \in \{1,\ldots,n/4\}$ with $b' > b^*$, and every $i' \in I_{b'}$ 
  with $\G_{b',i'}(S_n^1) \neq \emptyset$ satisfy
  \begin{equation*}
    \min_{g \in \G_{b^*,i}(S_n^1)} \hat{\er}_{S_n^1}(g) 
    \leq \min_{g \in \G_{b',i'}(S_n^1)} \hat{\er}_{S_n^1}(g)
    + 8 \sqrt{\frac{b' \log(n)}{n}}.
  \end{equation*}
  Thus, on the event $E_2 \cap E_3 \cap E_4$, 
  we have $\GoodI(b^*) \supseteq I^{\target}_{b^*}$
  and $|\GoodI(b^*)| \geq |I^{\target}_{b^*}| \geq \frac{9}{10} |I_{b^*}|$,
  so that $\hat{b}_n \leq b^*$.  
  In particular, on this event, the function $\hat{h}_n^1$ is defined.

  Having established that $\hat{b}_n \leq b^*$, 
  we next need to argue that any ``bad'' $b < b^*$ 
  will \emph{not} be selected as $\hat{b}_n$.
  Recall our definition of \emph{Bayes-realizable} 
  for partial concept classes, introduced in Section~\ref{sec:subsection-partial-concepts-super-root-upper-bound}.
Let $\PrbBadB$ denote the set of all $b < b^*$ for which 
  \begin{equation*}
    \P\!\left( \PXY \text{ is Bayes-realizable with respect to } \G_{b,1} \right) < \frac{7}{10}. 
  \end{equation*}
  We aim to show that $\hat{b}_n \notin \PrbBadB$.
  If $\PrbBadB = \emptyset$, this trivially holds;
  we focus the next portion of the proof on the nontrivial 
  case that $\PrbBadB \neq \emptyset$.
   
  Let $S'_{< \infty} = \{(X'_1,Y'_1),(X'_2,Y'_2),\ldots\}$ be an infinite sequence of independent $\PXY$-distributed random variables 
  (with $S'_{< \infty}$ independent of $S_n^0$), 
  and for any $m \in \nats$ 
  denote by $S'_m = \{(X'_1,Y'_1),\ldots,(X'_m,Y'_m)\}$.
  For any $b \in \PrbBadB$, by definition, 
  if $\PXY$ is \emph{not} Bayes-realizable with respect to $\G_{b,1}$, 
  then $\exists m(\G_{b,1}) \in \nats$
  and $\Delta(\G_{b,1}) > 0$ (both independent of $S'_{< \infty}$)
  such that,
  denoting by $S'_{b,1} = S'_{m(\G_{b,1})}$ (which is conditionally distributed as $\PXY^{m(\G_{b,1})}$ given $\G_{b,1}$), 
  \begin{equation*}
    \P\!\left( \forall (y_1,\ldots,y_{m(\G_{b,1})}) \in \G_{b,1}(S'_{b,1}), \exists t \leq m(\G_{b,1}) \text{ s.t. } \P(Y'_t=y_t|X'_t) < \frac{1}{2} \middle| \G_{b,1} \right) > \Delta(\G_{b,1}).
  \end{equation*}
  In particular, if we define (for any $m \in \nats$ and $i \in I_b$)
  \begin{equation*}
      \epsilon(\G_{b,i},S'_{m}) = \min_{(y_1,\ldots,y_{m}) \in \G_{b,i}(S'_{m})} \max_{1 \leq t \leq m} \P(Y'_t \neq y_t | X'_t) - \frac{1}{2},
  \end{equation*}
  or $\epsilon(\G_{b,i},S'_{m}) = 1$ if $\G_{b,i}(S'_{m}) = \emptyset$, 
  then since $\G_{b,1}(S'_{b,1})$ has finite size, 
  the above can equivalently be stated as 
  \begin{equation*}
      \P\Big( \epsilon(\G_{b,1},S'_{b,1}) > 0 \Big| \G_{b,1} \Big) > \Delta(\G_{b,1}).
  \end{equation*}
  Due to the strict inequality in the event $\epsilon(\G_{b,1},S'_{b,1}) > 0$, 
  this further implies $\exists \epsilon(\G_{b,1}) > 0$
  such that 
  \begin{equation*}
      \P\Big( \epsilon(\G_{b,1},S'_{b,1}) > \epsilon(\G_{b,1}) \Big| \G_{b,1} \Big) > \frac{\Delta(\G_{b,1})}{2}.
  \end{equation*}

  For each $b \in \PrbBadB$, 
  let $m^*_b \in \nats$, $\epsilon^*_b \in (0,1]$, and $\Delta^*_b \in (0,1]$
  be such that
  \begin{align*}
      & \P\Big( m^*_b \geq m(\G_{b,1})  \Big| \PXY \text{ not Bayes-realizable wrt } \G_{b,1} \Big) > \frac{17}{18}
      \\ & \P\Big( \epsilon(\G_{b,1}) \geq \epsilon^*_b \Big| \PXY \text{ not Bayes-realizable wrt } \G_{b,1} \Big) > \frac{17}{18}
      \\ \text{and } & \P\Big( \Delta(\G_{b,1}) \geq \Delta^*_b \Big| \PXY \text{ not Bayes-realizable wrt } \G_{b,1} \Big) > \frac{17}{18},
  \end{align*}
  so that, by the union bound, 
  \begin{equation*}
      \P\Big( m^*_b \!\geq\! m(\G_{b,1}) \text{ and } \epsilon(\G_{b,1}) \!\geq\! \epsilon^*_b \text{ and } \Delta(\G_{b,1}) \!\geq\! \Delta^*_b \Big| \PXY \text{ not Bayes-realizable wrt } \G_{b,1} \Big) > \frac{5}{6}.
  \end{equation*}
  Let 
  \begin{equation*} 
  m^* = \max\limits_{b \in \PrbBadB} m^*_b,~~~~~ 
  \epsilon^* = \min\limits_{b \in \PrbBadB} \epsilon^*_b,~~~~
  \text{ and }~ \Delta^* = \min\limits_{b \in \PrbBadB} \Delta^*_b,
  \end{equation*}
  and note that since $\PrbBadB$ is finite (being a subset of $\{1,\ldots,b^*-1\}$), we have 
  $m^* < \infty$, $\epsilon^* > 0$, and $\Delta^* > 0$.
  By the above properties, for any $b \in \PrbBadB$, 
  \begin{equation*}
      \P\!\left( m^* \!\geq\! m(\G_{b,1}) \text{ and } 
      \P\Big( \epsilon(\G_{b,1},S'_{b,1}) \!>\! \epsilon^* \Big| \G_{b,1} \Big) \!>\! \frac{\Delta^*}{2} \middle| \PXY \text{ not Bayes-realizable wrt } \G_{b,1} \!\right) \!>\! \frac{5}{6}.
  \end{equation*}
  Also note that $\epsilon(\G_{b,1},S'_{m})$ is non-decreasing in $m$, 
  so that the above implies 
  \begin{align*}
      \frac{5}{6} & < \P\!\left( m^* \!\geq\! m(\G_{b,1}) \text{ and } 
      \P\Big( \epsilon(\G_{b,1},S'_{m^*}) \!>\! \epsilon^* \Big| \G_{b,1} \Big) \!>\! \frac{\Delta^*}{2} \middle| \PXY \text{ not Bayes-realizable wrt } \G_{b,1} \!\right)
      \\ & \leq \P\!\left( \P\Big( \epsilon(\G_{b,1},S'_{m^*}) > \epsilon^* \Big| \G_{b,1} \Big) > \frac{\Delta^*}{2} \middle| \PXY \text{ not Bayes-realizable wrt } \G_{b,1} \!\right).
  \end{align*}
  Since (by definition) any $b \in \PrbBadB$ has probability greater than $\frac{3}{10}$ that $\PXY$ is not Bayes-realizable with respect to $\G_{b,1}$,
  and since the data sets $B^b_i$, $i \in I_b$, 
  are identically distributed,
  we can further conclude that 
  $\forall b \in \PrbBadB$, $\forall i \in I_b$, 
  \begin{equation*}
    \P\!\left( \P\Big( \epsilon(\G_{b,i},S'_{m^*}) > \epsilon^* \Big| \G_{b,i} \Big) > \frac{\Delta^*}{2} \right) > \frac{1}{4}.
  \end{equation*}

  For each $b \in \PrbBadB$, letting
  \begin{equation*}
    J_b = \left\{ i \in I_b : \P\Big( \epsilon(\G_{b,i},S'_{m^*}) > \epsilon^* \Big| \G_{b,i} \Big) > \frac{\Delta^*}{2} \right\},
  \end{equation*}
  since the data sets $B^b_i$, $i \in I_b$, 
  are independent,  
  a Chernoff bound implies that, 
  with probability at least $1 - e^{-C_2 n}$, 
  where $C_2=\frac{1}{64 \cdot 25 \cdot b^*}$,
  it holds that $|J_b| \geq \frac{1}{5} |I_b|$.
  Let $E_5$ denote the event that $|J_b| \geq \frac{1}{5} |I_b|$ holds simultaneously for all $b \in \PrbBadB$.
  By the union bound, $E_5$ has probability at least $1 - (b^*-1)e^{-C_2 n}$.

  Since the $o\!\left(n^{-1/2}\right)$ rate guarantee is asymptotic in nature,
  we may focus on the case $n \geq 4 m^*$; we suppose this to be the case for the remainder of the proof.
  Let $t_0 = n/4$, 
  and for each $j \in \{1,\ldots, \lfloor n/(4 m^*) \rfloor\}$, 
  let $t_j = j m^* + n/4$,
  and for each such $j$, let 
  $S_{m^*,j} = \left\{(X_{1+t_{j-1}},Y_{1+t_{j-1}}),\ldots,(X_{t_j},Y_{t_j})\right\}$.
  Note that $S_{m^*,1},\ldots,S_{m^*,\lfloor n/(4m^*) \rfloor}$
  are independent (and independent of $S_n^0$), 
  each with distribution the same as that of $S'_{m^*}$ defined above.
  Moreover, defining $\epsilon(\G_{b,i},S_{m^*,j})$ as above (i.e., by taking $S'_{m^*} = S_{m^*,j}$ above), 
  for any $b \in \PrbBadB$ and $i \in I_b$, 
  on the event $E_2$, 
  if $\G_{b,i}(S_n^1) \neq \emptyset$, we have  
  \begin{align*}
    & \min_{g \in \G_{b,i}(S_n^1)} \ber_n^1(g) - \ber_n^1(\target) 
    \\ & = \min_{g \in \G_{b,i}(S_n^1)} \frac{4}{n} \sum_{t=1+n/4}^{n/2} \left( \P\!\left( Y_t \neq g(X_t) \middle| X_t \right) - \P\!\left( Y_t \neq \target(X_t) \middle| X_t \right) \right)
    \\ & = \min_{g \in \G_{b,i}(S_n^1)} \frac{4}{n} \sum_{t=1+n/4}^{n/2} \ind\!\left[ \P\!\left( Y_t \neq g(X_t) \middle| X_t \right) > \frac{1}{2} \right] 2 \left( \P\!\left( Y_t \neq g(X_t) \middle| X_t \right) - \frac{1}{2} \right)
    \\ & \geq \frac{4}{n} \sum_{j=1}^{\lfloor n/(4m^*) \rfloor}\! \min_{(y_{1+t_{j-1}},\ldots,y_{t_{j}}) \in \G_{b,i}(S_{m^*,j})} \sum_{t=1+t_{j-1}}^{t_{j}} \!\ind\!\left[ \P\!\left( Y_t \neq y_t \middle| X_t \right) > \frac{1}{2} \right] 2 \left( \P\!\left( Y_t \neq y_t \middle| X_t \right) - \frac{1}{2} \right)
    \\ & \geq \frac{4}{n} \sum_{j=1}^{\lfloor n/(4m^*) \rfloor} 2 \epsilon^* \ind\!\left[ \epsilon(\G_{b,i},S'_{m^*,j}) > \epsilon^*\right].
  \end{align*}
  Given $S_n^0$, 
  for any $b \in \PrbBadB$ and $i \in J_b$,
  applying a Chernoff bound under the conditional distribution given $S_n^0$ (from which $\G_{b,i}$ is derived, independent from $S_n^1$),  
  we have that with conditional probability at least $1 - e^{-C_3 n}$, where $C_3 = \frac{\Delta^*}{128 m^*}$,
  \begin{equation*}
    \sum_{j=1}^{\lfloor n/(4m^*) \rfloor} \ind\!\left[ \epsilon(\G_{b,i},S'_{m^*,j}) > \epsilon^*\right] \geq \frac{\Delta^*}{4} \left\lfloor \frac{n}{4m^*} \right\rfloor.
  \end{equation*}
  Letting $E_6$ denote the event that this inequality is satisfied for every $b \in \PrbBadB$ and $i \in J_b$,
  the union bound and the law of total probability together imply that $E_6$ holds with probability at least
  $1 - (b^*-1) \frac{n}{4} e^{-C_3 n}$.
  Altogether, on the event $E_2 \cap E_6$, 
  each $b \in \PrbBadB$ and $i \in J_b$ 
  with $\G_{b,i}(S_n^1) \neq \emptyset$
  satisfy 
  \begin{equation*}
     \min_{g \in \G_{b,i}(S_n^1)} \ber_n^1(g) - \ber_n^1(\target)
     \geq \frac{8 \epsilon^*}{n} \frac{\Delta^*}{4} \left\lfloor \frac{n}{4m^*} \right\rfloor
     \geq \frac{\epsilon^* \Delta^*}{4m^*}.
  \end{equation*}

  To complete this argument (toward the conclusion that $\hat{b}_n \notin \PrbBadB$ on the above events), 
  we combine this with the above guarantees for $b^*$ from \eqref{eqn:b-star-vs-target}.
  Specifically, on the event $E_2 \cap E_3 \cap E_4 \cap E_6$,
  for any $b \in \PrbBadB$ and $i \in J_b$ with $\G_{b,i}(S_n^1) \neq \emptyset$,
  for any $i^* \in I^{\target}_{b^*}$, it holds that $\G_{b^*,i^*}(S_n^1) \neq \emptyset$ (established above) and 
  \begin{align*}
    & \min_{g \in \G_{b,i}(S_n^1)} \hat{\er}_{S_n^1}(g) + 4\sqrt{\frac{b\log(n)}{n}}
    \geq \min_{g \in \G_{b,i}(S_n^1)} \ber_n^1(g) 
    \\ & \geq \frac{\epsilon^* \Delta^*}{4 m^*} + \ber_n^1(\target)
    \geq \frac{\epsilon^* \Delta^*}{4 m^*} + \min_{g \in \G_{b^*,i^*}(S_n^1)} \hat{\er}_{S_n^1}(g) - 4\sqrt{\frac{b^*\log(n)}{n}},
   \end{align*}
  where the first inequality is due to $E_4$, 
  the second inequality is due to $E_6$, 
  and the last inequality is from \eqref{eqn:b-star-vs-target}.
  Since the $o(n^{-1/2})$ rate guarantee is asymptotic in nature, 
  we may suppose for the remainder of the proof that 
  $n > 2 b^* \left(\frac{64 m^*}{\epsilon^* \Delta^*}\right)^2 \log\!\left( b^* \left(\frac{64 m^*}{\epsilon^* \Delta^*}\right)^2 \right)$, 
  so that $\frac{\epsilon^* \Delta^*}{4 m^*} > 16\sqrt{\frac{b^*\log(n)}{n}}$.
  In this case, on $E_2 \cap E_3 \cap E_4 \cap E_6$, 
  the above inequality implies that, 
  for any $b \in \PrbBadB$ and $i \in J_b$ with $\G_{b,i}(S_n^1) \neq \emptyset$, for any $i^* \in I^{\target}_{b^*}$, 
  \begin{equation*}
     \min_{g \in \G_{b,i}(S_n^1)} \hat{\er}_{S_n^1}(g)
     > \min_{g \in \G_{b^*,i^*}(S_n^1)} \hat{\er}_{S_n^1}(g) + 8\sqrt{\frac{b^*\log(n)}{n}}.
  \end{equation*}
  Since we have shown above that 
  $|I^{\target}_{b^*}| \geq \frac{9}{10} |I_{b^*}| > 0$ on $E_3$, 
  the above implies that on the event $E_2 \cap E_3 \cap E_4 \cap E_6$,
  every $b \in \PrbBadB$ and $i \in J_b$ 
  have $i \notin \GoodI(b)$.
  Since the event $E_5$ guarantees 
  every $b \in \PrbBadB$ has 
  $|J_b| \geq \frac{1}{5} |I_b|$,
  we conclude that, on the event $\bigcap_{j=2}^{6} E_j$, 
  every $b \in \PrbBadB$ has $|\GoodI(b)| \leq \frac{4}{5} |I_b| < \frac{9}{10} |I_b|$,
  and hence $\hat{b}_n \notin \PrbBadB$.
  This concludes the portion of the proof concerned with the 
  case $\PrbBadB \neq \emptyset$;
  for the remainder of the proof, we return to the general case, 
  allowing $\PrbBadB$ either non-empty or empty (in the latter case letting $E_5$ and $E_6$ denote vacuous events of probability one).

  Since, on the above events, $\hat{b}_n \leq b^*$ and $\hat{b}_n \notin \PrbBadB$,
  let us consider the complementary set.  Consider any $b \in \{1,\ldots,b^*\} \setminus \PrbBadB$.
  By definition, we have that any $i \in I_b$ has 
    $\P\!\left( P \text{ is Bayes-realizable with respect to } \G_{b,i} \right) \geq \frac{7}{10}$. 
  Let
  \begin{equation*}
    I^*_b = \left\{ i \in I_b : P \text{ is Bayes-realizable with respect to } \G_{b,i} \right\}.
  \end{equation*}
  By Hoeffding's inequality, 
  it holds that $|I^*_b| \geq \frac{3}{5} |I_b|$
  with probability at least $1 - e^{-C_4 n}$, 
  where $C_4 = \frac{1}{400 \cdot b^*}$.
  Let $E_7$ denote the event that this occurs simultaneously for all $b \in \{1,\ldots,b^*\} \setminus \PrbBadB$.
  By the union bound, $E_7$ holds with probability at least $1 - b^* \cdot e^{-C_4 n}$.

  Now note that, for each $b \in \{1,\ldots,b^*\} \setminus \PrbBadB$, 
  the set $I^*_b$ and the associated set $\{\G_{b,i} : i \in I^*_b\}$ are determined by $S_n^0$, and hence are independent of $S_n^2$.
  Thus, given $S_n^0$,  
  letting $\hat{f}^{b,i}_{n/4} = \hat{f}^{\G_{b,i}}_{n/4}$ 
  defined in Lemma~\ref{lem:partial-VC-super-root},
  by the guarantee from Lemma~\ref{lem:partial-VC-super-root}, 
  each $i \in I^*_b$ has
  \begin{equation}
    \label{eqn:fhat-b-i-conditional-expectation-bounded-by-phi}
    \E\!\left[ \er_{\PXY}(\hat{f}_{n/4}^{b,i}(S_n^2)) - \er_{\PXY}(\target) \Bigg| S_n^0 \right] \leq \phi(n/4;\G_{b,i},\PXY),
  \end{equation}
  where $\phi(n/4;\G_{b,i},\PXY) \leq c \sqrt{\frac{20 b}{n}}$
  (for a universal constant $c$, recalling that $\VC(\G_{b,i}) \leq 5 b$) 
  and $\phi(n/4;\G_{b,i},\PXY) = o(n^{-1/2})$
  (noting that $\G_{b,i}$ is invariant to $n$, subject to $n \geq 4 b i$).

  Note that the values $\phi(n/4;\G_{b,i},\PXY)$, $i \in I^*_b$,  
  are conditionally i.i.d.\ given $I^*_b$.
  Denote their common conditional expectation by $\phi(n/4;b,\PXY)$:
  namely, for any $i \in I_b$, 
  \begin{equation*}
     \phi(n/4;b,\PXY) := \E\!\left[ \phi(n/4;\G_{b,i},\PXY) \middle| i \in I^*_b \right].
  \end{equation*}
  We will next argue that $\phi(n/4;b,\PXY) = o(n^{-1/2})$.
  Since the above value is identical for every $i$ (which is why the notation $\phi(n/4;b,\PXY)$ omits a dependence on $i$), 
  and since $b \notin \PrbBadB$, 
  we have (taking $i=1$ for simplicity) 
  \begin{equation*}
     \phi(n/4;b,\PXY) 
     \leq \frac{10}{7} \E\!\left[ \phi(n/4;\G_{b,1},\PXY) \ind\!\left[ 1 \in I^*_{b} \right] \right].
  \end{equation*}
  Thus, since $n^{1/2} \cdot \phi(n/4;\G_{b,1},\PXY) \ind\!\left[ 1 \in I^*_{b} \right] \leq c \sqrt{20 b}$ (i.e., it is \emph{bounded} independent of $n$) 
  for all $n \geq 4b$,
  Fatou's lemma implies that  
  \begin{align*}
    \limsup_{n \to \infty}~ n^{1/2} \cdot \phi(n/4;b,\PXY)
    & ~\leq~ \limsup_{n \to \infty}~ \frac{10}{7} \E\!\left[ n^{1/2} \cdot \phi(n/4;\G_{b,1},\PXY) \ind\!\left[ 1 \in I^*_{b} \right] \right]
    \\ & ~\leq~ \frac{10}{7} \E\!\left[ \limsup_{n \to \infty}~ n^{1/2} \cdot \phi(n/4;\G_{b,1},\PXY) \ind\!\left[ 1 \in I^*_{b} \right] \right]
    = 0,
  \end{align*}
  so that we indeed have $\phi(n/4;b,\PXY) = o(n^{-1/2})$.
  Moreover, defining 
  \begin{equation*} 
  \phi(n/4;\PXY) = \sum_{b \in \{1,\ldots,b^*\} \setminus \PrbBadB} \phi(n/4;b,\PXY),
  \end{equation*}
  this immediately implies $\phi(n/4;\PXY) = o(n^{-1/2})$ as well.

  Also note that for any $x \in \X$, we have 
  \begin{equation*} 
  \hat{h}_n^1(x) \neq \target(x) \implies \frac{1}{|I_{\hat{b}_n}|} \sum_{i \in I_{\hat{b}_n}}  \ind\!\left[ \hat{f}_{n/4}^{i}(S_n^2,x) \neq \target(x) \right]  \geq \frac{1}{2}.
  \end{equation*}
  Additionally, 
  letting $\alpha = \frac{|I^*_{\hat{b}_n}|}{|I_{\hat{b}_n}|}$, 
  as argued above, on the event $\bigcap_{j=2}^{7} E_j$,
  we have $\alpha \geq \frac{3}{5}$, 
  which has the further implication that  
  \begin{align*}
      & \frac{1}{|I_{\hat{b}_n}|} \sum_{i \in I_{\hat{b}_n}}  \ind\!\left[ \hat{f}_{n/4}^{i}(S_n^2,x) \neq \target(x) \right]  \geq \frac{1}{2}
      \\ & \implies 
      \frac{1}{|I_{\hat{b}_n}|} \sum_{i \in I^*_{\hat{b}_n}} \ind\!\left[ \hat{f}_{n/4}^{i}(S_n^2,x) \neq \target(x) \right] \geq \frac{1}{2} - (1-\alpha)
      \\ & \implies \frac{1}{|I^*_{\hat{b}_n}|} \sum_{i \in I^*_{\hat{b}_n}} \ind\!\left[ \hat{f}_{n/4}^{i}(S_n^2,x) \neq \target(x) \right] \geq 1 - \frac{1}{2\alpha} \geq \frac{1}{6}.
  \end{align*}
  Thus, on the event $\bigcap_{j=2}^{7} E_j$, 
  for $(X,Y) \sim \PXY$ (independent of $S_n$), 
  \begin{align*}
    & \er_{\PXY}(\hat{h}_n^1) - \er_{\PXY}(\target)
    \\ & = \E\!\left[ \ind\!\left[ \hat{h}_n^1(X) \neq \target(X) \right] \left( 1 - 2 \P(Y \neq \target(X) | X) \right) \middle| S_n \right]
    \\ & \leq \E\!\left[ \ind\!\left[ \frac{1}{|I_{\hat{b}_n}|} \sum_{i \in I_{\hat{b}_n}}  \ind\!\left[ \hat{f}_{n/4}^{i}(S_n^2,X) \neq \target(X) \right]  \geq \frac{1}{2} \right] \left( 1 - 2 \P(Y \neq \target(X) | X) \right) \middle| S_n \right]
   \\ & \leq \E\!\left[ \ind\!\left[ \frac{1}{|I^*_{\hat{b}_n}|} \sum_{i \in I^*_{\hat{b}_n}}  \ind\!\left[ \hat{f}_{n/4}^{i}(S_n^2,X) \neq \target(X) \right] \geq \frac{1}{6} \right] \left( 1 - 2 \P(Y \neq \target(X) | X) \right)\middle| S_n \right]
   \\ & \leq 6 \frac{1}{|I^*_{\hat{b}_n}|} \sum_{i \in I^*_{\hat{b}_n}}  \E\!\left[ \ind\!\left[ \hat{f}_{n/4}^{i}(S_n^2,X) \neq \target(X) \right] \left( 1 - 2 \P(Y \neq \target(X) | X) \right)\middle| S_n \right]
   \\ & = 6 \frac{1}{|I^*_{\hat{b}_n}|} \sum_{i \in I^*_{\hat{b}_n}}  \left( \er_{\PXY}(\hat{f}^i_{n/4}(S_n^2)) - \er_{\PXY}(\target) \right).
   \end{align*}
  Altogether, since the event $\bigcap_{j=2}^{7} E_j$ 
  depends only on $S_n^0$ and $S_n^1$ (i.e., it is $\sigma(S_n^0,S_n^1)$-measurable), 
  we have 
  \begin{align*}
    & \E\!\left[ \left( \er_{\PXY}(\hat{h}_n^1) - \er_{\PXY}(\target) \right) \ind_{\bigcap_{j=2}^{7} E_j} \right]
    \\ & \leq 6 \E\!\left[ \frac{1}{|I^*_{\hat{b}_n}|} \sum_{i \in I^*_{\hat{b}_n}} \E\!\left[ \er_{\PXY}(\hat{f}_{n/4}^{i}(S_n^2)) - \er_{\PXY}(\target) \Bigg| (S_n^0,S_n^1) \right] \ind_{\bigcap_{j=2}^{7} E_j} \right].
  \end{align*}
  Since, on $\bigcap_{j=2}^{7} E_j$, we have 
  $\hat{b}_n \in \{1,\ldots,b^*\} \setminus \PrbBadB$, 
  this last expression is at most 
  \begin{align}  
    & 6 \E\!\left[ \sum_{b \in \{1,\ldots,b^*\} \setminus \PrbBadB} \frac{1}{|I^*_{b}|} \sum_{i \in I^*_{b}} \E\!\left[ \er_{\PXY}(\hat{f}_{n/4}^{b,i}(S_n^2)) - \er_{\PXY}(\target) \Bigg| (S_n^0,S_n^1) \right] \ind_{\bigcap_{j=2}^{7} E_j} \right]
    \notag \\ & = 6 \sum_{b \in \{1,\ldots,b^*\} \setminus \PrbBadB} \E\!\left[ \frac{1}{|I^*_{b}|} \sum_{i \in I^*_{b}} \E\!\left[ \er_{\PXY}(\hat{f}_{n/4}^{b,i}(S_n^2)) - \er_{\PXY}(\target) \Bigg| S_n^0 \right] \ind_{\bigcap_{j=2}^{7} E_j} \right]
    \notag \\ & \leq 6 \sum_{b \in \{1,\ldots,b^*\} \setminus \PrbBadB} \E\!\left[  \frac{1}{|I^*_{b}|} \sum_{i \in I^*_{b}} \phi(n/4;\G_{b,i},\PXY) \ind_{\bigcap_{j=2}^{7} E_j} \right], \label{eqn:sum-over-prb-bad-b}
  \end{align}
  where the second line is based on independence of $\{\hat{f}^{b,i}_{n/4}(S_n^2) : i \in I^*_b\}$ from $S_n^1$, 
  and linearity of expectations, 
  and the final inequality is due to \eqref{eqn:fhat-b-i-conditional-expectation-bounded-by-phi}.
  Since the event $\bigcap_{j=2}^{7} E_j$ 
  implies $I^*_b \neq \emptyset$ for each $b \in \{1,\ldots,b^*\} \setminus \PrbBadB$, 
  each such $b$ satisfies 
  \begin{align*}
     & \E\!\left[  \frac{1}{|I^*_{b}|} \sum_{i \in I^*_{b}} \phi(n/4;\G_{b,i},\PXY) \ind_{\bigcap_{j=2}^{7} E_j} \right]
     \leq \E\!\left[ \ind\!\left[ I^*_b \neq \emptyset \right] \frac{1}{|I^*_{b}|} \sum_{i \in I^*_{b}} \phi(n/4;\G_{b,i},\PXY) \right] = 
     \\ & \E\!\left[  \ind\!\left[ I^*_b \!\neq\! \emptyset \right] \E\!\left[ \frac{1}{|I^*_{b}|} \sum_{i \in I^*_{b}} \phi(n/4;\G_{b,i},\PXY) \middle| I^*_b \right] \right]
     = \E\!\left[  \ind\!\left[ I^*_b \!\neq\! \emptyset \right] \frac{1}{|I^*_{b}|} \sum_{i \in I^*_{b}} \E\!\left[ \phi(n/4;\G_{b,i},\PXY) \middle| I^*_b \right] \right]\!. 
  \end{align*}
  Due to the independence of the $\G_{b,i}$ sets
  ($i \in I_b$), 
  each $i \in I^*_b$ has 
  $\E[\phi(n/4;\G_{b,i},\PXY)| I^*_b] = \E[\phi(n/4;\G_{b,i},\PXY)| i \in I^*_b] = \phi(n/4;b,\PXY)$.
  Thus, for each $b \in \{1,\ldots,b^*\} \setminus \PrbBadB$, 
  the rightmost expression above equals
  \begin{equation*}
     \E\!\left[ \ind\!\left[ I^*_b \neq \emptyset \right] \frac{1}{|I^*_{b}|} \sum_{i \in I^*_{b}} \phi(n/4;b,\PXY) \right]
     \leq \phi(n/4;b,\PXY).
  \end{equation*}
  Plugging back into \eqref{eqn:sum-over-prb-bad-b}, 
  we conclude that 
  \begin{equation*}
  \E\!\left[ \left( \er_{\PXY}(\hat{h}_n^1) - \er_{\PXY}(\target) \right) \ind_{\bigcap_{j=2}^{7} E_j} \right]
      \leq 6 \sum_{b \in \{1,\ldots,b^*\} \setminus \PrbBadB} \phi(n/4;b,\PXY) = 6 \cdot \phi(n/4;\PXY).
  \end{equation*}
  
  It remains to bound $\E\!\left[\er_{\PXY}(\hat{h}_n) - \er_{\PXY}(\target) \right]$ in terms of $\E\!\left[\left( \er_{\PXY}(\hat{h}_n^1) - \er_{\PXY}(\target) \right) \ind_{\bigcap_{j=2}^{7} E_j} \right]$.
  On the event $E_1$, if $\hat{h}_n^1$ is defined and $\er_{\PXY}(\hat{h}_n^1) \leq \er_{\PXY}(\hat{h}_n^0)$, then
  \begin{equation*}
    \hat{\er}_{S_n^3}(\hat{h}_n^1)
    \leq \er_{\PXY}(\hat{h}_n^1) + \sqrt{\frac{2\ln(n)}{n}}
    \leq \er_{\PXY}(\hat{h}_n^0) + \sqrt{\frac{2\ln(n)}{n}}
    \leq \hat{\er}_{S_n^3}(\hat{h}_n^0) + \sqrt{\frac{8\ln(n)}{n}},
  \end{equation*}
  so that the algorithm chooses $\hat{h}_n = \hat{h}_n^1$.
  On the other hand, if $\er_{\PXY}(\hat{h}_n^1) > \er_{\PXY}(\hat{h}_n^0)$,
  then regardless of whether $\hat{h}_n = \hat{h}_n^0$ or $\hat{h}_n = \hat{h}_n^1$,
  we have $\er_{\PXY}(\hat{h}_n) \leq \er_{\PXY}(\hat{h}_n^1)$.
  Thus, since we have argued above that, on the event $E_2 \cap E_3 \cap E_4$, 
  $\hat{h}_n^1$ is always defined, 
  we conclude that on the event $\bigcap_{j=1}^{4} E_j$, 
  we have 
  $\er_{\PXY}(\hat{h}_n) \leq \er_{\PXY}(\hat{h}_n^1)$.
  Since $\er_{\PXY}(\hat{h}_n) - \er_{\PXY}(\target) \leq 1$,
  this implies 
  \begin{align*}
    & \E\!\left[\er_{\PXY}(\hat{h}_n)\right] - \inf_{h \in \H} \er_{\PXY}(h) 
    = \E\!\left[\er_{\PXY}(\hat{h}_n)\right] - \er_{\PXY}(\target)
    \\ & \leq \E\!\left[\left( \er_{\PXY}(\hat{h}_n^1) - \er_{\PXY}(\target) \right) \ind_{\bigcap_{j=1}^{7} E_j} \right] + \left( 1 - \P\!\left( \bigcap_{j=1}^{7} E_j \right) \right)
    \\ & \leq 6 \cdot \phi(n/4;\PXY) + \frac{4}{n} + e^{-C_1 n}\! + \frac{1}{n} + (b^*-1)e^{-C_2 n}\! + (b^*-1) \frac{n}{4} e^{-C_3 n}\! + b^* e^{-C_4 n}
   = o\!\left(n^{-1/2}\right),
  \end{align*}
  where the second inequality is by the union bound.
  This completes the proof.
\end{proof}

\section{Conclusions and Future Directions}
\label{sec:conclusions}

We have established a complete theory of optimal universal rates for binary classification in the agnostic setting.
The main result (Theorem~\ref{thm:agnostic-char})
establishes a fundamental trichotomy (or tetrachotomy, if we count finite classes).
Finite concept classes $\H$ exhibit an optimal rate $e^{-n}$.
Any infinite concept class which does not shatter an infinite Littlestone tree exhibits an optimal rate $e^{-o(n)}$,
while any concept class which shatters an infinite concept class but does not shatter an infinite VCL tree exhibits an optimal rate $o(n^{-1/2})$,
and otherwise the class requires arbitrarily slow rates.
This extends the realizable-case theory of \citet*{bousquet:21}
by removing the realizability assumption on the distribution.

As the realizable-case theory of \citet*{bousquet:21}
has to-date been extended to numerous other learning settings beyond supervised binary classification
\citep*{kalavasis:22,hanneke:22b,hanneke:23b,bousquet:23,attias:24,hanneke:24b,hanneke:25b},
a natural future direction for the study of universal rates is to pursue analogous extensions of the theory of agnostic universal rates to these settings.

\subsection*{Acknowledgments}
Shay Moran is a Robert J.\ Shillman Fellow; he acknowledges support by ISF grant 1225/20, by BSF grant 2018385, by Israel PBC-VATAT, by the Technion Center for Machine Learning and Intelligent Systems (MLIS), and by the the European Union (ERC, GENERALIZATION, 101039692). Views and opinions expressed are however those of the author(s) only and do not necessarily reflect those of the European Union or the European Research Council Executive Agency. Neither the European Union nor the granting authority can be held responsible for them.

\bibliography{learning}

\end{document}